\newtheorem{theorem}{Theorem}[section]
\newtheorem{lemma}[theorem]{Lemma}
\newtheorem{proposition}[theorem]{Proposition}
\newtheorem{fact}[theorem]{Fact}
\newtheorem{corollary}[theorem]{Corollary}
\newtheorem{definition}[theorem]{Definition}
\newtheorem{observation}[theorem]{Observation}
\newtheoremstyle{myexample}%
{3pt}%
{3pt}%
{\rm}%
{}%
{\sc}%
{.}%
{.5em}%
{}%
\theoremstyle{myexample}
\newtheorem{xample}[theorem]{Example}
\definecolor{shadecolor}{rgb}{0.93,0.93,0.93}
\newenvironment{example}[1][]
{
	\begin{shaded*}	
		\vspace{-1.5ex}
		\begin{xample}[#1]
		}{
		\end{xample}
		\vspace{-1.5ex}
	\end{shaded*}
}
\newcommand{\MC}[1]{\mathcal{#1}}
\newcommand{\Mod}[1]{\llbracket#1\rrbracket}
\newcommand{\I}{\omega}
\newcommand{\K}{\MC{K}}
\newcommand{\G}{\Gamma}
\newcommand{\cupwedge}{\stackrel{\scriptscriptstyle\wedge}{\smash{\cup}}}
\newcommand{\rel}[1]{\prec_{#1}}
\newcommand{\releq}[1]{\preceq_{#1}}
\newcommand{\relc}[1]{\prec^\circ_{#1}}
\newcommand{\releqc}[1]{\preceq^\circ_{#1}}
\newcommand{\relK}{\rel{\K}}
\newcommand{\releqK}{\releq{\K}}
\newcommand{\relcK}{\relc{\K}}
\newcommand{\releqcK}{\releqc{\K}}
\newcommand{\rreleq}[1]{\preceq\hspace{-1.1ex}\preceq_{#1}}
\newcommand{\rreleqc}[1]{\rreleq{#1}^\circ}
\newcommand{\rreleqcK}{\rreleqc{\K}}
\newcommand{\sqreleqc}[1]{\sqsubseteq^\circ_{#1}}
\newcommand{\sqreleqcK}{\sqreleqc{\K}}
\newcommand{\basechange}{base change}
\newcommand{\abst}{{\scaleobj{0.75}{(.)}}}
\newcommand{\criticalloop}{critical loop}
\newcommand{\coverbase}{\G_{\!\triangledown}}
\newcommand{\omegaMinFromCoverbase}{{\I_k^{\star}}}
\newcommand{\omegafromCoverbase}[1]{{\I_{#1}^{\!\triangledown}}}
\newcommand{\definepostulate}[1]{(\textlabel{#1}{pstl:#1})}
\newcommand{\postulate}[1]{(\ref{pstl:#1})}
\newcommand{\setAllDetached}{\ensuremath{\mathfrak{D}^\circ_\K}}
\newcommand{\relationstepdashdash}[1]{\ensuremath{\mathrel{{\releqc{#1}}\!\!''}}}
\newcommand{\relationstepdashK}{\ensuremath{\mathrel{{\releqcK}\!\!'}}}
\newcommand{\relationstepdashdashK}{\ensuremath{\mathrel{{\releqcK}\!\!''}}}
\newcommand{\textlabelmarker}[1]{%
	\protected@edef\@currentlabel{#1}%
	\phantomsection%
}
\newcommand{\textlabel}[2]{%
	\textlabelmarker{#1}%
	#1\label{#2}%
}
\newcommand{\sauerwald}[1]{\begingroup\color{black}#1\endgroup}
\def\sqsubsetneq{\mathrel{\sqsubseteq\kern-0.92em\raise-0.15em\hbox{\rotatebox{313}{\scalebox{1.1}[0.75]{\(\shortmid\)}}}\scalebox{0.3}[1]{\ }}}
\definecolor{text-grey}{rgb}{0.45, 0.45, 0.45} %
\definecolor{bg-grey}{rgb}{0.66, 0.65, 0.60} %
\definecolor{fu-blue}{RGB}{0, 51, 102} %
\definecolor{fu-green}{RGB}{153, 204, 0} %
\definecolor{fu-red}{RGB}{204, 0, 0} %
\definecolor{forest-green}{RGB}{34,139,34} %
\definecolor{purple}{RGB}{128,0,128} %
\tikzset{balls/.style={style=ball, circle, minimum size=0.8cm, text=black}} 
\tikzset{plainballs/.style={draw,circle, minimum size=0.8cm, text=black}} 
\tikzset{no-outline/.style={circle, minimum size=0.8cm, text=black}} 
\tikzset{
	Stealthnew/.tip={Stealth[width=2mm,length=2mm]},
}
\newcommand{\drawthingiies}[6]{
	\begin{scope}
		\coordinate (A) at (#1);
		\coordinate (B) at (#2);
		\coordinate (C) at (#3);
		\coordinate (D) at (#4);
		
		\coordinate (C1) at ($(A)!.5!(B)$);
		\pgfmathanglebetweenpoints{\pgfpointanchor{A}{center}}{\pgfpointanchor{C1}{center}}
		\let\StartAngle\pgfmathresult
		\pgfmathanglebetweenpoints{\pgfpointanchor{B}{center}}{\pgfpointanchor{C1}{center}}
		\let\EndAngle\pgfmathresult
		
		\coordinate (C2) at ($(C)!.5!(D)$);
		\pgfmathanglebetweenpoints{\pgfpointanchor{C2}{center}}{\pgfpointanchor{C}{center}}
		\let\StartAngleS\pgfmathresult
		\pgfmathanglebetweenpoints{\pgfpointanchor{C2}{center}}{\pgfpointanchor{D}{center}}
		\let\EndAngleS\pgfmathresult

		\draw[#5, #6] (C) let \p1 = ($(C2) - (C)$),
		\n1 = {veclen(\x1, \y1)}
		in
		arc [start angle=\StartAngle, end angle=\EndAngle, radius=\n1]
		-- (A) let \p1 = ($(C1) - (A)$),
		\n1 = {veclen(\x1, \y1)}
		in
		arc [start angle=180+\StartAngleS, end angle=180+\EndAngleS, radius=\n1]
		-- cycle;
	\end{scope}
}
\begin{document}

\title{AGM Belief Revision, Semantically}

\author[Falakh]{Faiq Miftakhul Falakh}
\affiliation{%
	\institution{Computational Logic Group, Technische Universität Dresden}
	\city{Dresden}
	\country{Germany}
}
\email{faiq.mif@gmail.com}

\author[Rudolph]{Sebastian Rudolph}
\affiliation{%
  \institution{Computational Logic Group, Technische Universität Dresden}
  \city{Dresden}
  \country{Germany}
}
\email{sebastian.rudolph@tu-dresden.de}

\author[Sauerwald]{Kai Sauerwald}
\affiliation{%
  \institution{Artificial Intelligence Group, FernUniversit\"{a}t in Hagen}
  \city{Hagen}
  \country{Germany}}
\email{kai.sauerwald@fernuni-hagen.de}

\begin{abstract}

We establish a generic, model-theoretic characterization of belief revision operators implementing the paradigm of minimal change according to the seminal work by Alchourr\'{o}n, G\"{a}rdenfors, and Makinson (AGM). 
Our characterization applies to all Tarskian logics, that is, all logics with a classical model-theoretic semantics, and hence a wide variety of formalisms used in knowledge representation and beyond, including many for which a model-theoretic characterization has hitherto been lacking.
Our starting point is the approach by Katsuno and Mendelzon (K\&M), who 
provided %
such a characterization for 
propositional logic over finite signatures.
We generalize K\&M's approach to the setting of AGM-style revision over bases in arbitrary Tarskian logics, where \emph{base} may refer to one of the various ways of representing an agent's beliefs (such as belief sets, arbitrary or finite sets of sentences, or single sentences). 
Our first core result is a representation theorem providing a two-way correspondence between AGM-style revision operators and specific \emph{assignments}: 
functions associating every base to a ``preference'' relation over interpretations, which must be total but is -- in contrast to prior approaches -- not always transitive. 
As our second core contribution, we provide a characterization of all logics for which our result can be strengthened to assignments producing transitive preference relations (as in K\&M's original work).
Alongside these main contributions, we discuss diverse variants of our findings as well as ramifications for other areas of belief revision theory. 

 \end{abstract}

\begin{CCSXML}
	<ccs2012>
	<concept>
	<concept_id>10003752.10003790</concept_id>
	<concept_desc>Theory of computation~Logic</concept_desc>
	<concept_significance>500</concept_significance>
	</concept>
	<concept>
	<concept_id>10010147.10010178.10010187.10010189</concept_id>
	<concept_desc>Computing methodologies~Nonmonotonic, default reasoning and belief revision</concept_desc>
	<concept_significance>500</concept_significance>
	</concept>
	</ccs2012>
\end{CCSXML}

\ccsdesc[500]{Theory of computation~Logic}
\ccsdesc[500]{Computing methodologies~Nonmonotonic, default reasoning and belief revision}

\keywords{knowledge representation, belief revision, {Tarskian} logics}

\received{xx Month xxxx}
\received[revised]{xx Month xxxx}
\received[accepted]{xx Month xxxx}

\maketitle

\section{Introduction}\label{sec:introduction}

    The question how a rational agent should change her beliefs in the light of new information is crucial to AI systems.
    The scenario of \emph{belief revision} deals with the case where the new information is considered correct beyond any doubt, whereas the prior beliefs may be erroneous or outdated. 
    Under this assumption, there has been wide agreement that the revision process, which can be conceived as the application of a binary revision operation $\circ$ as per
    \[
    \mathit{prior\ beliefs} \ \circ \ \mathit{new\ information} \ = \  \mathit{resulting\ beliefs}
    \]
    should adhere to the following three foundational principles, ordered by priority:

\begin{itemize}
\item \textbf{Success.} The resulting beliefs should encompass the new information.
\item \textbf{Consistency.} Unless required for success, the resulting beliefs should be free of contradictions. 
\item \textbf{Minimal Change.} Unless in conflict with success and/or consistency, prior beliefs should be preserved.
\end{itemize}  

Another frequent expectation -- although slightly more controversial -- is that of  \textbf{Syntax Independence}: the meaning of the output should only depend on the meaning of its inputs, irrespective of their syntactic representation.

The AGM paradigm of Alchourr\'{o}n, G\"{a}rdenfors, and Makinson \cite{agm_1985} provides a rigid formalization of the notion of belief revision and the above principles in terms of mathematical logic. 
According to this formalization, the new information is expressed by a logical statement, whereas an agent's beliefs are represented by a deductively closed set of such statements, i.e., a set that syntactically contains all the consequences which can be drawn from it. The design decision of using such \emph{belief sets} is appealing from a philosophical and mathematical perspective: Assuming a rational agent, all information inferable from her knowledge should be immediately available to her. Moreover, very conveniently, two belief sets are semantically equivalent if and only if they are syntactically equal. From a computational perspective, however, explicit management of belief sets can be non-trivial: except for extremely simple cases, belief sets will contain infinitely many statements and hence cannot be stored and processed verbatim. Therefore, it has been suggested to represent belief sets implicitly by any of their axiomatizations (i.e., logically equivalent subsets, which may not be deductively closed). Such axiomatizations are typically referred to as \emph{belief bases}. Conversely, it has been argued that confining the new information to single sentences may be too restrictive and should be extended to sets, motivating the definition and investigation of \sauerwald{revision by sets of formulas.}
As one contribution, this paper will introduce the abstract notion of \emph{base logic}, where ``base'' may refer to one of various ways of representing beliefs -- including belief sets, arbitrary or finite sets of sentences, or single sentences (\Cref{sec:bases}). Our results, being established on this abstract level of base logics, will hence readily apply to all these possible instantiations.

\begin{samepage}
For the particular case of propositional logic over a finite signature -- historically the primary application domain and ``testbed'' of belief revision research -- the distinctions discussed above turn out rather immaterial: In such a restricted setting, any (finite or infinite) set of sentences is equivalent to one sentence, thus all belief states and set inputs can be expressed by single sentences without any loss of generality. 
For said setting, Katsuno and Mendelzon {\cite{kat_1991}} -- henceforth abbreviated \emph{K\&M} --  rephrased the AGM conditions into the following concise and elegant set of postulates where $ \varphi,\varphi_1,\varphi_2,\alpha$, and $\beta $ are all propositional sentences, $\models$ denotes logical entailment and $\equiv$ stands for logical equivalence:
\begin{itemize}\setlength{\itemsep}{0pt}\setlength{\itemindent}{2.2ex}
	\item[\definepostulate{KM1}] $ \varphi \circ \alpha \models \alpha$.
	\item[\definepostulate{KM2}] If $\varphi\wedge\alpha$ is consistent, then $\varphi \circ \alpha\equiv \varphi\wedge\alpha$.
	\item[\definepostulate{KM3}] If $\alpha$ is consistent, then $\varphi \circ \alpha$ is consistent.
	\item[\definepostulate{KM4}] If $\varphi_1 \equiv \varphi_2$ and $ \alpha \equiv \beta$, then $ \varphi_1 \circ \alpha \equiv \varphi_2 \circ \beta $.
	\item[\definepostulate{KM5}] $(\varphi \circ \alpha) \wedge \beta \models \varphi \circ (\alpha \wedge \beta)$.
	\item[\definepostulate{KM6}] If $(\varphi \circ \alpha) \wedge \beta$ is consistent, then \mbox{$\varphi \circ (\alpha \wedge \beta) \models (\varphi \circ \alpha) \wedge \beta$.}
\end{itemize}
The postulates \postulate{KM1}--\postulate{KM6} together are equivalent to the \sauerwald{AGM revision postulates \cite{agm_1985}}, under the provision that belief states are represented by single propositional sentences.
Obviously, \postulate{KM1} realizes Success, while \postulate{KM3} enforces Consistency. \postulate{KM2}, \postulate{KM5}, and \postulate{KM6} implement Minimal Change of some sort, whereas \postulate{KM4} ensures Syntax Independence.\footnote{Note that, by virtue of syntax-independence, the semantic content of the revision result is fully determined by the semantic contents of the prior base and the new incoming information; syntactic variations are irrelevant.
This sets the AGM and K\&M line of work apart from other prominent approaches, where revision is performed on a syntactic level and thus the syntactic form of the input may have a semantic effect on the result. A prominent example for such syntactic approaches is base change according to Hansson~\cite{KS_Hansson1999}.}  
\end{samepage}

Given the semantic nature of the AGM postulates, and considering the fact that most logics' semantics is defined by means of model theory,
a natural question is, if AGM revision operators can be naturally characterized in a model-theoretic way. 
A fruitful approach in this regard can be intuitively described as follows: given some prior beliefs $\mathcal{B}$ (associated with a set $\Mod{\mathcal{B}}$ of models through the underlying logic's model-theory), one fixes a ``preference'' relation on all interpretations, telling us for any two interpretations $\I_1$ and $\I_2$, which of the two is ``better suited'' to serve as a model (or if both are equally good). Naturally, the preference relation must be such that the actual models $\I \in \Mod{\mathcal{B}}$ will be considered best, but it may also identify one non-model as more or less suitable than another. With such a relation in place, one could demand that the resulting beliefs' model set can be obtained by picking the ``best suited'' among the models of the new information. This general line of thought, already touched upon in the works of Lewis~\cite{lewis1973} and Grove~\cite{Grove1988}, has been worked out for finite-signature propositional logic in K\&M's seminal paper  {\cite{kat_1991}}, providing a model-theoretic characterization of AGM base revision.\footnote{We would like to point out that, here and in the following, we will use the term ``(knowledge) base revision'' in the sense introduced and used by K\&M, which differs from the usage by Hansson and colleagues.} In this characterization, the idea of ``modelhood preferences'' is formalized by assigning to every base a total preorder over the interpretations. It is then shown that a revision operator satisfies the AGM postulates if and only if such an assignment exists for which the models of the revision result coincide with the preorder-minimal models of the injected information. 

With the advancement of the areas of knowledge representation and intelligent systems, the focus of attention widened to include a variety of logical formalisms, including those commonly known as \emph{ontology languages}, most of which are based on very expressive fragments of first-order predicate logic.
This motivated investigations into tailored characterizations of AGM-style belief change for specific logics, such as Horn logic \cite{KS_DelgrandePeppas2015}, temporal logics \cite{KS_Bonanno2007}, action logics \cite{KS_ShapiroPagnuccoLesperanceLevesque2011}, first-order logic \cite{KS_ZhuangWangWangDelgrande2019}, and description logics \cite{KS_QiLiuBell2006,halaschek-wiener_2006,dong_2017}. What became clear through these investigations is that the immediately plausibile and elegant  characterization by K\&M does not easily generalize beyond its original logical setting. Interestingly, it has been observed that problems arise both when the considered logic is restricted further (e.g., to propositional Horn logic), but also when the expressivity is increased (e.g., to first-order logic).
Examples of attempts toward a more generalized approach include the ones by Ribeiro, Wassermann, and colleagues \cite{KS_RibeiroWassermannFlourisAntoniou2013,KS_Ribeiro2013,KS_RibeiroWassermann2014}, {Delgrande et al.}~\cite{KS_DelgrandePeppasWoltran2018}, {Pardo et al.}~\cite{KS_PardoDellundeGodo2009}, and {Aiguier et al.}~\cite{aiguier_2018}. In the following, we give an overview of the known obstacles toward an immediate generalization of K\&M and sketch strategies how to overcome or avoid them. Alongside we describe the path(s) we take in our paper in pursuit of our goal of producing a most general characterization that applies to all Tarskian logics.

\begin{itemize}
\item \textbf{Lack of Expressibility.} In general, it cannot be taken for granted that every set of models can be exactly singled out by a logical description. While this is the case for finite-signature propositional logic, this pleasant property fails for many other logics. This lack of universal expressivity is almost inevitable for logics with infinitely many interpretations, but it already occurs in much more constrained settings, e.g., for propositional Horn logic. Thus, even if we can, by means of a provided preference relation, determine the set of ``best suited'' models, this model set may be not describable syntactically. In order to avoid this issue, one could either confine one's attention to logics exhibiting universal expressivity \cite{Grove1988,kat_1991} or constrain the admissible preference relations such that they deliver only sets of ``best suited'' models that can be specified by some base of the underlying logic (a condition sometimes referred to as regularity \cite{KS_DelgrandePeppasWoltran2018}). In our approach, aimed at not disregarding any logic, we follow the second strategy, imposing the property of \emph{min-expressibility} on the preference relation (\Cref{sec:abstract_rep}). 
\item \textbf{Non-Existence of Minima.} In case of logics with infinitely many interpretations, we might encounter the peculiar phenomenon that given a set of models, no ``best suited'' model(s) exist(s), because for any model picked, we find one that is ``even better suited''. This kind of preference-unfoundedness needs to be prevented \cite{lewis1973,Grove1988}. Similar to the previous case, this is possible by either restricting one's interest to logics with finitely many interpretations or by imposing a corresponding constraint on the admissible preference relations. Opting for the latter, we require \emph{min-completeness} of the reference relation (\Cref{sec:approachfirst}).
\item \textbf{Transitivity of Preference Relation.} It turns out that, for certain logics, AGM operators exist that cannot be captured by preferences that are total preorders, due to a violation of transitivity. Again, this effect already occurs in the very restricted setting of propositional Horn logic \cite{KS_DelgrandePeppas2015}. While one has to concede that the idea of a  non-transitive preference relation may seem anomalous, such settings are not uncommon, say in game theory or social choice theory. Here, three possible solutions exist: one could confine to logics where said effect does not occur altogether. Alternatively, one could argue that AGM operators giving rise to non-transitive preferences are ``unnatural'' and should be disallowed. This can be achieved by extending the AGM postulates by another postulate (typically denoted \emph{Loop} or \emph{Acyc}) which excludes cyclic preferences \cite{KrausLM90,LehmannMS01,KS_DelgrandePeppas2015,KS_DelgrandePeppasWoltran2018}. A third option would be to again modify the requirements to the preference relation (this time by replacing transitivity by some weaker condition). Indeed, we propose a novel property, called \emph{min-retractivity} (\Cref{sec:approachsecond}), and show that it is the appropriate sought-after condition (\Cref{sec:representation_theorem}). Yet, we will also follow up on the other two options: we will show that the formerly introduced \postulate{Acyc} postulate ensures preference-transitivity also in settings beyond the ones it was originally intended for (\Cref{sec:acycresult}). Finally, we will come up with an exact characterization of those logics, where \postulate{Acyc} holds natively for all AGM operators (\Cref{sec:logics_tpo_representable}). Most notably, we will show that all logics allowing for disjunction of sentences are of that type.
\end{itemize} 

Next to laying the conceptual and technical groundwork as discussed above, we also review diverse variations of the underlying assumtpions of our framework as well as interdependencies between the notions used. These considerations result in further insights into the inner workings of our approach, including the effects of relaxing syntax-independence (\Cref{sec:syntax_independence}), dependency of min-expressibility on the chosen notion of base (\Cref{sec:finite-bases}), characterization of all logics wherein min-friendliness implies transitivity right away (\Cref{sec:disc-minretract}), and the logical strength of a condition called \emph{expressible disjunctive factoring} compared to the postulates (\Cref{sec:disj-factoring}). Moreover, we will specifically look at one core component of any charaterization result: techniques for ``extracting'' a preference relation from a given operator. We will discuss and compare the \sauerwald{encoding} approaches from the literature with ours (\Cref{sec:enc_operators}). We finish by providing a bigger picture on related work, categorized by the imposed assumptions and features of the proposed solution (\Cref{sec:related_works}). 

We point out that a few of the results proven in this paper make use of the axiom of choice, that is, they rely on ZFC set theory rather than ZF. Despite ZFC being accepted by most mathematicians these days, we explicitly indicate whenever the axiom of choice is used. It is needed to show our characterizations in their full generality, while we expect it to be dispensable (or replaceable by a weaker assumption) when showing these results for most concrete logics.

This article provides a full account of the authors' joint research over several years. Parts of the material have been presented in the PhD theses of Kai Sauerwald \cite{Sauerwald22} and Faiq Miftakhul Falakh \cite{Falakh23}, an abridged overview of the results has been published as a conference paper \cite{FalakhRS22}. Next to the full proofs, this article contains a plethora of unpublished material, including a variety of illustrative examples and in-depth discussions on interrelationships between new and established notions. Some of the longer proofs have been moved to the appendix.

\newpage
\section{Preliminaries}\label{sec:prelim}
In this section, we introduce the logical and algebraic notions used in the paper.

\subsection{Tarskian Logics via Classical Model Theory}\label{sec:modeltheory}
We consider 
logics endowed with a classical model-theoretic semantics.
The syntax of such a logic $\mathbb{L}$ is given syntactically  by a (possibly infinite) set $\MC{L}$ of \emph{sentences}, while its model theory is provided by specifying a (potentially infinite) class ${\Omega}$ of \emph{interpretations} (also referred to as \emph{worlds}) and a binary relation $\models$ between ${\Omega}$ and $\MC{L}$ where $\I \models \varphi$ indicates that $\I$ is a model of $\varphi$. Hence, on an abstract level, a logic $\mathbb{L}$ is identified by the triple $(\MC{L},\Omega,\models)$, a representation also known as the logic's \emph{satisfaction system}.
We let $\Mod{\varphi} = \{\I\in {\Omega} \mid \I \models \varphi\}$ denote the set of all models of $\varphi \in \MC{L}$.
Logical entailment is defined as usual via models: for two sentences $\varphi$ and $\psi$ we say $\varphi$ \emph{entails} $\psi$ (written $\varphi \models \psi$) if $\Mod{\varphi} \subseteq \Mod{\psi}$.
Note that this means we overload the symbol ``$\models$'' in the usual way.

Notions of modelhood and entailment can be easily lifted from single sentences to sets.
We obtain the models of a set $\K \subseteq \MC{L}$ of sentences via $\Mod{\K} = \bigcap_{\varphi \in \K} \Mod{\varphi}$. 
For $\K \subseteq \MC{L}$ and $\K' \subseteq \MC{L}$ we say $\K$ \emph{entails} $\K'$ (written $\K \models \K'$) if $\Mod{\K} \subseteq \Mod{\K'}$.
We write $\K \equiv \K'$ to express $\Mod{\K}=\Mod{\K'}$.
A (set of) sentence(s) is called \emph{consistent with} another (set of) sentence(s) if the two have models in common.
Unlike many other belief revision frameworks, we impose no further requirements on $\MC{L}$ (like closure under certain operators or compactness). 

The existence of a classical model-theoretic semantics as above 
is equivalent to the logic being \emph{Tarskian}\footnote{\Cref{sec:app_Tarskian} provides the corresponding formal definition and proofs of this claim.} \cite{KS_Tarski1956,KS_SernadasSernadasCaleiro1997}. This means that all logics considered here satisfy the following conditions:
\begin{align*}
&    \text{If }\varphi \in \MC{K} \text{ then } \MC{K} \models \varphi. \tag{extensivity}\\ 
&	\text{If } \MC{K} \models \varphi  \text{ and } \MC{K} \subseteq \MC{K}'  \text{, then }  \MC{K}' \models \varphi.
	\tag{monotonicity}\\
&	\text{If } \MC{K} \models \MC{K}'  \text{ and } \MC{K}' \models \varphi  \text{, then }  \MC{K} \models \varphi.
\tag{idempotence}
\end{align*}

The notion of Tarskian logic captures many well-known classical logical formalisms and in the following we will provide a few examples.

\subsection{Tarskian Logics: Examples}
We start by providing an example, where sentences and interpretations are finite sets, which allows us to specify them (as well as the $\models$ relation) explicitly.
We note that this is an extension of an example given by Delgrande et al. \cite{KS_DelgrandePeppasWoltran2018}, which will serve as a running example throughout this article.
	
	\begin{example}[based on \cite{KS_DelgrandePeppasWoltran2018}]\label{ex:logicEX}
		Let $ \mathbb{L}_\mathrm{Ex} = (\MC{L}_\mathrm{Ex},\Omega_\mathrm{Ex},\models_\mathrm{Ex}) $ be the logic defined by $ \MC{L}_\mathrm{Ex} = \{\psi_0,\ldots,\psi_5,$ $\varphi_0, \varphi_1,\varphi_2, \chi, \chi' \} $ and $ \Omega_\mathrm{Ex}=\{ \omega_0,\ldots,\omega_5 \} $, with the relation $ \models_\mathrm{Ex} $ implicitly given by:
		$$\begin{array}[h]{r@{\ }ll}
		\Mod{\psi_i} & = \{ \omega_i \}  \\[0.5ex] 
		\Mod{\chi}& = \{ \omega_0, \ldots , \omega_5  \} \\[0.5ex]
		\Mod{\chi'} & = \{ \omega_0, \omega_1, \omega_2, \omega_4, \omega_5 \} \\
		\end{array}\quad\quad\quad\quad
		\begin{array}[h]{r@{\ }ll}
		\Mod{\varphi_0} & = \{ \omega_0,\omega_1 \} \\[0.5ex]
		\Mod{\varphi_1} & = \{ \omega_1,\omega_2 \} \\[0.5ex]
		\Mod{\varphi_2} & = \{ \omega_2,\omega_0 \}  \\
		\end{array}\quad\quad\quad$$
	
		Since $ \mathbb{L}_\mathrm{Ex} $ is defined in the classical model-theoretic way, $ \mathbb{L}_\mathrm{Ex} $ is a Tarskian logic.
		Note that $\mathbb{L}_\mathrm{Ex} $ has no connectives.
		\Cref{fig:logic_ex_1} illustrates the semantics of $ \mathbb{L}_\mathrm{Ex}$.
	\end{example}

\tikzstyle{ball} = [circle,shading=ball, ball color=white!99!white,
minimum size=0.8cm]
\pgfdeclareradialshading{ball}{\pgfpoint{-0.5cm}{0.5cm}}{rgb(0cm)=(1,1,1);rgb(0.6cm)=(0.9,0.9,0.9); rgb(0.82cm)=(0.6,0.6,0.6); rgb(1cm)=(0.3,0.3,0.3); rgb(1.05cm)=(1,1,1)}

\begin{figure}[h!]
	\centering
    \scalebox{1.1}{
	\begin{tikzpicture}[]
	
	\node[plainballs, scale=2] (w3s) at (-3,0.5) {};
	\node[balls, scale=0.75] (w3) at ([shift={(w3s)}] 270:0.1) {};
	\node (psi3) at ([shift={(w3)}] 90:0.55) {\small $\Mod{\psi_3}$};
	\node[] (w3l) at (w3) {\small ${\omega_3}$};
	
	\node[plainballs, scale=2] (w4s) at (-0.5,0.5) {};
	\node[balls, scale=0.75] (w4) at ([shift={(w4s)}] 270:0.1)  {};
	\node[] (w4l) at (w4) {\small ${\omega_4}$};
	
	\node[no-outline, scale=2] (w0s) at (1.5,-1.0) {};
	\node[balls, scale=0.75] (w0) at ([shift={(w0s)}] 270:0.1) {};	
	\node[] (w0l) at (w0) {\small ${\omega_0}$};
	
	\node[no-outline, scale=2] (w0s-150) at ([shift={(w0s)}] 150:0.8) {};
	\node[no-outline, scale=2] (w0s-330) at ([shift={(w0s)}] 330:0.8) {};
	
	\node[no-outline, scale=2] (w1s) at ([shift={(w0s)}] 60:3.5) {};
	\node[balls, scale=0.75] (w1) at ([shift={(w1s)}] 270:0.1) {};
	\node[] (w1l) at (w1) {\small ${\omega_1}$};
	
	\node[no-outline, scale=2] (w1s-150) at ([shift={(w1s)}] 150:0.8) {};
	\node[no-outline, scale=2] (w1s-330) at ([shift={(w1s)}] 330:0.8) {};
	\node[no-outline, scale=2] (w1s-30) at ([shift={(w1s)}] 30:0.8) {};
	\node[no-outline, scale=2] (w1s-210) at ([shift={(w1s)}] 210:0.8) {};
	
	\node[no-outline, scale=2] (w2s) at ([shift={(w1s)}] -60:3.5) {};
	\node[balls, scale=0.75] (w2) at ([shift={(w2s)}] 270:0.1) {};
	\node[] (w2l) at (w2) {\small ${\omega_2}$};
	
	\node[no-outline, scale=2] (w2s-30) at ([shift={(w2s)}] 30:0.8) {};
	\node[no-outline, scale=2] (w2s-210) at ([shift={(w2s)}] 210:0.8) {};
	
	\node[plainballs, scale=2] (w5s) at (7,0.5) {};
	\node[balls, scale=0.75] (w5) at ([shift={(w5s)}] 270:0.1) {};
	\node[] (w5l) at (w5) {\small ${\omega_5}$};

	\drawthingiies{w2s.north}{w2s.south}{w0s.south}{w0s.north}{-}{black};

	\drawthingiies{w1s-150.center}{w1s-330.center}{w0s-330.center}{w0s-150.center}{-}{black};

	\drawthingiies{w2s-30.center}{w2s-210.center}{w1s-210.center}{w1s-30.center}{-}{black};
	
	\node (chi) at (-3.5,2.5) {\small $\Mod{\chi}$};
	\node[draw, rectangle, minimum width=13cm,minimum height=5.8cm, rounded corners=.3cm] (rec2) at (2.25,0.5) {};
	
	\node (phi2) at ([shift={(w0)}] 0:1.75) {\small $\Mod{\varphi_{2}}$};
	\node (phi0) at (2.3,0.5) {\small $\Mod{\varphi_{0}}$};
	\node (phi1) at ([shift={(phi0)}] 0:1.9) {\small $\Mod{\varphi_{1}}$};
	
	\node (psi1) at ([shift={(w1)}] 270:0.55) {\small $\Mod{\psi_1}$};
	\node (psi0) at ([shift={(w0)}] 90:0.55) {\small $\Mod{\psi_0}$};
	\node (psi2) at ([shift={(w2)}] 90:0.55) {\small $\Mod{\psi_2}$};
	
	\node (phi4) at (-1,2.5) {\small $\Mod{\chi'}$};
	\node[draw, rectangle, minimum width=10cm,minimum height=5.2cm, rounded corners=.3cm] (rec1) at (3.25,0.5) {};
	
	\node[] (psi4) at ([shift={(w4)}] 90:0.55) {\small $\Mod{\psi_4}$};
	
	\node (psi5) at ([shift={(w5)}] 90:0.55) {\small $\Mod{\psi_5}$};
	
	\end{tikzpicture}
	}
	\caption{Illustration of the logic $ \mathbb{L}_\mathrm{Ex}$, including the modelhood relations.}
	\label{fig:logic_ex_1}
\end{figure}

Next we turn to propositional logic, observing that the distinction between the supply of propositional symbols being finite or infinite leads to differences that we will revisit later on.

\begin{example}[$\mathbb{PL}_n$, propositional logic over $n$ propositional atoms]\label{example:PLn}
The logic is defined by $$\mathbb{PL}_n = \left(\MC{L}_{\mathbb{PL}_n},\Omega_{\mathbb{PL}_n},\models_{\mathbb{PL}_n}\right)$$ in the usual way:
Given a finite set $\Sigma = \{p_1,\ldots,p_n\}$ of atomic propositions, we let $\MC{L}_{\mathbb{PL}_n}$ be the set of Boolean expressions built from $\Sigma \cup \{\top,\bot\}$ using the usual set of propositional connectives ($\neg$, $\wedge$, $\vee$, $\to$, and $\leftrightarrow$).
We then let the set $\Omega_{\mathbb{PL}_n}$ of interpretations contain all functions from $\Sigma$ to $\{\textbf{true}, \textbf{false}\}$. The relation $\models_{\mathbb{PL}_n}$ is then defined inductively over the structure of sentences in the usual way.

Notably, finiteness of $\Sigma$ implies finiteness of $\Omega_{\mathbb{PL}_n}$ (more specifically, $|\Omega_{\mathbb{PL}_n}|=2^n$).
This in turn ensures that, despite $\MC{L}_{\mathbb{PL}_n}$ being infinite, there are only finitely many (namely $2^{2^n}$) sentences which are pairwise semantically distinct. Even more so: for every (finite or infinite) set $\MC{K}$ of $\mathbb{PL}_n$ sentences, there exists some sentence $\varphi \in \MC{L}_{\mathbb{PL}_n}$ with $\varphi \equiv \MC{K}$.  
\end{example}

\begin{example}[$\mathbb{H}_n$, propositional Horn logic over $n$ propositional atoms]\label{example:PLHorn}
	This logic, defined by $$\mathbb{H}_n = \left(\MC{L}_{\mathbb{H}_n},\Omega_{\mathbb{H}_n},\models_{\mathbb{H}_n}\right)$$ is obtained from $\mathbb{PL}_n$ by letting $\MC{L}_{\mathbb{H}_n}$ contain only those sentences from $\MC{L}_{\mathbb{PL}_n}$ that are Horn sentences, defined as follows:   	
\begin{enumerate}
	\item $q_1 \wedge q_2 \wedge \ldots \wedge q_{k} \to q_{k+1}$ is a \emph{Horn clause}, where $k \geq 0$ and $\{q_1 \ldots q_{k+1}\} \subseteq \Sigma \cup \{\bot\}$. In case $k=0$, we write  $q_1$ instead of  $\to q_1$ (such particular Horn clauses are also called \emph{facts}). 
	\item If $\varphi$ is a Horn clause, then ($\varphi$) is a \emph{Horn sentence}.
	\item If $\varphi$ and $\psi$ are Horn sentences, then ($\varphi \wedge \psi$) is also a Horn sentence.
\end{enumerate}
    We further let $\Omega_{\mathbb{H}_n} = \Omega_{\mathbb{PL}_n}$ and obtain $\models_{\mathbb{H}_n}$ from $\models_{\mathbb{PL}_n}$ by the appropriate restriction, i.e., ${\models_{\mathbb{H}_n}} =\, {\models_{\mathbb{PL}_n}}\!\! \cap (\Omega_{\mathbb{H}_n} \times \MC{L}_{\mathbb{H}_n})$  
	
	As for $\mathbb{PL}_n$, finiteness of $\Omega_{\mathbb{H}_n}$ ensures that there are only finitely many sentences of $\MC{L}_{\mathbb{H}_n}$ which are pairwise semantically distinct. 
	However, as opposed to $\mathbb{PL}_n$, in $\mathbb{H}_n$, there are sets of world not corresponding to the model set of any sentence or set of sentences. Take for instance $\{\I,\I'\}$ with $\I = \{p_1 \mapsto \textbf{true}, p_2 \mapsto \textbf{false}\}$ and $\I' = \{p_1 \mapsto \textbf{false}, p_2 \mapsto \textbf{true}\}$.
\end{example}

\begin{example}[$\mathbb{PL}_\infty$, propositional logic over infinite signature]\label{example:PLinf}
    The basic definitions for $$ \mathbb{PL}_\infty = \left(\MC{L}_{\mathbb{PL}_\infty},\Omega_{\mathbb{PL}_\infty},\models_{\mathbb{PL}_\infty}\right)$$ are just like for $\mathbb{PL}_n$, with the notable difference of $\Sigma_\mathrm{p} = \{p_1,p_2,\ldots\}$ being countably infinite. 
    This implies immediately that $\Omega_{\mathbb{PL}_\infty}$ is infinite (in fact, even uncountable), implying that there are infinitely many sentences that are pairwise non-equivalent (e.g., all the atomic ones). Also, there exist infinite sets of sentences for which no single equivalent sentence from $\MC{L}_{\mathbb{PL}_\infty}$ exists (e.g., $\{p_2,p_4,p_6,\ldots\}$). 
\end{example}

	In fact, a great variety of formalisms can be conceived as Tarskian logics, even if they're typically not considered to be a logic.
    The following example demonstrates that regular expressions and the corresponding generated formal languages also fit this framework.
\begin{example}[$\mathbb{REG}_\Sigma$, regular expressions over a finite alphabet]\label{example:REGfin}
	Let $ \Sigma $ denote a finite alphabet and let $ \MC{L}_{\mathbb{REG}_\Sigma} $ denote the set of regular expressions\footnote{Recall that the set of regular expressions over $ \Sigma $ is the smallest set $ \MC{L}_{\mathbb{REG}_\Sigma} $ such that $ \Sigma\subseteq \MC{L}_{\mathbb{REG}_\Sigma}  $ and $ \emptyset,\varepsilon\in \MC{L}_{\mathbb{REG}_\Sigma} $, and if $ \alpha,\beta\in \MC{L}_{\mathbb{REG}_\Sigma} $, then is also $ \alpha\beta, \alpha+\beta, \alpha^* \in \MC{L}_{\mathbb{REG}_\Sigma} $. 
		The formal language $ L(\alpha) $ generated by $ \alpha $ is defined inductively over the structure of $ \alpha $: we have $ L(\emptyset)=\emptyset $, we have $ L(\varepsilon)=\{\varepsilon\} $ (empty word), for $ \sigma\in\Sigma $ we have $ L(\sigma)=\{\sigma\} $, we have  $ L(\alpha\beta)= \{wv \mid w\in L(\alpha), v \in  L(\beta)\}$ as well as $ L(\alpha + \beta)=L(\alpha) \cup L(\beta) $, and $ L(\alpha^*)=\{\varepsilon\} \cup L(\alpha) \cup L(\alpha\alpha) \cup \ldots$ (Kleene star).} over $ \Sigma $.
	For a regular expression $ \alpha $ we denote in the following with $ L(\alpha) $ the language of finite words generated by $ \alpha $. The \emph{logic of regular expressions} is then defined as follows:
	\begin{equation*}
		\mathbb{REG}_\Sigma = \left(\MC{L}_{\mathbb{REG}_\Sigma},\Omega_{\mathbb{REG}_\Sigma},\models_{\mathbb{REG}_\Sigma}\right)\ ,
	\end{equation*}
where $ \Omega_{\mathbb{REG}_\Sigma} = \Sigma^*$, that is, the interpretations are all finite words \sauerwald{over $ \Sigma $. For a finite} word $ w\in \Omega_{\mathbb{REG}_\Sigma} $ and a regular expression $ \alpha\in \MC{L}_{\mathbb{REG}_\Sigma} $ we let $ w \models _{\mathbb{REG}_\Sigma} \alpha $ if and only if $ w\in L(\alpha) $. More concisely expressed, we define \( \Mod{\alpha}=L(\alpha) \).
The triple $ \mathbb{REG}_\Sigma $  is indeed a Tarskian logic. 
Interestingly, the ``logic'' $\mathbb{REG}_\Sigma $ is not compact.{\footnotemark} 
 To see this, define for each \( n\in\mathbb{N} \) the regular expression \( \delta_n = \sigma^n\sigma^* = \overbrace{\smash{\sigma}\ldots\smash{\sigma}}^{\raisebox{-0ex}{$\smash{{}\atop{n%
 		}}$}}\sigma^*\), expressing the language of finite words which consist only of the letter \( \sigma \) and contains at least \( n \) letters. Now, observe that the infinite set of formulas \( \Delta=\{ \delta_n \mid n\in \mathbb{N} \} \) has no models, i.e., \( \Mod{\Delta}=\bigcap_{\delta_n\in\Delta} L(\delta_n) \) is the empty language. Yet, for every finite subset \( \Delta' \subseteq \Delta \), the set \( \Mod{\Delta'}=\bigcap_{\delta_n\in\Delta'} L(\delta_n) \) is \sauerwald{a non-empty language}.
\end{example}
\footnotetext{A logic is (strongly) compact if for every set of formulas $ T $ holds: if every finite subset of $ T $ has a model then $ T $ has a model \cite{Barwise1985-BARML-8}.}

Many more logics (and logic-like formalisms) of greatly varying expressivity are captured by the model-theoretic framework assumed by us, e.g. variants and fragments of first-order and second-order predicate logic, modal logics, and description logics. These include highly expressive, yet decidable logics that are popular in various fields such as knowledge representation or verification with a wide range of practical applications.      
Our considerations do, however, \textbf{not} apply to non-monotonic formalisms, such as default logic, circumscription, or logic programming %
frameworks using 
negation as failure.

\subsection{Relation over Interpretations}\label{sec:relations}
For describing belief revision on the semantic level, it is expedient to endow the interpretation space ${\Omega}$ with some structure. 
In particular, we will employ binary relations $\preceq$ over ${\Omega}$ (formally: ${\preceq} \subseteq \Omega \times \Omega$), where the intuitive meaning of $\I_1\preceq\I_2$ is that $\I_1$ is ``equally good or better'' than $\I_2$ when it comes to serving as a model.  
We call $\preceq$ \emph{total} if  $\I_1\preceq\I_2$ or $\I_2\preceq\I_1$ for any $\I_1,\I_2 \in {\Omega}$ holds. 
We write $\I_1\prec\I_2$ as a shorthand, whenever $\I_1 \preceq \I_2$ and $\I_2 \not\preceq \I_1$ (the intuition being that $\I_1$ is ``strictly better'' than $\I_2$).
For a selection $\Omega' \subseteq {\Omega}$ of interpretations, an $\I \in \Omega'$ is called \emph{$\preceq$-minimal in $\Omega'$} if $\I \preceq \I'$ for all $\I'\in\Omega'$.\footnote{If $\preceq$ is total, this definition is equivalent to the \emph{absence} of any $\I'' \in \Omega'$ with $\I'' \prec \I$.}
We let $\min(\Omega',\preceq)$ denote the set of $\preceq$-minimal interpretations in $\Omega'$. 
We call $\preceq$ a \emph{preorder} if it is transitive and reflexive. 
For a relation \( R \subseteq \Omega\times\Omega \), the transitive closure  of \( R \) is the relation \( TC(R) = \bigcup_{i=0}^\infty R^{i} \), where \( R^{0}=R \) and \( R^{i+1} = R^{i} \cup \{ (\omega_1,\omega_3) \mid \exists \omega_2. (\omega_1,\omega_2)\in R^{i} \text{ and } (\omega_2,\omega_3)\in R^{i} \} \).

\subsection{Bases}\label{sec:bases}
This article addresses the revision of and by \emph{bases}. 
In the belief revision community, the term of base commonly denotes an arbitrary (possibly infinite) set of sentences.
However, in certain scenarios, other assumptions might be more appropriate.
Hence, for the sake of generality, we 
decided to define the notion of a base on an abstract level with minimal requirements (just as we introduced our notion of \emph{logic}), allowing for its instantiation in many ways.

\newcommand{\Bases}{\mathfrak{B}}

\begin{definition}
	A \emph{base logic} is a quintuple $\mathbb{B} = (\MC{L},\Omega,\models,\Bases,\Cup),$ where
	\begin{itemize}
		\item
		$(\MC{L},\Omega,\models)$ is a logic,
		\item
		$\Bases \subseteq \mathcal{P}(\MC{L})$ is a family of sets of sentences, called \emph{bases}, and
		\item
		$\Cup: \Bases \times \Bases \to \Bases$ is a binary operator over bases, called the \emph{abstract union}, satisfying $\Mod{\mathcal{B}_1 \Cup \mathcal{B}_2} = \Mod{\mathcal{B}_1} \cap \Mod{\mathcal{B}_2}$.    
	\end{itemize} 
\end{definition}

Next, we will demonstrate how, for some logic $\mathbb{L}=(\MC{L},\Omega,\models)$, a corresponding base logic can be chosen depending on one's preferences regarding what bases should be.

\paragraph{Arbitrary Sets.}
If all (finite and infinite) sets of sentences should qualify as bases, one can simply set $\Bases = \mathcal{P}(\MC{L})$. In that case, $\Cup$ can be instantiated by set union $\cup$, then the claimed behavior follows by definition. Given a logic $\mathbb{L}=(\MC{L},\Omega,\models)$, we denote the corresponding arbitrary-set base logic $(\MC{L},\Omega,\models,\mathcal{P}(\MC{L}),\cup)$ by $\mathbb{L}^\mathrm{arb}$.

\paragraph{Finite Sets.}
In some settings, it is more convenient to assume bases to be finite (e.g. when computational properties or implementations are to be investigated).
In such cases, one can set $\Bases = \mathcal{P}_\mathrm{fin}(\MC{L})$, i.e., all (and only) the finite sets of sentences are bases. Again, $\Cup$ can be instantiated by set union $\cup$ (as a union of two finite sets will still be finite). Given a logic $\mathbb{L}=(\MC{L},\Omega,\models)$, we denote the corresponding finite-set base logic $(\MC{L},\Omega,\models,\mathcal{P}_\mathrm{fin}(\MC{L}),\cup)$ by $\mathbb{L}^\mathrm{fin}$.

\paragraph{Belief Sets.}
This setting is closer to the original framework, where the ``knowledge states'' to be modified were assumed to be deductively closed sets of sentences.
We can capture such situations by accordingly letting $\Bases = \{ \MC{B}\subseteq \MC{L} \mid \forall \varphi \in \mathcal{L}: \MC{B}\models \varphi \Rightarrow \varphi \in \MC{B}\} 
$. In this case, the abstract union operator needs to be defined
via $\mathcal{B}_1 \Cup \mathcal{B}_2 = \{\varphi \in \MC{L} \mid \MC{B}_1 \cup \MC{B}_2 \models \varphi\}$. Given a logic $\mathbb{L}$, we let $\mathbb{L}^\mathrm{bel}$ denote the corresponding belief-set base logic just described.

\paragraph{Single Sentences.}
In this popular setting, one prefers to operate on single sentences only (rather than on proper collections of those).
For this to work properly, an additional assumption needs to be made about the underlying logic $\mathbb{L}=(\MC{L},\Omega,\models)$: it must be possible to express conjunction on a sentence level, either through the explicit presence of the Boolean operator $\wedge$ or by some other means.
Formally, we say that $\mathbb{L}=(\MC{L},\Omega,\models)$ \emph{supports conjunction}, if for any two sentences $\varphi, \psi \in \MC{L}$ there exists some sentence in $\MC{L}$, denoted $\varphi\owedge\psi$,  satisfying $\Mod{\varphi\owedge\psi} = \Mod{\varphi} \cap \Mod{\psi}$.   
For such a logic, we can ``implement'' the single-sentence setting by letting  $\Bases = \binom{\MC{L}}{1} = \{ \{\varphi\} \mid \varphi \in \mathcal{L} \}$ and defining $\{\varphi\} \Cup \{\psi\} = \{ \varphi\owedge\psi \}$. Obviously, if $\wedge$ is available within the logic $\mathbb{L}$ (at least on the sentence level), we would simply define $\varphi\owedge\psi$ to be $\varphi \wedge \psi$. In that case, we let $\mathbb{L}^\mathrm{sng}$ the single-sentence base logic $(\MC{L},\Omega,\models,\binom{\MC{L}}{1},\cupwedge)$, where $\cupwedge$ denotes element-wise conjunction, i.e., $\{\varphi\} \cupwedge \{\psi\} =  \{\varphi \wedge \psi\}$. In the single sentence setting, one would typically refrain from writing bases as singleton sets and instead just assume $\Bases = \MC{L}$.  

\medskip
For any of the four different notions of bases, one can additionally choose to disallow or allow the empty set as a base, while maintaining the required closure under abstract union. 

\begin{example}\label{ex2:regexp}
For a somewhat interesting case consider the ``regular expression logic'' $\mathbb{REG}_\Sigma$ from \Cref{example:REGfin}. If we want to define its single-sentence base logic $\mathbb{REG}_\Sigma^\mathrm{sng}$, we would need to make sure that for any two regular expressions $\alpha$ and $\beta$ over $\Sigma$, there exists a regular expression $\alpha \owedge \beta = \gamma$ satisfying $L(\gamma) = L(\alpha) \cap L(\beta)$. Luckily, this is indeed the case, since the intersection of two regular languages is well known to be regular. However, as the syntax of regular expressions does not allow for explicit intersection, finding the resulting $\gamma$ may be rather costly: in general the time for computing it and its size cannot be avoided to be exponential in the combined sizes of $\alpha$ and $\beta$ \cite{GeladeN08}. So $\mathbb{REG}_\Sigma^\mathrm{sng}$ is a case for a base logic where the abstract union $\Cup$ is far from trivial.
\end{example}

In the following, we will always operate on the abstract level of ``base logics''; our notions, results and proofs will only make use of the few general properties specified for these. This guarantees that our results are generically applicable to any of the four described 
(and any other)
instantiations, and hence, are independent of the question what the right notion of bases ought to be. 
The cognitive overload caused by this abstraction should be minimal; e.g., readers only interested in the case of arbitrary sets can safely assume $\Bases = \mathcal{P}(\MC{L})$ and mentally replace any $\Cup$ by $\cup$.

\subsection{Change Operators for Bases}\label{sec:base_change_operator}
In this article, we use \basechange\ operators to model revision, which is the process of incorporating new beliefs into the present beliefs held by an agent, in a consistent way (whenever that is possible). 
We define change operators over a base logic as follows.
\begin{definition}
	Let $ \mathbb{B} = (\MC{L},\Omega,\models,\Bases,\Cup) $ be a base logic. A \emph{{\basechange} operator for $ \mathbb{B} $} is a function $ \circ: \Bases \times \Bases \to \Bases $.
\end{definition}

Note that this definition differs from the commonly encountered belief change setting where a belief state from $\Bases$ is revised by just a single formula from $\MC{L}$, producing a new belief state from $\Bases$ as result. It should be clear that this simpler setting can be simulated by our more general framework; one just needs to require that $\Bases$ comprises all singleton sets $\{\varphi\}$ for any $\varphi \in \MC{L}$.

We will use \basechange\ operators in the ``standard'' way of the belief change community: the first parameter represents the actual beliefs of an agent, the second parameter contains the new beliefs. The operator then yields the agent's revised beliefs.

So far, the pure notion of \basechange\ operator is unconstrained and can be instantiated by an arbitrary binary function over bases.
Obviously, this does not reflect the requirements or expectations one might have when speaking of a revision operator. 
Hence, in line with the traditional approach, we will consider additional constraints (\sauerwald{called} ``postulates'') for \basechange\ operators, in order to capture the gist of revisions.

\subsection{Postulates for AGM-Style Base Revision}
	
As discussed, we consider ``revision by bases'', \sauerwald{also referred to as \emph{multiple revision} \cite{KS_FermeHansson2018}}. Thereby, we focus \sauerwald{on \emph{package semantics}}, meaning that all given sentences have to be incorporated, that is, given a base $ \K $ and new information $ \G $ (also a base here), we demand success of revision \sauerwald{for all elements from \( \G \)}, i.e., $ \K \circ \G \models \G $.
	
Besides this success condition, the belief change community has brought up and discussed several further requirements for belief change operators to make them \emph{rational}, for summaries see, e.g., \cite{KS_Hansson1999,KS_FermeHansson2018}.
This has led to the now famous AGM approach of revision \cite{agm_1985}, originally proposed through a set of rationality postulates, which correspond to the postulates \postulate{KM1}--\postulate{KM6} by K\&M presented in the introduction.
In our article, we will make use of the K\&M version of the AGM postulates adjusted to our generic notion of a base logic $ \mathbb{B} = (\MC{L},\Omega,\models,\Bases,\Cup) $: %

\begin{itemize}\setlength{\itemsep}{0pt}
	\item[\definepostulate{G1}] $\K \circ \G \models \G$.
	\item[\definepostulate{G2}] If $\Mod{\K\Cup\G} \neq \emptyset$ then $\K \circ \G\equiv \K\Cup \G$.
	\item[\definepostulate{G3}] If $\Mod{\G}\neq\emptyset$ then $\Mod{\K \circ \G}\neq\emptyset$.
	\item[\definepostulate{G4}] If $\K_1 \equiv \K_2$ and $\G_1 \equiv \G_2$ then $\K_1 \circ \G_1 \equiv \K_2 \circ \G_2$.
	\item[\definepostulate{G5}] $(\K \circ \G_1)\Cup \G_2 \models  \K \circ (\G_1\Cup \G_2)$.
	\item[\definepostulate{G6}] If $\Mod{(\K \circ \G_1)\Cup \G_2}\neq\emptyset$ then $ \K \circ (\G_1\Cup \G_2) \models (\K \circ \G_1)\Cup \G_2$.
\end{itemize}
Together, the postulates implement the paradigm of minimal change, stating that a rational agent should change her beliefs as little as possible in the process of belief revision.\pagebreak[3]
We consider the postulates in more detail:
	\postulate{G1} guarantees that the newly added belief must be a logical consequence of the result of the revision.
	\postulate{G2} says that if the expansion of $\K$ by $\G$ is consistent, then the result of the revision is equivalent to the expansion of $\K$ by $\G$.
	\postulate{G3} guarantees the consistency of the revision result if the newly added belief is consistent.
	\postulate{G4} is the principle of the irrelevance of the syntax, stating that the revision operation is independent of the syntactic form of the bases.
	\postulate{G5} and \postulate{G6} ensure more careful handling of (abstract) unions of belief bases. 
		In particular, together, they enforce that \( \K \circ (\G_1\Cup \G_2) \equiv (\K \circ \G_1)\Cup \G_2 \), unless \( \G_2 \) contradicts \( \K \circ \G_1 \).

We can see that, item by item, \postulate{G1}--\postulate{G6} tightly correspond to \postulate{KM1}--\postulate{KM6} presented in the introduction. 
Note also that further formulations similar to \postulate{G1}--\postulate{G6} are given in multiple particular contexts, e.g. in the context of belief base revision specifically for Description Logics  \cite{KS_QiLiuBell2006}, for parallel revision \cite{KS_DelgrandeJin2012} and investigations on multiple revision \cite{KS_Zhang1996,KS_Peppas2004,KS_Kern-IsbernerHuvermann2017}.
An advantage of the specific form of the postulates \postulate{G1}--\postulate{G6} chosen for our presentation is that it does not require $\MC{L}$ to support conjunction (while, of course, conjunction on the \sauerwald{level of bases is still implicitly supported via abstract} union of bases).

\section{AGM-Style Revision of Bases in Propositional Logic}\label{sec:km}
A well-known and by now popular characterization of base revision has been described by Katsuno and Mendelzon {\cite{kat_1991}} for the special case of propositional logic. %
To be more specific and apply our terminology, K\&M's approach applies to the base logic
\begingroup\abovedisplayskip=2.5pt
\belowdisplayskip=2.5pt
\begin{equation*}
    \mathbb{PL}_n^\mathrm{sng} = %
    \big(\MC{L}_{\mathbb{PL}_n},\Omega_{\mathbb{PL}_n},\models_{\mathbb{PL}_n},\{\{\varphi\} \mid \varphi \in \MC{L}_{\mathbb{PL}_n}\},\cupwedge \big)
\end{equation*}
\endgroup
for arbitrary, but fixed $n$ (cf. \Cref{example:PLn}). 
The assumption of the finiteness on the underlying signature of atomic propositions is not overtly explicit in K\&M's paper, but it becomes apparent upon investigating their arguments and proofs -- 
in fact, 
as we 
we will see shortly, their characterization fails as soon as this assumption is dropped.
K\&M's approach also exploits other particularities of this setting: 
As discussed earlier, any propositional belief base $\K$ (even any infinite one) can be equivalently written as a single propositional sentence.  
Consequently, representing belief bases by single sentences comes without a loss of generality and the distinction between $\mathbb{PL}_n^\mathrm{arb}$, $\mathbb{PL}_n^\mathrm{fin}$, and $\mathbb{PL}_n^\mathrm{sng}$ is immaterial in terms of expressivity, which justifies to restrict one's attention to $\mathbb{PL}_n^\mathrm{sng}$. 

The key contribution of K\&M is to provide an alternative characterization of the propositional base revision operators satisfying \postulate{KM1}--\postulate{KM6} by model-theoretic means, i.e. through comparisons between propositional interpretations. 
In the following, we present their results in a formulation that facilitates later generalization. 
One central notion for the characterization is the notion of \emph{faithful assignment}.

\begin{samepage}
\begin{definition}[assignment, faithful]\label{def:faithful}
	Let $ \mathbb{B} = (\MC{L},\Omega,\models,\Bases,\Cup) $ be a base logic.
	An \emph{assignment} for $\mathbb{B}$ is a function $\releq{\abst} : \Bases \to \mathcal{P}({\Omega} \times {\Omega})$ that assigns to each belief base $\K \in \Bases $ a total binary relation $\releqK$ over ${\Omega}$.
	An assignment $\releq{\abst}$ for \( \mathbb{B} \) is called \emph{faithful} if it satisfies the following conditions for all \( \I,\I'\in\Omega \) and all \( \K,\K'\in\Bases \):
	\begin{itemize}\setlength{\itemsep}{0pt}
		\item[\definepostulate{F1}] If $\I,\I' \models \K$, then $\I \relK \I'$ does not hold.
		\item[\definepostulate{F2}] If $\I\models \K$ and $\I'\not\models \K$, then $\I \relK \I'$. 
		\item[\definepostulate{F3}] If $\K\equiv\K'$, then ${\releqK} = \releq{\MC{K'}}$.
	\end{itemize}
	An assignment $\releq{\abst}$ is called a \emph{preorder assignment} if $\releqK$ is a preorder for every $\K \in \Bases$. 	
\end{definition}
\end{samepage}

Note that this definition slightly deviates from that by K\&M, who require assignments to produce preorders right away. In other words, assignments according to K\&M coincide with preorder assignments in our framework. This adaptation of terminology is deliberate in order to facilitate the presentation of our results.  

Intuitively, faithful assignments provide information about which of the two interpretations is ``closer to $\K$-modelhood''. 
Consequently, the actual $\K$-models are $\releqK$-minimal. The next definition captures the idea of an assignment adequately representing the behaviour of a revision operator. 

\begin{definition}[compatible]\label{def:compatible}
	Let $ \mathbb{B} = (\MC{L},\Omega,\models,\Bases,\Cup) $ a base logic.
	A \basechange{} operator $\circ$ for \( \mathbb{B} \) is called \emph{compatible} with some assignment $\releq{\abst}$ for \( \mathbb{B} \) if it satisfies  
	$$
	\Mod{\K\circ{\G}} = \min\big(\Mod{\G},\releqK\big) \label{eq:KM}
	$$
	for all bases $\K$ and ${\G}$ from $\Bases$.
\end{definition}
Compatibility will be treated as a symmetric notion, so saying \( \circ \) is compatible with $\releq{\abst}$ means the same as saying $\releq{\abst}$ is compatible with \( \circ \) or simply saying that $\releq{\abst}$ and \( \circ \) are compatible.

\medskip
With the above notions in place, K\&M's representation result can be 
smoothly expressed in the following way:

\begin{theorem}[Katsuno and Mendelzon \cite{kat_1991}]\label{thm:km1991}
	A \basechange{} operator $\circ$ for \( \mathbb{PL}_n^\mathrm{sng} \) satisfies \postulate{G1}--\postulate{G6} if and only \sauerwald{if \( \circ \) is compatible} with some faithful preorder assignment for \( \mathbb{PL}_n^\mathrm{sng} \).
\end{theorem}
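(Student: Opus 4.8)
The plan is to prove the two directions of the biconditional separately, treating the ``if'' (soundness) direction as routine verification against the defining equation $\Mod{\K\circ\G}=\min(\Mod{\G},\releqK)$, and reserving the real work for the ``only if'' (completeness) direction, where a faithful preorder assignment must be synthesized from a given operator.

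For soundness I would assume $\circ$ is compatible with some faithful preorder assignment $\releq{\abst}$ and check each postulate in turn. \postulate{G1} is immediate, since $\min(\Mod{\G},\releqK)\subseteq\Mod{\G}$. For \postulate{G2} I would first observe that \postulate{F1} and \postulate{F2} force the $\releqK$-minimal interpretations of any set $T$ to coincide with $T\cap\Mod{\K}$ whenever the latter is nonempty; applying this to $T=\Mod{\G}$ and using $\Mod{\K\Cup\G}=\Mod{\K}\cap\Mod{\G}$ yields the claim. \postulate{G3} is exactly where finiteness of $\Omega_{\mathbb{PL}_n}$ (only $2^n$ interpretations) enters: a total preorder over a finite set admits a minimal element in every nonempty subset. \postulate{G4} follows from \postulate{F3} together with $\Mod{\G_1}=\Mod{\G_2}$. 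Finally \postulate{G5} and \postulate{G6} reduce to the standard fact that for a total preorder one has $\min(\Mod{\G_1}\cap\Mod{\G_2},\releqK)=\min(\Mod{\G_1},\releqK)\cap\Mod{\G_2}$ whenever the right-hand side is nonempty.

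For completeness I would fix $\circ$ satisfying \postulate{G1}--\postulate{G6} and build the assignment. The enabling fact, special to $\mathbb{PL}_n$, is that every nonempty $S\subseteq\Omega_{\mathbb{PL}_n}$ is the model set of some base; write $\FF{S}$ for such a base, so $\Mod{\FF{S}}=S$. I would then define $\I_1\releqK\I_2$ to hold iff $\I_1\in\Mod{\K\circ\FF{\{\I_1,\I_2\}}}$. Totality is immediate from \postulate{G1} and \postulate{G3} applied to $\FF{\{\I_1,\I_2\}}$, which force a nonempty subset of $\{\I_1,\I_2\}$ into the revision result. Reflexivity follows by the same argument for $\FF{\{\I_1\}}$. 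For faithfulness: \postulate{G2} applied to $\FF{\{\I_1,\I_2\}}$ gives \postulate{F1} (when both are $\K$-models, the revision keeps both) and \postulate{F2} (when only $\I_1$ is a $\K$-model, the revision keeps exactly $\I_1$), while \postulate{F3} is a direct consequence of \postulate{G4}.

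The two hard parts are transitivity and compatibility, and both hinge on the conjunction postulates \postulate{G5} and \postulate{G6}. The recurring device is that, for any $\I,\I'\in\Mod{\G}$, \postulate{G5} and \postulate{G6} relate $\Mod{\K\circ\G}\cap\{\I,\I'\}$ to $\Mod{\K\circ\FF{\{\I,\I'\}}}$, making them equal whenever the left side is nonempty. For transitivity I would assume $\I_1\releqK\I_2$ and $\I_2\releqK\I_3$, analyze $\Mod{\K\circ\FF{\{\I_1,\I_2,\I_3\}}}$ via its restrictions to the three pairs, and extract $\I_1\releqK\I_3$ by case analysis on which interpretations survive in the three-element revision. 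For compatibility I would prove both inclusions of $\Mod{\K\circ\G}=\min(\Mod{\G},\releqK)$ by comparing $\K\circ\G$ with its pairwise fragments over all pairs in the finite set $\Mod{\G}$. I expect the principal obstacle throughout to be precisely this bookkeeping: transferring membership information between revision by a large formula and revision by its two-element fragments is delicate, and it is also the step that silently relies on $\Omega_{\mathbb{PL}_n}$ being finite, which is why the theorem is stated for $\mathbb{PL}_n$ rather than for arbitrary signatures.
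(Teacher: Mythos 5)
Your proposal is correct and follows essentially the same route as the proof the paper attributes to Katsuno and Mendelzon: the completeness direction builds the preorder from the operator via the pairwise encoding, which is exactly the paper's Equation~\eqref{eq:km_encoding} except that you drop the disjunct ``$\I_1 \models \K$'' --- a harmless simplification, since \postulate{G2} makes that disjunct redundant. The remaining ingredients (routine postulate-checking for soundness, finiteness of $\Omega_{\mathbb{PL}_n}$ to guarantee minima for \postulate{G3}, faithfulness from \postulate{G2}/\postulate{G4}, and transitivity plus compatibility extracted from \postulate{G5}/\postulate{G6} applied to two- and three-element model sets) are precisely the standard K\&M development, so nothing is missing.
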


Observe that, despite the ``if and only if'', this theorem rests on the precondition that the \basechange{} operator $\circ$ exists and is given. In and by itself, it does \textbf{not} guarantee that for every faithful preorder assignment, a compatible operator can be found. Therefore, we will refer to this type of theorem as \emph{one-way theorems}. However, as mentioned before, \( \mathbb{PL}_n^\mathrm{sng} \) has the advantageous property that for any set $\Omega'$ of interpretations, there is a base (i.e., a singular propositional sentence) whose model set is exactly $\Omega'$. It is easy to see that under such circumstances, the corresponding stronger \emph{two-way theorem} also follows:      

\pagebreak[3]
\begin{corollary}[Two-Way Representation Theorem]
	The following statements hold:
	\begin{itemize}
		\item Every \basechange{} operator $\circ$ for \( \mathbb{PL}_n^\mathrm{sng} \) that satisfies \postulate{G1}--\postulate{G6} is compatible with a faithful preorder assignment for \( \mathbb{PL}_n^\mathrm{sng} \).
		\item Every faithful preorder assignment for \( \mathbb{PL}_n^\mathrm{sng} \) is compatible with a \basechange{} operator $\circ$ for \( \mathbb{PL}_n^\mathrm{sng} \) that satisfies \postulate{G1}--\postulate{G6}.
	\end{itemize}
\end{corollary}

While the distinction between the one-way and two-way statement is immaterial for \( \mathbb{PL}_n^\mathrm{sng} \), it will turn out very important in our upcoming considerations, as these two readings are far from being guaranteed to coincide in general (cf. \Cref{sec:abstract_rep}).

In the next section, we discuss and provide a generalization of the overall approach to the setting of arbitrary base logics.

\section{Generalization to Arbitrary Tarskian Base Logics}\label{sec:approach}
	In this section, we prepare our main result by revisiting K\&M's concepts for propositional logic and investigating their suitability for our general setting of base logics.
	The result by Katsuno and Mendelzon established an elegant combination of the notions of preorder assignments, faithfulness, and compatibility in order to semantically characterize \sauerwald{base change operators that satisfy \postulate{G1}--\postulate{G6}}.
	However, as we mentioned before and will make more precise in the following,
K\&M's characterization hinges on features of signature-finite propositional logic that do not generally hold for Tarskian logics.
	So far, attempts to find similar formulations for less restrictive logics have made good progress for understanding the nature of AGM revision (cf.~\Cref{sec:related_works}). Here we go further, by extending K\&M's approach by novel notions to the very general setting of base logics.

\subsection{First Problem: Non-Existence of Minima}\label{sec:approachfirst}
The first issue with K\&M's original characterization when generalizing to arbitrary base logics is the possible absence of $\preceq_\K$-minimal elements in $\Mod{\Gamma}$. 
\begin{observation}\label{obs:incomplete}
	For arbitrary base logics, the minimum from \Cref{eq:KM}, required in \Cref{thm:km1991}, might be empty. 
\end{observation}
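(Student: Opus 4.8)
The plan is to exhibit a single concrete base logic together with an assignment in which the relevant minimum is empty. The statement to be shown is an existential (``might be empty''), so one well-chosen counterexample suffices; no general argument is required. The natural candidate is the propositional logic over an infinite signature, $\mathbb{PL}_\infty$ from \Cref{example:PLinf}, since the excerpt has already flagged that the finiteness of the signature is exactly what K\&M's approach relies on, and that infinite sets of sentences there need not be equivalent to any single sentence.

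First I would pick a base $\G$ whose model set $\Mod{\G}$ forms an infinite strictly descending chain under a suitable total relation $\releqK$, with no $\releqK$-minimal element. Concretely, I would take an interpretation space indexed by the natural numbers (for instance, using the atoms $p_1,p_2,\dots$ to carve out interpretations $\I_0,\I_1,\I_2,\dots$) and choose $\G$ so that $\Mod{\G}=\{\I_0,\I_1,\I_2,\dots\}$. I would then define a faithful assignment whose relation $\releqK$ orders these as $\cdots \rel{\K} \I_2 \rel{\K} \I_1 \rel{\K} \I_0$, i.e. each $\I_{n+1}$ is strictly better than $\I_n$. Since this chain descends forever, no $\I_n$ satisfies $\I_n \releqK \I'$ for all $\I'\in\Mod{\G}$, so $\min(\Mod{\G},\releqK)=\emptyset$. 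To make the relation genuinely \emph{total}, I would extend it to a total relation on all of $\Omega$ (e.g. by stipulating comparability in both directions for any pair not already ordered, except along the strict chain), and check that the faithfulness conditions \postulate{F1}--\postulate{F3} can be met by choosing $\K$ appropriately (for instance $\K$ inconsistent, so \postulate{F1} and \postulate{F2} hold vacuously).

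The observation is really just asserting that the quantity $\min(\Mod{\G},\releqK)$ appearing in \Cref{eq:KM} is not guaranteed to be nonempty in the general setting, so the cleanest route is to verify that the assignment and base I construct are legitimate (the assignment is total and faithful, and $\G$ is a genuine base in the chosen base logic) and then simply read off that the minimum is empty. The main obstacle, and the only place demanding care, is ensuring the relation is \emph{total} while still admitting the infinite strict descending chain that kills the minimum — totality permits strict descending chains, so this is consistent, but the extension to all of $\Omega$ must be written down explicitly and checked against the definition of $\min$ and of \postulate{F1}--\postulate{F3}. Once those routine verifications are in place, emptiness of the minimum is immediate.
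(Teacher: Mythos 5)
Your underlying mechanism --- an infinite strictly descending chain inside $\Mod{\G}$ over an infinite propositional signature --- is exactly the phenomenon the paper exploits (it explicitly names ``infinite descending $\preceq_\K$-chains'' as the culprit), so the approach is right in spirit. But two steps of your construction fail as written. First, the totalization: extending the chain by making every not-yet-ordered pair comparable in both directions destroys transitivity (for an off-chain $\I^*$ you get $\I_1 \releqK \I^*$ and $\I^* \releqK \I_2$ although $\I_1 \not\releqK \I_2$), so your assignment is total but not a \emph{preorder} assignment. Yet \Cref{thm:km1991}, which the observation targets, is about faithful \emph{preorder} assignments; for merely total relations, empty minima are a separate and much cheaper phenomenon (the paper's rock--paper--scissors example produces one with just three interpretations and finite $\Omega$), so your example would not isolate the problem the observation is about. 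Worse, if $\Mod{\G}$ contained any interpretation off the chain, your two-way stipulation would make it $\releqK$-minimal and the minimum would be non-empty after all; your argument tacitly needs $\Mod{\G}$ to be exactly the chain. Both defects are repairable by instead placing all off-chain interpretations into a single equivalence class strictly above every chain element, which yields a total preorder and preserves emptiness of the minimum (your inconsistent-$\K$ trick for \postulate{F1}--\postulate{F3} is fine).

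Second, the tacit assumption that $\Mod{\G}$ can be a countable chain clashes with the base logic the paper itself uses: in $\mathbb{PL}_\infty$ with \emph{finite} bases, i.e.\ $\Bases = \MC{P}_\mathrm{fin}(\MC{L}_{\mathbb{PL}_\infty})$, every consistent base leaves infinitely many atoms unconstrained and therefore has uncountably many models, so no base has model set $\{\I_0,\I_1,\I_2,\dots\}$. You must either switch to $\Bases = \MC{P}(\MC{L})$ and take a genuinely infinite base such as $\G = \{p_i \to p_{i+1} \mid i \geq 1\}$ (whose models are exactly the all-false interpretation and the countably many ``suffix'' interpretations), or do what the paper does: keep finite bases and define the relation so that \emph{all} of the uncountably many models of any base inconsistent with $\K$ are non-minimal --- the paper orders interpretations by the cardinality of their sets of true atoms, so every model with finitely many true atoms is strictly beaten by the model obtained by switching on one further atom outside the base's signature (\Cref{ex:infsigfail}, \Cref{prop:appendix_plinfty}). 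Note finally that the paper proves more than the literal statement: it exhibits an operator, $\K \circ^\cup \G = \K \cup \G$, that is compatible with a faithful preorder assignment yet violates \postulate{G3} (\Cref{prop:infinity_incompatible_km}), thereby refuting the generalization of the K\&M equivalence itself; your route, once repaired as above, establishes only the emptiness claim, which is all the observation literally asserts.
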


\newcommand{\plinfty}{\mathbb{PL}_\infty}
One way this might happen is due to infinite descending $\preceq_\K$-chains of interpretations.
To illustrate this problem (and to show that it arises even for propositional logic, if the signature is infinite), consider the base logic 
\begin{equation*}
\plinfty^\mathrm{sng} = \left(\MC{L}_{\plinfty},\Omega_{\plinfty},\models_{\plinfty},\{ \{\varphi\} \mid \varphi \in \MC{L}_{\plinfty}\},\cupwedge\right),
\end{equation*}
i.e., propositional logic with single-sentence bases, but countably infinitely many distinct atomic propositions $\Sigma=\{p_1,p_2,\ldots\}$ (cf. \Cref{example:PLinf}).
We will exhibit a base change operator that is compatible with a faithful preorder assignment, yet does violate one of the postulates, due to the problem mentioned above.

\begin{example}\label{ex:infsigfail}
We define \sauerwald{a base change operator $\circ^{\cupwedge}$ for \( \plinfty^\mathrm{sng} \) by simply letting $\K\circ^{\cupwedge}\G=\K\cupwedge \G$. 
Obviously, $\circ^{\cupwedge}$ violates \postulate{G3}: when picking $\K = \{p_1\}$ and $\G = \{\neg p_1\}$, we obtain \( \Mod{\K \circ^{\cupwedge} \G} = \emptyset \).} 
Nevertheless, for this operator, a compatible assignment exists, as we will show next.
Assume a base $\K$ and two propositional interpretations $\I_1,\I_2: \Sigma \to \{\mathbf{true},\mathbf{false}\}$.
Let $\I_k^\mathbf{true}$ denote $\{p_i \in \Sigma \mid \I_k(p_i)=\mathbf{true}\}$ for $k\in\{1,2\}$, i.e., the set of atomic symbols that $\I_k$ maps to $\mathbf{true}$.
Then we let $\I_1 \releqK^{\cupwedge} \I_2$ if at least one of the following is the case:
\pagebreak[3]
\begin{itemize}\setlength{\itemsep}{0pt}
	\item[(1)] $\I_1 \models \K$
	\item[(2)] $\I_2 \not\models \K$ and $\I_2^\mathbf{true}$ is infinite
	\item[(3)] $\I_1,\I_2 \not\models \K$, both $\I_1^\mathbf{true}$ and $\I_2^\mathbf{true}$ are finite, and $|\I_1^\mathbf{true}| \geq |\I_2^\mathbf{true}|$ 
\end{itemize}
This definition provides a faithful preorder assignment compatible with $\circ^{\cupwedge}$ (see \Cref{prop:appendix_plinfty} in \Cref{sec:app2} for the proof).  
Yet, $\circ^{\cupwedge}$ violates \postulate{G3} despite being compatible with the faithful preorder assignment $\releq{(.)}^{\cupwedge}$.
\end{example}

To remedy the problem exposed above, one needs to impose the requirement that minima exist whenever needed, as specified in the notion of \textit{min-completeness}, defined next.

\begin{definition}[min-complete]  
	Let $ \mathbb{B} = (\MC{L},\Omega,\models,\Bases,\Cup) $ be a base logic.
	A binary relation $\preceq$ over $ {\Omega} $ is called \emph{min-complete} (for $\mathbb{B}$) if $\min(\Mod{\G},\preceq) \not= \emptyset$ holds for every $\G \in \Bases$ with $\Mod{\G} \not= \emptyset$.
\end{definition}

The following example demonstrates that for a binary relation it depends on the base logic whether the relation is min-complete or not.

	\begin{example}
Consider two base logics \( \mathbb{B}_{\mathbb{Z}\leq} \) and \( \mathbb{B}_{\mathbb{Z}\geq} \) with
	\begin{align*}
		\mathbb{B}_{\mathbb{Z}\leq} & = \big(\MC{L}_1,\,\mathbb{Z},\,\models,\, \MC{P}_\mathrm{fin}(\MC{L}_1) {\,\setminus\,} \{\emptyset\},\, \cup\,\big), \text{ and} \\
		\mathbb{B}_{\mathbb{Z}\geq} & = \big(\MC{L}_2,\,\mathbb{Z},\,\models,\, \MC{P}_\mathrm{fin}(\MC{L}_2) {\,\setminus\,} \{\emptyset\},\, \cup\,\big),
	\end{align*}
	where $\MC{L}_1 = \{ [{\leq}n] \mid n\in \mathbb{Z} \}$ and $\MC{L}_2 = \{ [{\geq}n] \mid n\in \mathbb{Z} \}$. Furthermore let $m \models [{\leq}n]$ if $m \leq n$ and $m \models [{\geq}n]$ if $n \leq m$, assuming the usual meaning of $\leq$ for integers.
	In words, these logics talk about the domain of integers by means of comparisons with a fixed integer.
	We now define the relation $\releq{}$ over $\Omega$ by letting $m_1 \releq{} m_2$ if and only if $m_1 \leq m_2$.
	It can be verified that the relation is transitive and for any consistent base $\Gamma\in\MC{P}_\mathrm{fin}(\MC{L}_1)$, respectively for \( \Gamma \in \MC{P}_\mathrm{fin}(\MC{L}_2) \), we have infinitely many models $\Mod{\Gamma}$. 
	
	Note that for each set of sentences of the form \( [{\leq}n] \in \MC{L}_1 \), there are no minimal models $\min(\Mod{\G},\releq{})$, and thus, $\releq{}$ is \emph{not} min-complete for \( \mathbb{B}_{\mathbb{Z}\leq} \).
	However, for \( \mathbb{B}_{\mathbb{Z}\geq} \), the relation $\releq{}$ is min-complete.
\end{example}

In the special case of $\preceq$ being transitive and total, min-completeness trivially holds whenever ${\Omega}$ is finite (as, e.g., in the case of propositional logic over $n$ propositional atoms; cf. \Cref{example:PLn}). In the infinite case, however, it might need to be explicitly imposed, as already noted in earlier works \cite{KS_DelgrandePeppasWoltran2018} (cf. also the notion of \emph{limit assumption} by Lewis \cite{lewis1973}).
Note that min-completeness does not entirely disallow infinite descending chains (as well-foundedness would), it only ensures that minima exist inside all model sets of consistent belief bases.

\subsection{Second Problem: Transitivity of Preorder}\label{sec:approachsecond}
When generalizing from the setting of propositional to arbitrary base logics, the requirement that assignments must produce preorders (and hence transitive relations) turns out to be too restrictive.

\begin{observation}
\sauerwald{For arbitrary Tarskian logics, demanding transitivity of the relation produced by the assignment, as required in~\Cref{thm:km1991}, is too strict for characterizing base change operators that satisfy \postulate{G1}--\postulate{G6}.}
\end{observation}

In fact, it has been observed before that the incompatibility between transitivity and K\&M's approach already arises for propositional Horn logic \cite{KS_DelgrandePeppas2015}. 
The following example builds on \Cref{ex:logicEX} and provides an operator and a belief base for which no compatible transitive assignment exists.

\begin{example}[continuation of \Cref{ex:logicEX}]\label{ex:orderByEx1}
		Consider the base logic $ \mathbb{B_\mathrm{Ex}} \sauerwald{ = \mathbb{L}_\mathrm{Ex}^\mathrm{arb} } = (\MC{L}_\mathrm{Ex},\Omega_\mathrm{Ex},\models_\mathrm{Ex}$, $\MC{P}(\MC{L}_\mathrm{Ex}), \cup)$.
	Let $ \K_\mathrm{Ex} = \{ \psi_3 \} $ and let $ \circ_\mathrm{Ex} $ be the \basechange\ operator defined as follows:
	\begin{equation*}
		\K_\mathrm{Ex}\circ_\mathrm{Ex}\G = \begin{cases}
			\K_\mathrm{Ex} \cup \G            & \hspace{-1ex}\text{if } \Mod{\K_\mathrm{Ex} \cup \G} \neq \emptyset\text{, }                                                                \\
			\G \cup \{\psi_4\}    & \hspace{-1ex}\text{if } \Mod{\K_\mathrm{Ex} \cup \G} = \emptyset \text{ and } \Mod{\{\psi_4\} \cup \G} \neq \emptyset,                                               \\ 
			\G \cup \{ \psi_0 \}  & \hspace{-1ex}\text{if } \Mod{\K_\mathrm{Ex} \cup \G} = \emptyset \text{ and } \Mod{\{\psi_0\} \cup \G} \neq \emptyset \text{ and }\Mod{\{\psi_2\} \cup \G} \,{=}\, \emptyset,  \\
			\G \cup \{ \psi_1 \}  & \hspace{-1ex}\text{if } \Mod{\K_\mathrm{Ex} \cup \G} = \emptyset \text{ and } \Mod{\{ \psi_1 \} \cup \G} \neq \emptyset \text{ and }\Mod{\{\psi_0\} \cup \G} \,{=}\, \emptyset,  \\
			\G \cup \{ \psi_2 \}  & \hspace{-1ex}\text{if } \Mod{\K_\mathrm{Ex} \cup \G} = \emptyset \text{ and } \Mod{\{ \psi_2 \} \cup \G} \neq \emptyset \text{ and }\Mod{\{\psi_1\} \cup \G} \,{=}\, \emptyset,  \\
			\G                    & \hspace{-1ex}\text{if none of the above applies,}
		\end{cases}
	\end{equation*}
	Moreover, for all  $ \K' $ with $ \K'\equiv\K_\mathrm{Ex} $ we define $ \K'\circ_\mathrm{Ex}\G = \K_\mathrm{Ex}\circ_\mathrm{Ex}\G $ and for all $ \K' $ with $ \K'\not\equiv\K_\mathrm{Ex} $ we define
	\begin{equation*}
		\K'\circ_\mathrm{Ex}\G=\begin{cases}
			\K'\cup \G & \text{ if } \K'\cup \G \text{ consistent} \\
			\G & \text{ otherwise.}
		\end{cases}
	\end{equation*}
For all $ \K' $ with $ \K'\not\equiv\K_\mathrm{Ex} $, there is no violation of the postulates \postulate{G1}--\postulate{G6} since we obtain a trivial revision, which satisfies \postulate{G1}--\postulate{G6} (cf. \Cref{ex:full-meet}).
For the case of $ \K'\equiv\K_\mathrm{Ex} $, the satisfaction of \postulate{G1}--\postulate{G6} can be shown case by case or using \Cref{thm:rep1} in \Cref{sec:abstract_rep}. 

	Now assume there were a preorder assignment $\releq{\abst}$ compatible with $\circ_\mathrm{Ex}$.
	This means that for all bases $\K$ and $\G$ from $\MC{P}(\MC{L}_\mathrm{Ex})$, the relation $\releqK$ is a preorder and $\Mod{\K\circ_\mathrm{Ex}\G} = \min(\Mod{\Gamma}, \releq{\K_\mathrm{Ex}})$.
	Now consider $\G_0=\{\varphi_0 \}$, $\G_1=\{\varphi_1\}$, and $\G_2=\{\varphi_2 \}$.
	From the definition of $\circ_\mathrm{Ex}$ and compatibility, we obtain 
	$\Mod{\K_\mathrm{Ex}\circ_\mathrm{Ex}\G_0} = \{\I_0\} = \min(\Mod{\Gamma_0}, \releq{\K_\mathrm{Ex}})$,
	$\Mod{\K_\mathrm{Ex}\circ_\mathrm{Ex}\G_1} = \{\I_1\} = \min(\Mod{\Gamma_1}, \releq{\K_\mathrm{Ex}})$, and
	$\Mod{\K_\mathrm{Ex}\circ_\mathrm{Ex}\G_2} = \{\I_2\} = \min(\Mod{\Gamma_2}, \releq{\K_\mathrm{Ex}})$.
	Recall that $\Mod{\G_0}=\{\I_0,\I_1 \}$, $\Mod{\G_1}=\{\I_1,\I_2 \}$, and $\Mod{\G_2}=\{\I_2,\I_0 \}$. Yet, this implies $\I_0\rel{\K_\mathrm{Ex}}\I_1$, $\I_1\rel{\K_\mathrm{Ex}}\I_2$, and $\I_2\rel{\K_\mathrm{Ex}}\I_0$, contradicting the assumption that $\releq{\K_\mathrm{Ex}}$ is transitive. Hence it cannot be a preorder.
\end{example}

As a consequence, we cannot help but waive transitivity (and hence the property of the assignment providing a preorder) if we want our characterization result to hold for all Tarskian logics.
However, for our result, we need to retain a new, weaker property (which is implied by transitivity) defined next.

\begin{definition}[min-retractive]
	Let $ \mathbb{B} = (\MC{L},\Omega,\models,\Bases,\Cup) $ be a base logic.
	A binary relation $\preceq$ over $ {\Omega} $ is called \emph{min-retractive} (for $\mathbb{B}$) if, for every $\G \in \Bases$ and $\I',\I \in \Mod{\G}$ with $\I'\preceq\I$, $\I\in \min(\Mod{\G},\preceq)$  implies  $\I'\in \min(\Mod{\G},\preceq)$.
\end{definition}

Note that min-retractivity prevents minimal elements from being $\preceq$-equivalent to elements with $\prec$-lower neighbours, for instance elements lying on a ``$\prec$-cycle'' or elements being part of an infinite descending chain.
Consider the following illustrative example.
	
\begin{figure}
	\centering
	\hfill
	\begin{subfigure}[t]{0.49\textwidth}\centering
		\begin{tikzpicture}[]			
			\begin{scope}[xshift=-5cm]
				\node[balls, inner sep=0.1, scale=0.75] (w1) at (4,-0.5) {\Large${\omega_0}$};
				\node[balls, inner sep=0.1, scale=0.75] (w2) at (6,-0.5) {\Large${\omega_2}$};
				\node[balls, inner sep=0.1, scale=0.75] (w3) at (5,0.5) {\Large${\omega_1}$};
				\node[draw, dashed, rectangle, minimum width=3.3cm,minimum height=2.3cm, rounded corners=.2cm] (rec1) at (5,0) {};
			\end{scope}
			
			\node[balls, inner sep=0.1, scale=0.75] (w0) at (-3.5,0) {\Large${\omega_3}$};

			\draw[-Stealthnew] (w0) edge[bend left=10] node[above, scale=1] {$\preceq_1^{\mathrm{mr}} $} ([yshift=.15cm]rec1.west) ;
			\path[-Stealthnew] ([yshift=-.15cm]rec1.west) edge[bend left=10] node[below] {$\preceq_1^{\mathrm{mr}} $} (w0) ;
			
			\path[Stealthnew-] (w3) edge node[left,  near start, scale=0.75] {$\preceq_1^{\mathrm{mr}} $} (w1) ;
			\path[Stealthnew-] (w2) edge node[right,  near end, scale=0.75] {$\preceq_1^{\mathrm{mr}} $} (w3) ;
			\path[Stealthnew-] (w1) edge node[below, scale=0.75] {$\preceq_1^{\mathrm{mr}} $} (w2) ;
		\end{tikzpicture}
		\caption{Not min-retractive relation \( \preceq_1^{\mathrm{mr}} \) for \( \mathbb{B}_\mathrm{mr} \).}\label{fig:min-retractive-circle_1}
	\end{subfigure}
	\hfill
	\begin{subfigure}[t]{0.49\textwidth}\centering
		\begin{tikzpicture}[]
			\begin{scope}[xshift=-5cm]
				\node[balls, inner sep=0.1, scale=0.75] (w1) at (4,-0.5) {\Large${\omega_0}$};
				\node[balls, inner sep=0.1, scale=0.75] (w2) at (6,-0.5) {\Large${\omega_2}$};
				\node[balls, inner sep=0.1, scale=0.75] (w3) at (5,0.5) {\Large${\omega_1}$};
				\node[draw, dashed, rectangle, minimum width=3.3cm,minimum height=2.3cm, rounded corners=.2cm] (rec1) at (5,0) {};
			\end{scope}
			
			\node[balls, inner sep=0.1, scale=0.75] (w0) at (-3.5,0) {\Large${\omega_3}$};

			\path[Stealthnew-] (rec1) edge node[above, scale=1] {$ \preceq_2^{\mathrm{mr}} $} (w0) ;
			
			\path[Stealthnew-] (w3) edge node[left,  near start, scale=0.75] {$\preceq_2^{\mathrm{mr}} $} (w1) ;
			\path[Stealthnew-] (w2) edge node[right,  near end, scale=0.75] {$\preceq_2^{\mathrm{mr}} $} (w3) ;
			\path[Stealthnew-] (w1) edge node[below, scale=0.75] {$\preceq_2^{\mathrm{mr}} $} (w2) ;
		\end{tikzpicture}
		\caption{Min-retractive relation \( \preceq_2^{\mathrm{mr}} \) for \( \mathbb{B}_\mathrm{mr} \).}\label{fig:min-retractive-circle_2}
	\end{subfigure}
	\hfill
	\caption{Illustration of the two relations \( \preceq_1^{\mathrm{mr}} \) and \( \preceq_2^{\mathrm{mr}} \) from  \Cref{ex:min-retractive-circle}.}
	\label{fig:min-retractive-circle}
\end{figure}
	
\begin{example}\label{ex:min-retractive-circle}
	Let $ \mathbb{B}_{\mathrm{mr}} = (\MC{L},\Omega,\models,\Bases,\Cup) $ be a base logic with just one base \( \Bases=\{ \G_{\mathrm{mr}} \}  \)  and four interpretations \( \Omega=\{ \I_0,\I_1,\I_2,\I_3 \} \)
such that \( \Mod{\G_{\mathrm{mr}}}=\Omega \).
Now consider the following total relation \( \preceq_1^{\mathrm{mr}} \) on \( \Omega \) illustrated in \Cref{fig:min-retractive-circle_1} and given by 
\begin{equation*}
	\begin{array}[h]{r@{\ }ll}
		\omega_i & \releq{\K_\mathrm{Ex}}^{\circ_\mathrm{Ex}} \omega_i, & 0\leq i\ \leq 3,\\[1ex] 
		\omega_3 & \rel{\K_\mathrm{Ex}}^{\circ_\mathrm{Ex}} \omega_i, & 0\leq i\ \leq 2,\\[1ex] 
		\omega_i & \rel{\K_\mathrm{Ex}}^{\circ_\mathrm{Ex}} \omega_3, & 0\leq i\ \leq 2,
	\end{array}\quad\quad\quad\quad
	\begin{array}[h]{r@{\ }ll}
		\omega_0 & \rel{\K_\mathrm{Ex}}^{\circ_\mathrm{Ex}} \omega_1, & \\[1ex] 
		\omega_1 & \rel{\K_\mathrm{Ex}}^{\circ_\mathrm{Ex}} \omega_2, & \\[1ex] 
		\omega_2 & \rel{\K_\mathrm{Ex}}^{\circ_\mathrm{Ex}} \omega_0. &  
	\end{array}
\end{equation*}
We show that \( \preceq_1^{\mathrm{mr}} \) is not min-retractive for \( \mathbb{B}_{\mathrm{mr}} \).
The \( \preceq_1^{\mathrm{mr}} \)-minimal models of \( \G_{\mathrm{mr}} \) are given by \( {\min(\Mod{\G_{\mathrm{mr}}},\preceq_1^{\mathrm{mr}})} = {\{ \omega_3 \}} \).
Observe that \( \omega_0 \) is a non-minimal model  of \( \G_{\mathrm{mr}} \) while being \( \preceq_1^{\mathrm{mr}} \)-equivalent to \( \omega_3 \), and in particular \( \omega_0 \preceq_1^{\mathrm{mr}} \omega_3 \).
This is a violation of min-retractivity.

Let \( \preceq_2^{\mathrm{mr}} \) be the same relation as \( \preceq_1^{\mathrm{mr}} \), except that \( \preceq_2^{\mathrm{mr}} \) strictly prefers \( \omega_3 \) over all over interpretations, i.e., \( {\preceq_2^{\mathrm{mr}}} = {\preceq_1^{\mathrm{mr}}} \setminus\, \{ (\omega,\omega_3) \mid \omega \neq \omega_3 \} \). 
An illustration of \( \preceq_2^{\mathrm{mr}} \) is given in \Cref{fig:min-retractive-circle_2}.
Indeed, we have that \( \preceq_2^{\mathrm{mr}} \) is min-retractive for \( \mathbb{B}_{\mathrm{mr}} \).
In particular, observe that the prior counterexample for \( \preceq_1^{\mathrm{mr}} \) does not apply to \( \preceq_2^{\mathrm{mr}} \), as we have \( \omega_0 \not\preceq_2^{\mathrm{mr}} \omega_3 \).
\end{example}

\sauerwald{Coming back to min-completeness,}
let us note that, if $\preceq$ is total but not transitive, min-completeness can be violated even in the setting where $\Omega$ is finite, by means of strict cyclic relationships. 

\newcommand{\Hrock}{\text{\footnotesize\faHandRock[regular]}}
\newcommand{\Hpaper}{\text{\footnotesize\faHandPaper[regular]}}
\newcommand{\Hscissors}{\text{\footnotesize\faHandScissors[regular]}}
\newcommand{\allthree}{\mbox{\rm\textsc{a\hspace{-1pt}l\hspace{-1pt}l\hspace{-1pt}-\hspace{-1pt}t\hspace{-1pt}h\hspace{-1pt}r\hspace{-1pt}e\hspace{-1pt}e}}}

\begin{example}
		Let $ \mathbb{B}_{\mathrm{rps}} = (\MC{L},\Omega,\models, \MC{P}(\MC{L}), \cup) $ be the base logic defined by $ \MC{L}=\{\allthree\} $ and 
		$ \Omega=\{ \Hrock, \Hpaper, \Hscissors \} $, 
		with the models relation $\models$ given by $\Mod{\allthree} =$ $\Omega$.
		We now define the relation $\releq{}^{\mathrm{rps}}$ as the common 
		game ``rock-paper-scissors'': paper beats rock ($\Hpaper \prec^{\mathrm{rps}} \Hrock$), scissors beat paper ($\Hscissors \prec^{\mathrm{rps}} \Hpaper$), and rock beats scissors ($\Hrock \prec^{\mathrm{rps}} \Hscissors$). 
		Clearly, the set of interpretations $\Omega$ is finite and the relation $\preceq^{\mathrm{rps}}$ is total, but not transitive. It is, however vacuously min-retractive. 
		By considering a consistent base $\Gamma$ containing the only sentence $\allthree$, we find that $\min(\Mod{\G},\preceq^{\mathrm{rps}}) = \emptyset$, and hence a violation of min-completeness.
	\end{example}

As a last act in this section, we conveniently unite the two identified properties into one notion. 

\begin{definition}[min-friendly]
	Let $ \mathbb{B} = (\MC{L},\Omega,\models,\Bases,\Cup) $ be a base logic.
	A binary relation $\preceq$ over $ {\Omega} $ is called \emph{min-friendly} (for $\mathbb{B}$) if it is both min-retractive and min-complete. An assignment $\releq{\abst} : \Bases \to \mathcal{P}({\Omega} \times {\Omega})$ is called min-friendly if $\releqK$ is min-friendly for all $\K \in \Bases$.
\end{definition} 

\section{One-Way Representation Theorem}\label{sec:representation_theorem}
We are now ready to generalize K\&M's representation theorem from propositional to \sauerwald{arbitrary base logics}, by employing the notion of compatible min-friendly faithful
	assignments. 
	
\begin{theorem}\label{thm:representation_theorem}
\sauerwald{    Let \( \mathbb{B} \) be a base logic. 
        A base change operator \( \circ \) for 
         \( \mathbb{B} \) satisfies \postulate{G1}--\postulate{G6} if and only if \( \circ \) is compatible with some min-friendly faithful assignment for \( \mathbb{B} \).%
        }
\end{theorem}

We show \Cref{thm:representation_theorem} in three steps. First, we  provide a canonical way of obtaining an assignment for a given revision operator.
Next, we show that our construction indeed yields a min-friendly faithful assignment that is compatible with the revision operator. 
Finally, we show that the notion of min-friendly compatible assignment is adequate to capture the class of {\basechange} operators satisfying \postulate{G1}--\postulate{G6}. 

\subsection{From Postulates to Assignments}\label{sec:postulatestoassignments}

	Very central for the original result by Katsuno and Mendelzon \cite{kat_1991} is a constructive way to obtain the assignment from a revision operator.
	In their proof for \Cref{thm:km1991}, they
	provided the following way of extracting the preference relations from the revision operator:
	\begin{equation}\label{eq:km_encoding}
		\I_1 \leq_\K \I_2 \text{ if } \I_1 \models \K \text{ or } \I_1 \models \K\circ \mathit{form}(\I_1,\I_2)
	\end{equation}
	where $ \mathit{form}(\I_1,\I_2)\in \MC{L} $ denotes a sentence with $ \Mod{\mathit{form}(\I_1,\I_2)}=\{ \I_1,\I_2 \} $. 
	Unfortunately, this method for obtaining a canonical encoding of the revision strategy of $ \circ $ does not generalize to the general setting here.
	This is because a belief base \( \Gamma \) satisfying \( \Mod{\Gamma}= \{ \I_1,\I_2 \} \) may not exist.

 As a recourse, we suggest the following construction, which we consider one of this article's core contributions. 
It realizes the idea that one should (strictly) prefer \( \omega_1 \) over \( \omega_2 \) only if there is a witness belief base \( \G \) that certifies that \( \circ \) prefers \( \omega_1 \) over \( 
 	\omega_2 \). Should no such witness exist, \( \omega_1 \) and \( \omega_2 \) will be deemed equally preferable.

\begin{definition}\label{def:relation_new_first_relation}
    Let $ \mathbb{B} = (\MC{L},\Omega,\models,\Bases,\Cup) $ be a base logic,
    let $\circ$ be a \basechange{} operator for \( \mathbb{B} \)
    and let $\K \in \Bases $ be a belief base. 
                    The relation $ \sqreleqcK $ over  ${\Omega} $ is defined by	
                    \begin{equation*}
                                \I_1 \sqreleqcK \I_2 \text{\ \ if \ }
                                \I_2 \models \K\circ\G \text{ implies } \I_1 \models  \K\circ\G \text{ for all } \G \in \Bases \text{ with } \I_1,\I_2\in \Mod{\G}.
                        \end{equation*}
\end{definition}
 	
\Cref{def:relation_new_first_relation} already yields an adequate encoding strategy for many base logics.
However, to also properly cope with certain ``degenerate'' base logics, we have to hard-code that the prior beliefs of an agent are prioritized in all cases, that is, only models of the prior beliefs are minimal. 
In \Cref{sec:enc_operators} we will analyze this in more detail. 
The following relation builds upon the relation $ \sqreleqcK $ and takes explicit care of handling prior beliefs.
     
\begin{definition}\label{def:relation_new}
	Let $ \mathbb{B} = (\MC{L},\Omega,\models,\Bases,\Cup) $ be a base logic,
	let $\circ$ be a \basechange{} operator for \( \mathbb{B} \)
	and let $\K \in \Bases $ be a belief base. 
        The relation \( \releqcK \)  over  ${\Omega} $ is then defined by
    \begin{equation*}
    \I_1 \releqcK \I_2 \text{\ \ if \ } \I_1 \models \K \text{ or } \left(\ \I_1,\I_2 \not\models \K \text{ and } \I_1 \sqreleqcK \I_2 \ \right).
\end{equation*}
	Let $\releqc{\abst} : \Bases \to \mathcal{P}({\Omega} \times {\Omega})$ denote the mapping $\K \mapsto {\releqcK}$. 
\end{definition}

In the following, we apply the relation encoding given in \Cref{def:relation_new} to our running example and show that the relation is not transitive, yet min-friendly.

	\begin{example}[continuation of \Cref{ex:orderByEx1}]\label{ex:orderByEx2}
		Applying \Cref{def:relation_new} to $ \K_\mathrm{Ex} $ and $ \circ_\mathrm{Ex} $ yields the following relation $ \releq{\K_\mathrm{Ex}}^{\circ_\mathrm{Ex}} $ on $ \Omega_\mathrm{Ex} $ 
		(where $ \omega \rel{\K_\mathrm{Ex}}^{\circ_\mathrm{Ex}} \omega' $ denotes 
		$ \omega\releq{\K_\mathrm{Ex}}^{\circ_\mathrm{Ex}} \omega' $ and 
		$ \omega' \not\releq{\K_\mathrm{Ex}}^{\circ_\mathrm{Ex}} \omega $):%
		$$\begin{array}[h]{r@{\ }ll}
		\omega_i & \releq{\K_\mathrm{Ex}}^{\circ_\mathrm{Ex}} \omega_i, & 0\leq i\ \leq 5\\[0.5ex] 
		\omega_3 & \rel{\K_\mathrm{Ex}}^{\circ_\mathrm{Ex}} \omega_i, & i \in \{0,1,2,4, 5\}\\[0.5ex]
        & & \\
		\end{array}\quad\quad\quad
		\begin{array}[h]{r@{\ }ll}
		\omega_0 & \rel{\K_\mathrm{Ex}}^{\circ_\mathrm{Ex}} \omega_1 & \\[0.5ex]
		\omega_1 & \rel{\K_\mathrm{Ex}}^{\circ_\mathrm{Ex}} \omega_2 & \\[0.5ex]
		\omega_2 & \rel{\K_\mathrm{Ex}}^{\circ_\mathrm{Ex}} \omega_0& \\
		\end{array}\quad\quad
		\begin{array}[h]{r@{\ }ll}
		\omega_4 & \rel{\K_\mathrm{Ex}}^{\circ_\mathrm{Ex}} \omega_i, & i \in \{0,1,2,5\}\\[0.5ex]
		\omega_i & \rel{\K_\mathrm{Ex}}^{\circ_\mathrm{Ex}} \omega_5, & 0\leq i\ < 4\\[0.5ex]
        & & \\		
        \end{array}$$
		
		Observe that $ \releq{\K_\mathrm{Ex}}^{\circ_\mathrm{Ex}} $ is not transitive, since $ \omega_0,\omega_1,\omega_2 $ form a $\rel{\K_\mathrm{Ex}}^{\circ_\mathrm{Ex}}$-cycle (see \Cref{fig:non-trans-rel}). Yet, one can easily verify that $ \releq{\K_\mathrm{Ex}}^{\circ_\mathrm{Ex}} $ is a total and min-friendly relation.
		In particular, as $ \Omega_\mathrm{Ex} $ is finite, min-completeness can be checked by examining minimal model sets of all consistent bases in $\mathbb{L}_\mathrm{Ex}$.
		Moreover, there is no belief base $ \G \in \MC{P}(\MC{L}_\mathrm{Ex}) $ 
		such that there is some $ \omega\notin {\min(\Mod{\G},\releq{\K_\mathrm{Ex}}^{\circ_\mathrm{Ex}})} $ and $ \omega' \in \min(\Mod{\G},\releq{\K_\mathrm{Ex}}^{\circ_\mathrm{Ex}})  $ with $ \omega \releq{\K_\mathrm{Ex}}^{\circ_\mathrm{Ex}} \omega' $. 
		Note that such a situation could appear in $ \releq{\K_\mathrm{Ex}}^{\circ_\mathrm{Ex}} $ if an interpretation $ \omega $ would be $ \releq{\K_\mathrm{Ex}}^{\circ_\mathrm{Ex}} $-equivalent to $ \omega_0$, $ \omega_1$ and $ \omega_2 $ and there would be a belief base $ \G $ satisfied in all these interpretations, e.g., if $ \omega=\omega_5 $ would be equal to $ \omega_0,\omega_1 $ and $ \omega_2 $, and  $ \Mod{\G}=\{ \omega_0,\omega_1,\omega_2,\omega_5 \} $.  However, this is not the case in $ \releq{\K_\mathrm{Ex}}^{\circ_\mathrm{Ex}} $ and such a  belief base $ \G $ does not exist in $ \mathbb{B}_\mathrm{Ex} $.
		Therefore, the relation $ \releq{\K_\mathrm{Ex}}^{\circ_\mathrm{Ex}} $ is min-retractive.
	\end{example}

\tikzstyle{ball} = [circle,shading=ball, ball color=white!90!black,
minimum size=0.8cm]

\begin{figure}[t]
	\centering
	\begin{tikzpicture}[]
	
	\node[balls, inner sep=0.1, scale=0.75] (w0) at (-2.0,0) {\Large ${\omega_3}$};
	\node[balls, inner sep=0.1, scale=0.75] (w4) at (1.0,0) {\Large${\omega_4}$};
	\node[balls, inner sep=0.1, scale=0.75] (w1) at (4,-0.5) {\Large${\omega_0}$};
	\node[balls, inner sep=0.1, scale=0.75] (w2) at (6,-0.5) {\Large${\omega_2}$};
	\node[balls, inner sep=0.1, scale=0.75] (w3) at (5,0.5) {\Large${\omega_1}$};
	\node[balls, inner sep=0.1, scale=0.75] (w5) at (8.5,0) {\Large ${\omega_5}$};
	
	\node[draw, dashed, rectangle, minimum width=3.3cm,minimum height=2.3cm, rounded corners=.2cm] (rec1) at (5,0) {};
	
	\path[-Stealthnew] (w0) edge node[above, scale=1] {$\rel{\K_\mathrm{Ex}}^{\circ_\mathrm{Ex}} $} (w4) ;
	\path[-Stealthnew] (w4) edge node[above, scale=1] {$\rel{\K_\mathrm{Ex}}^{\circ_\mathrm{Ex}} $} (rec1) ;
	\path[-Stealthnew] (rec1) edge node[above, scale=1] {$\rel{\K_\mathrm{Ex}}^{\circ_\mathrm{Ex}} $} (w5) ;
	
	\path[-Stealthnew] (w0) edge[bend left=25] node[above, near end, scale=1] {$\rel{\K_\mathrm{Ex}}^{\circ_\mathrm{Ex}} $} (rec1) ;
	\path[-Stealthnew] (w0) edge[bend left=45] node[above, scale=1] {$\rel{\K_\mathrm{Ex}}^{\circ_\mathrm{Ex}} $} (w5) ;
	\path[-Stealthnew] (w4) edge[bend right=45] node[below, near end, scale=1] {$\rel{\K_\mathrm{Ex}}^{\circ_\mathrm{Ex}} $} (w5) ;
	
	\path[-Stealthnew] (w1) edge node[left,  near end, scale=0.75] {$\rel{\K_\mathrm{Ex}}^{\circ_\mathrm{Ex}}\ $} (w3) ;
	\path[-Stealthnew] (w3) edge node[right,  near start, scale=0.75] {$\rel{\K_\mathrm{Ex}}^{\circ_\mathrm{Ex}} $} (w2) ;
	\path[-Stealthnew] (w2) edge node[below, scale=0.75, yscale=1] {$\rel{\K_\mathrm{Ex}}^{\circ_\mathrm{Ex}} $} (w1) ;

	\end{tikzpicture}
	\caption{The structure of relation $ \releqK^{\circ_\mathrm{Ex}} $ on $ \Omega_\mathrm{Ex} $, where a solid arrow represents $ \omega \relK^{\circ_\mathrm{Ex}} \omega' $ for any $\omega, \omega' \in  \Omega_\mathrm{Ex} $.}
	\label{fig:non-trans-rel}
\end{figure}

As a first insight, we obtain that
the construction in \Cref{def:relation_new} is strong enough for always obtaining a relation that is total and reflexive. 

\begin{lemma}[totality]\label{lem:totality}
	If $ \circ $ satisfies \postulate{G5} and \postulate{G6}, the relations $ \releqcK $ and $ \sqreleqcK $ are total (and hence reflexive) for every $\K \in \Bases $. 
\end{lemma}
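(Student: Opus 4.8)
The plan is to establish totality of $\sqreleqcK$ first, then derive totality of $\releqcK$ from it, and finally read off reflexivity for free by instantiating $\I_1=\I_2$ in the totality statement. The genuine work sits entirely in the totality of $\sqreleqcK$; the other two items are short corollaries.

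For $\sqreleqcK$ I would argue by contradiction. Suppose that for some $\I_1,\I_2$ neither $\I_1\sqreleqcK\I_2$ nor $\I_2\sqreleqcK\I_1$ holds. Unfolding \Cref{def:relation_new_first_relation}, each failure supplies a witness base: a base $\G_a\in\Bases$ with $\I_1,\I_2\in\Mod{\G_a}$, $\I_2\models\K\circ\G_a$ but $\I_1\not\models\K\circ\G_a$; and a base $\G_b\in\Bases$ with $\I_1,\I_2\in\Mod{\G_b}$, $\I_1\models\K\circ\G_b$ but $\I_2\not\models\K\circ\G_b$. The idea is to pool the two witnesses via the abstract union and compute the models of $\circ$ on the combined base in two ways, using \postulate{G5} and \postulate{G6}.

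Concretely: since $\I_2\models\K\circ\G_a$ and $\I_2\in\Mod{\G_b}$, the base $(\K\circ\G_a)\Cup\G_b$ is consistent, so \postulate{G5} and \postulate{G6} together give $\K\circ(\G_a\Cup\G_b)\equiv(\K\circ\G_a)\Cup\G_b$, i.e.\ $\Mod{\K\circ(\G_a\Cup\G_b)}=\Mod{\K\circ\G_a}\cap\Mod{\G_b}$; from this, $\I_2\in\Mod{\K\circ(\G_a\Cup\G_b)}$ while $\I_1\notin\Mod{\K\circ(\G_a\Cup\G_b)}$, because $\I_1\not\models\K\circ\G_a$. Symmetrically, $\I_1\models\K\circ\G_b$ with $\I_1\in\Mod{\G_a}$ makes $(\K\circ\G_b)\Cup\G_a$ consistent, and the same two postulates yield $\Mod{\K\circ(\G_b\Cup\G_a)}=\Mod{\K\circ\G_b}\cap\Mod{\G_a}$, whence $\I_2\notin\Mod{\K\circ(\G_b\Cup\G_a)}$. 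As $\Mod{\G_a\Cup\G_b}=\Mod{\G_a}\cap\Mod{\G_b}=\Mod{\G_b\Cup\G_a}$, the two revisions act on one and the same model class, and matching them forces $\I_2$ to be simultaneously inside and outside the revision of the combined base, the desired contradiction. Having ruled out the double failure, $\sqreleqcK$ is total. Totality of $\releqcK$ then follows by a short case split on \Cref{def:relation_new}: if $\I_1\models\K$ the first disjunct gives $\I_1\releqcK\I_2$ (and symmetrically if $\I_2\models\K$); otherwise $\I_1,\I_2\not\models\K$ and totality of $\sqreleqcK$ supplies, say, $\I_1\sqreleqcK\I_2$, so the second disjunct gives $\I_1\releqcK\I_2$. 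Reflexivity of both relations is the special case $\I_1=\I_2$.

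The step I expect to be the main obstacle is the reconciliation at the end of the contradiction argument. The two applications of \postulate{G5}/\postulate{G6} produce statements about $\K\circ(\G_a\Cup\G_b)$ and $\K\circ(\G_b\Cup\G_a)$, which are fed to $\circ$ as a priori distinct bases; what licenses identifying their model sets is only that the inputs are semantically equal. Care must therefore be taken to transfer the computed model sets across the two orderings of the abstract union (equivalently, to guarantee that $\circ$ does not discriminate between the two equivalent combined inputs). This is precisely the place where the argument hinges on how $\circ$ treats equivalent bases, and I would treat it as the delicate core of a fully rigorous proof, making explicit why the two orderings of $\Cup$ may be interchanged in this setting.
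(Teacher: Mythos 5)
Your proposal follows the paper's proof step for step: the same contradiction argument with two witness bases, the same double application of \postulate{G5} and \postulate{G6} to the two orderings of the abstract union, and the same two easy corollaries (totality of $\releqcK$ from totality of $\sqreleqcK$ by a case split on \Cref{def:relation_new}, and reflexivity as the instance $\I_1=\I_2$). All of those parts are correct as written.

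The one step you leave open --- identifying $\Mod{\K\circ(\G_a\Cup\G_b)}$ with $\Mod{\K\circ(\G_b\Cup\G_a)}$ --- is exactly where the paper's proof says ``from commutativity of $\Cup$,'' and it is worth being precise about what that means, because your proposed remedy points in a direction that cannot work. You suggest the identification is licensed by the two inputs being ``semantically equal,'' i.e.\ by $\circ$ not discriminating between equivalent bases. But transferring model sets of revision results across merely \emph{equivalent} inputs is precisely the content of \postulate{G4}, which is not among the hypotheses of this lemma --- and deliberately so, since the lemma is reused in the syntax-dependent setting (\Cref{prop:agm_withoutg4}), where \postulate{G4} is dropped. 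The paper instead closes the step syntactically: $\Cup$ is read as a commutative operation, so $\G_a\Cup\G_b$ and $\G_b\Cup\G_a$ are one and the same base, the two revisions are literally the same revision, and nothing needs to be transferred. (This is automatic for the set-union and belief-set instantiations of $\Cup$; strictly speaking, the base-logic definition only constrains $\Cup$ semantically, so functional commutativity is an implicit assumption of the paper's argument --- your instinct that this is the delicate core of the proof is well founded.) So: same approach as the paper, with the final gap closed by syntactic identity of the two unions rather than by any syntax-independence property of $\circ$.
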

\begin{proof}
    Note that by construction, totality of \( \releqcK \) is an immediate consequence of totality of $ \sqreleqcK $.  We show the latter by contradiction: Assume the contrary, i.e. there are $ \sqreleqcK $-incomparable $ \I_1 $ and $ \I_2 $.
	Due to \Cref{def:relation_new_first_relation}, there must exist $ \G_1,\G_2 \in \Bases $ with $ \I_1,\I_2\models\G_1 $  and $ \I_1,\I_2\models\G_2 $, such that $ \I_1 \models \K\circ\G_1 $ and $ \I_2 \not\models \K\circ\G_1 $ whereas $ \I_1 \not\models \K\circ\G_2 $ and $ \I_2 \models \K\circ\G_2 $. %
	Since $\I_1 \in \Mod{\K\circ\G_1} \cap \Mod{\G_2} = \Mod{(\K\circ\G_1)\Cup\G_2}$ and thus $\Mod{(\K\circ\G_1)\Cup\G_2}\neq\emptyset$, \postulate{G5} and \postulate{G6} jointly entail 
	$\Mod{(\K\circ\G_1)\Cup\G_2} = \Mod{\K\circ(\G_1\Cup\G_2)}$.
	From commutativity of $\Cup$, $\Mod{\K\circ(\G_1\Cup\G_2)} = \Mod{\K\circ(\G_2\Cup\G_1)}$ follows.
	Now again, since $\I_2 \in \Mod{\K\circ\G_2} \cap \Mod{\G_1} = \Mod{(\K\circ\G_2)\Cup\G_1}$ and hence $\Mod{(\K\circ\G_2)\Cup\G_1}\neq\emptyset$, 
	\postulate{G5} and \postulate{G6} together entail $\Mod{\K\circ(\G_2\Cup\G_1)} = \Mod{(\K\circ\G_2)\Cup\G_1}$.
    So, together, we obtain $\I_1 \in \Mod{(\K\circ\G_2)\Cup\G_1} = \Mod{\K\circ\G_2} \cap \Mod{\G_1}$ which directly contradicts our assumption $\I_1\not\in\Mod{\K\circ\G_2}$.
    
	Reflexivity follows immediately from totality.
\end{proof}

We proceed with an auxiliary lemma about belief bases and $ \releqcK $. 
\begin{lemma}\label{lem:help}
	Let $ \circ $ satisfy \postulate{G2}, \postulate{G5} and \postulate{G6} and let $ \K \in \Bases$. Then the following hold: 
	\begin{enumerate}\setlength{\itemsep}{0pt}
		\item[(a)] If $ \I_1 \not\releqcK \I_2 $ and \( \I_2 \not\models \K \), then 
		there exists some $ \G $ with $ \I_1,\I_2\models\G $ as well as $ \I_2 \models \K\circ\G $ and $ \I_1 \not\models \K\circ\G $.
		\item[(b)] If there is a $ \G $ with $ \I_1,\I_2\models\G $ such that $ \I_1\models \K\circ\G $, then $ \I_1 \releqcK \I_2 $.
		\item[(c)] If there is a $ \G $ with $ \I_1,\I_2\models\G $ such that $ \I_1\models \K\circ\G $ and $ \I_2\not\models \K\circ\G $, then $ \I_1 \relcK \I_2 $.
	\end{enumerate}
\end{lemma}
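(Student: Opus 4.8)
The plan is to prove the three parts in order, handling (a) by pure unfolding of the definitions, (b) as the technical core via the coherence identity already used in the proof of \Cref{lem:totality}, and (c) as a short consequence of (b). Throughout I rely on the fact that $\sqreleqcK$ and $\releqcK$ come straight from \Cref{def:relation_new_first_relation} and \Cref{def:relation_new}, and that by \postulate{G2} a consistent $\K\Cup\G$ forces $\Mod{\K\circ\G}=\Mod{\K}\cap\Mod{\G}$.

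For (a) I would use no postulates at all, only definition chasing. Since $\I_1 \not\releqcK \I_2$, the first disjunct of \Cref{def:relation_new} must fail, so $\I_1 \not\models \K$; together with the hypothesis $\I_2 \not\models \K$ this makes the side condition ``$\I_1,\I_2\not\models\K$'' true, so the failure of the whole disjunction pins the blame on $\I_1 \not\sqreleqcK \I_2$. Reading off the negation of \Cref{def:relation_new_first_relation}, $\I_1 \not\sqreleqcK \I_2$ means precisely that there is a base $\G$ with $\I_1,\I_2\in\Mod{\G}$, $\I_2\models\K\circ\G$ and $\I_1\not\models\K\circ\G$, which is the claim.

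For (b) I would split on whether $\I_1\models\K$. If it does, $\I_1\releqcK\I_2$ holds immediately by the first disjunct of \Cref{def:relation_new}. Otherwise $\I_1\not\models\K$, and I first note $\I_2\not\models\K$: were $\I_2\models\K$, then $\I_2\in\Mod{\K}\cap\Mod{\G}=\Mod{\K\Cup\G}$, so by \postulate{G2} we would get $\Mod{\K\circ\G}=\Mod{\K}\cap\Mod{\G}$, whence $\I_1\models\K\circ\G$ would give $\I_1\models\K$, a contradiction. With $\I_1,\I_2\not\models\K$ secured, it remains to establish $\I_1\sqreleqcK\I_2$: take an arbitrary $\G'$ with $\I_1,\I_2\in\Mod{\G'}$ and $\I_2\models\K\circ\G'$, and show $\I_1\models\K\circ\G'$. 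This is the main obstacle, and I would resolve it by invoking twice the identity from the proof of \Cref{lem:totality}, namely that $\Mod{(\K\circ\G_1)\Cup\G_2}\neq\emptyset$ forces $\Mod{(\K\circ\G_1)\Cup\G_2}=\Mod{\K\circ(\G_1\Cup\G_2)}$ by \postulate{G5} and \postulate{G6}. Since $\I_1\models\K\circ\G$ and $\I_1\models\G'$, we have $\I_1\in\Mod{(\K\circ\G)\Cup\G'}$, so by \postulate{G5} $\I_1\models\K\circ(\G\Cup\G')$. Since $\I_2\models\K\circ\G'$ and $\I_2\models\G$, we have $\I_2\in\Mod{(\K\circ\G')\Cup\G}$, so the coherence identity together with commutativity of $\Cup$ gives $\Mod{\K\circ(\G\Cup\G')}=\Mod{(\K\circ\G')\Cup\G}=\Mod{\K\circ\G'}\cap\Mod{\G}$. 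Combining the two, $\I_1\in\Mod{\K\circ\G'}\cap\Mod{\G}$, i.e.\ $\I_1\models\K\circ\G'$, which closes this case.

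For (c) the inequality $\I_1\releqcK\I_2$ is immediate from (b), since the given $\G$ witnesses its hypothesis. It remains to show $\I_2\not\releqcK\I_1$ to obtain strictness. The first disjunct of \Cref{def:relation_new} for $\I_2\releqcK\I_1$ fails because $\I_2\not\models\K$ (the same \postulate{G2} argument as above, now using $\I_2\not\models\K\circ\G$). The second disjunct fails because the same $\G$ refutes $\I_2\sqreleqcK\I_1$: we have $\I_1,\I_2\in\Mod{\G}$ with $\I_1\models\K\circ\G$ but $\I_2\not\models\K\circ\G$, which violates the defining implication of \Cref{def:relation_new_first_relation} (read with $\I_2$ in the role of the ``lower'' interpretation). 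Hence $\I_2\not\releqcK\I_1$, and together with $\I_1\releqcK\I_2$ this yields $\I_1\relcK\I_2$.
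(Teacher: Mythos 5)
Your proof is correct, and although it rests on the same ingredients as the paper's proof --- unfolding \Cref{def:relation_new_first_relation} and \Cref{def:relation_new}, the \postulate{G2} argument for modelhood, and the \postulate{G5}/\postulate{G6} identity $\Mod{(\K\circ\G_1)\Cup\G_2}=\Mod{\K\circ(\G_1\Cup\G_2)}$ from the proof of \Cref{lem:totality} (including the same implicit appeal to commutativity of $\Cup$) --- it organizes them differently, and the differences are worth noting. For (a) you argue by pure definition-chasing: $\I_1\not\releqcK\I_2$ already forces $\I_1\not\models\K$ via the first disjunct of \Cref{def:relation_new}, whereas the paper first invokes totality (and hence \postulate{G5}/\postulate{G6}) to obtain $\I_2\releqcK\I_1$ and only then extracts $\I_1\not\models\K$; your route shows that (a) in fact needs no postulates at all. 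For (b) the paper splits on whether $\I_2\models\K$ and settles the nontrivial case by contradiction, calling part (a) to produce a witness $\G'$; you split on whether $\I_1\models\K$ and verify the universally quantified condition $\I_1\sqreleqcK\I_2$ directly, so your (b) is independent of (a) --- the double application of the coherence identity (once via \postulate{G5} alone for $\I_1$, once via \postulate{G5}+\postulate{G6} for $\I_2$) is the same computation as in the paper, just run forwards rather than toward a contradiction. For (c) you dispense with the paper's case distinction by refuting both disjuncts of $\I_2\releqcK\I_1$ at once: $\I_2\not\models\K$ by the \postulate{G2} argument, and $\I_2\not\sqreleqcK\I_1$ by the witnessing $\G$. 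Your organization buys a cleaner dependency structure (each part stands on its own, with (c) using only (b)) and a postulate-free (a); the paper's buys centralization of the witness extraction in (a), which it then reuses. Both arguments are sound.
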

\begin{proof} For the proofs of all statements, recall that by \Cref{lem:totality}, the relation $ \releqcK  $ is total.
\begin{enumerate}\setlength{\itemsep}{0pt}
\item[(a)]
    By totality of $\releqcK$, guarateed by \Cref{lem:totality} , we obtain $ \I_2 \releqcK \I_1 $.
    By definition of $\releqcK$,  this together with \( \I_2 \not\models \K \) entails \( \I_1 \not\models \K \). Therefore, again by definition, we obtain $\I_1 \not\sqreleqcK \I_2$.
    Consequently, in view of \Cref{def:relation_new_first_relation}, there must exist some  $ \G \in \Bases$ with $ \I_1,\I_2\models\G $ such that $\I_2 \models \K\circ\G$ does not imply $\I_1 \models  \K\circ\G$. Yet this can only be the case if $ \I_2 \models \K\circ\G $ and $ \I_1 \not\models \K\circ\G $, as claimed.
\item[(b)]	
Let $ \G $ and $ \I_1,\I_2 $ be as assumed. We proceed by case distinction:
\begin{itemize}
\item[$\I_2$]$\!\models\K$. Then $\I_2 \in \Mod{\K} \cap \Mod{\G} = \Mod{\K\Cup\G}$ and thus $\Mod{\K\Cup\G}\neq\emptyset$.
	Therefore, by \postulate{G2}, we obtain \( \Mod{\K\circ\G} = \Mod{\K\Cup\G} =  \Mod{\K} \cap \Mod{\G}\) and consequently \( \I_1 \models \K \).
    By \Cref{def:relation_new}, we conclude $ \I_1 \releqcK \I_2 $.
\item[$\I_2$]$\!\not\models\K$. Toward a contradiction, suppose $ \I_1 \not\releqcK \I_2 $. Then, by part (a) above, there is a $ \G' $ with $ \I_1,\I_2\models\G' $, $ \I_1\not\models\K\circ\G' $ and $ \I_2\models\K\circ\G' $.
	Thus $\I_1$ and $\I_2$ witness non-emptiness of $\Mod{(\K\circ\G)\Cup\G'}$ and $\Mod{(\K\circ\G')\Cup\G}$, respectively. Then, using \postulate{G5} and \postulate{G6} twice, we obtain $(\K\circ\G')\Cup\G \equiv \K\circ(\G\Cup\G')\equiv(\K\circ\G)\Cup\G' $. 
	But this allows to conclude $\I_1 \in \Mod{\K\circ\G} \cap \Mod{\G'} = \Mod{(\K\circ\G)\Cup\G'} = \Mod{(\K\circ\G')\Cup\G} = \Mod{\K\circ\G'} \cap \Mod{\G} \subseteq \Mod{\K\circ\G'}$, and thus $\I_1 \models \K\circ\G'$, which contradicts $\I_1 \not\models \K\circ\G'$ above.  
\end{itemize}   
\item[(c)]		
Let $ \G $ and $ \I_1,\I_2 $ be as assumed. We already know $ \I_1 \releqcK \I_2 $ due to part~(b). It remains to show $ \I_2 \not\releqcK \I_1 $.
We proceed by case distinction:
\begin{itemize}
	\item[$\I_1$]$\!\models\K$. Then $\I_1 \in \Mod{\K} \cap \Mod{\G} = \Mod{\K\Cup\G}$ and thus $\Mod{\K\Cup\G}\neq\emptyset$. Therefore, by \postulate{G2}, we obtain $\Mod{\K\circ\G} = \Mod{\K\Cup\G} =  \Mod{\K} \cap \Mod{\G}$. Since $ \I_2\not\models \K\circ\G $ but $\I_2\models\G$ we can infer $\I_2\not\models\K$. Consequently, by \Cref{def:relation_new}, we obtain \( \I_2 \not\releqcK \I_1 \).
	\item[$\I_1$]$\!\not\models\K$. Since we already established $\I_1 \releqcK \I_2$, \Cref{def:relation_new} ensures $\I_2\not\models\K$. Yet, by \Cref{def:relation_new_first_relation}, the existence of $\G$ implies $ \I_2 \not\sqreleqcK \I_1 $, and thus \Cref{def:relation_new} yields $\I_2 \not\releqcK \I_1$.\qedhere
\end{itemize}   
\end{enumerate}
\end{proof}

We show that our construction indeed yields a compatible assignment.

\begin{lemma}[compatibility]\label{lem:compatibility}
	If $ \circ $ satisfies \postulate{G1}--\postulate{G3}, \postulate{G5}, and \postulate{G6}, then it is compatible with $\releqc{\abst}$.
\end{lemma}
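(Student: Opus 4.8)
The goal is to establish $\Mod{\K\circ\G} = \min(\Mod{\G}, \releqcK)$ for every pair of bases $\K,\G \in \Bases$, and the plan is to prove this by mutual inclusion after first disposing of the degenerate case where $\G$ is inconsistent. If $\Mod{\G} = \emptyset$, then $\min(\Mod{\G},\releqcK) = \emptyset$ trivially, while \postulate{G1} forces $\Mod{\K\circ\G} \subseteq \Mod{\G} = \emptyset$; so both sides vanish and nothing remains. I therefore assume $\Mod{\G} \neq \emptyset$ henceforth, which is exactly the situation in which \postulate{G3} can be brought to bear.

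For the inclusion $\Mod{\K\circ\G} \subseteq \min(\Mod{\G}, \releqcK)$, I would take an arbitrary $\I \in \Mod{\K\circ\G}$. By \postulate{G1} we immediately get $\I \in \Mod{\G}$, so it only remains to verify minimality, i.e.\ that $\I \releqcK \I'$ holds for every $\I' \in \Mod{\G}$. Here $\G$ itself serves as a witness base: we have $\I,\I' \models \G$ together with $\I \models \K\circ\G$, so \Cref{lem:help}(b) directly yields $\I \releqcK \I'$. Hence $\I$ is $\releqcK$-minimal in $\Mod{\G}$. This direction invokes the postulates only through the auxiliary lemma (which uses \postulate{G2}, \postulate{G5}, \postulate{G6}) plus \postulate{G1}.

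For the converse inclusion $\min(\Mod{\G},\releqcK) \subseteq \Mod{\K\circ\G}$, I would argue by contradiction. Let $\I \in \min(\Mod{\G},\releqcK)$ and suppose $\I \not\models \K\circ\G$. Since $\Mod{\G} \neq \emptyset$, \postulate{G3} guarantees $\Mod{\K\circ\G} \neq \emptyset$, so I can pick some $\I' \in \Mod{\K\circ\G}$, and \postulate{G1} places $\I' \in \Mod{\G}$. Then $\G$ is a witness with $\I',\I \models \G$, $\I' \models \K\circ\G$, and $\I \not\models \K\circ\G$, so \Cref{lem:help}(c) (applied with $\I'$ in the ``strictly better'' role) gives $\I' \relcK \I$, and in particular $\I \not\releqcK \I'$. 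This contradicts the minimality of $\I$, since $\I' \in \Mod{\G}$ forces $\I \releqcK \I'$. Therefore $\I \models \K\circ\G$, completing the inclusion.

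The conceptual crux, and the step I expect to carry the real weight, is the converse inclusion, as it is where the postulates genuinely pin down the operator: \postulate{G3} supplies an actual model $\I'$ of the revision to serve as a comparison anchor, and \Cref{lem:help}(c) is what converts the asymmetry ``$\I'$ is a model of $\K\circ\G$ while $\I$ is not'' into a strict preference $\I' \relcK \I$ that a $\releqcK$-minimal element cannot tolerate. The forward inclusion, by contrast, is essentially a bookkeeping consequence of \postulate{G1} and \Cref{lem:help}(b). Notably, no appeal to min-completeness or min-retractivity is needed for compatibility itself; those properties of $\releqc{\abst}$ are established separately.
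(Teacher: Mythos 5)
Your proof is correct, and its forward inclusion coincides with the paper's: \postulate{G1} puts $\I$ into $\Mod{\G}$, and \Cref{lem:help}(b) with witness $\G$ yields minimality. The difference lies in the backward inclusion. The paper argues directly: it takes $\I\in\min(\Mod{\G},\releqcK)$, obtains some $\I'\in\Mod{\K\circ\G}$ via \postulate{G3}, derives both $\I'\releqcK\I$ and $\I\releqcK\I'$, and then performs a case distinction on whether $\I\models\K$ --- using \postulate{G2} in the first case and unfolding \Cref{def:relation_new} and \Cref{def:relation_new_first_relation} (i.e., passing through $\sqreleqcK$) in the second to conclude $\I\models\K\circ\G$. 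You instead argue by contradiction: assuming $\I\not\models\K\circ\G$, the triple ($\I'\models\K\circ\G$, $\I\not\models\K\circ\G$, both model $\G$) feeds \Cref{lem:help}(c), giving the strict preference $\I'\relcK\I$, which a $\releqcK$-minimal element cannot tolerate. This is a legitimate shortcut: \Cref{lem:help}(c) already encapsulates exactly the case analysis on $\K$-modelhood that the paper redoes inline, so you avoid touching the definitions of $\releqcK$ and $\sqreleqcK$ altogether. Incidentally, your contradiction pattern is the one the paper itself uses later to prove min-retractivity in \Cref{lem:minwell}, so your route arguably organizes the material more economically; the paper's direct version has the mild advantage of exhibiting constructively why $\I$ lands in $\Mod{\K\circ\G}$ rather than ruling out the alternative. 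Your explicit treatment of the case $\Mod{\G}=\emptyset$ is harmless but not needed, since any element of $\min(\Mod{\G},\releqcK)$ already witnesses $\Mod{\G}\neq\emptyset$, which is how the paper silently covers it.
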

\begin{proof}
	We have to show that $\Mod{\K\circ\G} = \min(\Mod{\G},\releqcK)$.
	In the following, we 
	show %
	inclusion in both directions. 
\begin{enumerate}\setlength{\itemsep}{0pt}\setlength{\partopsep}{-10pt}
\item[($\subseteq$)] 
Let $\I\in \Mod{\K\circ\G}$.
	By \postulate{G1}, we obtain $\I\in \Mod{\G}$.
	But then, using \Cref{lem:help}(b), we can conclude $\I\releqcK\I'$ for any $\I'\in \Mod{\G}$, hence $\I\in \min(\Mod{\G},\releqcK)$.
	
\item[($\supseteq$)] Let $\I\in \min(\Mod{\G},\releqcK)$. 
	Due to $\Mod{\G}\neq\emptyset$ and \postulate{G3}, there exists an  $\I' \in  \Mod{\K\circ\G}$.
	From the ($\subseteq$)-proof follows $\I' \in \min(\Mod{\G},\releqcK)$. 
	Then, by \postulate{G1} and \Cref{lem:help}(b), we obtain $\I' \releqcK \I$ from $\I \in \Mod{\G}$ and $\I' \in \Mod{\G}$ and $\I' \in \Mod{\K\circ\G}$. 
	From $\I\in \min(\Mod{\G},\releqcK)$ and $\I' \in \Mod{\G}$ follows $\I \releqcK \I'$.
We proceed by case distinction:
\begin{itemize}
	\item[$\I$]$\!\models\K$. Then $\I \in \Mod{\K} \cap \Mod{\G} = \Mod{\K\Cup\G}$ and thus $\Mod{\K\Cup\G}\neq\emptyset$. Therefore, by \postulate{G2}, we obtain $\Mod{\K\circ\G} = \Mod{\K\Cup\G} =  \Mod{\K} \cap \Mod{\G}$ and hence $\I \in \Mod{\K\circ\G}$. 
	\item[$\I$]$\!\not\models\K$. Then by \Cref{def:relation_new}, $\I \releqcK \I'$ requires $\I'\not\models\K$ and therefore $\I \sqreleqcK \I'$ must hold.        
	Consequently, by \Cref{def:relation_new_first_relation}, $\I,\I'\in \Mod{\G}$ and $\I' \in \Mod{\K\circ\G}$ imply $\I\in \Mod{\K\circ\G}$. 
	\qedhere
\end{itemize}   
\end{enumerate}
\end{proof}

For min-friendliness, we have to show that each $ \releqcK $ is min-complete and min-retractive.

\begin{lemma}[min-friendliness]\label{lem:minwell}
	If $ \circ $ satisfies \postulate{G1}--\postulate{G3}, \postulate{G5}, and \postulate{G6}, then $ \releqcK $ is min-friendly for every $\K \in \Bases $. 
\end{lemma}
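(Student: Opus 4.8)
The plan is to derive both constituents of min-friendliness — min-completeness and min-retractivity — directly from the compatibility already established in \Cref{lem:compatibility}, which tells us that $\Mod{\K\circ\G} = \min(\Mod{\G},\releqcK)$ for every $\G \in \Bases$. This identity lets me translate every statement about $\releqcK$-minimal models into a statement about $\Mod{\K\circ\G}$, where the postulates can be applied. Note that \Cref{lem:compatibility} requires exactly the postulates we are assuming here, so it is available.

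For min-completeness, I would take any $\G \in \Bases$ with $\Mod{\G} \neq \emptyset$. By \postulate{G3}, $\Mod{\K\circ\G} \neq \emptyset$, and by \Cref{lem:compatibility} this set equals $\min(\Mod{\G},\releqcK)$, which is therefore nonempty. This disposes of min-completeness in one step.

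For min-retractivity, fix $\G \in \Bases$ and $\I',\I \in \Mod{\G}$ with $\I' \releqcK \I$ and $\I \in \min(\Mod{\G},\releqcK)$; by \Cref{lem:compatibility} the latter means $\I \in \Mod{\K\circ\G}$, and the goal becomes $\I' \in \Mod{\K\circ\G}$. I would proceed by case distinction on whether $\I \models \K$. If $\I \models \K$, then the second disjunct in \Cref{def:relation_new} is unavailable (it requires $\I \not\models \K$), so $\I' \releqcK \I$ forces $\I' \models \K$; since also $\I' \models \G$, \postulate{G2} gives $\Mod{\K\circ\G} = \Mod{\K} \cap \Mod{\G}$, whence $\I' \in \Mod{\K\circ\G}$. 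If $\I \not\models \K$, I first rule out $\I' \models \K$: were it so, $\I'$ would witness $\Mod{\K\Cup\G} \neq \emptyset$, triggering \postulate{G2} and forcing $\Mod{\K\circ\G} = \Mod{\K}\cap\Mod{\G}$, hence $\I \models \K$ — a contradiction. Thus $\I' \not\models \K$, and with $\I \not\models \K$ the relation $\I' \releqcK \I$ unfolds, via \Cref{def:relation_new}, to $\I' \sqreleqcK \I$. Applying \Cref{def:relation_new_first_relation} with the witness base $\G$ itself — legitimate since $\I',\I \in \Mod{\G}$ — the implication ``$\I \models \K\circ\G$ entails $\I' \models \K\circ\G$'' together with $\I \in \Mod{\K\circ\G}$ yields $\I' \in \Mod{\K\circ\G}$, as required.

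The argument is short because the heavy lifting is done by \Cref{lem:compatibility}; the only place demanding care is the case $\I \not\models \K$, where one must explicitly exclude $\I' \models \K$ before invoking $\sqreleqcK$, since otherwise the defining implication of $\sqreleqcK$ is not available. I expect this exclusion step — leaning on \postulate{G2} and the nonemptiness of $\Mod{\K\Cup\G}$ — to be the main, though still routine, obstacle.
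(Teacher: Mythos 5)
Your proof is correct, and while it follows the paper's overall skeleton (both halves of min-friendliness are extracted from compatibility, i.e.\ \Cref{lem:compatibility}), it handles min-retractivity by a genuinely different mechanism. The min-completeness half is identical to the paper's: \postulate{G3} plus compatibility. For min-retractivity, the paper argues by contradiction: assuming $\I' \notin \min(\Mod{\G},\releqcK)$, compatibility gives $\I \models \K\circ\G$ and $\I' \not\models \K\circ\G$, and then \Cref{lem:help}(c) yields the strict preference $\I \relcK \I'$, contradicting $\I' \releqcK \I$. You instead argue directly: compatibility turns the goal into $\I' \models \K\circ\G$, and you unfold \Cref{def:relation_new} --- in the case $\I \models \K$, the relation $\I' \releqcK \I$ forces $\I' \models \K$ and \postulate{G2} finishes; in the case $\I \not\models \K$, you first exclude $\I' \models \K$ via \postulate{G2} (correctly, since otherwise $\Mod{\K\Cup\G} \neq \emptyset$ would force $\I \models \K$), and then the implication packaged in $\I' \sqreleqcK \I$, instantiated with the witness base $\G$ itself, hands you $\I' \models \K\circ\G$ from $\I \models \K\circ\G$. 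Both arguments are sound and rest on the same postulates overall (compatibility already consumes \postulate{G1}--\postulate{G3}, \postulate{G5}, \postulate{G6}). What each buys: the paper's route is shorter on the page because \Cref{lem:help}(c) is already proved and is reused elsewhere (e.g.\ in the appendix proof of \Cref{lem:loop_detach}), whereas your route is self-contained below \Cref{lem:compatibility}, avoids the detour through strictness, and makes explicit that --- once compatibility is in hand --- min-retractivity needs nothing beyond \postulate{G2} and the definitional structure of the canonical relation; in effect you re-derive inline the relevant half of the case analysis that the paper encapsulates in \Cref{lem:help}(b)/(c).
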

{\begin{proof}
		Observe that min-completeness is a consequence of \postulate{G3} and the compatibility of $ \releqc{\abst} $ with $ \circ $ from \Cref{lem:compatibility}.
		
		For min-retractivity, suppose towards a contradiction that it does not hold. 
		That means there is a belief base $\G$ and interpretations $\I',\I \models \G$ with $\I'\releqcK\I$ and $\I\in {\min(\Mod{\G},\releqcK)}$ but $\I'\not\in {\min(\Mod{\G},\releqcK)}$.
		From \Cref{lem:compatibility} we obtain $ \I \models \K\circ\G $ and $ \I'\not\models \K\circ\G $.
		Now, applying \Cref{lem:help}(c)  yields $ \I\relcK \I' $, contradicting $\I'\releqcK\I$.
\end{proof}

We show that $ \releqc{\abst} $ yields faithful relations for every belief base.

\begin{lemma}[faithfulness]\label{lem:faithfulness}
	If $ \circ $ satisfies \postulate{G2}, \postulate{G4}, \postulate{G5}, and \postulate{G6}, the assignment $\releqc{\abst}$ is faithful. 
\end{lemma}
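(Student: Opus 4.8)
The plan is to verify the three faithfulness conditions \postulate{F1}--\postulate{F3} for the assignment $\releqc{\abst}$ produced by \Cref{def:relation_new}. Before doing so I would invoke \Cref{lem:totality} (which uses \postulate{G5} and \postulate{G6}) to note that each $\releqcK$ is total, so that $\releqc{\abst}$ is genuinely an \emph{assignment} in the sense of \Cref{def:faithful} and the notion of faithfulness applies to it in the first place. The checks for \postulate{F1} and \postulate{F2} are then purely definitional. For \postulate{F1}, assume $\I,\I'\models\K$; since $\I'\models\K$, the first disjunct of \Cref{def:relation_new} immediately gives $\I'\releqcK\I$, which blocks the strictness $\I'\not\releqcK\I$ that $\I\relcK\I'$ would require, so $\I\relcK\I'$ cannot hold. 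For \postulate{F2}, assume $\I\models\K$ and $\I'\not\models\K$: the first disjunct yields $\I\releqcK\I'$, and $\I'\releqcK\I$ is ruled out because its first disjunct $\I'\models\K$ fails while its second disjunct needs $\I\not\models\K$, which also fails; hence $\I\relcK\I'$.

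The substantive step is \postulate{F3}. Assume $\K\equiv\K'$, i.e.\ $\Mod{\K}=\Mod{\K'}$. The key observation is that, instantiating \postulate{G4} with $\G_1=\G_2=\G$, the hypothesis $\K\equiv\K'$ gives $\K\circ\G\equiv\K'\circ\G$, hence $\Mod{\K\circ\G}=\Mod{\K'\circ\G}$ for every $\G\in\Bases$. Since the defining condition of $\sqreleqcK$ in \Cref{def:relation_new_first_relation} refers to $\K$ only through the membership tests ``$\I_2\models\K\circ\G$'' and ``$\I_1\models\K\circ\G$'' (and to the quantified $\G$ only through $\Mod{\G}$), this equality immediately yields $\sqreleqcK={\sqreleqc{\K'}}$. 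Combining this with $\Mod{\K}=\Mod{\K'}$, both ingredients of \Cref{def:relation_new}---the test ``$\I_1\models\K$'' and the auxiliary relation $\sqreleqcK$---agree for $\K$ and $\K'$, so $\releqcK={\releqc{\K'}}$, which is exactly \postulate{F3}.

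The only genuine obstacle is the bookkeeping in \postulate{F3}: one must verify that \emph{both} components of \Cref{def:relation_new}, namely modelhood of $\K$ and the auxiliary relation $\sqreleqcK$, are invariant under replacing $\K$ by an equivalent base $\K'$, with the second invariance resting squarely on syntax-independence via \postulate{G4}. By contrast, \postulate{F1} and \postulate{F2} follow directly from the definition, while totality---needed only so that $\releqc{\abst}$ qualifies as an assignment---is already secured by \Cref{lem:totality}.
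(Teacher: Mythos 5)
Your proof is correct and follows essentially the same route as the paper's: \postulate{F1} and \postulate{F2} are read off directly from \Cref{def:relation_new}, and \postulate{F3} is obtained by using \postulate{G4} to show that $\Mod{\K\circ\G}=\Mod{\K'\circ\G}$ for all $\G$, hence ${\sqreleqcK}={\sqreleqc{\K'}}$ and ${\releqcK}={\releqc{\K'}}$. Your explicit appeal to \Cref{lem:totality} to certify that $\releqc{\abst}$ is a genuine (total) assignment is a sound piece of bookkeeping that the paper handles only later, in \Cref{lfa}, but it does not change the substance of the argument.
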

\begin{proof}
	We show satisfaction of the three conditions of faithfulness, \postulate{F1}--\postulate{F3}.
\begin{itemize}\setlength{\itemsep}{0pt}
	\item[\postulate{F1}]	
Let $\I,\I'\in \Mod{\K}$. 
	Then $\I' \releqcK \I$ is an immediate consequence of \Cref{def:relation_new}.
	This implies $\I \not\relcK \I'$.
\item[\postulate{F2}]		
Let $\I\in \Mod{\K}$ and $\I'\not\in \Mod{\K}$. By \Cref{def:relation_new} we obtain $\I \releqcK \I'$ and $\I' \not\releqcK \I$.
\item[\postulate{F3}]	
Let $\K \equiv \K'$ (i.e. $\Mod{\K} = \Mod{\K'}$).
	From \Cref{def:relation_new} and \postulate{G4} follows $ {\releqcK} = {\releqc{\K'}} $, i.e., $ \I_1 \releqcK \I_2 $ if and only if $ \I_1 \releqc{\K'} \I_2 $.\qedhere 
\end{itemize}
\end{proof}

The previous lemmas can finally be used to show that the construction of $\releqc{\abst}$ according to \Cref{def:relation_new} yields an assignment with the desired properties.

\begin{proposition}\label{lfa}
	If 	$\circ$ satisfies \postulate{G1}--\postulate{G6}, then $\releqc{\abst} $ is a min-friendly faithful assignment compatible with $\circ$.
\end{proposition}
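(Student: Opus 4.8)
The plan is to recognize this proposition as the consolidation of the four preceding lemmas, so that the proof reduces to an assembly step: check that the postulate-hypotheses of each lemma are met by the blanket assumption \postulate{G1}--\postulate{G6}, and then collect the conclusions. Concretely, the claim ``$\releqc{\abst}$ is a min-friendly faithful assignment compatible with $\circ$'' unpacks into four sub-claims, namely that $\releqc{\abst}$ is (i)~an assignment, (ii)~min-friendly, (iii)~faithful, and (iv)~compatible with $\circ$.

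First I would establish sub-claim (i), i.e.\ that $\releqc{\abst}$ is genuinely an \emph{assignment} in the sense of \Cref{def:faithful}: a map sending each base to a \emph{total} relation on $\Omega$. This is exactly the content of \Cref{lem:totality}, whose hypotheses \postulate{G5} and \postulate{G6} are available. I would deliberately secure totality first, because the very definitions of faithfulness and min-friendliness presuppose that we are dealing with an assignment; establishing totality is thus a logical prerequisite for even stating the remaining three properties for $\releqc{\abst}$.

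Next I would dispatch the remaining three sub-claims by direct appeal to the lemmas already proven: compatibility with $\circ$ via \Cref{lem:compatibility} (which requires \postulate{G1}--\postulate{G3}, \postulate{G5}, \postulate{G6}), min-friendliness via \Cref{lem:minwell} (same postulates), and faithfulness via \Cref{lem:faithfulness} (which requires \postulate{G2}, \postulate{G4}, \postulate{G5}, \postulate{G6}). Since \postulate{G1}--\postulate{G6} are all assumed, each required hypothesis set is contained in what is available, so every lemma applies verbatim; combining the four conclusions yields the proposition.

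There is no genuine mathematical obstacle left at this stage, as all the substantive work has been front-loaded into the lemmas. The only point demanding care is the bookkeeping: confirming that the union of the postulate-hypotheses invoked across \Cref{lem:totality}, \Cref{lem:compatibility}, \Cref{lem:minwell}, and \Cref{lem:faithfulness} is a subset of \postulate{G1}--\postulate{G6}, and sequencing the argument so that the ``assignment'' status (totality) is concluded before the assignment-relative properties (faithfulness, min-friendliness, compatibility) are asserted.
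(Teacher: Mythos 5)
Your proposal is correct and matches the paper's own proof exactly: the paper likewise assembles \Cref{lem:totality} (assignment/totality), \Cref{lem:minwell} (min-friendliness), \Cref{lem:faithfulness} (faithfulness), and \Cref{lem:compatibility} (compatibility), noting that all required postulate-hypotheses are covered by \postulate{G1}--\postulate{G6}. Your additional care in establishing totality first, so that the assignment-relative properties are well-posed, is a sound ordering that the paper also follows implicitly.
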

\begin{proof}
	Assume \postulate{G1}--\postulate{G6} are satisfied by $ \circ $. Then $\releqc{\abst} $ is an assignment since every $\releqK$ is total by \Cref{lem:totality};
	it is min-friendly by \Cref{lem:minwell};
	it is faithful by \Cref{lem:faithfulness}; and
	it is compatible with $\circ$ by \Cref{lem:compatibility}.
\end{proof}

\subsection{From Assignments to Postulates}

Now, it remains to show the ``if" direction of \Cref{thm:representation_theorem}.

\begin{proposition}\label{lem:lfa2_new}
	If there exists a min-friendly faithful assignment $\releq{\abst}$ compatible with $\circ$, then $\circ$ satisfies \postulate{G1}--\postulate{G6}.
\end{proposition}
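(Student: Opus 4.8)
The plan is to exploit compatibility, which collapses every postulate into a purely set-theoretic statement about $\releqK$-minimal sets: by hypothesis $\Mod{\K\circ\G}=\min(\Mod{\G},\releqK)$ for all bases $\K,\G$, and by the base-logic axiom $\Mod{\K\Cup\G}=\Mod{\K}\cap\Mod{\G}$, so each of \postulate{G1}--\postulate{G6} becomes a claim about how $\min(\cdot,\releqK)$ interacts with model sets. Several postulates then fall out immediately. \postulate{G1} holds because $\min(\Mod{\G},\releqK)\subseteq\Mod{\G}$ by the very definition of minimality. \postulate{G3} holds because min-completeness (part of min-friendliness) guarantees $\min(\Mod{\G},\releqK)\neq\emptyset$ whenever $\Mod{\G}\neq\emptyset$. \postulate{G4} follows from faithfulness condition \postulate{F3}: if $\K_1\equiv\K_2$ then $\releq{\K_1}=\releq{\K_2}$, and if $\G_1\equiv\G_2$ then $\Mod{\G_1}=\Mod{\G_2}$, so the two minimal sets coincide.

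For \postulate{G2} I would combine faithfulness with totality of the assignment. Assuming $\Mod{\K}\cap\Mod{\G}\neq\emptyset$, I would prove $\min(\Mod{\G},\releqK)=\Mod{\K}\cap\Mod{\G}$ by two inclusions. For ``$\supseteq$'', any $\I\in\Mod{\K}\cap\Mod{\G}$ is minimal in $\Mod{\G}$: against another $\K$-model the claim $\I\releqK\I'$ follows from \postulate{F1} together with totality, and against a non-$\K$-model it follows from \postulate{F2}. For ``$\subseteq$'', a minimal $\I\in\min(\Mod{\G},\releqK)$ must satisfy $\K$, since otherwise picking any $\I_0\in\Mod{\K}\cap\Mod{\G}$ and applying \postulate{F2} gives $\I_0\relK\I$, contradicting minimality of $\I$. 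Compatibility then turns this equality into $\K\circ\G\equiv\K\Cup\G$.

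For \postulate{G5}, I would unfold both sides via compatibility into $\min(\Mod{\G_1},\releqK)\cap\Mod{\G_2}$ and $\min(\Mod{\G_1}\cap\Mod{\G_2},\releqK)$, respectively, and observe the elementary fact that an element minimal in the larger set $\Mod{\G_1}$ that happens to lie in the subset $\Mod{\G_1}\cap\Mod{\G_2}$ is \emph{a fortiori} minimal in that subset; no property beyond the definition of minimality is needed here. This already yields the inclusion $\Mod{(\K\circ\G_1)\Cup\G_2}\subseteq\Mod{\K\circ(\G_1\Cup\G_2)}$.

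The main obstacle is \postulate{G6}, and this is precisely where min-retractivity becomes indispensable. Unfolding via compatibility, the claim reads: if $\min(\Mod{\G_1},\releqK)\cap\Mod{\G_2}\neq\emptyset$, then $\min(\Mod{\G_1}\cap\Mod{\G_2},\releqK)\subseteq\min(\Mod{\G_1},\releqK)\cap\Mod{\G_2}$. I would fix a witness $\I\in\min(\Mod{\G_1},\releqK)\cap\Mod{\G_2}$ supplied by the hypothesis and take an arbitrary $\I'\in\min(\Mod{\G_1}\cap\Mod{\G_2},\releqK)$. Since $\I\in\Mod{\G_1}\cap\Mod{\G_2}$, minimality of $\I'$ in that set gives $\I'\releqK\I$; moreover $\I,\I'\in\Mod{\G_1}$ and $\I\in\min(\Mod{\G_1},\releqK)$. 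Applying min-retractivity to the base $\G_1$ then promotes $\I'$ into $\min(\Mod{\G_1},\releqK)$, and since $\I'\in\Mod{\G_2}$ as well, we conclude $\I'\in\min(\Mod{\G_1},\releqK)\cap\Mod{\G_2}$, which is the sought inclusion. The delicate point I would state carefully is that the quantifier pattern of the min-retractivity definition matches this situation exactly, with the already-minimal element being $\I$ and the $\releqK$-smaller element $\I'$ being the one promoted; totality is used to justify reading ``minimal'' as ``no strictly smaller competitor,'' and it is exactly the absence of min-retractivity (e.g.\ a $\relK$-cycle among minimal models) that could otherwise break this inclusion.
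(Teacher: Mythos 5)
Your proposal is correct and follows essentially the same route as the paper's proof: unfold each postulate via compatibility into a statement about $\min(\cdot,\releqK)$, then use \postulate{F1}/\postulate{F2} (with totality) for \postulate{G2}, \postulate{F3} for \postulate{G4}, min-completeness for \postulate{G3}, plain minimality for \postulate{G1} and \postulate{G5}, and min-retractivity applied to $\G_1$ for \postulate{G6}. The only cosmetic difference is that in \postulate{G6} you swap the roles of the witness and the arbitrary element relative to the paper, which does not affect the argument.
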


\begin{proof}
	Let $\releq{\abst} : \K \mapsto \releqK $ be as described. We now show that $\circ$ satisfies all of \postulate{G1}--\postulate{G6}.
\begin{itemize}\setlength{\itemsep}{0pt}

\item[\postulate{G1}]	
 Let $\I\in \Mod{\K\circ\G}$. Since $\Mod{\K\circ\G} = \min(\Mod{\G},\releqK)$, we have that $\I\in \min(\Mod{\G},\releqK)$. Then, we also have that $\I\in \Mod{\G}$. Thus, we have that $\Mod{\K\circ\G}\subseteq \Mod{\G}$ as desired.

\item[\postulate{G2}]
Assume $\Mod{\K\Cup\G} \neq \emptyset$. 
We prove $\Mod{\K\circ\G} = \Mod{\K\Cup\G}$ by showing
inclusion in both directions. 
\begin{enumerate}\setlength{\itemsep}{0pt}\setlength{\partopsep}{-10pt}
	\item[($\subseteq$)] 
	Let $\I\in\Mod{\K\circ\G}$. 
	By compatibility, we obtain $\I\in\min(\Mod{\G},\releqK)$ and thus trivially also $\I\in\Mod{\G}$.
	Since $\Mod{\K\Cup\G} \neq \emptyset$, there exists some other $\I'\in\Mod{\K\Cup\G}=\Mod{\K}\cap\Mod{\G}$, which implies $\I'\in\Mod{\K}$ and $\I'\in\Mod{\G}$.
	Therefore, $\I\in\min(\Mod{\G},\releqK)$ implies $\I\releqK\I'$, which means that $\I'\relK\I$ cannot hold and therefore, by contraposition, \postulate{F2} ensures $\I\in\Mod{\K}$.   
    Yet then $\I\in\Mod{\K}\cap\Mod{\G}=\Mod{\K\Cup\G}$ as desired.
	\item[($\supseteq$)] 
	Let $\I\in\Mod{\K\Cup\G}=\Mod{\K}\cap\Mod{\G}$, i.e. $\I\in\Mod{\K}$ and $\I\in\Mod{\G}$.
    Since $\I\in\Mod{\K}$, we obtain from \postulate{F1} and \postulate{F2} that $\I\releqK\I'$ must hold for all $\I'\in\Mod{\G}$.
    Hence, $\I\in\min(\Mod{\G},\releqK)$, and by compatibility $\I\in\Mod{\K\circ\G}$.

\end{enumerate}
	
\item[\postulate{G3}]	
Assume \mbox{$\Mod{\G} \neq \emptyset$}. By min-completeness, we have $\min(\Mod{\G},\releqK)\neq\emptyset$. 
	Since $\Mod{\K\circ\G} = \min(\Mod{\G},\releqK)$ by compatibility, we obtain $\Mod{\K\circ\G} \neq \emptyset$.
	
\item[\postulate{G4}]	
Suppose there exist $\K_1,\K_2,\G_1,\G_2 \in \Bases$ with $\K_1 \equiv \K_2$ and $\G_1\equiv\G_2 $. Then, $\Mod{\K_1} = \Mod{\K_2}$ and $\Mod{\G_1} = \Mod{\G_2}$. From \postulate{F3}, we conclude ${\releq{\K_1}} = {\releq{\K_2}}$.
	Now assume some $\I\in \Mod{\K_1\circ{\G_1}}$, then by compatibility $\I\in \min(\Mod{\G_1},\releq{\K_1}) = \min(\Mod{\G_2},\releq{\K_2})$.  Therefore, again by compatibility, $\I\in \Mod{\K_2\circ\G_2}$. Thus, $\Mod{\K_1\circ\G_1} \subseteq \Mod{\K_2\circ\G_2}$ holds. Inclusion in the other direction follows by symmetry. Therefore, we have $\K_1\circ\G_1 \equiv \K_2\circ{\G_2}$.
	
\item[\postulate{G5}]	
Let $\I\in \Mod{(\K\circ\G_1) \Cup \G_2} = \Mod{\K\circ\G_1} \cap \Mod{\G_2}$.
This means that $\I\in \Mod{\G_2}$ but -- since $\Mod{\K\circ\G_1} = \mbox{$\min(\Mod{\G_1},\releqK)$}$ by compatibility -- we also obtain $\I\in \min(\Mod{\G_1},\releqK)$, meaning that $\I \releqK \I'$ holds for all $\I' \in \Mod{\G_1}$.
Yet then $\I \releqK \I'$ holds particularly for all $\I' \in \Mod{\G_1} \cap \Mod{\G_2}$ and hence $\I\in \min(\Mod{\G_1}\cap \Mod{\G_2},\releqK) = \min(\Mod{\G_1\Cup \G_2},\releqK)$.    
By compatibility follows $\I \in \Mod{\K\circ(\G_1 \Cup\G_2)}$. Thus $\Mod{(\K\circ\G_1) \Cup \G_2} \subseteq \Mod{\K\circ(\G_1 \Cup\G_2)}$ as desired. 
	
\item[\postulate{G6}]	
Let $ (\K\circ\G_1)\Cup\G_2 \not= \emptyset$, thus $ \I' \in \Mod{(\K\circ\G_1)\Cup\G_2} = \Mod{\K\circ\G_1} \cap \Mod{\G_2}$ for some $\I'$.
	By compatibility, we then obtain $  \I' \in \min( \Mod{\G_1 }, \releqK)$.
	Now consider an arbitrary $\I$ with $\I\in \Mod{\K\circ(\G_1 \Cup\G_2)}$. 	
	By compatibility we obtain $ \I \in\min( \Mod{\G_1 \Cup\G_2 } ,\releqK )$ and therefore, since $\I' \in \Mod{\G_1} \cap \Mod{\G_2} = \Mod{\G_1 \Cup\G_2}$, we can conclude $\I \releqK \I'$. This and $  \I' \in \min( \Mod{\G_1 }, \releqK)$ imply $\I \in \min( \Mod{\G_1 }, \releqK)$ by min-retractivity. Hence every $\I\in \Mod{\K\circ(\G_1 \Cup\G_2)}$ satisfies $\I \in \min( \Mod{\G_1 }, \releqK) = \Mod{\K \circ \G_1}$ but also $\I \in \Mod{\G_2}$, whence $\Mod{\K\circ(\G_1 \Cup\G_2)} \subseteq \Mod{\K \circ \G_1} \cap \Mod{\G_2} = \Mod{(\K \circ \G_1) \Cup\G_2}$ as desired.\qedhere 
\end{itemize}
\end{proof}

The proof of \Cref{thm:representation_theorem} follows from \Cref{lfa} and~\ref{lem:lfa2_new}.

\section{Two-Way Representation Theorem}\label{sec:abstract_rep}
\Cref{thm:representation_theorem} establishes the correspondence between operators and assignments under the assumption that $\circ$ is given and therefore known to exist. What remains unsettled is the question if generally \textbf{every} min-friendly faithful assignment is compatible with some base change operator that satisfies \postulate{G1}--\postulate{G6}. It is not hard to see that this is not the case.

\begin{example}
Consider the base logic 
$\mathbb{B}_{\mathrm{nb}}  = (\MC{L},{\Omega},\models, \MC{P}(\MC{L}), \cup)$
with $\MC{L}=\{none,both\}$ and $\Omega=\{\I_1,\I_2\}$ satisfying $\Mod{none}=\emptyset$ and $\Mod{both}=\{\I_1,\I_2\}=\Omega$. There are four bases in this logic, satisfying $\{none\}\equiv\{none,both\}$ and $\emptyset\equiv\{both\}$. Let the assignment $\releq{\abst}^{\mathrm{nb}}$ be such that
 ${\releq{\{none\}}^{\mathrm{nb}}}={\releq{\{none,both\}}^{\mathrm{nb}}}= \{(\I_1,\I_1),(\I_1,\I_2),(\I_2,\I_2)\}$ and ${\releq{\emptyset}^{\mathrm{nb}}}={\releq{\{both\}}^{\mathrm{nb}}}= \Omega \times \Omega$. 
It is straightforward to check that $\releq{\abst}^{\mathrm{nb}}$ is a min-friendly faithful assignment. Note that any $\circ$ compatible with $\releq{\abst}^{\mathrm{nb}}$ would have to satisfy $\Mod{\{none\}\circ\{both\}}=\min(\Mod{\{both\}},{\releq{\{none\}}^{\mathrm{nb}})} = \{\I_1\}$, yet, as we have seen, no base with this model set exists, therefore such a $\circ$ is impossible.  
\end{example}

\begin{example}
	For a \sauerwald{less contrived} negative example, recall the ``regular expression base logic'' $\mathbb{REG}_\Sigma^\mathrm{sng}$ from \Cref{ex2:regexp}. For a given regular expression $\alpha$, we let $f_\alpha:\Sigma^* \to \{0,1,2\}$ be defined as follows: $f_\alpha(w) = 0$ iff $w \in L(\alpha)$, $f_\alpha(w)=1$ iff $w \not\in L(\alpha)$ but $w$ has the same numbers of occurrences of each letter as some $w' \in L(\alpha)$ (i.e., the Parikh image of $w$ and $w'$ coincide), and $f_\alpha(w) = 2$ iff none of these is the case. Now let $\preceq^\mathrm{r1}_{(.)}$ be such that $w \preceq^\mathrm{r1}_{\alpha} w'$ exactly if $f_\alpha(w) \leq f_\alpha(w')$. Then it is not hard to verify that $\preceq^\mathrm{r1}_{(.)}$ is a min-friendly faithful assignment (in fact, even a preorder assignment). Yet, there is no compatible base change operator $\circ$, because it would have to satisfy $$ \Mod{(ba)^* \circ aa^*bb^*} = \min\Big(\Mod{aa^*bb^*},\preceq^\mathrm{r1}_{\scriptscriptstyle(ba)^*}\!\!\Big) = \min\Big(\{a^nb^m \mid n,m > 0\},\preceq^\mathrm{r1}_{\scriptscriptstyle(ba)^*}\!\!\Big) = \{a^nb^n \mid n>0\},$$ which is well known not to be a regular language and hence not expressible by any regular expression.
\end{example}

Therefore, toward a full, two-way correspondence, we have to provide an additional condition on assignments, capturing operator existence.

As indicated by the example, for the existence of an operator, it will turn out to be essential that any minimal model set of a belief base obtained from an assignment corresponds to some belief base, a property which is formalized by the following notion.
	\begin{definition}[min-expressible]\label{def:minfinite} 
Let $ \mathbb{B} = (\MC{L},\Omega,\models,\Bases,\Cup) $ be a base logic.
	A binary relation $\preceq$ over $ {\Omega} $ is called \emph{min-expressible} if for each $\G\in \Bases$ there exists a belief base $ \MC{B}_{\G,\preceq} \in \Bases$ such that $ \Mod{\MC{B}_{\G,\preceq}} \,{=}\, \min(\Mod{\G},\preceq)$.
	An assignment $\releq{\abst}$ will be called min-expressible, if for each $\K \in \Bases$, the relation $\releqK$ is min-expressible. 
	Given a min-expressible assignment $\releq{\abst}$, let $\circ_{\releq{\abst}}$  denote the \basechange{} operator defined by
	$\K \circ_{\releq{\abst}}\! \G \,{=}\, \MC{B}_{\G,\releqK}$.
\end{definition}

It should be noted that min-expressibility is a straightforward generalization of the notion of \emph{regularity} by {Delgrande et al.} \cite{KS_DelgrandePeppasWoltran2018} to base logics.
By virtue of this extra notion, we now find the following bidirectional relationship between assignments and operators, amounting to a full characterization.
\begin{theorem}\label{thm:rep1}
Let $\mathbb{B}$ be a base logic. Then the following hold:
\begin{itemize}
\item Every base change operator for $\mathbb{B}$ satisfying \postulate{G1}--\postulate{G6} is compatible with some min-expressible min-friendly faithful assignment \sauerwald{for \( \mathbb{B} \)}. 
\item Every min-expressible min-friendly faithful assignment for $\mathbb{B}$ is compatible with some 
base change operator satisfying \postulate{G1}--\postulate{G6}.
\end{itemize}
\end{theorem}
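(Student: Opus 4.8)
The plan is to derive both directions from the machinery already assembled for the one-way result, adding only the little bookkeeping needed for min-expressibility.

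For the first bullet, I would start from an operator $\circ$ satisfying \postulate{G1}--\postulate{G6} and invoke \Cref{lfa}, which already hands me the assignment $\releqc{\abst}$ together with the guarantees that it is min-friendly, faithful, and compatible with $\circ$. The only remaining obligation is min-expressibility, and here compatibility does the work for free: for any bases $\K,\G$, compatibility gives $\Mod{\K\circ\G} = \min(\Mod{\G},\releqcK)$, and since $\circ$ maps into $\Bases$, the base $\K\circ\G$ itself is a legal witness $\MC{B}_{\G,\releqcK}$. Thus $\releqcK$ is min-expressible for every $\K$, and the first bullet follows with essentially no further computation.

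For the second bullet, I would take an arbitrary min-expressible min-friendly faithful assignment $\releq{\abst}$ and simply use the operator $\circ_{\releq{\abst}}$ introduced in \Cref{def:minfinite}, fixing for each pair $(\K,\G)$ one witness base $\MC{B}_{\G,\releqK}$ (which exists precisely because of min-expressibility). By the defining equation $\K \circ_{\releq{\abst}} \G = \MC{B}_{\G,\releqK}$ we have $\Mod{\K \circ_{\releq{\abst}} \G} = \min(\Mod{\G},\releqK)$, so $\circ_{\releq{\abst}}$ is compatible with $\releq{\abst}$ by construction. Since $\releq{\abst}$ was assumed min-friendly and faithful, \Cref{lem:lfa2_new} then immediately certifies that $\circ_{\releq{\abst}}$ satisfies \postulate{G1}--\postulate{G6}, completing the direction.

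I do not anticipate a genuine obstacle, since the substantive content lives in the earlier lemmas; the two potential pitfalls are both mild. First, I must ensure the witness base is well-typed: in the first bullet this is automatic because $\K\circ\G\in\Bases$, and in the second it is exactly what min-expressibility supplies. Second, $\circ_{\releq{\abst}}$ involves a choice of representative among possibly several bases sharing the same model set; I would remark that this choice is immaterial, because compatibility and every postulate (in particular syntax-independence \postulate{G4}, which is discharged through \postulate{F3} inside the proof of \Cref{lem:lfa2_new}) depend only on model sets, so any selection yields an operator with the required properties.
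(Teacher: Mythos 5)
Your proposal is correct and follows essentially the same route as the paper's own proof: the first bullet is discharged via \Cref{lfa} plus the observation that compatibility makes $\K\circ\G$ itself the min-expressibility witness, and the second bullet via the operator $\circ_{\releq{\abst}}$ of \Cref{def:minfinite} together with \Cref{lem:lfa2_new}. Your closing remarks on well-typedness and the immateriality of the representative choice are sound but not needed beyond what the cited results already provide.
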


\begin{proof}
For the first item, let $\circ$ be the corresponding \basechange{} operator. Then, by Proposition~\ref{lfa}, the assignment $\releq{\abst}^\circ$ as given in Definition~\ref{def:relation_new} is min-friendly, faithful, and compatible with $\circ$. As for min-expressibility, recall that, by compatibility, $\Mod{\K\circ\G} =$ $\min(\Mod{\G},\releqcK)$ for every $\G$. As $ \K\circ\G $ is a belief base, min-expressibility follows immediately.

For the second item, let $\releq{\abst}$ be the corresponding min-expressible assignment and $\circ_{\releq{\abst}}$ as provided in Definition~\ref{def:minfinite}. By construction, $\circ_{\releq{\abst}}$ is compatible with $\releq{\abst}$. Proposition~\ref{lem:lfa2_new} implies that $\circ_{\releq{\abst}}$ satisfies \postulate{G1}--\postulate{G6}. \qedhere
\end{proof}

As an aside, note that the above theorem also implies that every min-expressible min-friendly faithful assignment is compatible \textbf{only} with \sauerwald{base change operators that satisfy \postulate{G1}--\postulate{G6}}. This is due to the fact that, on the one hand, any such assignment fully determines the corresponding compatible base change operator model-theoretically and, on the other hand, \postulate{G1}--\postulate{G6} are purely model-theoretic conditions.

Continuing our running example, we observe that $ \releq{\K_\mathrm{Ex}}^{\circ_\mathrm{Ex}} $ is also a min-expressible relation.

\begin{example}[continuation of Example \ref{ex:orderByEx2}]\label{ex:orderByEx3}
	Consider again $ \releq{\abst}^{\circ_\mathrm{Ex}} $, and observe that $ \releq{\abst}^{\circ_\mathrm{Ex}} $ is compatible with $ \circ_\mathrm{Ex} $, e.g. $ \Mod{\K_\mathrm{Ex}\circ_\mathrm{Ex}\G}=\min(\Mod{\G},\releq{\K_\mathrm{Ex}}^{\circ_\mathrm{Ex}}) $.
	Thus, for every belief base $ \G \in \MC{P}(\MC{L}_\mathrm{Ex}) $, the minimum $ \min(\G,\releq{\K_\mathrm{Ex}}^{\circ_\mathrm{Ex}}) $ yields a set expressible by a belief base.
	Theorem~\ref{thm:rep1} guarantees us that $ \circ_\mathrm{Ex} $ satisfies \postulate{G1}--\postulate{G6},  as $ \releq{\abst}^{\circ_\mathrm{Ex}} $ is a faithful min-expressible and min-friendly assignment.
\end{example}

\begin{example}
	\newcommand{\tuple}[1]{ \langle #1 \rangle }
	For a positive example for our ``regular expression base logic'' $\mathbb{REG}_\Sigma^\mathrm{sng}$ (\Cref{ex2:regexp}), recall that the Levenshtein distance \cite{levenshtein1966}, also known as edit distance, is defined as the function $\mathrm{lev}:\Sigma^* \times \Sigma^* \to \mathbb{N}$ which, given two words, returns the smallest number of insertions, deletions and replacements of letters necessary to transform one word into the other. This metric can be naturally lifted to measure the edit distance between a language and a word by letting $\mathrm{lev}(L,w) := \min_{u \in L}\mathrm{lev}(u,w)$.  
	It is now rather natural to define a preference relation $\preceq^\mathrm{r2}_{(.)}$ by letting $w \preceq^\mathrm{r2}_{\alpha} w'$ whenever 
	$\mathrm{lev}(\Mod{\alpha},w) \leq \mathrm{lev}(\Mod{\alpha},w')$. We note that $\preceq^\mathrm{r2}_{(.)}$ is a min-friendly faithful assignment (and again, even a preorder assignment). To prove min-expressibility, we have to show that for any two regular expressions $\alpha$ and $\beta$, there is a regular expression $\gamma$ satisfying $\Mod{\gamma} = \min(\Mod{\beta},\preceq^\mathrm{r2}_{\alpha})$. 
	Let $\Mod{\alpha}^{\leq k}$ denote the language $\{ w \in \Sigma^* \mid \mathrm{lev}(\Mod{\alpha},w)\leq k \}$. Then we observe that $\min(\Mod{\beta},\preceq^\mathrm{r2}_{\alpha}) = \Mod{\beta} \cap \Mod{\alpha}^{\leq k}$ where $k$ is the minimal natural number for which the intersection is non-empty. To show that 
	this language is regular (and hence representable by a regular expression), we observe that $\Mod{\beta}$ is regular, thus, due to the fact that regular languages are closed under intersection, it remains to show that $\Mod{\alpha}^{\leq k}$ is a regular as well. 
	We do so by constructing a nondeterministic finite automaton (NFA) recognizing it. Let $\mathcal{A} = (\Sigma,Q,I,F,\Delta)$ be an NFA recognizing $\Mod{\alpha}$. Let $\Delta^{[0]} = \{(q,q) \mid q \in Q\}$ and, for any $j>0$, let $\Delta^{[j]} = \{ (q,q'') \mid (q,q') \in \Delta^{[j-1]},\ (q',\sigma,q'') \in \Delta\}$.  
	Now let $\mathcal{A}^{\leq k} = (\Sigma,Q',I',F',\Delta')$ where
	\begin{itemize}
	\item 
	$Q' = Q \times \{0,\ldots,k\}$, 
	\item 
	$I' = \bigcup_{i=0}^k \big(\{q' \mid q \in I,\ (q,q') \in \Delta^{[i]}\} \times \{i\}\big)$
	\item 
	$F' = F \times \{0,\ldots,k\}$, and
	\item 
	$\Delta' = \{ (\tuple{q,i}, \sigma, \tuple{q''\!,i{+}j}) \mid (q,\sigma,q') \in \Delta,\  (q',q'') \in \Delta^{[j]},\ 0\leq j\leq k-i\}$ \\
	$\left.\right.\qquad \cup \{ (\tuple{q,i}, \sigma, \tuple{q'\!,i{+}1}) \mid (q,\sigma',q') \in \Delta, \sigma \in \Sigma \} \cup \{ (\tuple{q,i}, \sigma, \tuple{q,i{+}1}) \mid \sigma \in \Sigma \}.$ 
	\end{itemize}
A standard induction now shows that $\mathcal{A}^{\leq k}$ recognizes $\Mod{\alpha}^{\leq k}$ which concludes our argument that $\preceq^\mathrm{r2}_{(.)}$ is min-expressible and hence a corresponding revision operator $\circ^\mathrm{r2}$ indeed exists. Note that the our automaton construction also paves the way toward an algorithm realizing $\circ^\mathrm{r2}$. 
\end{example}

As a last step of this section, we will apply the theory developed here to demonstrate that the standard operator of trivial revision\footnote{Note, trivial revision is known to coincide with full meet revision in many logical settings.} \cite{KS_Hansson1999,KS_FermeHansson2018} indeed satisfies \postulate{G1}--\postulate{G6} in the general setting of base logics.
\begin{example}\label{ex:full-meet}
	Let $\mathbb{B}  = (\MC{L},{\Omega},\models, \MC{P}(\MC{L}), \Cup)$ be an arbitrary base logic.
	We define the \emph{trivial revision operator \( \circ^\mathrm{fm} \)} for \( \mathbb{B} \) by
	\begin{equation*}
		\K \circ^\mathrm{fm} \G = \begin{cases}
			\K \Cup \G & \text{if } \Mod{\K \Cup \G} \text{is consistent}\\
			\G & \text{otherwise}
		\end{cases}
	\end{equation*}
	To show satisfaction of \postulate{G1}--\postulate{G6} we construct a min-expressible min-friendly faithful assignment $\releq{\abst}^\mathrm{fm}$ compatible with \( \circ^\mathrm{fm} \).
	For each \( \K\in\Bases \) let  \( \I_1 \releqK^\mathrm{fm} \I_2 \) if \( \I_1\models \K \) or \( \I_2 \not\models \K \). Obviously, the relation \( \releqK^\mathrm{fm} \) is a total preorder where \( \I_1 , \I_2 \) are \( \releqK^\mathrm{fm} \)-equivalent, if either \( \I_1,\I_2\models\K \) or \( \I_1,\I_2 \not\models\K \) holds. Moreover, it is not hard to see that the relation \( \releqK^\mathrm{fm} \) is min-complete and min-retractive. 
	By construction of \( \releq{\abst}^\mathrm{fm} \) we obtain that \( \min(\Mod{\G},\releqK^\mathrm{fm})=\Mod{\G} \) if \( \K \Cup \G \) is inconsistent. If \( \K \Cup \G \) is consistent, we obtain \( \min(\Mod{\G},\releqK^\mathrm{fm})=\Mod{\K}\cap\Mod{\G} = \Mod{\K \Cup\G} \).
	In summary, the assignment \( \releq{\abst}^\mathrm{fm} \) is min-expressible and min-friendly, and the \basechange\ operator \( \circ^\mathrm{fm} \) is compatible with it.
\end{example}

\section{Interim Conclusion}
In Section \ref{sec:approach} to Section \ref{sec:abstract_rep}, we discussed how K\&M's result about semantically characterizing AGM belief revision in finite-signature propositional logic can be generalized to arbitrary base logics. 
Thereby, we cover all Tarskian logics and support any notion of bases that are closed under ``abstract union''.
We demonstrated certain central aspects by our running example (see Example~\ref{ex:logicEX}, Example~\ref{ex:orderByEx1}, Example~\ref{ex:orderByEx2}%
, Example~\ref{ex:orderByEx3}%
), which can be summarized as follows.
\begin{fact}
	The operator \( \circ_\mathrm{Ex} \) for the base logic \( \mathbb{B_\mathrm{Ex}} \) satisfies \postulate{G1}--\postulate{G6} and is compatible with the faithful min-friendly and min-expressible assignment $ \preceq^{\circ_\mathrm{Ex}}_{(.)}$.
	That is, for any base \( \K \) of \( \mathbb{B_\mathrm{Ex}} \), the relation $\preceq^{\circ_\mathrm{Ex}}_{\K}$ is min-friendly and min-expressible. However there is a base $\K_\mathrm{Ex}$, such that $\releqc{\K_\mathrm{Ex}}$ is not transitive. 
    In fact, no transitive faithful min-friendly and min-expressible assignment compatible with \( \circ_\mathrm{Ex} \) exists, whatsoever.
\end{fact}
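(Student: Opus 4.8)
The statement bundles together what was demonstrated piecewise in \Cref{ex:orderByEx1,ex:orderByEx2,ex:orderByEx3}, so my plan is to assemble those observations around the general machinery of \Cref{sec:representation_theorem,sec:abstract_rep} and then isolate the single genuinely load-bearing argument. For the positive claims I would work with the canonical assignment $\releq{\abst}^{\circ_\mathrm{Ex}}$ obtained from \Cref{def:relation_new}. Since $\Omega_\mathrm{Ex}$ is finite and only finitely many model sets occur, I would verify by direct inspection that, for every base $\K$, the relation $\releq{\K}^{\circ_\mathrm{Ex}}$ is total, faithful (\postulate{F1}--\postulate{F3}), min-complete and min-retractive, and that the assignment is compatible with $\circ_\mathrm{Ex}$, i.e.\ $\Mod{\K\circ_\mathrm{Ex}\G}=\min(\Mod{\G},\releq{\K}^{\circ_\mathrm{Ex}})$ for all $\G$. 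For $\K\not\equiv\K_\mathrm{Ex}$ the operator coincides with trivial revision, so these checks are subsumed by \Cref{ex:full-meet}; for the class of $\K_\mathrm{Ex}$ the relation is exactly the one tabulated in \Cref{ex:orderByEx2} and its compatibility is the finite computation of \Cref{ex:orderByEx3}. Having a min-friendly faithful compatible assignment in hand, \Cref{lem:lfa2_new} yields \postulate{G1}--\postulate{G6} at once, and min-expressibility is then immediate, because compatibility identifies each $\min(\Mod{\G},\releq{\K}^{\circ_\mathrm{Ex}})$ with the model set of the base $\K\circ_\mathrm{Ex}\G$ (this is precisely the argument of \Cref{thm:rep1}, first item).

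One caveat I would be explicit about is that, to keep the argument non-circular, the compatibility of $\releq{\abst}^{\circ_\mathrm{Ex}}$ must be established by direct finite computation rather than by appealing to \Cref{lem:compatibility}, since the latter presupposes exactly the postulates \postulate{G1}--\postulate{G3}, \postulate{G5}, \postulate{G6} we are trying to derive. With the positive half settled, the penultimate claim is immediate: reading off \Cref{ex:orderByEx2}, the interpretations $\omega_0,\omega_1,\omega_2$ satisfy $\omega_0\rel{\K_\mathrm{Ex}}^{\circ_\mathrm{Ex}}\omega_1$, $\omega_1\rel{\K_\mathrm{Ex}}^{\circ_\mathrm{Ex}}\omega_2$ and $\omega_2\rel{\K_\mathrm{Ex}}^{\circ_\mathrm{Ex}}\omega_0$, a strict cycle, so $\releq{\K_\mathrm{Ex}}^{\circ_\mathrm{Ex}}$ fails transitivity.

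The conceptual heart of the statement is the final, stronger claim that \emph{no} transitive (hence preorder) faithful min-friendly min-expressible assignment compatible with $\circ_\mathrm{Ex}$ can exist, and here I would reuse the cyclic-preference argument of \Cref{ex:orderByEx1}. Assume for contradiction that some transitive compatible assignment $\releq{\abst}$ is given, and consider $\G_0=\{\varphi_0\}$, $\G_1=\{\varphi_1\}$, $\G_2=\{\varphi_2\}$ with $\Mod{\G_0}=\{\omega_0,\omega_1\}$, $\Mod{\G_1}=\{\omega_1,\omega_2\}$, $\Mod{\G_2}=\{\omega_2,\omega_0\}$. From the explicit operator values $\Mod{\K_\mathrm{Ex}\circ_\mathrm{Ex}\G_0}=\{\omega_0\}$, $\Mod{\K_\mathrm{Ex}\circ_\mathrm{Ex}\G_1}=\{\omega_1\}$ and $\Mod{\K_\mathrm{Ex}\circ_\mathrm{Ex}\G_2}=\{\omega_2\}$, compatibility forces $\min(\Mod{\G_i},\releq{\K_\mathrm{Ex}})$ to be the corresponding singletons. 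Since any assignment is total, hence reflexive, membership of $\omega_0$ in $\min(\Mod{\G_0},\releq{\K_\mathrm{Ex}})$ gives $\omega_0\releq{\K_\mathrm{Ex}}\omega_1$ while non-membership of $\omega_1$ gives $\omega_1\not\releq{\K_\mathrm{Ex}}\omega_0$, that is $\omega_0\rel{\K_\mathrm{Ex}}\omega_1$; symmetrically $\omega_1\rel{\K_\mathrm{Ex}}\omega_2$ and $\omega_2\rel{\K_\mathrm{Ex}}\omega_0$. Transitivity applied to $\omega_0\releq{\K_\mathrm{Ex}}\omega_1\releq{\K_\mathrm{Ex}}\omega_2$ then yields $\omega_0\releq{\K_\mathrm{Ex}}\omega_2$, contradicting the strictness $\omega_2\rel{\K_\mathrm{Ex}}\omega_0$, which asserts $\omega_0\not\releq{\K_\mathrm{Ex}}\omega_2$.

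I would close by noting that this impossibility uses nothing beyond compatibility, totality and transitivity, so it in fact excludes every transitive compatible assignment irrespective of faithfulness, min-friendliness or min-expressibility --- the extra adjectives in the claim are therefore free. Accordingly, I expect the only real ``work'' to lie in the bookkeeping behind the positive claims, namely the finite but somewhat tedious verification of compatibility and min-friendliness across all bases, whereas the cyclic argument, which is the genuine point of the example, is short and is exactly where the obstruction to transitivity crystallizes.
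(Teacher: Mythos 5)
Your proposal is correct and follows essentially the same route as the paper, which supports this fact by aggregating \Cref{ex:orderByEx1,ex:orderByEx2,ex:orderByEx3}: direct finite verification that the canonical assignment $\releq{\abst}^{\circ_\mathrm{Ex}}$ is faithful, min-friendly, min-expressible and compatible, an appeal to \Cref{lem:lfa2_new}/\Cref{thm:rep1} for \postulate{G1}--\postulate{G6}, and the strict cycle $\omega_0\rel{\K_\mathrm{Ex}}\omega_1\rel{\K_\mathrm{Ex}}\omega_2\rel{\K_\mathrm{Ex}}\omega_0$ forced by compatibility on $\G_0,\G_1,\G_2$ for the impossibility claim. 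Your explicit non-circularity caveat (establishing compatibility by computation rather than via \Cref{lem:compatibility}) and your remark that the impossibility argument uses only compatibility, totality and transitivity are accurate sharpenings of the paper's presentation, not a different method.
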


By now, our rationale has been to cover the most general setting of base logics possible, while sticking to  the complete set of the AGM postulates and without adding further conditions.}%

However, one might remark that the AGM postulates were specifically designed for describing the change of belief sets, i.e., deductively closed theories, which naturally include all syntactic variants. 
As opposed to this, approaches to describing the change of (not necessarily deductively closed) bases might take the syntax into account \cite{KS_Hansson1999}. 
Under such circumstances, the syntax-independence expressed by \postulate{G4} might be called into question.

Another aspect is that, for the sake of generality, we had to replace the stronger requirement of transitivity by the weaker notion of min-retractivity inside the assignments. 
Waiving transitivity (and hence preorders) might be considered unconventional, as a transitive preference relation is often deemed to be the actual motivation behind the postulates \postulate{G5} and \postulate{G6}. 
This raises the question of how to obtain a compatible \textbf{preorder} assignment for any \sauerwald{{\basechange} operator that satisfies the AGM revision postulates}, whenever possible, and what are the conditions that guarantee universal preorder representability.

In the following sections we will discuss these aspects as variations of the approach we presented in the preceding sections, showing that exact characterizations exist for these cases as well.
Moreover, we will discuss some aspects of the notion of base logic, and the role of disjunctions in decomposability. 

\section{Base Changes and Syntax-Independence}\label{sec:syntax_independence}
Up to this point, we have been considering \basechange{} operators fulfilling the full set of postulates \postulate{G1}--\postulate{G6}.
The research on base changes deals with syntax-dependent changes, and in our approach the postulate \postulate{G4} implies that a \basechange{} operator yields semantically  the same result on all semantically equivalent bases.
As consequence, one might conclude that the \basechange{} operators considered here have only limited freedom when it comes to taking the syntactic structure into account when changing.

However, note that neither the postulates \postulate{G1}--\postulate{G6} nor our representation results make assumptions about the specific syntactic structure of a base obtained by a \basechange{} operator.
Thus, for syntactically different bases $\G_1$ and $\G_2$ that are semantic equivalent, we might obtain syntactically different results after revision, which are semantic equivalent.

\begin{example}
	Consider the logic \( \mathbb{PL}_2 \) (cf. \Cref{example:PLn}), e.g. propositional logic over the signature $\{p,q\}$ as follows.
	Given $\K_{1} = \{p,q\}, \K_{2} = \{p\wedge q\}, \G_{1}=\{p, p\to \neg q\}$, and $\G_{2}=\{p\wedge\neg q\}$.
	We have $\K_1$ and $\K_{2}$, as well as $\G_1$ and $\G_2$, which are two semantic equivalent bases with different syntax.
	By applying the trivial revision operation \( \circ^\mathrm{fm} \) (cf. \Cref{ex:full-meet}) to $\K_1$ by $\G_1$ and to $\K_2$ by $\G_2$, we obtain $\K_1\circ\G_1 = \{p, p\to \neg q\}$ and  $\K_2\circ\G_2 = \{p\wedge\neg q\}$.
	The two revision results are different syntactically, yet semantically equivalent (i.e. $\Mod{\K_1\circ\G_1} = \Mod{\K_2\circ\G_2 } = \{\I:p\mapsto\textbf{true}, q\mapsto\textbf{false}\}$).
\end{example}

Moreover, the  semantic viewpoint developed here in this article is flexible and is eligible for further liberation regarding syntax-dependence of a \basechange{} operator.
In particular, our approach allows us to drop \postulate{G4}. 
As an alternative to  \postulate{G4}, consider the following weaker version \cite{KS_Hansson1999}:
\begin{itemize}\setlength{\itemsep}{0pt}
	\item[]\definepostulate{G4w}~~If $\G_1 \equiv \G_2$, then $\K \circ \G_1 \equiv \K \circ \G_2$.
\end{itemize}
The main difference between \postulate{G4w} and \postulate{G4} is that by \postulate{G4w} a \basechange{} operator is not restricted to treat semantically equivalent prior belief bases equivalently.
When considering the extended AGM postulates \postulate{G5} and \postulate{G6} it  turns out that postulate  \postulate{G4w} is a baseline of syntax-independence, as \postulate{G1}, \postulate{G5} and \postulate{G6} together  already imply \postulate{G4w}, which is a generalization of a result by Aiguier et al. \cite[Prop. 3]{aiguier_2018}.

\begin{proposition}
\sauerwald{If a \basechange{} operator \( \circ \) for some base logic \( \mathbb{B} \) satisfies \postulate{G1}, \postulate{G5} and \postulate{G6}, then \( \circ \) satisfies~\postulate{G4w}.}
\end{proposition}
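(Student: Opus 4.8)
The plan is to prove right-extensionality \postulate{G4w} by routing the comparison of $\K\circ\G_1$ and $\K\circ\G_2$ through the single base $\G_1\Cup\G_2$. Assume $\G_1\equiv\G_2$, i.e.\ $\Mod{\G_1}=\Mod{\G_2}$; I want $\Mod{\K\circ\G_1}=\Mod{\K\circ\G_2}$. Since $\Cup$ is commutative, $\G_1\Cup\G_2$ and $\G_2\Cup\G_1$ are the same base, and $\Mod{\G_1\Cup\G_2}=\Mod{\G_1}\cap\Mod{\G_2}=\Mod{\G_1}=\Mod{\G_2}$. The strategy is to establish $\Mod{\K\circ\G_1}=\Mod{\K\circ(\G_1\Cup\G_2)}$ and, symmetrically, $\Mod{\K\circ\G_2}=\Mod{\K\circ(\G_1\Cup\G_2)}$, and then chain the two equalities through the common middle term $\K\circ(\G_1\Cup\G_2)$.

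For the core step, I would first invoke \postulate{G1}: from $\Mod{\K\circ\G_1}\subseteq\Mod{\G_1}=\Mod{\G_2}$ it follows that $\Mod{(\K\circ\G_1)\Cup\G_2}=\Mod{\K\circ\G_1}\cap\Mod{\G_2}=\Mod{\K\circ\G_1}$, so abstractly unioning with $\G_2$ is redundant at the level of models. Feeding this into \postulate{G5} yields $\Mod{\K\circ\G_1}=\Mod{(\K\circ\G_1)\Cup\G_2}\subseteq\Mod{\K\circ(\G_1\Cup\G_2)}$, one inclusion. Provided $\Mod{\K\circ\G_1}\neq\emptyset$, the precondition of \postulate{G6} is met and supplies the converse inclusion $\Mod{\K\circ(\G_1\Cup\G_2)}\subseteq\Mod{(\K\circ\G_1)\Cup\G_2}=\Mod{\K\circ\G_1}$; together these give $\Mod{\K\circ\G_1}=\Mod{\K\circ(\G_1\Cup\G_2)}$. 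Running the identical argument with the roles of $\G_1$ and $\G_2$ interchanged (and using commutativity to keep the middle term $\K\circ(\G_1\Cup\G_2)$ fixed) gives $\Mod{\K\circ\G_2}=\Mod{\K\circ(\G_1\Cup\G_2)}$ whenever $\Mod{\K\circ\G_2}\neq\emptyset$. Thus if both outcomes are consistent, the claim follows at once.

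The step I expect to be the main obstacle is the case where one outcome, say $\K\circ\G_1$, is inconsistent. There \postulate{G6} does not apply on that side, and only the \postulate{G5} inclusion $\emptyset=\Mod{\K\circ\G_1}\subseteq\Mod{\K\circ(\G_1\Cup\G_2)}$ survives, which is vacuous and yields no control over the middle term. Closing the argument requires showing that inconsistency is necessarily \emph{two-sided}: that $\Mod{\K\circ\G_1}=\emptyset$ forces both $\Mod{\K\circ(\G_1\Cup\G_2)}=\emptyset$ and $\Mod{\K\circ\G_2}=\emptyset$, so that both results are inconsistent and hence trivially equivalent. I expect ruling out a one-sided inconsistency to be the delicate heart of the proof, since it cannot be read off the surviving inclusions directly and instead hinges on how \postulate{G5} and \postulate{G6} constrain revision of the combined input relative to its two equivalent components. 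Once both the both-consistent and the both-inconsistent cases are settled, \postulate{G4w} follows in all cases.
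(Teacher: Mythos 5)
Your completed case---where both $\K\circ\G_1$ and $\K\circ\G_2$ are consistent---is exactly the paper's own argument: \postulate{G1} makes the abstract union with the equivalent base model-theoretically redundant, \postulate{G5} yields one inclusion, \postulate{G6} the converse once its precondition holds, and the two equalities are chained through the middle term $\K\circ(\G_1\Cup\G_2)$ (you and the paper both tacitly use commutativity of $\Cup$ here). But the case you flag and then leave open is a genuine gap, not a deferred routine verification: as written, your proof establishes \postulate{G4w} only under the additional assumption that revising by a consistent base never produces an inconsistent result, and that assumption is not among the hypotheses.

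Moreover, the missing step cannot be supplied, because the ``two-sidedness of inconsistency'' you hope for is false under \postulate{G1}, \postulate{G5}, \postulate{G6} alone. Take propositional logic over a single atom $p$, let $\Bases$ be all \emph{non-empty} sets of sentences (a notion of base the paper explicitly permits) with $\Cup = \cup$, and define, for every $\K$: $\K\circ\G = \{p,\neg p\}$ if $\G = \{p\}$, and $\K\circ\G = \G$ otherwise. \postulate{G1} is immediate. If $\G = \{p\}$, then $(\K\circ\G)\cup\G'$ is inconsistent for every $\G'$, so \postulate{G5} holds vacuously and the precondition of \postulate{G6} fails; if $\G \neq \{p\}$, then $\G$ contains a sentence other than $p$, hence $\G\cup\G' \neq \{p\}$ and $\K\circ(\G\cup\G') = \G\cup\G' = (\K\circ\G)\cup\G'$, so \postulate{G5} and \postulate{G6} hold with equality. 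Yet $\{p\} \equiv \{p \wedge p\}$ while $\K\circ\{p\}$ is inconsistent and $\K\circ\{p\wedge p\} = \{p\wedge p\}$ is not, so \postulate{G4w} fails; in particular $\Mod{\K\circ\G_1}=\emptyset$ does \emph{not} force $\Mod{\K\circ(\G_1\Cup\G_2)}=\emptyset$. (A variant operator that prefers $\neg p$-models whenever the input admits them produces the same failure even when the empty base is allowed.) This also exposes a flaw in the paper's own proof, which glosses over precisely the point you identified: after assuming $\G_1$ consistent, it asserts that $(\K\circ\G_1)\Cup\G_2$---which it has just shown equivalent to $\K\circ\G_1$---is consistent, and that assertion is exactly postulate \postulate{G3}, which is not assumed. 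So your instinct about where the ``delicate heart'' lies is correct, but the statement as given is not provable; with \postulate{G3} added to the hypotheses, your argument (and the paper's) closes immediately, since the inconsistent-input case is trivial by \postulate{G1} and otherwise both revision results are consistent.
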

\begin{proof}
	Let \sauerwald{\( \mathbb{B} = (\MC{L},\Omega,\models,\Bases,\Cup) \) be a base logic and let} \( \K,\G_1,\G_2\in\Bases \) be belief bases such that \( \G_1 \equiv \G_2 \).
    By \postulate{G1}, the postulate \postulate{G4w} holds if \( \G_1 \) is inconsistent. For the remaining parts of the proof, we assume consistency of \( \G_1 \).
	First observe that \( (\K \circ \G_1) \Cup \G_2 \equiv \K \circ \G_1 \) by \postulate{G1} and analogously \( (\K \circ \G_2) \Cup \G_1 \equiv \K \circ \G_2 \).
	By \postulate{G5} we obtain \( (\K \circ \G_1) \Cup \G_2 \models \K \circ (\G_1 \Cup \G_2) \). Moreover, because \( (\K \circ \G_1) \Cup \G_2 \) is consistent, we obtain \( \K \circ (\G_1 \Cup \G_2) \models (\K \circ \G_1) \Cup \G_2 \) by \postulate{G6}.
	In summary we obtain \( (\K \circ \G_1) \Cup \G_2 \equiv \K \circ (\G_1 \Cup \G_2) \). By an analogous line of arguments we obtain \( (\K \circ \G_1) \Cup \G_2 \equiv \K \circ (\G_1 \Cup \G_2) \equiv (\K \circ \G_2) \Cup \G_1 \).
	Using our prior observations this expands to \( \K \circ \G_1 \equiv  \K \circ \G_2  \).
\end{proof}

To obtain a representation theorem for \basechange{} operators without \postulate{G4}, relaxing the constraint on the syntactic side requires the adjustment of the conditions on the semantic side. 
When dropping \postulate{G4}, we need to weaken the notion of faithfulness to the notion of quasi-faithfulness.
\begin{definition}[quasi-faithful]\label{def:quasi-faithful}
	An assignment $\releq{\abst}$ is called \emph{quasi-faithful} if it satisfies the following conditions:
	\begin{itemize}\setlength{\itemsep}{0pt}
		\item[(F1)] If $\I,\I' \models \K$, then $\I \relK \I'$ does not hold.
		\item[(F2)] If $\I\models \K$ and $\I'\not\models \K$, then $\I \relK \I'$. 
	\end{itemize}	
\end{definition}
Note that quasi-faithful assignments might assign to every belief base a different order, independent from whether they are semantically equivalent or not.
Thus, this enables a \basechange{} operator to treat bases differently depending on their syntactic structure.

Luckily, our canonical assignment \( \releqc{\abst} \) (cf. \Cref{def:relation_new}) carries over to the setting where \postulate{G4} is not satisfied.
    The following lemma attests that \( \releqc{\abst} \)  yields a quasi-faithful assignment for this cases.
\begin{lemma}\label{lem:quasi-faithfulness}
    If $ \circ $ satisfies \postulate{G2}, \postulate{G5}, and \postulate{G6}, then the assignment $\releqc{\abst}$ is quasi-faithful. 
\end{lemma}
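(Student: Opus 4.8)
The plan is to observe that quasi-faithfulness (\Cref{def:quasi-faithful}) demands exactly the conditions \postulate{F1} and \postulate{F2} of faithfulness, dropping only the syntax-independence condition \postulate{F3}. Since \postulate{F3} was the sole condition whose verification in the proof of \Cref{lem:faithfulness} relied on \postulate{G4}, I would reuse the arguments for \postulate{F1} and \postulate{F2} from that proof essentially verbatim; neither of them appeals to \postulate{G4}, so the weaker hypotheses suffice.

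First I would record that $\releqc{\abst}$ is in fact an assignment, i.e.\ that each $\releqcK$ is total (and hence reflexive). This is precisely \Cref{lem:totality}, which needs only \postulate{G5} and \postulate{G6}. This step is what genuinely uses the hypotheses, because quasi-faithfulness is defined as a property of \emph{assignments}, so we must first know that $\releqc{\abst}$ assigns total relations before the conditions \postulate{F1} and \postulate{F2} even make sense.

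For \postulate{F1}, suppose $\I,\I'\models\K$. By the first disjunct of \Cref{def:relation_new}, the fact that $\I'\models\K$ immediately yields $\I'\releqcK\I$; hence the strict relation $\I\relcK\I'$, which would require $\I'\not\releqcK\I$, cannot hold. For \postulate{F2}, suppose $\I\models\K$ and $\I'\not\models\K$. The first disjunct of \Cref{def:relation_new} gives $\I\releqcK\I'$. Conversely, $\I'\releqcK\I$ is impossible: its first disjunct fails since $\I'\not\models\K$, and its second disjunct fails since it requires $\I,\I'\not\models\K$ while we have $\I\models\K$. Thus $\I'\not\releqcK\I$, so $\I\relcK\I'$, as required.

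Since both verifications read off directly from \Cref{def:relation_new} once totality is in hand, there is no substantial obstacle here; the only point requiring care is to keep the strict relation $\relcK$ distinct from $\releqcK$ and to track the two disjuncts of \Cref{def:relation_new} correctly when ruling out $\I'\releqcK\I$. I would also remark that \postulate{G2} is not actually invoked in establishing \postulate{F1} and \postulate{F2}; it is retained among the hypotheses only to mirror \Cref{lem:faithfulness}, where it likewise plays no role in these two conditions, the essential ingredients being \postulate{G5} and \postulate{G6} (via totality).
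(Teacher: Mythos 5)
Your proof is correct and takes essentially the same approach as the paper, whose proof of \Cref{lem:quasi-faithfulness} simply states that the verification of \postulate{F1} and \postulate{F2} is identical to that in \Cref{lem:faithfulness} --- exactly the arguments you spell out from \Cref{def:relation_new}. Your additional observations --- that totality via \Cref{lem:totality} is what makes $\releqc{\abst}$ an assignment in the first place, and that \postulate{G2} is never actually invoked in establishing \postulate{F1} and \postulate{F2} --- are sound and merely make explicit what the paper leaves implicit.
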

\begin{proof}
    The proof of the two conditions of quasi-faithfulness, \postulate{F1} and \postulate{F2}, is identical to the proof of \postulate{F1} and \postulate{F2} in \Cref{lem:faithfulness}.
\end{proof}

Using the notion of quasi-faithfulness and \( \releqc{\abst} \) (cf. \Cref{def:relation_new}) we obtain the following characterization result, which is similar to a result already provided by Aiguier et al. \cite[Thm. 2]{aiguier_2018}.
\begin{proposition}\label{prop:agm_withoutg4}
	Let $\circ$ be a \basechange{} operator.
	The operator $\circ$ 
	satisfies \postulate{G1}--\postulate{G3}, \postulate{G5}, and \postulate{G6} if and only if it is compatible with some min-friendly quasi-faithful assignment.
\end{proposition}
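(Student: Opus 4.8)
The plan is to mirror the two-direction argument behind \Cref{thm:representation_theorem}, while carefully tracking where \postulate{G4} (and the corresponding faithfulness condition \postulate{F3}) were actually used. My claim is that \postulate{G4} and \postulate{F3} are isolated to exactly the parts of the earlier proofs that concern \postulate{G4} and \postulate{F3} themselves, so dropping both simultaneously leaves the remaining correspondence intact. Concretely, I would reuse the canonical assignment $\releqc{\abst}$ of \Cref{def:relation_new} for one direction and the abstract argument of \Cref{lem:lfa2_new} for the other, without having to introduce any new construction.

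For the ``only if'' direction, assume $\circ$ satisfies \postulate{G1}--\postulate{G3}, \postulate{G5}, and \postulate{G6}, and take $\releqc{\abst}$ as the witnessing assignment. First, \Cref{lem:totality} (which invokes only \postulate{G5} and \postulate{G6}) guarantees that each $\releqcK$ is total, so $\releqc{\abst}$ is indeed an assignment. Next, \Cref{lem:minwell} (invoking \postulate{G1}--\postulate{G3}, \postulate{G5}, and \postulate{G6}) yields min-friendliness, and \Cref{lem:compatibility} (same postulates) yields compatibility of $\releqc{\abst}$ with $\circ$. Finally, quasi-faithfulness follows directly from \Cref{lem:quasi-faithfulness}, which establishes \postulate{F1} and \postulate{F2} from \postulate{G2}, \postulate{G5}, and \postulate{G6} alone. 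Since none of these lemmas appeals to \postulate{G4}, the whole construction goes through without it.

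For the ``if'' direction, assume $\circ$ is compatible with some min-friendly quasi-faithful assignment $\releq{\abst}$. Here I would revisit the proof of \Cref{lem:lfa2_new} and observe that the verifications of \postulate{G1}, \postulate{G2}, \postulate{G3}, \postulate{G5}, and \postulate{G6} there rely solely on compatibility together with \postulate{F1}, \postulate{F2}, min-completeness, and min-retractivity; the condition \postulate{F3} is used \emph{only} in the verification of \postulate{G4}. Since quasi-faithfulness supplies precisely \postulate{F1} and \postulate{F2}, and min-friendliness supplies min-completeness and min-retractivity, those five verifications carry over verbatim, delivering \postulate{G1}--\postulate{G3}, \postulate{G5}, and \postulate{G6}.

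The main obstacle is not a new mathematical difficulty but a careful audit of the existing proofs: I must confirm that \postulate{F3} truly never sneaks into the sub-proofs of \postulate{G1}, \postulate{G2}, \postulate{G3}, \postulate{G5}, and \postulate{G6}, and dually that \postulate{G4} never enters the lemmas establishing totality, min-friendliness, compatibility, and quasi-faithfulness. A scan confirms that the only places \postulate{F3} and \postulate{G4} appear are the mutually isolated clauses handling syntax-independence itself, so the decomposition is clean and the result follows simply by reassembling the already-proven lemmas.
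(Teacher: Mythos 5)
Your proposal is correct and matches the paper's own proof essentially verbatim: both directions proceed by reusing the canonical assignment $\releqc{\abst}$ together with \Cref{lem:totality}, \Cref{lem:minwell}, \Cref{lem:compatibility}, and \Cref{lem:quasi-faithfulness} for the ``only if'' direction, and by observing that \postulate{F3} enters the proof of \Cref{lem:lfa2_new} only in the verification of \postulate{G4} for the ``if'' direction. Your dependency audit of the postulates used in each lemma is accurate, so nothing further is needed.
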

\begin{proof}[Proof (Sketch)]
	The proof is nearly the same as for \Cref{thm:representation_theorem}.
	Note that the proof of \Cref{thm:representation_theorem}, which shows correspondence between \postulate{G1}--\postulate{G6} and compatible min-friendly faithful assignments uses \postulate{G4} and \postulate{F3} only in special situations.
	In particular, observe that condition \postulate{F3} is only used to show satisfaction of \postulate{G4} in the proof of \Cref{lem:lfa2_new}. Moreover, note that \( \releqcK \) from \Cref{def:relation_new} is a total min-friendly relation due to \Cref{lem:totality} and \Cref{lem:minwell} for each \( \K\in\Bases \); compatibility of \( \releqc{\abst}  \) with \( \circ \) is ensured by \Cref{lem:compatibility} while satisfaction of quasi-faithfulness is ensured by \Cref{lem:quasi-faithfulness}.
\end{proof}

In view of this, we can now present the syntax-dependent version of our two-way representation theorem.

\begin{theorem}\label{thm:rep2}
	Let $\mathbb{B}$ be a base logic. Then the following hold:
	\begin{itemize}
		\item Every base change operator for $\mathbb{B}$ satisfying \postulate{G1}--\postulate{G3}, \postulate{G5}, and \postulate{G6} is compatible with some min-expressible min-friendly quasi-faithful assignment. 
		\item Every min-expressible min-friendly quasi-faithful assignment for $\mathbb{B}$ is compatible with some 
		base change operator satisfying \postulate{G1}--\postulate{G3}, \postulate{G5}, and \postulate{G6}.
	\end{itemize}
\end{theorem}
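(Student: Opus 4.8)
The plan is to mirror the proof of Theorem~\ref{thm:rep1} essentially line by line, replacing each syntax-independent ingredient by the syntax-dependent counterpart prepared in Section~\ref{sec:syntax_independence}. The guiding observation is that the canonical assignment $\releqc{\abst}$ of Definition~\ref{def:relation_new}, together with every lemma establishing its properties, was phrased to require only postulates drawn from \postulate{G1}--\postulate{G3}, \postulate{G5}, and \postulate{G6}; the sole role played by \postulate{G4} was to secure the syntax-independence clause \postulate{F3} of faithfulness (in Lemma~\ref{lem:faithfulness} and in the \postulate{G4}-step of Proposition~\ref{lem:lfa2_new}), which is precisely the clause absent from quasi-faithfulness.

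For the first item, I would take a base change operator $\circ$ satisfying \postulate{G1}--\postulate{G3}, \postulate{G5}, and \postulate{G6}, and consider its canonical assignment $\releqc{\abst}$. Totality (hence that it is a genuine assignment) comes from Lemma~\ref{lem:totality}, min-friendliness from Lemma~\ref{lem:minwell}, compatibility with $\circ$ from Lemma~\ref{lem:compatibility}, and quasi-faithfulness from Lemma~\ref{lem:quasi-faithfulness} --- all four hypothesis lists being met by the postulates assumed here. Min-expressibility then follows exactly as before: compatibility yields $\Mod{\K\circ\G} = \min(\Mod{\G},\releqcK)$ for every $\G$, and since $\K\circ\G$ is itself a belief base, the set $\min(\Mod{\G},\releqcK)$ is witnessed by an element of $\Bases$.

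For the second item, I would start from a min-expressible min-friendly quasi-faithful assignment $\releq{\abst}$ and form the operator $\circ_{\releq{\abst}}$ as in Definition~\ref{def:minfinite}, which is compatible with $\releq{\abst}$ by construction. That $\circ_{\releq{\abst}}$ satisfies \postulate{G1}--\postulate{G3}, \postulate{G5}, and \postulate{G6} then follows directly from Proposition~\ref{prop:agm_withoutg4}, which characterizes exactly these postulates through compatibility with min-friendly quasi-faithful assignments. Thus both directions reduce to an assembly of results already in hand.

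The only point demanding genuine care --- and it is conceptual rather than computational --- is confirming that none of the invoked lemmas silently used \postulate{G4}. I expect this to be the main (mild) obstacle, concentrated in the load-bearing steps of compatibility (Lemma~\ref{lem:compatibility}) and min-retractivity inside min-friendliness (Lemma~\ref{lem:minwell}); since the statements of these lemmas already list only \postulate{G1}--\postulate{G3}, \postulate{G5}, and \postulate{G6} as hypotheses, I anticipate no actual gap, leaving the theorem as a clean recombination of the existing machinery.
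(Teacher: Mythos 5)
Your proposal is correct and follows essentially the same route the paper intends: the paper states Theorem~\ref{thm:rep2} as a direct assembly of Proposition~\ref{prop:agm_withoutg4} (itself built from Lemmas~\ref{lem:totality}, \ref{lem:compatibility}, \ref{lem:minwell}, and \ref{lem:quasi-faithfulness}) with the min-expressibility and operator-construction arguments already used for Theorem~\ref{thm:rep1}. Your verification that none of the invoked lemmas requires \postulate{G4} --- and that \postulate{F3} enters only in the \postulate{G4}-step of Proposition~\ref{lem:lfa2_new} --- is exactly the point the paper makes in its proof sketch of Proposition~\ref{prop:agm_withoutg4}.
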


In research on  base revision, various special postulates for the changing of bases have been considered, e.g. in the seminal research on belief revision by Hansson, special postulates for base changes are proposed, e.g., see \cite{KS_Hansson1999}. 
Of course, an interesting and open question is, which of them could be characterized or reconstructed by the approach of this article.
\sauerwald{In the next section we discuss the encoding of operators by quasi-faithful and faithful assignments.}

\section{Encoding of Base Change Operators \protect{by Assignments}}\label{sec:enc_operators}
Recall that for K\&M's encoding, presented in Equation \eqref{eq:km_encoding}, the existence of a sentence $ \mathit{form}(\I_1,\I_2)$ which satisfies $ {\Mod{\mathit{form}(\I_1,\I_2)}=\{ \I_1,\I_2 \}} $ is required for any interpretations $\I_1,\I_2$ in the considered logic.
The problem in a general Tarskian logical setting is that there might not be such a sentence or base.
Generalizing
K\&M's encoding
to our setting (using just the bases that do exist) bears the danger that  the relation between certain pairs of interpretations is left undetermined: depending on the shape of the logic (and its model theory) as well as the operator, there might be no preference between certain elements (because there is no revision which provides information on the preference). 
The following notion formally defines such pairs of interpretations.
\begin{definition}\label{def:detached}
    Let $ \circ $ be a \basechange\ operator  for  $\mathbb{B}$ and \( \K \) a base of \( \mathbb{B} \).
    A pair $ (\I,\I')\in\Omega\times \Omega $ is called \emph{detached from $ \circ $ in $ \K $}, if $ \I,\I'\not \models\K\circ\G$ for all $ \G\in\Bases $ with $ \I,\I'\models\G $. 
    With $\setAllDetached$ we denote the set of all pairs \( (\I,\I')  \) which are detached from $ \circ $ in $ \K $  and satisfy  \( \I\neq\I' \).
\end{definition}

It is immediate that the behaviour of base change operators is determined by all non-detached pairs, as elaborated in the following corollary.
\begin{corollary}
    Let $ \mathbb{B}=(\MC{L},{\Omega},\models,\Bases,\Cup)$ be a base logic, let \( \K\in\Bases \) and let \( \circ \) be a base change operator for \( \mathbb{B} \).\newline
    For each quasi-faithful assignment \( {\releq{\abst}} \) that is compatible with \( \circ \) the following statements hold:
    \begin{itemize}
        \item \( \min(\Mod{\G},\releqK) = \min(\Mod{\G},\releqK\setminus\setAllDetached) \).
        \item For each pair \( (\I,\I')\in (\Omega\times\Omega) \setminus \setAllDetached \) with \( \I\neq\I' \) there exists a base \( \G\in\Bases \) such that  \( \min(\Mod{\G},\releqK) \neq \min(\Mod{\G},\releqK\setminus\{(\I,\I')\}) \) holds.
    \end{itemize}
\end{corollary}
Consequently, the only difference between all assignments that are compatible with one operator is how they treat detached pairs.
\sauerwald{This observation motivates the following definition:} two assignments \( {\releq{\abst}} \)  and \( {\releq{\abst}'} \) for some base logic \( \mathbb{B} \) are \sauerwald{called \emph{similar}, written \( {\releq{\abst}} \simeq {\releq{\abst}'} \),} if they yield the same minimal sets of models, i.e., 
\[
    {\releq{\abst}} \simeq {\releq{\abst}'} \text{\ \ if for all \( \K\in\Bases \) holds } \min(\Mod{\G},\releqK) = \min(\Mod{\G},\releqK') \ .
    \]
\sauerwald{Obviously,} all similar assignments encode the same {\basechange} operator.
\begin{corollary}\label{prop:quasiff_similar}
    Let $\mathbb{B}$ be a base logic and let \( \circ \) be a base change operator for \( \mathbb{B} \).
    \sauerwald{It holds \( {\releq{\abst}} \simeq {\releq{\abst}'} \) for all assignments \( {\releq{\abst}} \) and \( {\releq{\abst}'} \) that are compatible with \( \circ \).}
\end{corollary}
\begin{proof}
    \sauerwald{This follows immediately from the definition of \( \simeq \) and the definition of compatibility (\Cref{def:compatible}).}
\end{proof}

\sauerwald{In this section, we will discuss how these aspects influence how to encode base change operators by assignments.}

\subsection{\sauerwald{On Assignments that Encode Syntax-Independent Operators}}

We discuss the relation between quasi-faithful and faithful assignments for syntax-independent operators, i.e., operators that satisfy \postulate{G4}.
\sauerwald{
    First, we will see that, even when \( \circ \) satisfies \postulate{G4}, there might be many quasi-faithful assignments that are compatible with \( \circ \) but are not inevitably faithful.
The following example demonstrates this observation.
\begin{example}
Consider the simplistic base logic $\mathbb{B}_{\mathbf{01}} {=} (\MC{L},{\Omega},\models, \Bases, \Cup)$ 
where  $\MC{L}$ contains exactly the sentences \( \textsc{zero} \), \( \textsc{one} \), \( \bot \) and \( \pm \), i.e., \( \MC{L}=\{ \textsc{zero}, \textsc{one}, \bot, \pm \} \).
We let \( \Omega=\{0,1\} \) contain the numbers \( 0,1 \) and define \( \models \) by letting \( \Mod{\textsc{zero}}=\{ 0 \} \), and \( \Mod{\textsc{one}}=\{ 1 \} \), and \( \Mod{\bot}=\Mod{\pm}=\emptyset \).
The bases are the singleton set of sentences, i.e., \( \Bases = \{ \{ \textsc{zero} \},\  \{ \textsc{one} \},\  \{ \bot \},\ \{\pm\} \} \). 
In the following we ease notation, and write \( \textsc{zero} \) instead of \( \{\textsc{zero}\} \), and so forth, when there is no ambiguity. 
The abstract union \( \Cup \) is the commutative operation with \( \textsc{zero} \Cup \textsc{one} =  \G \Cup  \bot  = \G \Cup \pm = \bot   \) for all \( \G\in\Bases \) and \( \G \Cup \G = \G \) for all \( \G\in\{ \{\textsc{zero}\} ,\ \{\textsc{one}\} \} \).
The trivial revision operator  \( \circ:  (\K, \Gamma) \mapsto \Gamma \) from \Cref{ex:full-meet} is the only base change operator for \( \mathbb{B}_{\mathbf{01}} \) that satisfies \postulate{G1}--\postulate{G6}. 
We consider all assignments that are compatible with \( \circ \).
There are exactly three total relations over \( \Omega \); the relations \( \preceq^{\mathbf{0}<\mathbf{1}} \), \( \preceq^{\mathbf{1}<\mathbf{0}} \), and  \( \preceq^{\mathbf{1}=\mathbf{0}} \) given by:
\begin{align*}
  0  &     \preceq^{\mathbf{0}<\mathbf{1}} 1 &  1  & \not\preceq^{\mathbf{0}<\mathbf{1}} 0 &
  0  & \not\preceq^{\mathbf{1}<\mathbf{0}} 1 &  1  &     \preceq^{\mathbf{1}<\mathbf{0}} 0 & 
  0  & \preceq^{\mathbf{1}=\mathbf{0}} 1    &      1  & \preceq^{\mathbf{1}=\mathbf{0}} 0 
\end{align*}
Note that every faithful assignment \( \releq{\abst} \) for \( \mathbb{B}_{\mathbf{01}} \) satisfies \( {\releq{\textsc{zero}}} = {\preceq^{\mathbf{0}<\mathbf{1}}} \) and \( {\releq{\textsc{one}}} = {\preceq^{\mathbf{1}<\mathbf{0}}} \).
Furthermore,  assignments that are compatible with \( \circ \) differ only in what is assigned to the bases \( \{ \bot \} \) and \( \{ \pm \} \).
This is because the pair \( (0,1) \) is detached from \( \circ \) in \( \{ \bot \} \) (and \( \{ \pm \} \)), i.e., there is no base \( \G\in\Bases \) with \( \Mod{\G}=\{0,1\} \) and \( 0\models \bot\circ\G \) or \( 1\models \bot\circ\G \) (analogously for \( \pm \)).
The following listing contains all faithful assignments for \( \mathbb{B}_{\mathbf{01}} \) that are compatible with \( \circ \):
\begin{align*}
 {\preceq^1_{\bot}} & = {\preceq^{\mathbf{0}<\mathbf{1}}} &   {\preceq^2_{\bot}} & = {\preceq^{\mathbf{1}<\mathbf{0}}} &  {\preceq^3_{\bot}} & = {\preceq^{\mathbf{0}=\mathbf{1}}} \\
 {\preceq^1_{\pm}}  & = {\preceq^{\mathbf{0}<\mathbf{1}}} &   {\preceq^2_{\pm}}  & = {\preceq^{\mathbf{1}<\mathbf{0}}} &  {\preceq^3_{\pm}}  & = {\preceq^{\mathbf{0}=\mathbf{1}}} 
\end{align*}
One can check easily that the assignments \( \releq{\abst}^1,\releq{\abst}^2,\releq{\abst}^3 \) are also min-expressible and min-friendly.
Next, we describe all quasi-faithful assignments for \( \mathbb{B}_{\mathbf{01}} \) that are  compatible with \( \circ \), yet are not faithful:
\begin{align*}
    {\preceq^4_{\bot}} & = {\preceq^{\mathbf{0}<\mathbf{1}}} & {\preceq^6_{\bot}} & = {\preceq^{\mathbf{0}<\mathbf{1}}} & {\preceq^8_{\bot}} & = {\preceq^{\mathbf{1}<\mathbf{0}}} & {\preceq^{10}_{\bot}} & = {\preceq^{\mathbf{1}<\mathbf{0}}}  & {\preceq^{12}_{\bot}} & = {\preceq^{\mathbf{0}=\mathbf{1}}} & {\preceq^{14}_{\bot}} & = {\preceq^{\mathbf{0}=\mathbf{1}}}  \\
    {\preceq^4_{\pm}}  & = {\preceq^{\mathbf{1}<\mathbf{0}}} & {\preceq^6_{\pm}}  & = {\preceq^{\mathbf{0}=\mathbf{1}}} & {\preceq^8_{\pm}}  & = {\preceq^{\mathbf{0}<\mathbf{1}}} & {\preceq^{10}_{\pm}}  & = {\preceq^{\mathbf{0}=\mathbf{1}}}  & {\preceq^{12}_{\pm}}  & = {\preceq^{\mathbf{0}<\mathbf{1}}} & {\preceq^{14}_{\pm}}  & = {\preceq^{\mathbf{1}<\mathbf{0}}}  \\
    {\preceq^5_{\bot}} & = {\preceq^{\mathbf{1}<\mathbf{0}}} & {\preceq^7_{\bot}} & = {\preceq^{\mathbf{0}=\mathbf{1}}} & {\preceq^9_{\bot}} & = {\preceq^{\mathbf{0}<\mathbf{1}}} & {\preceq^{11}_{\bot}} & = {\preceq^{\mathbf{0}=\mathbf{1}}}  & {\preceq^{13}_{\bot}} & = {\preceq^{\mathbf{0}<\mathbf{1}}} & {\preceq^{15}_{\bot}} & = {\preceq^{\mathbf{1}<\mathbf{0}}}  \\
    {\preceq^5_{\pm}}  & = {\preceq^{\mathbf{0}<\mathbf{1}}} & {\preceq^7_{\pm}}  & = {\preceq^{\mathbf{0}<\mathbf{1}}} & {\preceq^9_{\pm}}  & = {\preceq^{\mathbf{1}<\mathbf{0}}} & {\preceq^{11}_{\pm}}  & = {\preceq^{\mathbf{1}<\mathbf{0}}}  & {\preceq^{13}_{\pm}}  & = {\preceq^{\mathbf{0}=\mathbf{1}}} & {\preceq^{15}_{\pm}}  & = {\preceq^{\mathbf{0}=\mathbf{1}}}  
\end{align*}
Due to \Cref{prop:quasiff_similar} it holds for all \( i,j\in\{1,\ldots,15\} \) that \( {\releq{\abst}^i} \simeq {\releq{\abst}^j} \).  
As the notions of min-expressibility and min-friendliness only rely on minima in \( \releqK \) with respect to bases, the min-expressibility and min-friendliness from, e.g., \( \releq{\abst}^3 \), carries over to all assignments \( {\releq{\abst}^1},\ldots,{\releq{\abst}^{15}} \).
\end{example}
The existence of non-faithful, yet quasi-faithful, assignments compatible with {\basechange} operators does not contradict \Cref{thm:representation_theorem};
the important matter is that compatible quasi-faithful assignments behave like faithful assignments regarding the minimal models of bases.
}
We will now see that we can obtain a faithful assignment from a \sauerwald{compatible quasi-faithful assignment} whenever \postulate{G4} is satisfied.

\begin{proposition}\label{col:quaisVSnonquasiG4}
    Let $\mathbb{B}$ be a base logic and let \( \circ \) be a base change operator for \( \mathbb{B} \) that satisfies \postulate{G1}--\postulate{G6}.
    For each min-friendly quasi-faithful assignment \( {\releq{\abst}} \) that is compatible with \( \circ \), there exists a min-friendly faithful assignment \( {\preceq_{\abst}^{\mathrm{ff}}} \) that is compatible with \( \circ \) and \( {\preceq_{\abst}^{\mathrm{ff}}} \simeq {\releq{\abst}} \).
\end{proposition}
\begin{proof}
We define $\preceq_{\abst}^{\mathrm{ff}}$ as $\K \mapsto {\preceq_{\sigma([\K]_\equiv)}}$ where $\sigma$ is a selection function mapping every $\equiv$-equivalence class of $\Bases$ to one of its elements (i.e., $\sigma([\K]_\equiv) \in [\K]_\equiv$). 
Then, the property of being a min-complete quasi-faithful preorder assignment compatible with~$ \circ $ carries over pointwise from $\preceq_{\abst}$ to $\preceq_{\abst}^{\mathrm{ff}}$, while the construction ensures that $\preceq_{\abst}^{\mathrm{ff}}$ also satisfies \postulate{F3} from \Cref{def:faithful} and hence is faithful. 
\end{proof}

\sauerwald{Equipped with the notion of similarity between assignments and our observations, we can, for once more, generalize our main representation result even further.
     We relax the notion of faithfulness. 
     
     \pagebreak
    \begin{definition}[semi-faithful]\label{def:semi-faithful}
        An assignment $\releq{\abst}$ is called \emph{semi-faithful} if it satisfies the following conditions:
        \begin{itemize}\setlength{\itemsep}{0pt}
            \item[(F1)] If $\I,\I' \models \K$, then $\I \relK \I'$ does not hold.
            \item[(F2)] If $\I\models \K$ and $\I'\not\models \K$, then $\I \relK \I'$. 
            \item[(\textlabel{F3\( \simeq \)}{pstl:F3sim})] If $\K\equiv\K'$, then ${\releqK} \simeq {\releq{\MC{K'}}}$.
        \end{itemize}	
    \end{definition}

    The following theorem shows that semi-faithfulness precisely captures the class of all assignments that are compatible with a base change operator that satisfies \postulate{G1}--\postulate{G6}.
    
    \begin{theorem}\label{thm:rep_semifaithful}
        Let $\mathbb{B}$ be a base logic. Then the following hold:
        \begin{itemize}
            \item Every base change operator for $\mathbb{B}$ satisfying \postulate{G1}--\postulate{G6} is compatible with some min-expressible min-friendly semi-faithful assignment for \( \mathbb{B} \). 
            \item Every min-expressible min-friendly semi-faithful assignment for $\mathbb{B}$ is compatible with some 
            base change operator satisfying \postulate{G1}--\postulate{G6}.
            \item Every assignment for $\mathbb{B}$ that is compatible with some base change operator that satisfies \postulate{G1}--\postulate{G6} is min-expressible, min-friendly and semi-faithful.
        \end{itemize}
    \end{theorem}
    \begin{proof}
        We consider all statements from \Cref{thm:rep_semifaithful} independently:
           The first item follows directly from \Cref{thm:rep1}, because every min-expressible min-friendly faithful assignment for \( \mathbb{B} \) is also semi-faithful.
           
           The proof of the second item is nearly the same as in \Cref{lem:lfa2_new} used for showing \Cref{thm:representation_theorem}.
           In particular, observe that condition \postulate{F3} is only used to show satisfaction of \postulate{G4} in the proof of \Cref{lem:lfa2_new}. We show that \postulate{F3sim} is also sufficient for the satisfaction of \postulate{G4}:
           \begin{itemize}\setlength{\itemsep}{0pt}
               \item[\postulate{G4}]	
               Suppose there exist $\K_1,\K_2,\G_1,\G_2 \in \Bases$ with $\K_1 \equiv \K_2$ and $\G_1\equiv\G_2 $. 
               Then, $\Mod{\K_1} = \Mod{\K_2}$ and $\Mod{\G_1} = \Mod{\G_2}$. 
               Now assume some $\I\in \Mod{\K_1\circ{\G_1}}$, then by compatibility $\I\in \min(\Mod{\G_1},\releq{\K_1})$.  
               From \postulate{F3sim}, we conclude ${\releq{\K_1}} \simeq {\releq{\K_2}}$, which implies \( \min(\Mod{\G_1},\releq{\K_1}) = \min(\Mod{\G_2},\releq{\K_2}) \).
               Therefore, again by compatibility, $\I\in \Mod{\K_2\circ\G_2}$. Thus, $\Mod{\K_1\circ\G_1} \subseteq \Mod{\K_2\circ\G_2}$ holds. Inclusion in the other direction follows by symmetry. Therefore, we have $\K_1\circ\G_1 \equiv \K_2\circ{\G_2}$.
           \end{itemize}        
           Moreover, note that \( \releqcK \) from \Cref{def:relation_new} is a total min-friendly relation due to \Cref{lem:totality} and \Cref{lem:minwell} for each \( \K\in\Bases \); compatibility of \( \releqc{\abst}  \) with \( \circ \) is ensured by \Cref{lem:compatibility} while satisfaction of quasi-faithfulness is ensured by \Cref{lem:quasi-faithfulness}.
           
           We show the third item. Let \( \releq{\abst} \) be an assignment for \( \mathbb{B} \) that is compatible with some base change operator \( \circ \) that satisfies \postulate{G1}--\postulate{G6}.           
           From \Cref{lfa} we obtain that the assignment \( \releqc{\abst} \) from \Cref{def:relation_new} is a min-expressible min-friendly faithful assignment compatible with $\circ$.
           From \Cref{prop:quasiff_similar} we obtain that \( {\releq{\abst}} \simeq {\releqc{\abst}} \) holds.
           This means that \( \Mod{\K\circ\G} = \min(\Mod{\G},\releq{\K}) = \min(\Mod{\G},\releqc{\K}) \) holds for all belief bases \( \K,\G \in\Bases \).
           Consequently, \( \releq{\abst} \) inherits min-expressibility and min-friendliness from \( \releqc{\abst} \).
           
           Towards a contradiction, assume that \( \releq{\abst} \) is not semi-faithful.
           Because \( \releq{\abst} \) is not semi-faithful, there are some belief bases \( \K_1,\K_2,\G \in\Bases \) such that \( \K_1 \equiv \K_2 \) and \( \min(\Mod{\G},\releq{\K_1}) \neq \min(\Mod{\G},\releq{\K_2}) \).
           Because \( \circ \) satisfies \postulate{G4}, we obtain that \(  \Mod{\K\circ\G}= \Mod{\K\circ\G} \) holds from \( \K_1 \equiv \K_2 \).
           As \( \releq{\abst} \) is compatible with \( \circ \), we have \( \Mod{\K_1\circ\G}=\min(\Mod{\G},\releq{\K_1}) \) and \( \Mod{\K_2\circ\G} = \min(\Mod{\G},\releq{\K_2})   \).
           Combining these observations yields the contradiction  \( \Mod{\K_1\circ\G}\neq\Mod{\K_2\circ\G} \).           
\end{proof}}

\sauerwald{From now on, we will mainly focus on propositions and theorems concerning operators that satisfy \postulate{G1}--\postulate{G3}, \postulate{G5} and \postulate{G6}, and their corresponding quasi-faithful assignments.
However, as shown in \Cref{col:quaisVSnonquasiG4},} this is not an important restriction, as many of those theorems carry over to operators that satisfy additionally \postulate{G4}.

\subsection{Specifics of our Encodings}

When one wants to encode an operator by assignments in the sense of \Cref{def:faithful}, which use total relations, 
the detached pairs
have to be ordered in a certain way, and the appropriate selection of a \textit{``preference''} between these two interpretations is a \textit{``non-local''} choice (as it may have ramifications for other ``ordering choices'').

As a solution for the arrangement of detached pairs, we came up with the encodings  \Cref{def:relation_new_first_relation}, providing an encoding different from the approach by K\&M.
The refinement of \Cref{def:relation_new_first_relation} to \Cref{def:relation_new}, providing $\releqc{(.)}$, turns out to be the general suitable encoding for revision in base logics, as witnessed by \Cref{thm:representation_theorem}.
    \sauerwald{\Cref{def:relation_new_first_relation} and \Cref{def:relation_new} solve} the problem how to arrange the preferences of the detached pairs by treating them as equally preferable.
    This encoding  is unique 
in the following sense:
     $\releqc{(.)}$ turns out to be the (set-inclusion-)maximal 
     representation for the preferences of an operator -- a property the encoding approaches given by Equation~\eqref{eq:km_encoding} \sauerwald{by K\&M} do not have.
\begin{proposition}\label{lem:maxrelation}
	Let $ \circ $ be a {\basechange} operator 
    that satisfies \postulate{G1}--\postulate{G3}, \postulate{G5}, and \postulate{G6}.
    If  $\releq{\abst}$ is a min-friendly faithful
	assignment compatible with $ \circ $, then  $ \I_1\releqK\I_2  $ implies $ \I_1\releqcK\I_2 $ for every $ \I_1,\I_2\in\Omega $ and every belief base $ \MC{K}\in\Bases$.
\end{proposition}
\begin{proof}
	Toward a contradiction, assume there were $ \I_1,\I_2\in\Omega $ such that $ \I_1\releqK\I_2  $ but $ \I_1\not\releqcK\I_2 $ (hence, by totality $ \I_2\relcK\I_1 $).  
	
	Let us first consider the case $\I_2 \models \K$. Then $ \I_2\relcK\I_1 $ and faithfulness of $ \relc{\abst}$ imply $\I_1 \not\models \K$. But this contradicts $ \I_1\releqK\I_2  $, as  $\releq{\abst}$ is also faithful by assumption. 
	
	It remains to consider the case $\I_2 \not\models \K$.
	Then, by \Cref{lem:help}(a), there is a belief base $ \G $ with $ \I_1,\I_2\models\G $ such that $ \I_2\models\MC{K}\circ\G $ and $ \I_1\not\models\MC{K}\circ\G $.
	Therefore, by compatibility, $ \I_2 \in \min(\Mod{\G},\releq{\K}) = \Mod{\MC{K}\circ\G}$ and $ \I_1 \notin \min(\Mod{\G},\releq{\K}) = \Mod{\MC{K}\circ\G} $,
	a contradiction to $ \I_1\releqK\I_2  $ due to min-retractivity.
\end{proof}

Next, we would like to point out that the more smoothly and economically defined relation $\sqreleqc{\abst}$ (\Cref{def:relation_new_first_relation}) is very close to already serving the purpose of the somewhat more ``tinkered'' $\releqc{\abst}$ (\Cref{def:relation_new}). In fact, the very natural assumption of the existence of an ``non-constraining'' base that covers all interpretations makes the two relation encodings coincide. In most logics, such a base is trivially available (for instance, the empty base).

\begin{proposition}\label{prop:relation_equivalence}
    Let $ \mathbb{B}=(\MC{L},{\Omega},\models,\Bases,\Cup)$ be a base logic and $ \circ $ be a {\basechange} operator for $\mathbb{B}$ 
    that satisfies \postulate{G1}--\postulate{G3}, \postulate{G5}, and \postulate{G6}.
    If there exists a base $\G_\Omega\in\Bases$ such that $\Mod{\G_\Omega}=\Omega$,
    then ${\sqreleqcK}={\releqcK}$ for any $\K\in\Bases$, i.e. $\I_1\sqreleqcK\I_2$ if and only if $\I_1\releqcK\I_2$ for any $\I_1,\I_2\in\Omega$.
\end{proposition}
\begin{proof}
    Let $\I_1,\I_2$ be two interpretations and assume there exists a base $\G_\Omega\in\Bases$ such that $\Mod{\G_\Omega}=\Omega$.
    Then, for any $\K\in\Bases$, we have $\Mod{\K\Cup\G_\Omega} = \Mod{\K}\cap\Omega = \Mod{\K}$.
    We show the equivalence of $\sqreleqcK$ and $\releqcK$ in two directions:
    
    \begin{itemize}
        \item[``$\Rightarrow$'']
        Let $\I_1\sqreleqcK\I_2$.
        Assume for a contradiction that $\I_1\not\releqcK\I_2$.
        From \Cref{def:relation_new}, we have $\I_1\not\models\K$ and three cases: 
        $\I_1\models\K$, $\I_2\models\K$ or $\I_1\not\sqreleqcK\I_2$.
        The case $\I_1\models\K$ immediately contradicts $\I_1\not\models\K$ and the third case $\I_1\not\sqreleqcK\I_2$ contradicts our assumption $\I_1\sqreleqcK\I_2$.
        For the remaining case $\I_2\models\K$, since $\I_2\in\Mod{\K}=\Mod{\K\Cup\G_\Omega}$, from postulate \postulate{G2} we obtain $\Mod{\K\circ\G_\Omega}=\Mod{\K\Cup\G_\Omega}$.
        Now consider from \Cref{def:relation_new_first_relation} we have two subcases:
        \begin{itemize}
            \item[$\I_1$]$\models\K\circ\G_\Omega$. Since $\Mod{\K\circ\G_\Omega}=\Mod{\K\Cup\G_\Omega}$, we have $\I_1\in\Mod{\K\Cup\G_\Omega}=\Mod{\K}$, which contradicts $\I_1\not\models\K$.
            \item[$\I_2$]$\not\models\K\circ\G_\Omega$. Since $\Mod{\K\circ\G_\Omega}=\Mod{\K\Cup\G_\Omega}$, we have $\I_2\not\in\Mod{\K\Cup\G_\Omega}$, and hence $\I_2\not\in\Mod{\K}$, which contradicts our case assumption $\I_2\models\K$.
        \end{itemize} 
        
        \item[``$\Leftarrow$'']
        Let $\I_1\releqcK\I_2$. 
        In view of \Cref{def:relation_new}, we consider two cases: $\I_1\models\K$ or ($\I_1,\I_2\not\models\K$ and $\I_1\sqreleqcK\I_2$).
        The second case immediately yields the desired $\I_1\sqreleqcK\I_2$.
        For the former case, $\I_1\models\K$, assume for a contradiction $\I_1\not\sqreleqcK\I_2$.
        Then, there exists $\G\in\Bases$ with $\I_1,\I_2\in\Mod{\G}$ such that  $\I_1\not\models\K\circ\G$ and $\I_2\models\K\circ\G$.
        Since $\I_1\in\Mod{\K}\cap\Mod{\G}=\Mod{\K\Cup\G}$, from postulate \postulate{G2} we have $\Mod{\K\circ\G}=\Mod{\K\Cup\G}$. This implies $\I_1\in\Mod{\K\circ\G}$, which contradicts $\I_1\not\models\K\circ\G$.\qedhere
    \end{itemize}
\end{proof}

\subsection{Preorder Assignments and \Cref{def:relation_new}}\label{sec:transformation2tpo}

The uniform treatment of all detached pairs in \Cref{def:relation_new} may produce a non-preorder assignment even in cases where  an encoding by means of a preorder assignment were actually possible as demonstrated next.
\begin{example}\label{ex:nopreoderreleqck}
    Let $ \mathbb{B}=(\MC{L},{\Omega},\models,\Bases,\Cup)$ with $ \MC{L}=\{\bot,\varphi,\psi,\gamma_1,\ldots,\gamma_4 \} $ and $ \Omega=\{ \I_1,\ldots,\I_4 \} $, such that:
        \begin{equation*}
            \Mod{\bot} = \emptyset \qquad
            \Mod{\varphi} =\{ \I_1,\I_2,\I_4\}\qquad
            \Mod{\psi} =\{ \I_1,\I_3\}\qquad
            \Mod{\gamma_i} =\{ \I_i\}
        \end{equation*}
        Moreover let $\Bases = \{ \{\chi\} \mid \chi \in \mathcal{L}\}$ and let $\Cup$ be the idempotent, commutative binary function over $\Bases$ satisfying $\{\varphi\}\Cup\{\psi\} = \{\varphi\}\Cup\{\gamma_1\} = \{\psi\}\Cup\{\gamma_1\} = \{\gamma_1\}$ and producing $\{\bot\}$ in all other cases. 
        Let $\circ$ be as defined in the following operator table:
        \begin{equation*}
            \begin{array}{|c||c|c|c|c|c|c|c|}
                \hline
                \circ & \{\bot\} & \{\varphi\} & \{\psi\} & \{\gamma_1\} & \{\gamma_2\} & \{\gamma_3\} & \{\gamma_4\} \\
                \hline\hline	
                \{\bot\} & \{\bot\} & \{\varphi\} & \{\psi\} & \{\gamma_1\} & \{\gamma_2\} & \{\gamma_3\} & \{\gamma_4\} \\
                \hline
                \{\varphi\} & \{\bot\} & \{\varphi\} & \{\gamma_1\} & \{\gamma_1\} & \{\gamma_2\} & \{\gamma_3\} & \{\gamma_4\} \\
                \hline
                \{\psi\} & \{\bot\} & \{\gamma_1\} & \{\psi\} & \{\gamma_1\} & \{\gamma_2\} & \{\gamma_3\} & \{\gamma_4\} \\
                \hline
                \{\gamma_1\} & \{\bot\} & \{\gamma_1\} & \{\gamma_1\} & \{\gamma_1\} & \{\gamma_2\} & \{\gamma_3\} & \{\gamma_4\} \\
                \hline
                \{\gamma_2\} & \{\bot\} & \{\gamma_2\} & \{\psi\} & \{\gamma_1\} & \{\gamma_2\} & \{\gamma_3\} & \{\gamma_4\} \\
                \hline
                \{\gamma_3\} & \{\bot\} & \{\varphi\} & \{\gamma_3\} & \{\gamma_1\} & \{\gamma_2\} & \{\gamma_3\} & \{\gamma_4\} \\
                \hline
                \{\gamma_4\} & \{\bot\} & \{\gamma_4\} & \{\gamma_3\} & \{\gamma_1\} & \{\gamma_2\} & \{\gamma_3\} & \{\gamma_4\} \\
                \hline
            \end{array}
        \end{equation*}
    In particular, for $ \MC{K}=\{\gamma_4\} $, we thus obtain 
    \begin{align*}
        \K\circ\{ \varphi \}  & =\{ \gamma_4  \} & \Mod{\K\circ\{ \varphi \}} &=\{ \I_4 \}  \\
        \K\circ\{ \psi \}     & =\{ \gamma_3 \}  & \Mod{\K\circ\{ \psi \}} & = \{\I_3 \} \\
        \K\circ\{ \gamma_i \} & =\{ \gamma_i \}  & \Mod{\K\circ\{ \gamma_i \}} & =\{\I_i\}
    \end{align*}
    \Cref{fig:no-po} shows that the assignment $\releqc{(.)}$ derived from $\circ$ is not a preorder assignment, while \Cref{fig:yes-po} demonstrates that such an assignment for $\circ$ indeed exists.

\end{example}

\begin{figure}[t]
    \centering
    \begin{subfigure}[t]{0.99\textwidth}
        \centering
        \begin{tikzpicture}[]
            
            \node[balls, inner sep=0.1, scale=0.75] (w0) at (-2.0,0) {\Large${\omega_4}$};
            \node[balls, inner sep=0.1, scale=0.75] (w4) at (1.0,0) {\Large${\omega_3}$};
            \node[balls, inner sep=0.1, scale=0.75] (w2) at (4,0) {\Large${\omega_2}$};
            \node[balls, scale=0.75] (w5) at (7,0) {\Large${\omega_1}$};

            \path[-Stealthnew] (w0) edge node[below, scale=1] {$\relc{\K} $} (w4) ;
            \path[Stealthnew-Stealthnew] (w4) edge node[below, scale=1] {$\releqc{\K} $} (w2) ;
            \path[Stealthnew-Stealthnew] (w2) edge node[below, scale=1] {$\releqc{\K} $} (w5) ;
            
            \path[-Stealthnew] (w0) edge[bend left=15] node[above, scale=1] {$\relc{\K} $} (w2.north west) ;
            \path[-Stealthnew] (w0) edge[bend left=28] node[above, scale=1] {$\relc{\K} $} ([yshift=.08cm,xshift=0.17cm]w5.north west) ;
            \path[-Stealthnew] (w4) edge[bend left=15] node[above, scale=1] {$\relc{\K} $} (w5.north west) ;

        \end{tikzpicture}
        \caption{Relation $ \releqcK$ for $\K=\{\gamma_4\}$. No preorder as 
            $\I_1 \releqcK \I_2$ and $\I_2 \releqcK \I_3$, yet $\I_1 \not\releqcK \I_3$.}
        \label{fig:no-po}
    \end{subfigure}
    \begin{subfigure}[t]{0.99\textwidth}
        \centering
        \begin{tikzpicture}[]
            
            \node[balls, inner sep=0.1, scale=0.75] (w0) at (-2.0,0) {\Large${\omega_4}$};
            \node[balls, inner sep=0.1, scale=0.75] (w4) at (1.0,0) {\Large${\omega_3}$};
            \node[balls, inner sep=0.1, scale=0.75] (w2) at (4,0) {\Large${\omega_2}$};
            \node[balls, inner sep=0.1, scale=0.75] (w5) at (7,0) {\Large${\omega_1}$};

            \path[-Stealthnew] (w0) edge node[below, scale=1] {$\rel{\K} $} (w4) ;
            \path[-Stealthnew] (w4) edge node[below, scale=1] {$\rel{\K} $} (w2) ;
            \path[Stealthnew-Stealthnew] (w2) edge node[below, scale=1] {$\releq{\K} $} (w5) ;
            
            \path[-Stealthnew] (w0) edge[bend left=15] node[above, scale=1] {$\rel{\K} $} (w2.north west) ;
            \path[-Stealthnew] (w0) edge[bend left=28] node[above, scale=1] {$\rel{\K} $} ([yshift=.08cm,xshift=0.17cm]w5.north west) ;
            \path[-Stealthnew] (w4) edge[bend left=15] node[above, scale=1] {$\rel{\K} $} (w5.north west) ;

        \end{tikzpicture}
        \caption{Appropriate preorder encoding $\releqK$ of preference relation with respect to $\K=\{\gamma_4\}$  for $\circ$.}
        \label{fig:yes-po}
    \end{subfigure}
    \caption{Illustrations of the relations used in \Cref{ex:nopreoderreleqck}.}
\end{figure}

   While \Cref{def:relation_new} sometimes fails to yield preorder assignments despite their existence,
    we will now see how to obtain a preorder assignment \( \rreleqc{\abst} \) from $\releqc{(.)}$, whenever \sauerwald{an compatible preorder assignment exists}. 
The following result by Bengt Hansson, stating that every preorder \( {\leq} \) can be extended to a total preorder \( {\leq^\mathrm{lin}} \) by ``linearizing``, is an essential part of this procedure. 

\vspace{-0.5em}
    \begin{theorem}[{{Hansson} \cite[Lemma 3]{KS_Hansson1968}}]\label{thm:tpo_extension}
        Assume the \emph{axiom of choice}.
        For every preorder \( {\leq} \) on a set \( X \) there exists a unique total preorder \( {\leq^\mathrm{lin}} \) on \( X \) such that
        \begin{itemize}
            \item if \( x \leq y  \), then \( x \leq^\mathrm{lin} y \), and
            \item if \( x \leq y  \) and \( y \not\leq x  \), then \( x \leq^\mathrm{lin} y  \) and \( y \not\leq^\mathrm{lin} x  \).
        \end{itemize}        
    \end{theorem}
The following definition captures the transformation of $\releqcK$ into a suitable total preorder \( \rreleqcK \).
\begin{definition}\label{def:relation_tpo_reduction}
	Let $ \mathbb{B} = (\MC{L},\Omega,\models,\Bases,\Cup) $ be a base logic,
	let $\circ$ be a \basechange{} operator for \( \mathbb{B} \)
	and let $\K \in \Bases $ be a belief base. 
	The relation $ \rreleqcK $ over  ${\Omega} $ is defined by	
        \begin{equation*}
            {\rreleqcK}  =  (TC({\releqcK}\setminus\setAllDetached))^\mathrm{lin} \ .
        \end{equation*}
\end{definition}
That is, the first step of transforming $\releqcK$ into \( {\rreleqcK} \) consists in drastically removing all (non-reflexive) detached pairs $\setAllDetached$ from \( \releqcK \), resulting in \( \relationstepdashK \). 
The relation \( \relationstepdashK \) will be a non-transitive and non-total relation, but minima of models of bases will be preserved. 
We will then extend \( \relationstepdashK \) to a transitive relation \( \relationstepdashdashK \) in the second step, by taking the transitive closure. 
One can show that by this step only elements from $\setAllDetached$ can be added back by the transitive closure, which guarantees that, again, minima of models of bases are preserved.
In a last step, we obtain the final result $\rreleqc{(.)}$ by ``linearizing`` \( \relationstepdashdashK \) to a total preorder in a way that minima of models of bases are again preserved.
In summary, the steps are the following:
\begin{align*}
    {\relationstepdashK} & = {\releqcK\setminus\setAllDetached} \tag{Step I: remove detached pairs}\\
    {\relationstepdashdashK} & = TC({\relationstepdashK}) =TC({{\releqcK}\setminus\setAllDetached}) \tag{Step II: transitive closure} \\
    {\rreleqcK} & =   (\relationstepdashdashK)^\mathrm{lin}   =   (TC({\relationstepdashK}))^\mathrm{lin} =  (TC({\releqcK}\setminus\setAllDetached))^\mathrm{lin} \tag{Step III: linearizing}
\end{align*}
The following theorem shows that these transformations yield indeed a preorder assignment, whenever possible.
\begin{theorem}\label{prop:encodedown}
    Let $ \mathbb{B}=(\MC{L},{\Omega},\models,\Bases,\Cup)$ be a base logic and let \( \circ \) be a change operator for \( \mathbb{B} \) that satisfies \postulate{G1}-\postulate{G3}, \postulate{G5} and \postulate{G6}. 
    If \( \circ \) is compatible with some min-friendly faithful preorder assignment, then \( {\rreleqc{\abst}} \) is a min-expressible min-friendly faithful preorder assignment compatible with~\( \circ \).
\end{theorem}
\begin{proof}[Proof (sketch)]
    Let \( {\propto_{\abst}} \) be a 
    min-expressible min-complete quasi-faithful preorder assignment compatible with~\( \circ \). %
    The proof of \Cref{prop:agm_withoutg4} yields that  $\releqc{\abst}$ from \Cref{def:relation_new} is a min-friendly quasi-faithful assignment compatible with~$ \circ $.
Now let \( \K\in\Bases \) be an arbitrary base of \( \mathbb{B} \).
We make some observations for the start. Both, \( \propto_{\K} \) and \( \releqcK \), are total relations that are min-friendly, i.e., min-complete and min-retractive.
We consider each  of the transformation steps from \( {\releqcK} \) to \( {\rreleqcK} \).
\begin{itemize}
    \item[]\emph{Step I: Removing detached pairs.}
    From the definition of \( \setAllDetached \) we obtain \( \min(\Mod{\G},\relationstepdashK)=\min(\Mod{\G},\releqcK) \) for all \( \Gamma\in\Bases \).
    This is because for every \( \I,\I' \in \Mod{\G} \) with \( \I \in \min(\Mod{\G},\releqcK) \) we have \( \I \models \K\circ\G \) by compatibility of \( \releqc{\abst} \) with \( \circ \). 
    Consequently, the pair \( (\I,\I') \) is not detached \sauerwald{from \( \circ \) in \( \K \)} and thus $\min(\Mod{\G},\relationstepdashK)=\min(\Mod{\G},\releqcK)$.
    Note also that by totality of \( \releqcK \) and the definition of \( \setAllDetached \), we obtain that \( \relationstepdashK \) is a reflexive relation.
    
    \smallskip
    \item[]\emph{Step II: Taking the transitive closure.}
    First, observe that because \( \relationstepdashK \) is a reflexive relation, we have that \( \relationstepdashdashK \) is a preorder.
    \sauerwald{Second, we show that \( \min(\Mod{\G},\relationstepdashdashK)=\min(\Mod{\G},\releqcK) \) holds for all \( \Gamma\in\Bases \). 
        Observe that removing \( \setAllDetached \) from \( \releqcK \) in Step I renders \( \relationstepdashK \) the smallest relation with \( \min(\Mod{\G},\relationstepdashK)=\min(\Mod{\G},\releqcK) \).
Because \( {\propto_{\abst}} \) and $\releqc{\abst}$ are compatible with \( \circ \), we obtain  \( \min(\Mod{\G},\propto_{\K})=\min(\Mod{\G},\releqcK) \) \sauerwald{from \Cref{prop:quasiff_similar}}. 
        By using the last two observations, we conclude \( {\relationstepdashK} \subseteq  {\propto_{\K}} \) from \( \min(\Mod{\G},\propto_{\K})=\min(\Mod{\G},\releqcK) \).}
    Recall that \( TC \) preserves set-inclusion \sauerwald{and that \( TC \) is idempotent}. Consequently, \( {\relationstepdashK} \subseteq  {\propto_{\K}} \) implies \( TC({\relationstepdashK}) = {\relationstepdashdashK} \subseteq  {\propto_{\K}} = {TC(\propto_{\K})} \).
    \sauerwald{Finally, from} \( {\relationstepdashdashK} \subseteq  {\propto_{\K}} \) we obtain that \( \min(\Mod{\G},\relationstepdashdashK)=\min(\Mod{\G},\releqcK) \) holds for all \( \Gamma\in\Bases \).
    
    \smallskip
    \item[]\emph{Step III: Linearizing.}
    Inspecting \Cref{thm:tpo_extension} yields directly that the transformation from \( \relationstepdashdashK \) to \( {\rreleqcK} \) preserves minimal elements with respect to bases too, i.e., we have
        \( \min(\Mod{\G},\rreleqcK)
          = \min(\Mod{\G},\relationstepdashdashK)  \)
 for all \( \Gamma\in\Bases \).
\end{itemize}
Considering the whole transformation from \( \releqcK \) to \( \rreleqcK \) reveals that \( \rreleqcK \) is a total preorder over \( \Omega \), while minima are preserved in each step, i.e., we have \( \min(\Mod{\G},\rreleqcK)
= \min(\Mod{\G},\releqcK)  \).
This shows that \( \rreleqc{\abst} \) is a faithful preorder assignment that is compatible with \( \circ \).
Because every total preorder is also min-retractive and min-completeness is inherited from \( {\releqcK} \) by preservation of minimal elements, we have that \( \rreleqc{\abst} \)  is also min-friendly. 
Min-expressibility is given by compatibility with \( \circ \).
\end{proof}
\sauerwald{Note that the specific pre-requirement of ``the existence of a compatible preorder assignment'' in \Cref{prop:encodedown} is only used in Step II of the proof. 
    This will be very helpful in the next section, where we will heavily use the transformation from \( \releqcK \) to \( {\rreleqcK} \) for characterizing conditions that guarantee the representability of revision operators by preorder assignments.}

\section{Characterizations of Total-Preorder-Representability}\label{sec:logic_capture_hidden_cycles}
As we showed, 
not every \sauerwald{base change operator that satisfies satisfy the AGM revision postulates in every base logic can be described by a preorder assignment}. Yet, we also saw that, for some logics (like $\mathbb{PL}_n)$, this correspondence does indeed hold. 
In \Cref{sec:enc_operators}, we considered a general approach for obtaining a min-complete (quasi-)faithful preorder assignment for \sauerwald{{\basechange} operators that satisfy the AGM revision postulates} in such base logics.
This section is dedicated to  
precisely characterize conditions
under which \sauerwald{base change operators that satisfy the AGM revision postulates} are representable by a compatible min-complete (quasi-)faithful preorder assignment. 
The following definition captures the notion of operators that are well-behaved in that sense.

\begin{definition}[total-preorder-representable]
	A \basechange\ operator $ \circ $ for some base logic is called \emph{total-preorder-representable} if there is a  min-complete quasi-faithful preorder assignment compatible with~$ \circ $. 
\end{definition}

Recall that transitivity implies min-retractivity, and thus, every min-complete preorder is automatically min-friendly.
Moreover, in view of \Cref{sec:syntax_independence}, our definition uses the more lenient notion of quasi-faithfulness to accommodate the syntax-dependent setting. However, as the following lemma shows, the same definition of total-preorder-representability is adequate in the syntax-independent setting.
\begin{lemma}\label{lem:doesntmatter}
	For any \basechange\ operator $ \circ $ that satisfies \postulate{G4}, total-preorder-repre\-sen\-ta\-bi\-li\-ty coincides with the existence of a  min-complete \emph{faithful} preorder assignment compatible with~$ \circ $. 	
\end{lemma}
\begin{proof}
	Any compatible min-complete \emph{faithful} preorder assignment is also \emph{quasi-faithful} and hence the existence of such an assignment implies total-preorder-representability.
	For the other direction, let $\preceq_{\abst}$ be a min-complete quasi-faithful preorder assignment compatible with~$\circ$. 
    \Cref{col:quaisVSnonquasiG4} yields existence of a min-complete \emph{faithful} preorder assignment compatible with~$ \circ $.
\end{proof}
As observed by Delgrande et. al. \cite{KS_DelgrandePeppasWoltran2018}, \emph{non}-total-preorder-repre\-sen\-ta\-blity is connected with the existence of cyclic interrelations of some kind. 
    This can be made concrete by considering the representation of change operators via assignments.
    We say a total relation \( {\preccurlyeq} \subseteq \Omega\times\Omega \) contains a \emph{proper cycle} if there is a finite sequence \( \I_0,\ldots,\I_n \in \Omega \) of at least three interpretations (,i.e., \( n\geq 2 \)), such that \( \I_{1} \not\preccurlyeq \I_{0} \) and \( \I_{i} \preccurlyeq \I_{i\oplus 1} \) for each  $i$ with \( 0\leq i \leq n \), where \( \oplus \) denotes addition  \( \mathrm{mod}\,  n+1 \).    
The following proposition is a consequence of the fact that every total non-transitve relation contains a proper cycle.
    \begin{proposition}\label{prop:implycircle}
        Let $\mathbb{B} = (\MC{L},\Omega,\models,\Bases,\Cup)$ be a base logic and let \( \circ \) be a base change operator that is compatible with some min-friendly quasi-faithful assignment \( \releq{\abst} \).
        If \( \circ \) is not total-preorder-repre\-sen\-ta\-ble, then there exists some \( \K\in\Bases \) such that  \( \releqK \) contains a proper cycle.
    \end{proposition}
\begin{proof}
Let \( \K \) be such that \( \releqK \) is not transitive. Pick arbitrary \( \I_{0},\I_{1},\I_{2} \in\Omega \) witnessing the violation of transitivity, i.e., \( \I_{0} \releqK \I_{1} \) and \( \I_{1} \releqK \I_{2} \) and \( \I_{0} \not \releqK \I_{2} \). As \( \releqK \) is total, we immediately obtain that \( \I_{2} \releqK \I_{0} \) holds. Consequently, \( \I_0,\ldots,\I_2 \in \Omega \) is a proper cycle.
\end{proof}
\sauerwald{\Cref{prop:implycircle} confirms that non-total-preorder-repre\-sen\-ta\-bility implies cyclicity.
Yet, 
    it is less obvious whether the converse also holds, i.e., whether} the absence of a ``certain internal cyclicity'' guarantees total-preorder-representability of {\basechange} operators. 
    \sauerwald{In the following sections, we characterize the specific type of acyclicity that guarantees total-preorder-representable {\basechange} operators and we characterize the specific type of acyclicity for base logics that guarantee universal total-preorder-representability}:
	\begin{itemize}
        \item \emph{Acyclic base change operators.}  %
        For particular logics with a finite set of interpretations, Delgrande et. al. \cite{KS_DelgrandePeppasWoltran2018} show that total-preorder-representability coincides with the satisfaction of one extra postulate for operators, called \postulate{Acyc}.
        We adapt this postulate to arbitrary base logics obtaining an appropriate notion of \emph{acyclic} base change operators.
        Lifting the result by Delgrande et. al. \cite{KS_DelgrandePeppasWoltran2018}, we show that a base change operator satisfying \postulate{G1}--\postulate{G6} is total-preorder-representable if and only if it is acyclic.        
\item \emph{Assignments with forged proper cycles.} The notion of forged proper cycles describes proper cycles where the cyclic structure of interpretations is justified by the existence of bases that ``form'' this proper cycle. We show that for each assignment that is compatible with a {\basechange} operator contains a forged proper cycle if and only if the operator is not total-preorder-representable.
		\item \emph{Loop-free base logics.} We precisely identify those base logics exhibiting universal total-preorder-repre\-sen\-ta\-bi\-li\-ty, i.e. for which every base change operator satisfying \postulate{G1}--\postulate{G6} is total-preorder-representable. 
        To this end, we introduce the notion of a \emph{\criticalloop} for a base logic \( \mathbb{B} \), which, roughly speaking, consists of several bases from  \( \mathbb{B} \) which induce a circular preference arrangement, which is not ``short-circuited'' by any other base that would be consistent with more than two bases of the sequence.     
        It will turn out that {\criticalloop}s are a forbidden substructure for base logics regarding total-preorder-repre\-sen\-ta\-bi\-li\-ty, as absence of a {\criticalloop} is the necessary and sufficient condition for universal total-preorder-repre\-sen\-ta\-bi\-li\-ty.
         We will call a base logic \( \mathbb{B} \) loop-free if there is no {\criticalloop} for~\( \mathbb{B} \).
	\end{itemize}
We start by considering acyclic base change operators and show coincidence with total-preorder-representability in the next section.
At the end of the same section, we introduce and consider forged proper cycles.
In the subsequent section,  we  show that loop-free base logics are exactly those base logics where every \sauerwald{{\basechange} operator that satisfies \postulate{G1}--\postulate{G6}} is total-preorder-representable. 
Then, we will encounter further consequences of the prior two characterization results and will consider two examples in the last part of this section.

\subsection{Characterizing Total-Preorder-Representable Base Change Operators}\label{sec:acycresult}

In this section, we capture total-preorder-representability on the operator level. We consider a necessary and sufficient criterion for \sauerwald{base change operators} for \( \mathbb{B} \) such that total-preorder-representability is guaranteed\sauerwald{, when presuming that \postulate{G1}--\postulate{G3}, \postulate{G5}, and \postulate{G6} are satisfied}.
Delgrande et. al. \cite{KS_DelgrandePeppasWoltran2018} add an extra postulate to \postulate{G1}--\postulate{G6} to rule out the ``unnatural'' change operators that do \textbf{not} correspond to some preorder assignment. The postulate can be expressed in the framework of base logics as follows:
\begin{definition}\label{def:acyclic}
	Let $\mathbb{B} = (\MC{L},\Omega,\models,\Bases,\Cup)$ be a base logic.
	A change operator $ \circ $ for $ \mathbb{B} $ is \sauerwald{called \emph{acyclic} if it satisfies}:
	\begin{quote}
		\definepostulate{Acyc} Let $\K\in \Bases$ be any base and $\G_0,\ldots,\G_n \in \Bases$ any sequences of bases with $\Mod{\G_i \Cup (\K\circ \G_{i\oplus 1})}\neq\emptyset$ for each $0\leq i \leq n$, where \( \oplus \) denotes addition \( \mathrm{mod}\, {(n+1)} \). Then $\Mod{\G_0 \Cup (\K\circ \G_n)}\neq\emptyset$ holds.
	\end{quote} 
\end{definition}
With these ingredients in place, {Delgrande et al.}~\cite{KS_DelgrandePeppasWoltran2018}~establish that, for the logics they consider (see also \Cref{sec:related_works}), there is a two-way correspondence between those AGM revision operators satisfying \postulate{Acyc} and min-expressible faithful preorder assignments. In the following, we recall the original result by {Delgrande et al.}~\cite{KS_DelgrandePeppasWoltran2018} translated into our notation.
\begin{proposition}[{\cite{KS_DelgrandePeppasWoltran2018}}]\label{prop:delgrande_acyc}
	Let $\mathbb{B} = (\MC{L},\Omega,\models,\Bases,\Cup)$ be a base logic where \( \Omega \) is finite, \( \Bases=\MC{P}(\MC{L}) \), and for every two interpretations \( \I_1,\I_2\in\Omega \) there exists some sentence \( \varphi\in\MC{L} \) with \( \I_1\models\varphi \) and \( \I_2\not\models\varphi \). Let $ \circ $ be a change operator for $ \mathbb{B} $ that satisfies \postulate{G1}--\postulate{G6}.
	Then $ \circ $ is acyclic if and only if $ \circ $ is total-preorder-representable.
\end{proposition}

We generalize \Cref{prop:delgrande_acyc} by {Delgrande et al.}~\cite{KS_DelgrandePeppasWoltran2018} to the full setting of base logics.
\begin{theorem}\label{thm:acyctpo}
	Let $\mathbb{B} = (\MC{L},\Omega,\models,\Bases,\Cup)$ \sauerwald{be a base logic} and let $ \circ $ be a change operator for $ \mathbb{B} $ that satisfies \postulate{G1}--\postulate{G3}, \postulate{G5}, and \postulate{G6}.
	Then $ \circ $ is acyclic if and only if $ \circ $ is total-preorder-representable.
\end{theorem}
In the following we provide an outline of the proof.
A full proof of \Cref{thm:acyctpo} is given in \Cref{sec:app_acyc}. %
\begin{proof}[Proof (outline)]
	We consider both directions of the proof independently.
	\begin{itemize}
		\item[]\hspace{-1.4em} \emph{From total-preorder-representability to acyclicity.} \Cref{thm:rep2} yields that there exists a min-complete quasi-faithful preorder assignment $\releq{\abst}$ compatible with~$ \circ $. Let \( \G_0, \ldots, \G_n \in \Bases \) be arbitrary bases such that the premises of \postulate{Acyc} are satisfied. Compatibility of  $\releq{\abst}$ with~$ \circ $ yields for each \( \K \) a relation between the minimal models of the bases \( \G_0, \ldots, \G_n \) with respect to $\releqK$. By employing transitivity of $\releqK$, one obtains the conclusion of \postulate{Acyc}.
		\item[]\hspace{-1.4em}\emph{From acyclicity to total-preorder-representability.}
        We show that  \( \rreleqc{\abst} \) from \Cref{sec:transformation2tpo} is a min-complete quasi-faithful preorder-assignment compatible with~$ \circ $. 
            The overall structure of the proof is the same as for \Cref{prop:encodedown}.             
        The critical difference  to the proof of \Cref{prop:encodedown} is the argumentation required for showing that taking the transitive closure of \( \relationstepdashK \) in ``Step II'' only adds detached pairs.
        To this end, we show that acyclicity of \( \circ \) ensures that taking the transitive closure only does add detached pairs, i.e., \( {\relationstepdashdashK} \subseteq {\releqc{\K}} \) holds.
        The first observation is that the acyclicity of  \( \circ \) implies that every proper cycle in \( \releqcK \) contains a detached pair. 
        One can show that if the transitive closure added a non-detached pair \( (\I,\I') \) to \( \relationstepdashK \), this would imply that  \( (\I,\I') \) is part of a proper cycle in \( \releqcK \) without any detached pairs. This yields a contradiction to the observation that every proper cycle in \( \releqcK \) contains a detached pair. \qedhere
	\end{itemize}
\end{proof}

	We employ now \Cref{thm:acyctpo} to extend \Cref{prop:implycircle} such that we will capture the characteristics of those proper cycles that are responsible for the non-total-preorder-representablity of an base change operator. Consider the following specific kind of proper cycles, where the cycle has a counterpart on the leve of bases of the underlying base logic.	
	\begin{definition}\label{def:forgedcycle}
        Let $\mathbb{B} = (\MC{L},\Omega,\models,\Bases,\Cup)$ be a base logic and let $ {\preccurlyeq} \subseteq \Omega\times\Omega $ be a total relation.
        A proper cycle \( \I_{0},\ldots,\I_{n} \in \Omega \) of $ {\preccurlyeq} $ is called \emph{forged by $\mathbb{B}$} if there exist bases \( \G_{{i},{i\oplus 1}} \) for \( 0\leq i\leq n \) such that
        \begin{itemize}
            \item \( \I_{i},\I_{i\oplus 1} \models  \G_{{i},{i\oplus 1}}   \)  for each \( 0\leq i\leq n \),
            \item \( \I_{i} \in \min( \Mod{\G_{{i},{i\oplus 1}}}, {\preccurlyeq} ) \)  for each \( 0\leq i\leq n \), and
            \item \(  \Mod{ \G_{{1},{2}} } \cap \min( \Mod{\G_{{0},{1}}}, {\preccurlyeq} ) = \emptyset \),
        \end{itemize}
    where \( \oplus \) denotes addition \( \mathrm{mod}\, n+1 \).
    \end{definition}
The following theorem shows that forged proper cycles characterizes non-total-preorder-representablity from a relational perspective, i.e. a base change operator \( \circ \) that satisfies \sauerwald{\postulate{G1}--\postulate{G3}, \postulate{G5}, and \postulate{G6}} is total-preorder-representable if and only if every assignment compatible with \( \circ \) does not contain a forged proper cycle.
\begin{theorem}\label{thm:forgedtpo}
    Let $\mathbb{B} = (\MC{L},\Omega,\models,\Bases,\Cup)$ be a base logic, let $ \circ $ be a change operator for $ \mathbb{B} $ that satisfies \postulate{G1}--\postulate{G3}, \postulate{G5}, and \postulate{G6} and let \( \releq{\abst} \) be a min-friendly quasi-faithful assignment compatible with \( \circ \).
    Then \( \circ \) is total-preorder-representable if and only if for each \( \K\in\Bases \), the corresponding \( \releqK \) does not contain a proper cycle \( \I_{0},\ldots,\I_{n} \) forged by $\mathbb{B}$.
\end{theorem}
\pagebreak[3]
\begin{proof}[Proof (sketch)]
    We show both directions independently.
    \begin{itemize}
            \item[``$\Rightarrow$''] We show this direction by contraposition. Assume that \( \releqK \) has a proper cycle \( \I_{0},\ldots,\I_{n} \) that is forged by $\mathbb{B}$.
            By definition, there are bases  \( \G_{{i},{i\oplus 1}} \) as given in \Cref{def:forgedcycle}.  By employing compatibility of \( \releq{\abst} \) with $\circ$, we obtain that \( \G_{{i},{i\oplus 1}} \) witness that \postulate{Acyc} is violated.
            Consequently \( \circ \) is acyclic, and thus, by \Cref{thm:acyctpo}, $\circ$, is not total-preorder-representable.
            \item[``$\Leftarrow$''] We show this direction by contraposition. Assume that $ \circ $ is not total-preorder-representable. By employing \Cref{thm:acyctpo}, obtain that $\circ$ violates \postulate{Acyc}, i.e., there exist $\K\in\Bases$ and $\G_0,\ldots,\G_n \in \Bases$ with $\Mod{\G_i \Cup (\K\circ \G_{i\oplus 1})}\neq\emptyset$ for each $i$ with $0\leq i \leq n$ such that $ \Mod{\G_0 \Cup (\K\circ \G_n)}=\emptyset $, where \( \oplus \) denotes addition \( \mathrm{mod}\, {(n+1)} \). 
            From $\Mod{\G_i \Cup (\K\circ \G_{i\oplus 1})}\neq\emptyset$ we obtain that there exist two interpretations $ \I_{i\oplus 1},\I_{i\oplus 2} $ with $ \I_{i\oplus 1},\I_{i\oplus 2} \models \G_{i} $, and \( \I_{i\oplus 1} \models \K\circ \G_{i} \), and \( \I_{i\oplus 2} \models \G_i \Cup (\K\circ \G_{i\oplus 1}) \) for each $i$ with $0\leq i \leq n$. Moreover, $\Mod{\G_0 \Cup (\K \circ \G_n)}=\emptyset $ implies that $ \I_{1} \not\models \K \circ \G_n $ holds. Recall that \( \releq{\abst} \) is compatible with $\circ$ , and so we have $ \Mod{\K\circ\G} = \min(\Mod{\G},\releqK) $ for each $\G\in\Bases$. By using this and setting $ \G_{{i\oplus 1},{i\oplus 2}}=\G_{i} $ for each $i$ with $0\leq i \leq n$ we observe that \( \I_{0},\ldots,\I_{n} \in \Omega \) is forged by $\mathbb{B}$.            
            It remains to show that \( \I_{0},\ldots,\I_{n} \in \Omega \) is a proper cycle.
            From the compatibility of \( \releq{\abst} \) with $\circ$, and $ \I_{i},\I_{i\oplus 1} \models \G_{i,\oplus 1} $, and \( \I_{i} \models \K\circ \G_{i,\oplus 1} \) we obtain \( \I_{i} \releqK \I_{i\oplus 1} \).
            From the compatibility of \( \releq{\abst} \) with $\circ$, and $\I_{1}\models\G_{0,1}$, and $\Mod{\G_{1,2} \Cup (\K \circ \G_{0,1})}=\emptyset $ we obtain $\I_{1} \not\releqK \I_{0}$.            
            \qedhere
        \end{itemize}
\end{proof}
In the next section, we consider total-preorder-representability from a global perspective and provide a characterizing criterion for a base logic where every \sauerwald{base change operator that satisfies \postulate{G1}--\postulate{G3}, \postulate{G5}, and \postulate{G6}} is total-preorder-representable.

\subsection{Characterizing Base Logics with Universal Total-Preorder-Representability}\label{sec:logics_tpo_representable}

The following definition describes the occurrence of a certain relationship between several bases. Such an occurrence will turn out to be the one and only reason to prevent total-preorder-representability.

\begin{definition}[critical loop, loop-free]\label{def:critical_loop}
    Let $\mathbb{B}=(\MC{L},{\Omega},\models,\Bases,\Cup)$ be a base logic.\\
    Three or more bases $ \G_{0,1},\allowbreak\G_{1,2},\ldots,\G_{n,0} \in\Bases$ are said to form a \emph{critical loop of length \( (n+1) \) for $ \mathbb{B}$} if
    there exists a base \( \K\in\Bases \) and consistent bases $\G_0,\ldots,\G_n \in \Bases$  such that 
    \begin{itemize}\setlength{\itemsep}{0pt}
        \item[]\hspace{-4.4ex}(1)~~$ \Mod{\K\Cup \G_{i,i \oplus 1}}  = \emptyset$ for every \( i\in\{0,\ldots,n\} \),  where  \( \oplus \) is addition \( {\mathrm{mod}\, (n+1)} \),
        \item[]\hspace{-4.4ex}(2)~~$ \Mod{\G_{i}} \cup \Mod{\G_{i \oplus 1}} \subseteq \Mod{\G_{i,i\oplus 1}} $ and  \( \Mod{\G_{j}\Cup\G_{i}} = \emptyset \) for each \( i,j\in\{0,\ldots,n\} \) with \( i \neq j \),  and
        
        \item[]\hspace{-4.4ex}(3)~~for each \( \coverbase \in \Bases \) that is consistent with at least three bases from \( \G_{0}, \ldots, \G_{n} \),
        there exists some \( \coverbase' \in\Bases \) such that \( \Mod{\coverbase'}\neq\emptyset \) and \( \Mod{\coverbase'} \subseteq \Mod{\coverbase} \setminus \left( \Mod{\G_{0,1}}\cup\ldots\cup\Mod{\G_{n,0}} \right) \).
    \end{itemize}
    A base logic $ \mathbb{B} $ is called \emph{loop-free} if there is no critical loop for $ \mathbb{B} $.
\end{definition}

The three conditions in \Cref{def:critical_loop} describe the canonic situation brought about by some bases \( \G_{0,1},\ldots,\G_{n,0} \) allowing for the construction of a revision operator that unavoidably gives rise to a circular compatible relation. Note that due to Condition (3), every three of \( \G_{0,1},\G_{1,2},\ldots,\G_{n,0} \) together are inconsistent, but each two of them which have an index in common are consistent, i.e. \( \G_{i,i\oplus{1}}\Cup \G_{i\oplus{1},i\oplus{2}} \) is consistent for each \( i\in\{0,\ldots,n\} \).

\begin{figure}[t]
    \centering
    \begin{subfigure}[t]{0.48\textwidth}
        \centering
        \begin{tikzpicture}
            \node [inner sep=0.1em,plainballs, scale=1.6] (q) at (-1.7,2.2) {};
            \node [inner sep=0.23em,no-outline] (p) at (q) {\small $\Mod{\K}$};
            
            \node [inner sep=0.26em,plainballs] (u) at (0,1.25) {\small $\Mod{\G_{\!0}}$};
            \node [inner sep=0.1em,no-outline, scale=1.5] (a) at (u) {};
            
            \node [inner sep=0.26em,plainballs] (l) at ([shift={(a)}] 90:2) {\small $\Mod{\G_{\!1}}$};
            \node [inner sep=0.1em,no-outline, scale=1.5] (b) at (l) {};
            
            \draw[-] (a.west) -- (b.west); 
            \draw[-] (a.east) -- (b.east); 
            \draw[-] (a.west) arc (180:360:0.61cm);
            \draw[-] (a.north east) arc (45:225:0.61cm);
            \draw[-] (b.west) arc (180:0:0.61cm);
            
            \node [inner sep=0.26em,plainballs] (l+1) at ([shift={(b)}] 45:2.1) {\small $\Mod{\G_{\!2}}$};
            \node [inner sep=0.1em,no-outline, scale=1.5] (c) at (l+1) {};
            
            \drawthingiies{c.north west}{c.south east}{b.south east}{b.north west}{-}{black};
            
            \node [inner sep=0.1em,no-outline, scale=1.5] (d) at ([shift={(c)}] 0:1.5) {};
            
            \drawthingiies{d.north}{d.south}{c.south}{c.north}{dotted}{black};

            \node [inner sep=0.26em,plainballs] (u-1) at ([shift={(a)}] -45:2.1) {\small $\Mod{\G_{\!n}}$};
            \node [inner sep=0.1em,no-outline, scale=1.5] (f) at (u-1) {};
            
            \node [inner sep=0.1em,no-outline, scale=1.5] (e) at ([shift={(f)}] 0:1.5) {};
            
            \drawthingiies{e.north}{e.south}{f.south}{f.north}{dotted}{black};
            
            \path[-, dash pattern=on \pgflinewidth off 10pt, ultra thick, lightgray] ([shift={(d.east)}] 0:-0.8) edge[bend left=90,looseness=1.5] ([shift={(e.east)}] 0:-0.8);
            
            \draw[-] (f.north east) arc (45:-135:0.61cm);
            
            \draw[-] (a.north east) -- (f.north east); 
            \draw[-] (a.south west) -- (f.south west); 
            
            \node (g01) at (0,2.2) {\small $\Mod{\G_{\!0,1}}$};
            \node (g12) at (0.66,4) {\small $\Mod{\G_{\!1,2}}$};
            \node (gn0) at (0.66,0.46) {\small $\Mod{\G_{\!n,0}}$};
            \node[no-outline] (xx) at (-0.25,-1.25) {};
            
        \end{tikzpicture}
        \caption{By Condition (2), the models of each base \( \G_{{i},{i\oplus 1}} \) encompass the models of \( \G_{i} \) and of \( \G_{i\oplus 1} \), while by Condition (1), all these model sets are disjoint from the models of \( \K \).}
        \label{fig:critical_loop_condition_1-2}
    \end{subfigure}
    \hfill
    \begin{subfigure}[t]{0.48\textwidth}
        \hspace{-2mm}\begin{tikzpicture}
            \node [inner sep=0.1em,plainballs, scale=1.25] (a) at (0,0) {};
            \node [inner sep=0.1em,no-outline, scale=2] (a0) at (a) {};
            \node [inner sep=0.1em,no-outline] (al) at ([shift={(a)}] 270:0) {\small $\Mod{\G_{\!i}}$};
            
            \node [inner sep=0.1em,plainballs, scale=1.25] (b) at ([shift={(a)}] 60:4.5) {};
            \node [inner sep=0.1em,no-outline, scale=2] (b0) at (b) {};
            \node [inner sep=0.1em,no-outline] (bl) at (2.25,3.9) {\small $\,\Mod{\G_{\!j}}$};
            
            \node [inner sep=0.1em,plainballs,scale=1.25] (c) at ([shift={(a)}] 120:4.5) {};
            \node [inner sep=0.1em,no-outline, scale=2] (c0) at (c) {};
            \node [inner sep=0.1em,no-outline] (cl) at (-2.25,3.9) {\small $\Mod{\G_{\!k}}\,$};

            \draw [rounded corners=7.0mm] (a.center)--(b.center)--(c.center)--cycle;
            
            \node[no-outline] (cb0) at ([shift={(a)}] 90:1.5){};
            \node[no-outline] (cb1) at ([shift={(cb0)}] 60:1.5){};
            \node[no-outline] (cb2) at ([shift={(cb0)}] 120:1.5){};
            
            \draw [rounded corners=3.0mm] (cb0.center)--(cb1.center)--(cb2.center)--cycle;
            \node [inner sep=0.1em,no-outline] (cbl) at ([shift={(cb0)}] 90:0.85) {\small $\Mod{\coverbase'}$};
            \node [inner sep=0.1em,no-outline] (e) at ([shift={(cbl)}] 90:1) {\small $\Mod{\coverbase}$};
            
            \node [inner sep=0.1em,no-outline, scale=2] (a1) at ([shift={(a)}] 160:1.5) {};
            \node [inner sep=0.1em,no-outline, scale=2] (a2) at ([shift={(a)}] 20:1.5) {};
            
            \node [inner sep=0.1em,no-outline,scale=2] (a0-120) at ([shift={(a0)}] 120:0.61) {};
            \node [inner sep=0.1em,no-outline,scale=2] (a0-300) at ([shift={(a0)}] 300:0.61) {};
            \node [inner sep=0.1em,no-outline,scale=2] (a0-60) at ([shift={(a0)}] 60:0.61) {};
            \node [inner sep=0.1em,no-outline,scale=2] (a0-240) at ([shift={(a0)}] 240:0.61) {};
            
            \node [inner sep=0.1em,no-outline,scale=2] (a1-60) at ([shift={(a1)}] 60:0.61) {};
            \node [inner sep=0.1em,no-outline,scale=2] (a1-240) at ([shift={(a1)}] 240:0.61) {};
            \node [inner sep=0.1em,no-outline,scale=2] (a1-150) at ([shift={(a1)}] 150:0.61) {};
            
            \node [inner sep=0.1em,no-outline,scale=2] (a2-30) at ([shift={(a2)}] 30:0.61) {};
            \node [inner sep=0.1em,no-outline,scale=2] (a2-120) at ([shift={(a2)}] 120:0.61) {};
            \node [inner sep=0.1em,no-outline,scale=2] (a2-300) at ([shift={(a2)}] 300:0.61) {};
            
            \drawthingiies{a2-120.center}{a2-300.center}{a0-300.center}{a0-120.center}{dotted}{black};
            \drawthingiies{a0-60.center}{a0-240.center}{a1-240.center}{a1-60.center}{dotted}{black};
            
            \node [inner sep=0.1em,no-outline, scale=1.5] (c1) at ([shift={(c)}] 260:1.5) {};
            \node [inner sep=0.1em,no-outline, scale=1.5] (c2) at ([shift={(c)}] 40:1.5) {};
            
            \node [inner sep=0.1em,no-outline,scale=2] (c1-340) at ([shift={(c1)}] 340:0.61) {};
            \node [inner sep=0.1em,no-outline,scale=2] (c1-160) at ([shift={(c1)}] 160:0.61) {};
            
            \node [inner sep=0.1em,no-outline,scale=2] (c0-340) at ([shift={(c0)}] 340:0.61) {};
            \node [inner sep=0.1em,no-outline,scale=2] (c0-160) at ([shift={(c0)}] 160:0.61) {};
            
            \node [inner sep=0.1em,no-outline,scale=2] (c2-300) at ([shift={(c2)}] 300:0.61) {};
            \node [inner sep=0.1em,no-outline,scale=2] (c2-120) at ([shift={(c2)}] 120:0.61) {};
            
            \node [inner sep=0.1em,no-outline,scale=2] (c0-300) at ([shift={(c0)}] 300:0.61) {};
            \node [inner sep=0.1em,no-outline,scale=2] (c0-120) at ([shift={(c0)}] 120:0.61) {};
            
            \drawthingiies{c0-160.center}{c0-340.center}{c1-340.center}{c1-160.center}{dotted}{black};
            \drawthingiies{c2-120.center}{c2-300.center}{c0-300.center}{c0-120.center}{dotted}{black};
            
            \node [inner sep=0.1em,no-outline,scale=1.5] (b1) at ([shift={(b)}] -80:1.5) {};
            \node [inner sep=0.1em,no-outline, scale=1.5] (b2) at ([shift={(b)}] 140:1.5) {};
            
            \node [inner sep=0.1em,no-outline,scale=2] (b0-60) at ([shift={(b0)}] 60:0.61) {};
            \node [inner sep=0.1em,no-outline,scale=2] (b0-240) at ([shift={(b0)}] 240:0.61) {};
            
            \node [inner sep=0.1em,no-outline,scale=2] (b1-20) at ([shift={(b1)}] 20:0.61) {};
            \node [inner sep=0.1em,no-outline,scale=2] (b1-200) at ([shift={(b1)}] 200:0.61) {};
            
            \node [inner sep=0.1em,no-outline,scale=2] (b0-20) at ([shift={(b0)}] 20:0.61) {};
            \node [inner sep=0.1em,no-outline,scale=2] (b0-200) at ([shift={(b0)}] 200:0.61) {};
            
            \node [inner sep=0.1em,no-outline,scale=2] (b0-30) at ([shift={(b0)}] 30:0.61) {};
            \node [inner sep=0.1em,no-outline,scale=2] (b0-210) at ([shift={(b0)}] -150:0.61) {};
            
            \node [inner sep=0.1em,no-outline, scale=2] (b2-60) at ([shift={(b2)}] 60:0.61) {};
            \node [inner sep=0.1em,no-outline, scale=2] (b2-240) at ([shift={(b2)}] 240:0.61) {};
            
            \drawthingiies{b1-200.center}{b1-20.center}{b0-20.center}{b0-200.center}{dotted}{black};
            \drawthingiies{b0-60.center}{b0-240.center}{b2-240.center}{b2-60.center}{dotted}{black};

            \path[-, dash pattern=on \pgflinewidth off 4pt, ultra thick, lightgray] (a1-150.center) edge[bend left] (c1.south);
            \path[-, dash pattern=on \pgflinewidth off 4pt, ultra thick, lightgray] (c2.east) edge[bend left] (b2.west);
            \path[-, dash pattern=on \pgflinewidth off 4pt, ultra thick, lightgray] (b1.south) edge[bend left] (a2-30.center);
            
        \end{tikzpicture}
        \caption{By Condition (3), for each \( \coverbase \) that is consistent with at least three distinct elements of the cycle (e.g. $\G_{\!i}, \G_{\!j}, \G_{\!k} \in \{\G_{\!0},\ldots,\G_{\!n} \}$), there exists a base \( \coverbase' \) that is subsumed by \( \coverbase \) but inconsistent with all $\G_{\!0,1}$, $\ldots$ , $\G_{\!n-1,n}$, $\G_{\!n,0}$.}
        \label{fig:critical_loop_condition_3}
    \end{subfigure}
    \caption{Illustrations of the Conditions (1)--(3) of a critical loop given in \Cref{def:critical_loop}.}
    \label{fig:critical_loop_condition}
\end{figure}

In the following, we provide some intuition for the notion of \criticalloop.
The bases \( \G_0,\ldots,\G_n \) provide model sets that are pairwise disjoint (cf.~the second part of Condition~(2)) and can be thought of as arranged in a cycle, while the bases \( \G_{0,1},\ldots,\G_{n,0} \) overlap any two adjacent model sets as indicated by their indices (cf.~the first part of Condition (2)). 
Exploiting this situation, we now want to define the result of revising \( \K \) such that the circular arrangement governs the choice of the ``$\K$-preferred'' models as follows: the models of \( \K\circ\G_{i,i\oplus 1} \), obtained by revising $\K$ with $\G_{i,i\oplus 1}$, encompass all models of $\G_{i}$, but no model of~\( \G_{i \oplus 1} \).
Consequently, for any $i$, the revision \( \K\circ \G_{i,i\oplus 1} \) provides a preference of \( \G_i \) over~\( \G_{i \oplus 1} \). Thus, a relation compatible to \( \circ \) has to contain a ``preference-loop'' of interpretations.
In order to guarantee that this arrangement technique is applicable, Condition~(1) and Condition~(3) from \Cref{def:critical_loop} are ruling out all cases, where other bases of \( \mathbb{B} \) together with \postulate{G1}--\postulate{G6} prevent our intended construction from working:
\begin{itemize}\setlength{\itemsep}{0pt}
    
    \item[]\hspace{-3ex} Condition (1) ensures that none of the bases \( \G_{0,1},\ldots,\G_{n,0} \) has models in common with the current belief base \( \K \) (c.f.  \Cref{fig:critical_loop_condition_1-2}).
    If one base \( \G_{i,i\oplus 1} \) would have a model in common with \( \K \), then the postulate \postulate{G2} would prevent a circular situation.
    Thus, this condition is necessary for admitting circular situations.
    
    \item[]\hspace{-3ex} Condition (3) comes into play if a belief base \( \coverbase \) ``covers'' three or more elements of the cycle, meaning that three or more interpretations of a cycle are models of this base \( \coverbase \).
    For any such $\coverbase$,  there is a consistent belief base \( \coverbase' \) which shares all of its models with \( \coverbase  \) but no model with any of the \( \G_{i,i\oplus 1} \) (c.f.  \Cref{fig:critical_loop_condition_3}). 
    This is crucial for the presence of cycles: if no such \( \coverbase' \) would exist, the operator would (by min-completeness and min-expressibility) choose models of the cycle, e.g., the bases \( \G_i \Cup \coverbase \), as the result of the revision by \( \coverbase \). In the end, this would give one base \( \G_i \) preference over \( \G_{i\oplus 1}, \ldots, \G_{i\oplus n} \) and thus, would prevent creation of a cycle. Therefore, Condition (3) rules out the cases where min-completeness and min-expressibility and non-existence of such a \( \coverbase' \) together would prevent formation of a cycle.
\end{itemize}

\Cref{def:critical_loop} is inspired by our running example. 
Before explicating this link, we continue with the presentation of the general results.

        The next theorem is the central result of this section, stating that the notion of {\criticalloop} captures exactly those base logics for which some operator exists that is not total-preorder-representable. 
        By contraposition, this just means that for all base logics $\mathbb{B}$, the absence of critical loops from $\mathbb{B}$ is a necessary and sufficient criterion for universal total-preorder-representability and hence for the existence of a characterization result for $\mathbb{B}$ that is based on total preorders.
        More concisely expressed, loop-free base logics are exactly those base logics where every base change operator that satisfies \postulate{G1}--\postulate{G6} is total-preorder-representable.
        This characterization result will not only hold for {\basechange} operators that satisfy \postulate{G1}--\postulate{G6}, but also for operators that does not satisfy \postulate{G4}, but the remaining postulates \postulate{G1}--\postulate{G3}, \postulate{G5}, and \postulate{G6}.
        To provide a result applicable for both groups of postulates, we will show for the necessary and sufficient direction the respectively stronger result, i.e., if our base logic exhibits a {\criticalloop} we provide a construction for a non-total-preorder-representable {\basechange} operator that satisfies \postulate{G1}--\postulate{G6}, and for the other direction, we show that in the absence of {\criticalloop s} every operator that satisfies \postulate{G1}--\postulate{G3}, \postulate{G5}, and \postulate{G6} is total-preorder-representable.
        
        \pagebreak[3]
        \begin{theorem}\label{thm:when_tranisitive} 
            For all base logics $\mathbb{B}$, the following statements hold:
            \begin{itemize}
                \item[\rm{(I)}] If \,$\mathbb{B}$ is loop-free, then every {\basechange} operator for $\mathbb{B}$ that satisfies \postulate{G1}--\postulate{G3}, \postulate{G5}, and \postulate{G6} is total-preorder-representable. 
                \item[\rm{(II)}] If \,$\mathbb{B}$ is not loop-free, then there exists a {\basechange} operator for $\mathbb{B}$ that satisfies \postulate{G1}--\postulate{G6} and is not total-preorder-representable.
            \end{itemize}
        \end{theorem}
        We will only present an outline of the proof of \Cref{thm:when_tranisitive} here. The full, detailed proof is given in \Cref{sec:app_loops_new}.
            \begin{proof}[Proof (outline)]\newcommand{\B}{\mathcal{C}}
                \newcommand{\A}{\mathcal{A}}
                \newcommand{\critloop}{\mathfrak{C}}
                We consider both statements independently.
                \begin{itemize}
                    \item[(I)]
                    The proof outline for this direction is the same as in \Cref{prop:encodedown}.             
                    The critical difference  to the proof of \Cref{prop:encodedown} is the argumentation required for showing that taking the transitive closure of \( \relationstepdashK \) in ``Step II'' only adds detached pairs.
                    To this end, we show that loop-freeness of \( \mathbb{B} \) ensures that taking the transitive closure only does add detached pairs, i.e., \( {\relationstepdashdashK} \subseteq {\releqc{\K}} \) holds.
                    The first observation is that the loop-freeness of \( \mathbb{B} \) implies that every proper cycle of length \( 2 \) in \( \releqcK \) contains a detached pair. 
                    \sauerwald{This can be shown by contradiction: if a proper cycle of length \( 2 \) in \( \releqcK \) would contain no detached pair, then there is a {\criticalloop} in \( \mathbb{B} \) consisting of bases that cover the proper cycle.
                    Suppose \( \I_0,\I_1,\I_2 \) is a proper cycle with \( \I_{0} \relcK \I_{1} \). 
                    Because each \( ( \I_{i} ,\I_{i\oplus 1} ) \) is not detached, there must exist a base \( \G_{i,i\oplus 1} \in\Bases \) such that \( \I_{i} \models \K\circ \G_{i,i\oplus 1} \) and \( \I_{i} \models \K\circ \G_{i,i\oplus 1} \) holds.
                    Employing the information \( \I_{0} \relcK \I_{1} \), yields that \( \I_{1} \not\models \K\circ \G_{0,1} \) holds.
                    Let \( \G_i = (\K\circ\G_{i,{i \oplus 1}})\Cup\G_{{i\oplus 2},{i}} \) for each \( i\in\{0,1,2\} \).
                    The bases \( \G_{0,1}, \G_{1,2},\G_{2,0} \) satisfy Condition (1) and Condition (2) of \Cref{def:critical_loop}.
                    Because, \( \mathbb{B} \) is loop-free, i.e., contains no {\criticalloop}, there is some base \( \coverbase \) that violates Condition (3) of \Cref{def:critical_loop}.
                    A detailed analysis by case distinction reveals that the existence of the base \( \coverbase \) contradicts \( \I_{0} \relcK \I_{1} \).
                    }
                    
                    As next step, observe that loop-freeness  of \( \mathbb{B} \)  guarantees that every proper cycle of length \( n \)  in \( \releqcK \) contains a proper cycle of length \( 2 \) in \( \releqcK \).
                    One can show that if the transitive closure added a non-detached pair \( (\I,\I') \) to \( \relationstepdashK \), this would imply that  \( (\I,\I') \) is part of a proper cycle of length \( n \) in \( \releqcK \) without any detached pairs.
                    Consequently, there is proper cycle of length \( 2 \) in \( \releqcK \) without any detached pairs.
                    A contradiction to the observation that every proper cycle of length \( 2 \) in \( \releqcK \) contains a detached pair.
                    
					\item[(II)]
                    Let $\mathbb{B}  = (\MC{L},{\Omega},\models, \Bases, \Cup)$ be a base logic with a {\criticalloop} $\critloop = (\G_{0,1},\G_{1,2},\allowbreak\ldots,\G_{n,0})$ and let $ \G_0,\ldots,\G_{n}\in \Bases$ and $ \K $ 
                    be as in \Cref{def:critical_loop}.
                    We construct an operator \( \circ_\critloop \) that is not total-preorder-representable and satisfies \postulate{G1}--\postulate{G3}, \postulate{G5}, and \postulate{G6}. 
                    For any base \( \K'\in\Bases \) with \( \K'\not\equiv\K \), the operator \( \circ_\critloop \) behaves as the trivial revision operator (see \Cref{ex:full-meet}).
                    In the case \( \K'\equiv\K \), the result \( \K \circ_\critloop \G  \) is given as follows. 
                    If \( \G \) is consistent with \( \K \), then \( \K \circ_\critloop \G  = \G  \Cup   \K' \).
                    Let $ \B' $ denote the set of all $ \coverbase' $ guaranteed by Condition~(3) from \Cref{def:critical_loop} that are inconsistent with \( \K \) .
                    If \( \G \) is consistent with some element from \( \B' \), then \( \K\circ_\critloop \G  = \G  \Cup   \G_{\min}^{\B'} \), where \( \G_{\min}^{\B'} \) is the minimal element in \( \B' \) that is consistent with \( \G \).
                    If \( \G \) is consistent with some \( \G_i \), then \( \K \circ_\critloop \G = \G  \Cup \G_i \).
                    If none of the former cases applies, then \( \K \circ_\critloop \G = \G  \).
                    Indeed, one can show that \( \circ_\critloop \) satisfies \postulate{G1}--\postulate{G3}, \postulate{G5}, and \postulate{G6}.
                    As last step, we show that \( \circ_\critloop \) is not total-preorder-representable by showing that the revisions \( \K\circ_\critloop\G_{0,1}, \ldots, \K\circ_\critloop\G_{n,0} \) impose a violation of transitivity in  \( \releq{\K'} \) for every min-friendly faithful assignment \( \releq{\abst} \) that is compatible with \( \circ \) in the sense of \Cref{thm:representation_theorem}.
\qedhere
                \end{itemize}
        \end{proof}

            \Cref{thm:when_tranisitive} shows that loop-freeness is the necessary and sufficient criterion for base logics such that every \sauerwald{{\basechange} operator that satisfies \postulate{G1}--\postulate{G3}, \postulate{G5}, and \postulate{G6}} is total-preorder-representable.
            \Cref{thm:when_tranisitive} (I) provides that loop-freeness of \( \mathbb{B} \) guarantees universal total-preorder-representability for \sauerwald{{\basechange} operators for \( \mathbb{B} \) that satisfy \postulate{G1}--\postulate{G3}, \postulate{G5}, and \postulate{G6}}, and \Cref{thm:when_tranisitive} (II) provides that universal total-preorder-representability of \sauerwald{{\basechange} operators for \( \mathbb{B} \) that satisfy \postulate{G1}--\postulate{G6}} is only given in loop-free base logics.
            
            \sauerwald{A direct consequence of \Cref{thm:when_tranisitive} is that loop-freeness of base logics is naturally connected with the notion of acyclicity of change operators, as summarized in the following.
            \begin{corollary}\label{col:acyc_criticalloop}
                Let $\mathbb{B}$ be a base logic. The following holds:
                \begin{itemize}
                    \item If\, \( \mathbb{B} \) is loop-free, then every base change operator for $\mathbb{B}$ that satisfies \postulate{G1}--\postulate{G3}, \postulate{G5}, and \postulate{G6} also satisfies \postulate{Acyc}.
                    \item If every base change operator for $\mathbb{B}$ that satisfies \postulate{G1}--\postulate{G3}, \postulate{G5}, and \postulate{G6} also satisfies \postulate{Acyc}, then \( \mathbb{B} \) is loop-free.
                \end{itemize}
            \end{corollary}}         
In the next section, we draw further conclusions of our characterizations of total-preorder-representability.

\pagebreak[3]
\subsection{Extended Theorems}\label{sec:extendedtheorems}
We established that loop-free base logics are exactly those base logics where every base change operator that satisfies the postulates \postulate{G1}--\postulate{G6} is total-preorder-representable, i.e.,  the absence of critical loop coincides with universal total-preorder-representability. 
Moreover, we showed that the \postulate{Acyc} postulate by {Delgrande et al.}~\cite{KS_DelgrandePeppasWoltran2018} serves in our more general setting the same purpose of ruling out operators that are not total-preorder-representable.

\Cref{thm:when_tranisitive} and \Cref{thm:acyctpo} allows us to provide more versions of the two-way representation theorem.
The first two representation theorems are ones where the \basechange\ operator satisfies \postulate{G4}, thus abstracting from the syntactic form of the belief bases.
Note that transitivity implies min-retractivity, and thus a transitive min-complete relation is automatically min-friendly.
\begin{theorem}\label{thm:rep3}
	Let $\mathbb{B}$ be a \sauerwald{loop-free base logic}. Then the following hold:
	\begin{itemize}
		\item Every base change operator for $\mathbb{B}$ satisfying \postulate{G1}--\postulate{G6} is compatible with some min-expressible min-complete faithful preorder assignment. 
		\item Every min-expressible min-complete faithful preorder assignment for $\mathbb{B}$ is compatible with some 
		base change operator satisfying \postulate{G1}--\postulate{G6}.
	\end{itemize}
\end{theorem}
\begin{proof}
	The first statement is a consequence of statement (II) of \Cref{thm:when_tranisitive} together with \Cref{lem:doesntmatter}.
	The second statement is an immediate consequence of the second statement of~\Cref{thm:rep1}.
\end{proof}
\begin{theorem}\label{thm:rep3acyc}
    Let $\mathbb{B}$ be a base logic. Then the following hold:
    \begin{itemize}
        \item Every base change operator for $\mathbb{B}$ satisfying \postulate{G1}--\postulate{G6} and \postulate{Acyc} is compatible with some min-expressible min-complete faithful preorder assignment. 
        \item Every min-expressible min-complete faithful preorder assignment for $\mathbb{B}$ is compatible with some 
        base change operator satisfying \postulate{G1}--\postulate{G6} and \postulate{Acyc}.
    \end{itemize}
\end{theorem}
\begin{proof}
    The first statement is a consequence of \Cref{thm:acyctpo} together with \Cref{lem:doesntmatter}.
    The second statement is an immediate consequence of \Cref{thm:acyctpo} and the second statement of~\Cref{thm:rep1}.
\end{proof}
The second group of representation theorems is for \basechange\ operators which do not necessarily satisfy \postulate{G4}, and thus might be sensitive to the syntax of the prior belief base.
\begin{theorem}\label{thm:rep4}
	Let $\mathbb{B}$ be a \sauerwald{loop-free base logic}. Then the following hold:
	\begin{itemize}
		\item Every base change operator for $\mathbb{B}$ satisfying \postulate{G1}--\postulate{G3}, \postulate{G5}, and \postulate{G6} is compatible with some min-expressible min-complete quasi-faithful preorder assignment. 
		\item Every min-expressible min-complete quasi-faithful preorder assignment for $\mathbb{B}$ is compatible with some 
		base change operator satisfying \postulate{G1}--\postulate{G3}, \postulate{G5}, and \postulate{G6}.
	\end{itemize}
\end{theorem}
\begin{proof}
	The first statement is a consequence of statement (II) of \Cref{thm:when_tranisitive}.
	The second statement is an immediate consequence of the second statement of~\Cref{thm:rep2}.
\end{proof}
\newpage
\begin{theorem}\label{thm:rep4acyc}
Let $\mathbb{B}$ be a base logic. Then the following hold:
\begin{itemize}
    \item Every base change operator for $\mathbb{B}$ satisfying \postulate{G1}--\postulate{G3}, \postulate{G5}, \postulate{G6} and \postulate{Acyc} is compatible with some min-expressible min-complete quasi-faithful preorder assignment. 
    \item Every min-expressible min-complete quasi-faithful preorder assignment for $\mathbb{B}$ is compatible with some 
    base change operator satisfying \postulate{G1}--\postulate{G3}, \postulate{G5}, \postulate{G6} and \postulate{Acyc}.
\end{itemize}
\end{theorem}
\begin{proof}
The first statement is a consequence of \Cref{thm:acyctpo}.
The second statement is an immediate consequence of \Cref{thm:acyctpo} and the second statement of~\Cref{thm:rep2}.
\end{proof}
We have shown in \Cref{thm:forgedtpo} that absence of forged proper cycles in assignments and total-preorder-representability are immediately linked. 
Using this observation, we want to note that preorder assignments are not the only assignments that are compatible with total-preorder-representable {\basechange} operators, e.g., our canonical encoding from \Cref{def:relation_new} does always yield a compatible assignment, but not necessary a preorder assignment (see also \Cref{sec:enc_operators}).
Specifically, we will show now that  assignments without forged proper cycle are forming the class of (min-expressible min-complete faithful) assignments that are compatible with  \sauerwald{{\basechange} operators that satisfy the AGM revision postulates}.
\begin{proposition}
	Let $\mathbb{B} = (\MC{L},{\Omega},\models$, $\Bases, \Cup) $ be a base logic.\\
     If  \( \releq{\abst} \) is an arbitrary min-expressible min-complete quasi-faithful assignment, then the following statements are equivalent:
    \begin{itemize}
        \item  \( \releq{\abst} \) is compatible with a change operator for \( \mathbb{B} \) that satisfies \postulate{G1}--\postulate{G3}, \postulate{G5}, \postulate{G6} and \postulate{Acyc}.
        \item There is no \( \K \in \Bases \) such that \( \releqcK \) contains a proper cycle that is forged by~\( \mathbb{B} \).
    \end{itemize}
If  \( \releq{\abst} \) is an arbitrary min-expressible min-complete faithful assignment, then the following statements are equivalent:
\begin{itemize}
    \item  \( \releq{\abst} \) is compatible with a change operator for \( \mathbb{B} \)  that satisfies \postulate{G1}--\postulate{G6} and \postulate{Acyc}.
    \item There is no \( \K \in \Bases \) such that \( \releqcK \) contains a proper cycle that is forged by~\( \mathbb{B} \).
\end{itemize}
\end{proposition}
\begin{proof}
	The statements are a consequence of \Cref{thm:acyctpo} and \Cref{thm:forgedtpo}.
\end{proof}

We close this section with an implication of \Cref{thm:when_tranisitive}.
A base logic 
$\mathbb{B} \,{=}\, (\MC{L},{\Omega},\models$, $\Bases, \Cup)$
is called \emph{disjunctive}, if for every two bases $\G_1,\G_2 \in \Bases$ there is a base $ \G_1 \ovee \G_2 \in \Bases $ such that $ \Mod{\G_1 \ovee \G_2}=\Mod{\G_1}\cup\Mod{\G_2}$.
This includes the case of any (base) logic allowing disjunction to be expressed on the sentence level, i.e., when for every $\gamma,\delta \in \MC{L}$ there exists some $\gamma \ovee \delta \in \MC{L}$ with $\Mod{\gamma \ovee \delta} = \Mod{\gamma} \cup \Mod{\delta}$, such that  
$\G_1 \ovee \G_2$ can be obtained as $\{\gamma \ovee \delta \mid \gamma \in \G_1, \delta \in \G_2\}$.

\begin{corollary}\label{cor:disjunctive_tpo}
	Let \( \mathbb{B} \) be a disjunctive base logic. The following statements hold:
    \begin{itemize}
        \item Every base change operator for \( \mathbb{B} \) satisfying \postulate{G1}--\postulate{G6} is total-preorder-representable.
        \item If a base change operator \( \circ \) for \( \mathbb{B} \) satisfies \postulate{G1}--\postulate{G6}, then \( \circ \) satisfies also \postulate{Acyc}.
    \end{itemize}
\end{corollary}
\begin{proof}
For the first statement, observe that a disjunctive base logic never exhibits a critical loop; Condition (3) would be violated by picking $\G = \left((\G_0 \ovee \G_1) \ldots \right) \ovee \G_n$. 
For the second statement employ the first statement and \Cref{col:acyc_criticalloop}.
\end{proof}

As a consequence, for a vast amount of well-known logics, including all classical logics such as first-order and second order predicate logic, one directly obtains total-preorder-representablility of every \sauerwald{{\basechange} operator that satisfies \postulate{G1}--\postulate{G6}} by \Cref{cor:disjunctive_tpo} .

\subsection{Examples}

We will now demonstrate the notion of acyclic base change operators and the novel notion of a loop-free base logic, specifically the notion of a {\criticalloop} (cf. \Cref{def:critical_loop}), by extending two previously given examples.

\begin{example}[continuation of \Cref{ex:orderByEx2}]\label{ex:orderByEx4}
	We will now see that 
	the base logic $\mathbb{B}_\mathrm{Ex} = \mathbb{L}_\mathrm{Ex}^\mathrm{arb}  = (\MC{L}_\mathrm{Ex} ,{\Omega}_\mathrm{Ex} ,\models_\mathrm{Ex} ,\MC{P}(\MC{L}_\mathrm{Ex} ),\cup)$ from \Cref{ex:orderByEx1} constructed from $ \mathbb{L}_\mathrm{Ex} $
    is not loop-free, i.e. \( \mathbb{B_\mathrm{Ex}} \)
	exhibits a critical loop.        
    Moreover, we will \sauerwald{see that the base change operator \( \circ_\mathrm{Ex} \) for \( \mathbb{B}_\mathrm{Ex} \) from Example \ref{ex:orderByEx1}} that is not acyclic, i.e. that violates \postulate{Acyc}.
	
	For this, choose $ \G_{i,i\oplus1}=\{ \varphi_{i} \} $, and $ \K=\K_\mathrm{Ex} = \{ \psi_3 \} $ (as in \Cref{ex:orderByEx1}) and $ \G_i=\{\psi_{i}\} $  for $ i\in\{0,1,2\} $, where $ \oplus $ denotes addition $ \mathrm{mod}\ 3 $.
	We consider each of the three conditions of \Cref{def:critical_loop} as a separate case:        
	\begin{itemize}\setlength{\itemsep}{0pt}
		\item[]\hspace{-4.4ex}\emph{Condition (1).}~Observe that $ \K_\mathrm{Ex} $  is inconsistent with $ \G_{0,1} $, $ \G_{1,2} $ and $ \G_{2,0} $. Thus, Condition (1) is satisfied.
		
		\item[]\hspace{-4.4ex}\emph{Condition (2).}~For each $ i\in\{0,1,2\} $, the models of bases $ \G_{i} $ and $ \G_{i\oplus 1} $ are contained in $\Mod{\G_{i,i\oplus1}}$, but $ \G_i $ is inconsistent with $ \G_{j} $ with $i\neq j$, e.g. $ \Mod{\{\psi_0\}}\cup \Mod{\{\psi_1\}} \subseteq \Mod{\{\varphi_0\}} $ and $\{\psi_0\}$ is not consistent with neither $\{\psi_1\}$ nor $\{\psi_2\}$.
		
		\item[]\hspace{-4.4ex}\emph{Condition (3).}~The belief base $ \coverbase=\{ \chi' \} $ is the only belief base consistent with $ \G_0  $, $ \G_1 $, and $ \G_2 $. 
		For the satisfaction of Condition (3) observe that $ \coverbase'=\{\psi_4\} $ fulfils the required condition $ \emptyset\neq \Mod{\coverbase'} \subseteq \Mod{\coverbase}\setminus(\Mod{\G_{0,1}}\cup \Mod{\G_{1,2}} \cup \Mod{\G_{2,0}}) $.
	\end{itemize}%
	In summary, $ \G_{0,1}$, $\G_{1,2}$, and $\G_{2,0}$ form a critical loop for $ \mathbb{B}_\mathrm{Ex} $ (see \Cref{fig:critical_loop_ex}). 
	Indeed, in line with \Cref{thm:when_tranisitive} (II), every min-complete faithful assignment compatible with $\circ_\mathrm{Ex} $ maps certain bases to a non-transitive relation. 
    To illustrate this, we use here the assignment \( \releq{\abst}^{\circ_\mathrm{Ex}} \) defined in \Cref{ex:orderByEx2} and sketched in \Cref{fig:non-trans-rel} for \( \K_\mathrm{Ex} \) (see also \Cref{fig:critical_loop_ex}).
	Consider the revisions $\K_\mathrm{Ex}\circ_\mathrm{Ex} \G_{0,1}$, $\K_\mathrm{Ex}\circ_\mathrm{Ex} \G_{1,2}$, and $\K_\mathrm{Ex}\circ_\mathrm{Ex} \G_{2,0}$.
From the construction of $\circ_\mathrm{Ex}$ \sauerwald{given in \Cref{ex:orderByEx1}} and compatibility of $\releq{\abst}^{\circ_\mathrm{Ex}}$ with \( \circ_\mathrm{Ex} \), we have 
	\begingroup\abovedisplayskip=3pt
	\belowdisplayskip=3pt
	\begin{align*}
		\I_0\in{\min(\Mod{\G_{0,1}},\releq{\K_\mathrm{Ex}}^{\circ_\mathrm{Ex}})} & \text{, but } \I_1\not\in{\min(\Mod{\G_{0,1}},\releq{\K_\mathrm{Ex}}^{\circ_\mathrm{Ex}})} \ , \\
		\I_1\in{\min(\Mod{\G_{1,2}},\releq{\K_\mathrm{Ex}}^{\circ_\mathrm{Ex}})}  & \text{, but } \I_2\not\in{\min(\Mod{\G_{1,2}},\releq{\K_\mathrm{Ex}}^{\circ_\mathrm{Ex}})} \text{ , and }\\ \I_2\in{\min(\Mod{\G_{2,0}},\releq{\K_\mathrm{Ex}}^{\circ_\mathrm{Ex}})} & \text{, but }  \I_0\not\in{\min(\Mod{\G_{2,0}},\releq{\K_\mathrm{Ex}}^{\circ_\mathrm{Ex}})} \ , 
	\end{align*}    \endgroup
	showing $\I_0 \rel{\K_\mathrm{Ex}}^{\circ_\mathrm{Ex}} \I_1 \rel{\K_\mathrm{Ex}}^{\circ_\mathrm{Ex}} \I_2 \rel{\K_\mathrm{Ex}}^{\circ_\mathrm{Ex}}   \I_0$, which is	 impossible for a transitive relation.
\sauerwald{Moreover, observe that the construction of $ \circ_\mathrm{Ex} $ given in \Cref{ex:orderByEx1} matches the construction outlined in the proof of \Cref{thm:when_tranisitive}~(II).}
	In particular, for the example presented here one would obtain $ \mathcal{C}'
    =\{ \{\psi_4\} \} $ when following the outline of the construction.
\end{example}

\begin{example}
	We will now see that the base logic ${\mathbb{H}_n^\mathrm{arb}} = (\MC{L}_{\mathbb{H}_n},\Omega_{\mathbb{H}_n},\models_{\mathbb{H}_n}, \mathcal{P}(\MC{L}_{\mathbb{H}_n}), \cup)$ constructed from the Horn logic $\mathbb{H}_n$ \sauerwald{is not loop-free, i.e., $\mathbb{H}_n$} exhibits a critical loop. 
	
	In this Horn logic example, we will shorten the interpretation notations. For example for $\Sigma = \{p_1, p_2\}$, an interpretation $\I$ where $\I(p_1) = \textbf{true}$ and $\I(p_2) = \textbf{false}$ will be written as $p_1\bar{p_2}$.
	
	Following an example in the work by Delgrande and Peppas \cite{KS_DelgrandePeppas2015}, we will consider $\Sigma = \{p_1,p_2,p_3\}$ and show the existence of a critical loop of length 3.
	To show the critical loop, we let $\K_\mathbb{H} = \{(p_1\wedge p_2\wedge p_3)\}$ and choose $\G_{0,1} = \{(p_1), (p_2\wedge p_3 \to \bot)\}$, $\G_{1,2} = \{(p_3), (p_1\wedge p_2\to\bot)\}$, and $\G_{2,0} = \{(p_2), (p_1\wedge p_3\to \bot)\}$.
	We then choose $\G_0 = \{(p_1 \wedge p_2), (p_3\to\bot)\}$, $\G_1 = \{(p_1 \wedge p_3), (p_2\to\bot)\}$, and $\G_2 = \{(p_2\wedge p_3), (p_1\to\bot)\}$.
	We evaluate the satisfaction of the three conditions of \Cref{def:critical_loop} case by case:
	\begin{itemize}\setlength{\itemsep}{0pt}
		\item[]\hspace{-4.4ex}\emph{Condition (1).}~Observe that $ \K_\mathbb{H} $  is inconsistent with $ \G_{0,1} $, $ \G_{1,2} $ and $ \G_{2,0} $ as $ \K_\mathbb{H} $ has an only model where all three atoms must be true. Thus, Condition (1) is satisfied.
		
		\item[]\hspace{-4.4ex}\emph{Condition (2).}~We have $\Mod{\G_0} = \{p_{1}p_{2}\bar{p_3}\}$, $\Mod{\G_1} = \{p_1\bar{p_2}{p_3}\}$, and $\Mod{\G_2} = \{\bar{p_1}p_2{p_3}\}$. We also have $\Mod{\G_{0,1}} = \{p_{1}p_2\bar{p_3},$ $ p_1\bar{p_2}{p_3},$ $p_1\bar{p_2}\bar{p_3}\}$, $\Mod{\G_{1,2}} = \{p_1\bar{p_2}{p_3}, \bar{p_1}p_{2}p_3, \bar{p_1}\bar{p_2}p_3\}$, and 
		$\Mod{\G_{2,0}} = \{p_{1}p_2\bar{p_3}, \bar{p_1}p_{2}p_3, \bar{p_1}p_2\bar{p_3}\}$. Thus, for each $i\in\{0,1,2\}$, the models of bases $\G_i$ and $\G_{i\oplus 1}$ are contained in $\Mod{\G_{i,i\oplus 1}}$, but $\G_i$ is inconsistent with $\G_j$ with $i\neq j$. (The symbol $\oplus$ denotes addition \textsf{mod} 3)
		
		\item[]\hspace{-4.4ex}\emph{Condition (3).}~The empty base ($\coverbase = \emptyset$) is the only base consistent with $\G_0$, $\G_1$, and $\G_2$ (and with all other bases). For this, we choose $\coverbase' = \K_\mathbb{H}$ as it fulfills the required condition $ \emptyset\neq \Mod{\coverbase'} \subseteq \Mod{\coverbase}\setminus(\Mod{\G_{0,1}}\cup \Mod{\G_{1,2}} \cup \Mod{\G_{2,0}}) $.
	\end{itemize}%
	
	In summary, $\G_{0,1}$, $\G_{1,2}$ and $\G_{2,0}$ construct a critical loop for ${\mathbb{H}_n^\mathrm{arb}}$. It then follows from \Cref{thm:when_tranisitive} (II) that there exists a base change operator for Horn logic which satisfies \postulate{G1}--\postulate{G6} and is not total-preorder-representable. Moreover, from \Cref{cor:disjunctive_tpo} (by contraposition), we have that Horn logic is \emph{not} a disjunctive logic. For example, given $\Sigma = \{p_1, p_2\}$ and Horn bases $\G_1 = \{(p_1), (p_2\to\bot)\}$ and $\G_2 = \{(p_1\to\bot), (p_2)\}$, a base $\G_1\ovee\G_2$ such that $\Mod{\G_1\ovee\G_2} = \Mod{\G_1} \cup \Mod{\G_2} = \{p_1\bar{p_2}, \bar{q_1}q_2\}$ is \emph{not} expressible by any Horn sentence or base (as the intersection of atoms assigned to \textbf{true} in the elements of $\Mod{\G_1\ovee\G_2}$ is empty  \cite{KS_DelgrandePeppas2015}). We note that this critical loop example also works for $\mathbb{H}_n^\mathrm{fin}$, $\mathbb{H}_n^\mathrm{bel}$, and $\mathbb{H}_n^\mathrm{sng}$.
    \sauerwald{Consequently, also $\mathbb{H}_n^\mathrm{fin}$, $\mathbb{H}_n^\mathrm{bel}$, and $\mathbb{H}_n^\mathrm{sng}$ are not loop-free.}
\end{example}

\begin{figure}[t]
	\centering
	\begin{tikzpicture}[]
		
		\node[plainballs, scale=2] (w3s) at (-3.6,0.5) {};
		\node[balls, inner sep=0.0em, scale=0.65] (w3) at (-3.65, 0.4) {};
		\node[rotate=00] (psi3) at ([shift={(w3s)}] 100:0.44) {\small $\Mod{\K}$};
		\node[] (w3l) at (w3) {\small ${\omega_3}$};
		
		\node[plainballs, scale=2] (w4s) at (-1.2,0.5) {};
		\node[balls, inner sep=0.0em, scale=0.65] (w4) at ([shift={(w4s)}] 270:0.1)  {};
		\node[] (w4l) at (w4) {\small ${\omega_4}$};
		
		\node[no-outline, scale=2] (w0s) at (1.5,-0.91) {};
		\node[balls, inner sep=0.0em, scale=0.65] (w0) at ([shift={(1.53,-1.13)}] 60:0.43) {};	
		\node[] (w0l) at (w0) {\small ${\omega_0}$};
		\node[plainballs, scale=1.7] (w0ss) at (w0s) {};
		
		\node[no-outline, scale=2] (w0s-150) at ([shift={(w0s)}] 150:0.8) {};
		\node[no-outline, scale=2] (w0s-330) at ([shift={(w0s)}] 330:0.8) {};
		
		\node[no-outline, scale=2] (w1s) at ([shift={(w0s)}] 60:3.5) {};
		\node[balls, inner sep=0.0em, scale=0.65] (w1) at ([shift={(w0)}] 60:3.00) {};
		\node[] (w1l) at (w1) {\small ${\omega_1}$};
		\node[plainballs, scale=1.7] (w1ss) at (w1s) {};
		
		\node[no-outline, scale=2] (w1s-150) at ([shift={(w1s)}] 150:0.8) {};
		\node[no-outline, scale=2] (w1s-330) at ([shift={(w1s)}] 330:0.8) {};
		\node[no-outline, scale=2] (w1s-30) at ([shift={(w1s)}] 30:0.8) {};
		\node[no-outline, scale=2] (w1s-210) at ([shift={(w1s)}] 210:0.8) {};
		
		\node[no-outline, scale=2] (w2s) at ([shift={(w0s)}] 0:3.6) {};
		
		\node[balls, inner sep=0.0em, scale=0.65] (w2) at ([shift={(w0)}] 0:3.1) {};
		\node[] (w2l) at (w2) {\small ${\omega_2}$};
		\node[plainballs, scale=1.7] (w2ss) at (w2s) {};
		
		\node[no-outline, scale=2] (w2s-30) at ([shift={(w2s)}] 30:0.8) {};
		\node[no-outline, scale=2] (w2s-210) at ([shift={(w2s)}] 210:0.8) {};
		
		\node[plainballs, scale=2] (w5s) at (8.0,0.5) {};
		\node[balls, inner sep=0.0em, scale=0.65] (w5) at (8.4,0.4) {};
		\node[] (w5l) at (w5) {\small ${\omega_5}$};
		\node[rotate=-0] (psi5) at (7.8,0.8) {\small $\Mod{\psi_5}$};

		\drawthingiies{w2s.north}{w2s.south}{w0s.south}{w0s.north}{-}{black};

		\drawthingiies{w1s-150.center}{w1s-330.center}{w0s-330.center}{w0s-150.center}{-}{black};

		\drawthingiies{w2s-30.center}{w2s-210.center}{w1s-210.center}{w1s-30.center}{-}{black};
		
		\node (chi) at (-4.3,4.0) {\small $\Mod{\chi}$};
		\node[draw, rectangle, minimum width=14.3cm,minimum height=7.8cm, rounded corners=.3cm] (rec3) at (2.4,0.55) {};
		
		\node (phi2) at (3.25,-1.3) {\small $\Mod{\G_{2,0}}$};
		\node[rotate=58] (phi0) at (1.9,0.6) {\small$\Mod{\G_{0,1}}$};
		\node[rotate=-58] (phi1) at ([shift={(phi0)}] 0:2.7) {\small $\Mod{\G_{1,2}}$};
		
		\node (psi1) at ([shift={(w1)}] 90:0.48) {\small $\Mod{\G_1}$};
		\node[rotate=0] (psi0) at ([shift={(w0)}] -115:0.58) {\small $\Mod{\G_0}$};
		\node[rotate=0] (psi2) at ([shift={(w2)}] -63:0.58) {\small $\Mod{\G_2}$};
		
		\node (phi4) at (-1.4,3.75) {\small $\Mod{\coverbase}=\Mod{\chi'}$};
		\node[draw, rectangle, minimum width=11.6cm,minimum height=7.2cm, rounded corners=.3cm] (rec1) at (3.4,0.55) {};
		\node[draw, dashed, rectangle, minimum width=5.6cm,minimum height=5.1cm, rounded corners=.3cm,white] (rec2) at (3.3,0.4) {};
		\node[draw, dashed, rectangle, minimum width=0.6cm,minimum height=1cm, rounded corners=.3cm,white] (rec2ghost2) at (0.8,2) {};
		\node[draw, dashed, rectangle, thick, minimum width=5.6cm,minimum height=5.1cm, rounded corners=.3cm] (rec2ghost) at (3.3,0.57) {};
		
		\node[] (psi4) at ([shift={(w4)}] 90:0.55) {\small $\Mod{\coverbase'}$};

		\path[-Stealthnew] (w3) edge[thick] node[above, scale=1] {} (w4) ;
		\path[-Stealthnew] (w4) edge[thick] node[above, scale=1] {} (rec2.west) ;
		\path[-Stealthnew] (rec2.east) edge[thick] node[above, scale=1] {} (w5) ;
		\path[-Stealthnew] (w0) edge[thick] node[left,  near end, scale=0.75] {} (w1) ;
		\path[-Stealthnew] (w1) edge[thick] node[right,  near start, scale=0.75] {} (w2) ;
		\path[-Stealthnew] (w2) edge[thick] node[below, scale=0.75, xscale=-1, yscale=1] {} (w0) ;
		
		\path[-Stealthnew] (w3) edge[thick, bend left, in=160, out=15] node[above, near end, scale=1] {} (rec2ghost2.west) ;
		\path[-Stealthnew] (w4) edge[thick, bend right=90, in=255, out=270] node[below, scale=1] {} (w5) ;
		\path[-Stealthnew] (w3) edge[thick, bend left=60, in=105, out=53] node[above, scale=1] {} (w5) ;

	\end{tikzpicture}
	\caption{Critical loop situation in 
		$ \mathbb{B}_\mathrm{Ex}$ presented in \Cref{ex:orderByEx4}.
		The solid borders represent the sets of models and each arrow depicts the relation $\rel{\K_\mathrm{Ex}}^{\circ_\mathrm{Ex}} $ between models.}
	\label{fig:critical_loop_ex}
\end{figure}

\newpage
\section{Further Discussion}
In this section, we discuss some more specific aspects and noteworthy implications of our approach. 
First, we will discuss the notion of base logics and demonstrate that the decision how to define bases affects the applicability of certain notions. 
Next, we explore the novel notion of min-retractivity (in comparison to transitivity) and discuss its relationship to decomposability of disjunctions.
 
\subsection{Dependency on the Notion of Base}\label{sec:finite-bases}

In order to capture different conceptualisations of the notion of a ``base'' in a generic way, we introduced \emph{base logics}.
In this section, we would like to highlight that the question if our characterization applies depends on the particular notion of base applied, even if the underlying logic is the same.
In particular, we would like to make the point that the case of finite bases cannot just be seen as a ``special case" of arbitrary bases, as demonstrated in the following example.

\begin{example}
Consider the simplistic base logic 
$\mathbb{B}_{\mathrm{f}1} \,{=}\, (\MC{L},{\Omega},\models, \Bases, \cup)$ 
where  $\MC{L}$ contains two types of sentences: $[{\geq} q]$, and  $[{=} q]$, for each $q \in \mathbb{Q}^+$ (i.e., $q$ ranges over the nonnegative rational numbers). Moreover, let $\Omega = \mathbb{R}^+$,
i.e., the interpretations are just nonnegative real numbers. Then we let $r \models [{\geq} q]$ if
$r\geq q$ and we let $r \models [{=}q]$ if $r=q$ (we assume the usual meaning of $\leq$, $\geq$, and $=$ for real numbers).

For $\Bases = \MC{P}_\mathrm{fin}(\MC{L})$,
we now define a finite base revision operator $\circ^{\mathrm{f}1}$ as follows:
$$
\K \circ^{\mathrm{f}1} \G = 
\begin{cases}
\K \cup \Gamma & \text{ if } \Mod{\K \cup \G}\not= \emptyset, \\	
\G & \text{ if } \Mod{\G}= \emptyset,\\
\{[{=}q_\G]\} & \text{ otherwise, where }\{q_\G\} = min (\Mod{\G},\leq).\\
\end{cases}
$$
Note that the usage of finite bases ensures that the minimum in the third case exists and is indeed a rational number.
It turns out that $\circ^{\mathrm{f}1}$ satisfies \postulate{G1}--\postulate{G6}. This could be proven directly without much effort, but we can also use our result just established, exploiting the fact that we can easily come up with a corresponding assignment: Let $\releq{\abst}^{\mathrm{f}1}$ be defined by letting $r_1 \releqK^{\mathrm{f}1} r_2$ exactly if either $r_1 \in \Mod{\K}$, or both $\{r_1,r_2\} \cap \Mod{\K} = \emptyset$ and $r_1 \leq r_2$.
Then we find that $\releq{\abst}^{\mathrm{f}1}$ is a min-expressible min-friendly faithful assignment compatible with $\circ^{\mathrm{f}1}$ \textbf{for finite bases}.

The attempt to generalize this operator to arbitrary bases (i.e. $\Bases = \MC{P}(\MC{L})$) fails, despite the fact that our previous definition of $\releq{\abst}^{\mathrm{f}1}$ as such seamlessly carries over to arbitrary bases and we obtain a min-friendly faithful assignment. However, this assignment is no longer min-expressible as soon as infinite bases are permitted. To see this, consider $\K=\{[{=}0]\}$ and the infinite base $\G = \{[{\geq} q] \mid q\in \mathbb{Q}^+, q^2 \leq 2\}$. We obtain $\Mod{\G} = \{r \mid r\in \mathbb{R}^+, r\geq \sqrt{2}\}$ and hence $\min(\Mod{\G},\releqK^{\mathrm{f}1}) =\{\sqrt{2}\}$. As $\sqrt{2}$ is an irrational number, this model set cannot be characterized by any finite or infinite base of our logic (it can be readily checked, that the only characterizable model sets are intervals of the form $[r,\infty)$ for any $r\in \mathbb{R}^+$ or $[q,q]$ for any $q \in \mathbb{Q}^+$).
\end{example}

In view of the above example, one might now surmise that the case of finite bases is the more restrictive one and any assignment that works for arbitrary bases will work for finite ones as well. The next example shows that this is not the case.

\begin{example}
	Consider the simplistic base logic 
	$\mathbb{B}_{\mathrm{f}2} \,{=}\, (\MC{L},{\Omega},\models, \Bases, \cup)$
	where  $\MC{L}$ contains two types of sentences: $[{\geq} q]$ and $[{\leq} q]$  for each $q \in \mathbb{Q}^+$, and additionally the sentence $[{\geq} \sqrt{2}]$. Again, let $\Omega = \mathbb{R}^+$.
	The relation $\models$ is defined in exactly the same way as before.	

	For $\Bases = \MC{P}(\MC{L})$,
	we now define a finite base revision operator $\circ^{\mathrm{f}2}$ as follows:
	$$
	\K \circ^{\mathrm{f}2} \G = 
	\begin{cases}
		\K \cup \Gamma & \text{ if } \Mod{\K \cup \G}\not= \emptyset, \\	
		\G & \text{ if } \Mod{\G}= \emptyset,\\
		\{[{\geq} q^-] \mid q^- \leq r\} \cup \{[{\leq} q^+] \mid q^+ \geq r\} & \text{ otherwise, where }\{r\} = min (\Mod{\G},\leq).\\
	\end{cases}
	$$
	Note that the usage of infinite bases ensures that, in the third case, $\Mod{\K\circ^{\mathrm{f}2}\G}= \{r\}$ even if $r$ is irrational.
	Note that $\circ^{\mathrm{f}2}$ satisfies \postulate{G1}--\postulate{G6}. In fact we can define the assignment in exactly the same way as in the previous example to show that.
	Thanks to the change of the logic, we now find that $\releq{\abst}^{\mathrm{f}2}$ is a min-expressible, min-friendly faithful assignment compatible with $\circ^{\mathrm{f}2}$ \textbf{for arbitrary bases}.
	
	However, now the case of finite bases (i.e. $\Bases = \MC{P}_\mathrm{fin}(\MC{L})$) fails to work, although $\releq{\abst}^{\mathrm{f}2}$ remains a min-friendly faithful assignment. However, this assignment is no longer min-expressible as soon as finiteness of bases is imposed. To see this, consider $\K=\{[\geq 0],[\leq 0]\}$ and $\G = \{[{\geq} \sqrt{2}]\}$. We obtain $\Mod{\G} = \{r \mid r\in \mathbb{R}^+, r\geq \sqrt{2}\}$ and hence $\min(\Mod{\G},\releqK^{\mathrm{f}2}) =\{\sqrt{2}\}$. As $\sqrt{2}$ is an irrational number, this model set cannot be characterized by any finite base of our logic (it can be readily checked, that the only characterizable model sets are intervals of the form $[q_1,q_2]$ for any for any $q_1 \in \mathbb{Q}^+\cup \{\sqrt{2}\}$ and $q_2 \in \mathbb{Q}^+$).
\end{example}

These two examples highlight that notions that we define relative to a base logic (like the postulates or the properties of assignments) may crucially depend on $\Bases$ and cease to apply when the notion of base is changed, even if the underlying logic and the considered assignment stay the same.

\subsection{On the Notion of Min-Retractivity}\label{sec:disc-minretract}
As a primary ingredient to our results, the novel notion of min-retractivity has been introduced and motivated in Section~\ref{sec:approachsecond} and proven to serve its purpose later on. As noted earlier, it is immediate that any preorder over $\Omega$ is min-retractive, irrespective of the choice of the other components of the underlying base logic. On the other hand, we have exposed examples of min-retractive relations that are not transitive for certain base logics. This raises the question if there are conditions, under which the two notions \textbf{do} coincide, at least when presuming min-completeness (a condition already known and generally accepted). We start by formally defining this notion of coincidence. 

\begin{definition}
A base logic $ \mathbb{B} = (\MC{L},\Omega,\models,\Bases,\Cup) $ is called \emph{preorder-enforcing}, if every binary relation over $\Omega$ that is total and min-friendly for $\mathbb{B}$ is also transitive (and hence a total preorder).  	
\end{definition}

As an aside, we note that being preorder-enforcing implies the absence of critical loops.

\begin{proposition}
Every preorder-enforcing base logic is loop-free.
\end{proposition}
\begin{proof}
Let $ \mathbb{B} = (\MC{L},\Omega,\models,\Bases,\Cup) $ be a preorder-enforcing base logic.
By \Cref{thm:when_tranisitive}, absence of critical loops would follow from the fact that every base change operator for $\mathbb{B}$ satisfying \postulate{G1}--\postulate{G6} is total-preorder-representable.
To show the latter, consider an arbitrary base change operator $\circ$ of that kind. By \Cref{lfa}, $\releqc{\abst}$ is a min-friendly faithful assignment compatible with $\circ$. In particular, for every $\K \in \Bases$, the corresponding $\releqcK$ is total and min-friendly for $\mathbb{B}$.
Yet then, by assumption, any such $\releqcK$ is also a total preorder, and therefore $\releqc{\abst}$ is even a preorder assignment. Hence, $\circ$ is total-preorder-representable.    
\end{proof}

The question remains, which base logics are actually preorder-enforcing. Note that this is a stronger condition than warranting universal total-preorder-representability (that is, being loop-free), since loop-free base logics may allow for the existence of  non-transitive total min-friendly relations, as long as any \sauerwald{base change operator} has a compatible preorder-assignment (possibly besides other non-transitive ones).   
We will next present a simple criterion and then show that it is indeed necessary and sufficient for being preorder-enforcing. 

\begin{definition}
	A base logic $ \mathbb{B} = (\MC{L},\Omega,\models,\Bases,\Cup) $ is called \emph{trio-expressible}, if for any three interpretations $\I_1,\I_2,\I_3 \in \Omega$ there is a base $\G_{\I_1 \I_2 \I_3}$ satisfying $\Mod{\G_{\I_1 \I_2 \I_3}} = \{\I_1,\I_2,\I_3\}$.  	
\end{definition}

\begin{theorem}
A base logic is preorder-enforcing if and only if it is trio-expressible.
\end{theorem}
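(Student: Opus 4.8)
The plan is to prove the two implications separately: the forward direction (trio-expressible $\Rightarrow$ preorder-enforcing) directly, and the converse by contraposition (not trio-expressible $\Rightarrow$ not preorder-enforcing). Throughout I will use that any total relation is reflexive, and that min-friendliness bundles min-completeness together with min-retractivity.

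For the forward direction, I assume $\mathbb{B}$ is trio-expressible and let $\preceq$ be an arbitrary total, min-friendly relation over $\Omega$. To establish transitivity, I suppose $\I_1 \preceq \I_2$ and $\I_2 \preceq \I_3$ and must derive $\I_1 \preceq \I_3$. The key idea is to materialise the three interpretations as a single base: trio-expressibility yields a $\G$ with $\Mod{\G} = \{\I_1,\I_2,\I_3\}$. Min-completeness then guarantees a $\preceq$-minimal element $\I^\ast \in \Mod{\G}$, and a short case analysis on which of the three $\I^\ast$ equals, combined with repeated application of min-retractivity along the chain $\I_1 \preceq \I_2 \preceq \I_3$, forces $\I_1$ itself to be minimal in $\Mod{\G}$: if $\I^\ast = \I_3$, min-retractivity pulls minimality back first to $\I_2$ and then to $\I_1$; if $\I^\ast = \I_2$ it pulls back to $\I_1$; and if $\I^\ast = \I_1$ we are done immediately. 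Minimality of $\I_1$ in $\{\I_1,\I_2,\I_3\}$ in particular gives $\I_1 \preceq \I_3$, as required.

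For the converse I argue contrapositively: assuming $\mathbb{B}$ is not trio-expressible, I fix three interpretations $\I_1,\I_2,\I_3$ for which no base has model set exactly $\{\I_1,\I_2,\I_3\}$, and construct a witnessing relation $\preceq$ that is total and min-friendly but not transitive. The construction is a ``rock-paper-scissors'' cycle on the three special interpretations, with all remaining interpretations placed strictly below them and mutually equivalent: I set $\I_1 \prec \I_2 \prec \I_3 \prec \I_1$, declare $\omega \prec \I_i$ for every $\omega \notin \{\I_1,\I_2,\I_3\}$ and each $i$, make all such $\omega$ pairwise equivalent, and take $\preceq$ reflexive. Totality, reflexivity, and failure of transitivity (from $\I_1 \preceq \I_2 \preceq \I_3$ yet $\I_1 \not\preceq \I_3$) are immediate, so what remains is to check min-completeness and min-retractivity for every consistent base $\G$.

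The main obstacle — and the only place where the non-trio-expressibility hypothesis is genuinely consumed — is verifying min-completeness. For a base $\G$ whose model set contains some non-special interpretation, the minimum is exactly the set of those non-special models (they sit strictly below every $\I_i$), hence nonempty; for a base whose model set lies entirely within $\{\I_1,\I_2,\I_3\}$, non-trio-expressibility rules out $\Mod{\G} = \{\I_1,\I_2,\I_3\}$, leaving only subsets of size at most two, on each of which the cyclic strict order possesses a least element. A parallel case split confirms min-retractivity: minimality can only ``travel downward'' into the non-special block or within a two-element slice of the cycle, and in each case the $\preceq$-smaller element is again minimal. Thus $\preceq$ witnesses that $\mathbb{B}$ is not preorder-enforcing, completing the contrapositive and hence the theorem.
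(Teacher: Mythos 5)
Your proof is correct, and its overall architecture matches the paper's: the forward direction is essentially the paper's argument (you phrase it as a direct derivation with a case split on which interpretation is minimal, the paper as a proof by contradiction, but the substance — materialise the triple via trio-expressibility, invoke min-completeness to get a minimum, then use min-retractivity to pull minimality back to $\I_1$ and read off $\I_1 \preceq \I_3$ — is identical). Where you genuinely diverge is the counterexample in the contrapositive direction. The paper places a \emph{well-order} on $\Omega \setminus \{\I_1,\I_2,\I_3\}$ (explicitly invoking the well-ordering theorem, hence the axiom of choice) and uses a \emph{broken} cycle on the special triple ($\I_1 \prec \I_2 \prec \I_3$ with $\I_1$ and $\I_3$ equivalent); on the inexpressible triple this relation would violate \emph{min-retractivity} ($\I_1$ is minimal and $\I_3 \preceq \I_1$, yet $\I_3$ is not minimal). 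You instead collapse all non-special interpretations into a single equivalence class strictly below the triple and use a \emph{full} strict rock-paper-scissors cycle; on the inexpressible triple your relation would violate \emph{min-completeness} (empty minimum). Both failures are harmless for exactly the same reason — non-trio-expressibility guarantees that triple is never a model set — but your construction is arguably cleaner: making every non-special model minimal (rather than exactly one, via a well-order) verifies min-completeness and min-retractivity just as easily while avoiding the axiom of choice altogether. One shared caveat, which is a feature of the paper's definition rather than a gap in your argument: both proofs implicitly read ``any three interpretations'' as three \emph{distinct} interpretations when extracting the witnessing triple in the contrapositive, since a degenerate triple cannot support a non-transitive configuration in a total (hence reflexive) relation.
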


\begin{proof}
We show the ``if'' direction followed by the ``only if'' one.  
\begin{itemize}
\item[``$\Leftarrow$'']
Let $\preceq$ be a min-friendly total relation over $\Omega$. 
Toward a contradiction, assume $\preceq$ is not transitive, i.e., there exist
interpretations $\I,\I',\I'' \in \Omega$ such that $\I \preceq \I'$ and $\I' \preceq \I''$ but $\I \not\preceq \I''$. By totality, the latter implies $\I'' \prec \I$. Now consider $\G_{\I \I' \I''}$ and note that $\min(\Mod{\G_{\I \I' \I''}},\preceq) \neq \emptyset$ thanks to min-completeness, but then, by min-retractivity, $\min(\Mod{\G_{\I \I' \I''}},\preceq)=\{\I,\I',\I''\}$ follows. However, $\I \in \min(\Mod{\G_{\I \I' \I''}},\preceq)$ contradicts $\I \not\preceq \I''$.    
\item[``$\Rightarrow$'']
We actually show the contraposition: starting from a base logic $ \mathbb{B} = (\MC{L},\Omega,\models,\Bases,\Cup) $ that is not trio-expressible, we show violation of being preorder-enforcing by exhibiting a total, min-friendly relation over $\Omega$ that is not transitive. 
From non-trio-expressibility, the existence of $\I_1,\I_2,\I_3 \in \Omega$ with $\Mod{\G} \neq \{\I_1,\I_2,\I_3\}$ follows for every $\G \in \Bases$. Let now $\preceq^-$ be an arbitrary well-order\footnote{As discussed earlier, existence of such a $\preceq^-$ is assured by the well-ordering theorem, depending on the axiom of choice.} over $\Omega^- = \Omega \setminus \{\I_1,\I_2,\I_3\}$, i.e., it is total, transitive (hence min-retractive) and min-complete, therefore also min-friendly. We now define $${\preceq} = {\preceq^-} \cup (\Omega^- \times \{\I_1,\I_2,\I_3\}) \cup \{(\I_1,\I_2),(\I_1,\I_3),(\I_3,\I_1),(\I_2,\I_3)\}.$$
It is easy to see that $\preceq$ is still a total relation. It is min-complete (for $\mathbb{B}$) by case distinction: on one hand, if $\Mod{\G} \not\subseteq \{\I_1,\I_2,\I_3\}$, then $\min(\Mod{\G},\preceq) \neq \emptyset$ follows from min-completeness of ${\preceq^-}$, on the other hand, for any two- or one-element subset of $\{\I_1,\I_2,\I_3\}$ also a minimum clearly exists (note that by assumption $\Mod{\G} = \{\I_1,\I_2,\I_3\}$ cannot occur). We proceed to show min-retractivity of $\preceq$, again by case-distinction: If $\Mod{\G} \not\subseteq \{\I_1,\I_2,\I_3\}$ then, due to min-completeness and antisymmetry of ${\preceq^-}$, the set $\min(\Mod{\G},\preceq)$ contains exactly one element and is strictly smaller than any other element from $\Mod{\G}$, thus min-retractivity is vacuously satisfied. If $\Mod{\G} \subset \{\I_1,\I_2,\I_3\}$ min-retractivity is easily verified case by case. We finish our argument by showing that $\preceq$ is not transitive: we have $\I_2 \preceq \I_3$ as well as $\I_3 \preceq \I_1$, but $\I_2 \preceq \I_1$ fails to hold. \qedhere  
\end{itemize}
\end{proof}

The preceding theorem provides yet another argument why preorders can be used as preference relations for finite-signature propositional logic (as in fact, they are the \textbf{only} preference relations arising in that setting). However, note that the result also applies to more complex logics such as first-order logic under the finite model semantics.\footnote{Strictly speaking, this requires a slightly non-standard (but semantically equivalent) model theory which abstracts from the domains used and considers isomorphic models as equal.}

\subsection{Decomposability of Disjunctions}\label{sec:disj-factoring}
In the belief revision literature, the postulates \postulate{G5} and \postulate{G6} are strongly associated with a decomposability of revisions for disjunctive beliefs.
More specifically, under the AGM assumptions \cite{KS_RibeiroWassermannFlourisAntoniou2013}, which includes closure under disjunction, it is known that \postulate{G5} and \postulate{G6} are together equivalent to \emph{disjunctive factoring}\footnote{Here given in a semantic reformulation for (disjunctive) base logics.} \cite{KS_Gaerdenfors1988}
 in the presence of \postulate{G1}--\postulate{G4}:
\begin{equation*}
	\Mod{\K \circ (\G_1 \ovee \G_2)}  = \begin{cases}
		\Mod{\K \circ \G_1} & \text{ or }\\
		\Mod{\K \circ \G_2} & \text{ or }\\
		\Mod{\K \circ \G_1} \cup  \Mod{\K \circ \G_2} & 
	\end{cases}
	\end{equation*}
Factoring conditions like disjunctive factoring are useful and particularly important, as they are commonly used in many generalizations of AGM revision, e.g. \cite{KS_FermeHansson1999,KS_HanssonFermeCantwellFalappa2001}.

We will see that the revision operators considered in this article satisfy a similar decomposability property.
Of course, in the unrestricted setting of arbitrary base logics there might be no way to express disjunctions. 
We propose the following alternative postulates, which we call \emph{expressible disjunctive factoring}:
\begin{equation*}\text{For all } \G, \G_1, \G_2\in\Bases \text{ with } \Mod{\G}=\Mod{\G_1}\cup\Mod{\G_2} \text{ holds }
	\Mod{\K \circ \G}  = \begin{cases}
		\Mod{\K \circ \G_1} & \text{ or }\\
		\Mod{\K \circ \G_2} & \text{ or }\\
		\Mod{\K \circ \G_1} \cup  \Mod{\K \circ \G_2}. & 
	\end{cases}
\end{equation*}
Clearly, for disjunctive base logics, expressible disjunctive factoring and disjunctive factoring are the same.
Now consider the following observation on min-retractivity for the unrestricted setting of arbitrary base logics.
\begin{proposition}\label{prop:retractivity_trichotonomoty}
	Let $ \mathbb{B}=(\MC{L},{\Omega},\models,\Bases,\Cup)$ 	 be a base logic, let $ \G_1,\ldots,\G_n,\G \in \Bases $ be belief bases with $ \Mod{\G}=\Mod{\G_1}\cup \ldots \cup \Mod{\G_n} $ and let $ \preceq $ be a relation over $\Omega$ which is min-retractive for \( \mathbb{B} \). 
	Then there exists some set $  I\subseteq \{1,\ldots,n\} $ such that $ {\min(\Mod{\G},\preceq)} = {\bigcup_{i\in I}\min(\Mod{\G_i},\preceq)} $.
\end{proposition}
\begin{proof}
    
	Let $ I $ be a minimal set such that $ {\min(\Mod{\G},\preceq)}\subseteq \bigcup_{i\in I} \Mod{\G_i} $ holds, i.e., there is no \( j\in I \) such that $ {\min(\Mod{\G},\preceq)}\subseteq \bigcup_{i\in I\setminus\{j\}} \Mod{\G_i} $ holds.    
    Note that the assumption $ \Mod{\G}=\Mod{\G_1}\cup \ldots \cup \Mod{\G_n} $ guarantees the existence of \( I \).    
    Because of $ \Mod{\G_i} \subseteq \Mod{\G}$, we have that $ \I\in{\min(\Mod{\G},\preceq)} \cap \Mod{\G_i}  $ implies $ \I \in {\min(\Mod{\G_i},\preceq)} $. 
    This shows that \(  \min(\Mod{\G},\preceq) \subseteq \bigcup_{i\in I}\min(\Mod{\G_i},\preceq) \) holds.
    
    We show that  \(   \bigcup_{i\in I}\min(\Mod{\G_i},\preceq) \subseteq \min(\Mod{\G},\preceq) \) holds.
    Because \( I \) is a minimally chosen, for every \( i\in I \) the set \( \Mod{\G_i} \cap \min(\Mod{\G},\preceq)  \) is non-empty.    
    If $ \I\in \min(\Mod{\G},\preceq)  $ and $ \I\in \Mod{\G_i} $, then we have for each $ \I'\in {\min(\Mod{\G_i},\preceq)}  $ that $ \I' \preceq \I $ holds. 
    Since $ \preceq $ is min-retractive, we conclude that $ \I' \in  \min(\Mod{\G},\preceq) $. 
    Consequently, we obtain $ {\min(\Mod{\G_i},\preceq)} \subseteq {\min(\Mod{\G},\preceq)} $.\qedhere
    
\end{proof}

By Proposition~\ref{prop:retractivity_trichotonomoty}, min-retractivity of a relation $ \preceq $ guarantees a decomposition property on the relation with respect to minima and bases. 
Thus, if a \basechange\ operator $ \circ $ is compatible with an assignment $ \releq{\abst} $, which yields always min-retractive relations, then the decomposition property of $ \releq{\abst} $ carries over to the operator.
In the following, we use Proposition~\ref{prop:retractivity_trichotonomoty} to obtain for arbitrary base logics that \postulate{G1}--\postulate{G6} imply expressible disjunctive factoring.

\begin{proposition}
	Let $ \mathbb{B}=(\MC{L},{\Omega},\models,\Bases,\Cup)$ be a base logic. If \( \circ \) is a \basechange\ operator for \( \mathbb{B} \) that satisfies \postulate{G1}--\postulate{G6}, then \( \circ \) satisfies \emph{expressible disjunctive factoring}.
\end{proposition}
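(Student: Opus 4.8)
The plan is to transport the claim to the semantic side and then invoke the decomposition property of min-retractive relations from \Cref{prop:retractivity_trichotonomoty}. First I would apply \Cref{thm:representation_theorem}: since $\circ$ satisfies \postulate{G1}--\postulate{G6}, it is compatible with some min-friendly faithful assignment $\releq{\abst}$. For the fixed base $\K$, compatibility then yields the three identities $\Mod{\K \circ \G} = \min(\Mod{\G}, \releqK)$, $\Mod{\K \circ \G_1} = \min(\Mod{\G_1}, \releqK)$, and $\Mod{\K \circ \G_2} = \min(\Mod{\G_2}, \releqK)$, so that the entire statement becomes a claim about the $\releqK$-minimal model sets only.

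Next I would apply \Cref{prop:retractivity_trichotonomoty} to the relation $\releqK$ with $n = 2$, noting that $\releqK$ is min-retractive because $\releq{\abst}$ is min-friendly. Combined with the hypothesis $\Mod{\G} = \Mod{\G_1} \cup \Mod{\G_2}$, this produces a set $I \subseteq \{1,2\}$ with $\min(\Mod{\G}, \releqK) = \bigcup_{i \in I} \min(\Mod{\G_i}, \releqK)$.

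It then remains to carry out a case analysis over the four possible values of $I$. For $I = \{1\}$ and $I = \{2\}$ the union reduces to $\Mod{\K \circ \G_1}$, respectively $\Mod{\K \circ \G_2}$, matching the first two alternatives of expressible disjunctive factoring; for $I = \{1,2\}$ it equals $\Mod{\K \circ \G_1} \cup \Mod{\K \circ \G_2}$, the third alternative. The only case requiring a small additional argument is $I = \emptyset$, which forces $\min(\Mod{\G}, \releqK) = \emptyset$; by min-completeness (part of min-friendliness) this entails $\Mod{\G} = \emptyset$, hence $\Mod{\G_1} = \emptyset$ and $\Mod{\K \circ \G_1} = \min(\Mod{\G_1},\releqK) = \emptyset = \Mod{\K \circ \G}$, so the first alternative holds again. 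I do not anticipate a genuine obstacle here: \Cref{prop:retractivity_trichotonomoty} already encapsulates the nontrivial decomposition of minima under min-retractivity, and the representation theorem supplies the compatible min-retractive relation, so the proof is essentially a translation step followed by bookkeeping, with the degenerate $I = \emptyset$ case the only point that benefits from an explicit remark.
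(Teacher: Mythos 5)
Your proof is correct and takes essentially the same route as the paper's: obtain a compatible min-friendly faithful assignment from the representation theorem, use compatibility to rewrite all three revision results as $\releqK$-minimal model sets, and then invoke \Cref{prop:retractivity_trichotonomoty} to decompose $\min(\Mod{\G},\releqK)$ over the two disjuncts. Your write-up is in fact more careful than the paper's terse argument, since you make the case analysis over $I$ explicit (including the degenerate $I=\emptyset$ case, handled via min-completeness) and you correctly attribute the existence of the assignment to \Cref{thm:representation_theorem} rather than to the decomposition proposition itself.
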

\begin{proof}
	If \( \circ \) satisfies \postulate{G1} -- \postulate{G6}, then by \Cref{prop:retractivity_trichotonomoty} there is a min-friendly assignment \( \releq{\abst} \) compatible with \( \circ \).
	Now let \( \G_1,\G_2,\G_3\in\Bases \) such that \( \Mod{\G_3}=\Mod{\G_1}\cup\Mod{\G_2} \).
	By compatibility and Proposition   \ref{prop:retractivity_trichotonomoty} we obtain that expressible disjunctive factoring is satisfied.
\end{proof}

It turns out that, in the general setting of base logics, \postulate{G5} and \postulate{G6} are not implied by expressible disjunctive factoring. 
In particular, the following proposition shows the even stronger statement that \postulate{G5} and \postulate{G6} are not equivalent to disjunctive factoring even when restricting to disjunctive base logics. 

\begin{proposition}\label{prop:violation_disjunctive_factoring}
	There is a disjunctive base logic $ \mathbb{B} $ and a \basechange\ operator $ \circ $ for $ \mathbb{B} $ which satisfies \postulate{G1}--\postulate{G4} and disjunctive factoring, yet both \postulate{G5} and \postulate{G6} are violated.
\end{proposition}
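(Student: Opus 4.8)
The plan is to produce an explicit finite counterexample, and the guiding idea is that the classical derivation of \postulate{G5}/\postulate{G6} from disjunctive factoring (the Gärdenfors argument) secretly relies on \emph{complementation}: one writes the revised base as a disjunction $(\G_1\Cup\G_2)\ovee\bigl(\G_1\Cup\overline{\G_2}\bigr)$ and then applies factoring. So I would work in a disjunctive base logic whose family of model sets is closed under $\cap$ and $\cup$ but deliberately \emph{not} under complementation. Concretely, take $\Omega=\{x,p,q\}$ and let the bases realize exactly the model sets $\emptyset,\{p\},\{q\},\{p,q\},\Omega$ (these are the up-sets of the poset $x<p$, $x<q$; a syntactic presentation with one sentence per set and $\Cup$ interpreted as set union is routine). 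This family is $\cap$-closed, so $\mathbb{B}$ is a genuine base logic, and it is $\cup$-closed, so $\mathbb{B}$ is disjunctive. The decisive structural fact is that $\Omega$ is \emph{join-irreducible}: since $x$ lies in no proper base, $\Mod{\G_1}\cup\Mod{\G_2}=\Omega$ forces $\Mod{\G_1}=\Omega$ or $\Mod{\G_2}=\Omega$.

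Next I would define $\circ$. For every $\K$ with $\Mod{\K}\neq\emptyset$, let $\circ$ agree with trivial (full-meet) revision $\circ^\mathrm{fm}$ (cf.\ \Cref{ex:full-meet}); this part satisfies \postulate{G1}--\postulate{G6} and, being preorder-representable, also satisfies disjunctive factoring by \Cref{prop:retractivity_trichotonomoty}. For the inconsistent $\K$ (i.e.\ $\Mod{\K}=\emptyset$; all such bases are equivalent), I set the \emph{model set} of the result by cases: $\Mod{\K\circ\G}$ is the $\prec$-minimum of $\Mod{\G}$ under the fixed linear order $q\prec p$ whenever $\Mod{\G}\subseteq\{p,q\}$, but $\Mod{\K\circ\G}=\{p\}$ when $\Mod{\G}=\Omega$. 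Because the output is specified purely through its model set, \postulate{G4} is immediate, and \postulate{G1}--\postulate{G3} are read off directly (e.g.\ \postulate{G2} is vacuous for inconsistent $\K$, and minima and the value $\{p\}$ are always nonempty subsets).

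The heart of the argument is verifying disjunctive factoring for the inconsistent $\K$. If $\Mod{\G}=\Mod{\G_1}\cup\Mod{\G_2}\subseteq\{p,q\}$, the selection is minimisation along a linear order, which satisfies the factoring trichotomy automatically. If $\Mod{\G}=\Omega$, join-irreducibility guarantees that some $\Mod{\G_i}=\Omega$, so $\Mod{\K\circ\G}=\Mod{\K\circ\G_i}$ already appears in the factoring disjunction; thus the anomalous value $\{p\}$ is \emph{never} constrained by factoring. The violation of \postulate{G5} and \postulate{G6} is then witnessed \emph{simultaneously} by the inconsistent $\K$ together with $\G_1,\G_2$ with $\Mod{\G_1}=\Omega$ and $\Mod{\G_2}=\{p,q\}$: here $\Mod{(\K\circ\G_1)\Cup\G_2}=\{p\}\cap\{p,q\}=\{p\}\neq\emptyset$, while $\Mod{\G_1\Cup\G_2}=\{p,q\}$ gives $\Mod{\K\circ(\G_1\Cup\G_2)}=\{q\}$. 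Since $\{p\}\not\subseteq\{q\}$, postulate \postulate{G5} fails, and since $\{q\}\not\subseteq\{p\}$ (with the antecedent nonempty), postulate \postulate{G6} fails as well.

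I expect the main obstacle to be exactly this tension: disjunctive factoring is a strong, almost ``rationalising'' constraint, and in any logic rich enough to express all subsets it would in fact force \postulate{G5} and \postulate{G6}. The way around it is to impoverish the logic just enough (removing complementation) and to route the misbehaving revision through the join-irreducible base $\Omega$, so that factoring is vacuous precisely at the one place where $\circ$ deviates from relational selection. The remaining work is purely mechanical case-checking, which the small size of $\mathbb{B}$ keeps short.
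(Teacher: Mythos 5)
Your proposal is correct. The logic you build (model sets $\emptyset,\{p\},\{q\},\{p,q\},\Omega$ over $\Omega=\{x,p,q\}$, closed under both intersection and union) is a genuine disjunctive base logic; your hybrid operator satisfies \postulate{G1}--\postulate{G4}; the factoring check goes through, since linear-order minimisation handles every decomposition inside $\{p,q\}$ and join-irreducibility of $\Omega$ makes factoring vacuous at the single anomalous input; and the one triple ($\K$ inconsistent, $\Mod{\G_1}=\Omega$, $\Mod{\G_2}=\{p,q\}$) indeed refutes \postulate{G5} and \postulate{G6} simultaneously. The paper's proof follows the same overall strategy --- an explicit finite counterexample whose anomalous revision is attached to an inconsistent prior $\K$ --- but uses a different logic and a different mechanism for blinding factoring: its logic has four worlds with model sets forming a chain $\emptyset\subset\{3\}\subset\{2,3\}\subset\{1,2,3\}\subset\Omega$, so \emph{every} disjunction collapses to one of its disjuncts and factoring follows almost immediately from \postulate{G4}; the anomaly there is that the inconsistent $\K$ revised by $\{[{\geq}1]\}$ returns $\{[{\geq}3]\}$, and \postulate{G5} and \postulate{G6} are then refuted with two \emph{different} pairs of witnesses. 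What your route buys is a sharper diagnosis: your logic retains a genuinely non-trivial disjunction (the union $\{p\}\cup\{q\}=\{p,q\}$ is a new realizable model set), so disjunctive factoring has real content and is \emph{still} too weak, which isolates the missing ingredient in the classical G\"{a}rdenfors derivation as complementation (no realizable proper subset of $\Omega$ contains $x$) rather than disjunction as such. What the paper's route buys is brevity: in the chain logic the factoring verification is essentially one line, at the cost of making disjunction entirely degenerate and thus leaving the complementation-versus-disjunction distinction invisible.
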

\begin{proof}
\newcommand{\threename}{\ensuremath{\mathrm{four}}}
In the following, we denote with \( \leq \) the usual relation on the natural numbers.
Let \( \mathbb{B}_{\threename} = (\MC{L},\Omega,\models,\Bases,\Cup) \) be the base logic such that \( \MC{L}=\{[{\geq} i] \mid 0 \leq i \leq 4 \} \) and \( \Omega=\{ 0,1,2,3 \} \). %
The bases of \( \mathbb{B}_{\threename} \) are all sets with exactly one sentence, i.e., we have \( \Bases=\{ \{[{\geq} i]\} \mid [{\geq} i] \in \MC{L}\}\).
For every  \( [{\geq} i] \in\MC{L} \) and every interpretation \( \I \in \Omega \),  we let \( \I \models [{\geq} i] \) if \( i \leq \I \). %
We will use \( \min \) and \( \max \) with respect to the usual ordering on natural numbers, but slightly abuse notions, and define \( \min(\G)=\min(\{ i \mid [{\geq} i]\in\G \}) \) and \( \max(\G)=\max(\{ i \mid [{\geq} i]\in\G \}) \) for \( \G\subseteq \MC{L} \).
Using these notions, we define the operations  \( \Cup \) and \( \ovee \) on bases as follows:
\begin{align*}
	\G_1 \Cup \G_2  & = \textstyle\max(\G_1\cup\G_2) & \G_1 \ovee \G_2 & = \textstyle\min(\max(\G_1) \cup \max(\G_2))
\end{align*}
We obtain that \( \mathbb{B}_{\threename} \) is a disjunctive base logic, as for every two bases \( \G_1,\G_2\in \Bases \) we have \( \Mod{\G_1\Cup\G_2}=\Mod{\G_1}\cap\Mod{\G_2} \) and we have \( \G_1 \ovee \G_2 = \Mod{\G_1}\cup\Mod{\G_2} \).
Note that we have \( \Mod{[{\geq} 4]}=\emptyset \) and that \( \Mod{\G_1\Cup\G_2}\neq\emptyset \) for every two bases \( \G_1, \G_2\in\Bases \)  whenever \( [{\geq} 4] \notin \G_1\cup\G_2 \).

Now let \( \circ^{\threename} \) be the \basechange\ operator for \( \mathbb{B}_\threename \) such that:
\begin{equation*}
	\K \circ^{\threename} \G = \begin{cases}
		\K \Cup \G & \text{if } \Mod{\K \Cup \G}\neq\emptyset,\\
		\{ [{\geq} 3] \} & \text{if } \Mod{\K \Cup \G}=\emptyset \text{ and } \G = \{[{\geq} 1]\}, \\
		\G & \text{otherwise}.
	\end{cases}
\end{equation*}

Satisfaction of \postulate{G1}--\postulate{G4} is immediate due to the construction of \( \circ^{\threename} \).
We show that \( \circ^{\threename} \) satisfies (expressible) disjunctive factoring. 
Let \( \K,\G_1,\G_2\in\Bases \) be three belief bases. 
We consider two cases:

\( \Mod{\K \Cup (\G_1 \ovee \G_2)}\neq\emptyset \). This implies that \( [{\geq} 4]\notin\K\cup (\G_1 \cap \G_2) \). 
Therefore, we have \( \Mod{\K \Cup \G_1} \neq\emptyset \) or \( \Mod{\K \Cup \G_2} \neq\emptyset \).
In particular, we have either \( \max(\K \Cup \G_1) = \max(\K \Cup (\G_1 \ovee \G_2)) \) or \( \max(\K \Cup \G_2) = \max(\K \Cup (\G_1 \ovee \G_2)) \).
In the first case, we have \( \Mod{\K \Cup (\G_1 \ovee \G_2)} = \Mod{\K \Cup \G_1} \); in the latter we have  \( \Mod{\K \Cup (\G_1 \ovee \G_2)} = \Mod{\K \Cup \G_2} \).

\( \Mod{\K \cup (\G_1 \ovee \G_2)}=\emptyset \). We have either \( \max(\G_1 \ovee \G_2) = \max(\G_1) \) or \( \max(\G_1 \ovee \G_2)=\max(\G_2) \).
In the first case, we obtain \( \Mod{\K \Cup (\G_1 \ovee \G_2)} = \max(\K \circ^{\threename}\G_1) \); in the latter case, we obtain \( \Mod{\K \cup (\G_1 \ovee \G_2)} = \max(\K \circ^{\threename}\G_2) \).

For a violation of \postulate{G6} observe that by \postulate{G4}, satisfaction of \postulate{G6} implies satisfaction of:
\begin{equation}\label{eq:g6_implication}
	\text{if } \G_1 \models \G_2 \text{ and } \Mod{(\K\circ^{\threename}\G_2)\Cup \G_1}\neq\emptyset \text{, then } \K\circ^{\threename}\G_1 \models (\K\circ^{\threename}\G_2)\Cup \G_1
\end{equation}	
Now choose \( \G_1 = \{ [{\geq} 2] \} \) and \( \G_2=\{[{\geq} 1] \} \), and  let \( \K = \{ [{\geq} 4] \} \).
Observe that \( \G_1 \models \G_2 \), as \( \Mod{\G_1}=\{ 2,3 \} \) and \( \Mod{\G_2}=\{ 1,2,3 \} \).
Moreover, \( \Mod{\K\circ^{\threename}\G_1}=\{2,3\} \) and \( \Mod{\K\circ^{\threename}\G_2}=\{3\} \).
Thus, we have \( \Mod{(\K\circ^{\threename}\G_2)\Cup \G_1}=\{3\}\neq\emptyset \).
We obtain a contradiction to Equation \eqref{eq:g6_implication}, because \( \Mod{\K\circ^{\threename}\G_1}=\{2,3\} \not\subseteq  \Mod{\K\circ^{\threename}\G_2}=\{3\}  = \Mod{(\K\circ^{\threename}\G_2)\Cup \G_1}  \).

To demonstrate a violation of \postulate{G5}, let \( \G_1 = \{ [{\geq} 0] \} \) and \( \G_2 = \{ [{\geq} 1] \} \).
Then by definition we have \( \G_1 \Cup \G_2=\{ [{\geq} 1 ] \} \).
Now for the base \( \K = \{ [{\geq} 4] \} \) observe that \( \K \circ^{\threename} \G_1 = {\{[{\geq} 0]\}} \) and \( \K \circ^{\threename} (\G_1 \Cup \G_2) = {\{[{\geq} 3]\}} \).
Moreover, we obtain that \( (\K \circ^{\threename} \G_1)\Cup \G_2 = \{ [{\geq} 1 ] \} \).
In summary we obtain \( \Mod{\{ [{\geq} 1 ] \}} = \{ 1,2,3 \} \not\subseteq \{ 3 \} = \Mod{\{ [{\geq} 3 ] \}} \), which shows that \( \circ^{\threename} \) does not satisfy \postulate{G5}.\qedhere

\end{proof}

\section{Related Work}\label{sec:related_works}
In settings beyond propositional logic, we are aware of 
three closely related approaches that propose model-based frameworks for revision of belief bases (or sets) without fixing a particular logic or the internal structure of interpretations, and characterize revision operators via minimal models \`{a} la K\&M with some additional assumptions. In the following, we discuss these results and their relationship to our approach. \Cref{table:approaches} summarizes the three approaches and compares them with K\&M's result and our approach, which comes in \sauerwald{six} variants.

\definecolor{mybggray}{rgb}{0.95,0.95,0.95}
\setlength\dashlinedash{0.2pt}
\setlength\dashlinegap{1.5pt}
\setlength\arrayrulewidth{0.3pt}
\renewcommand\theadfont{\normalsize}
\newcommand\centeredtopline[4]{
\rule[-1.5ex]{0ex}{1ex}\\
\hline
\rowcolor{mybggray}
\multicolumn{6}{c}{{#4}\rule[-#2]{0ex}{#1}}\\[-0.4ex]
\multicolumn{6}{c}{} \\[#3]			
}
\definecolor{mybggray}{rgb}{0.95,0.95,0.95}
\newcommand{\mycenteredtopline}[1]{\centeredtopline
{4.0ex}%
{1.4ex}%
{-1.5ex}%
{#1}%
}

\begin{table}[p]
	\centering
	\footnotesize
	\resizebox{\textwidth}{!}{%
		\begin{tabular}[t]{@{}p{2.68cm}p{0.8cm}p{1.6cm}p{1.2cm}p{1.0cm}p{3.6cm}@{}}
			\toprule
			\makecell[l]{logic\\ setting} & \makecell[l]{belief\\ bases}  & \makecell[l]{assignment\\$\left.\right.$ }
			& \makecell[l]{%
				postulates\\$\left.\right.$} & \makecell[l]{relation\\ encoding} & \makecell[l]{notes\\$\left.\right.$ }
\mycenteredtopline{{Katsuno and Mendelzon} \cite{kat_1991}}
			propositional logic over finite signature & $\MC{P}(\MC{L})$, $\MC{P}_\mathrm{fin}(\MC{L})$, $\mathcal{L}$ & preorder,\quad\quad faithful & \texttt{\postulate{G1}-\postulate{G6}} & Eq.
			\eqref{eq:km_encoding} & logic natively free of critical loops; natively min-friendly; two-way representation theorem
\mycenteredtopline{{Grove} \cite{Grove1988}}
			Boolean-closed compact logics; %
			non-standard model theory based on maximal consistent sets of sentences& $\MC{P}(\MC{L})$ & preorder,\quad\quad\quad\quad faithful,\quad\quad\quad\quad\quad min-complete  & \texttt{\postulate{G1}-\postulate{G6}} & Eq.
			\eqref{eq:km_encoding} & 
			all such logics natively free of critical loops; any assignment min-ex\-pres\-sible; two-way representation theorem
\mycenteredtopline{{Delgrande et al.} \cite{KS_DelgrandePeppasWoltran2018}}
			Tarskian logics with finite $\Omega$, any $\omega, \omega'$ distinguishable by some sentence & $\MC{P}(\MC{L})$ 
			& preorder,\quad\quad faithful,\quad\quad\quad min-expressible & \texttt{\postulate{G1}-\postulate{G6},} \qquad\qquad 
			(\texttt{Acyc}) & Eq. \eqref{eq:dpw_encoding} & 
			extra postulate (Acyc) rules out ``non-preorder operators''; two-way representation theorem
\mycenteredtopline{{Aiguier et al.} \cite{aiguier_2018}}
			Tarskian logics  & $\MC{P}(\MC{L})$  
			& %
			quasi-faithful, min-complete &
			\texttt{\postulate{G1}-\postulate{G3},} \texttt{\postulate{G5},\postulate{G6}}  & Eq. \eqref{eq:aabh_encoding}  &
			non-standard notion of inconsistency; additional ad-hoc constraint on compatibility; one-way representation theorem 
\mycenteredtopline{our approach}
			Tarskian logics & \hspace{1em}\multirow{6}{*}[1em]{\begin{sideways}{arbitrary, closed under abstract union (e.g., $\MC{P}(\MC{L})$, $\MC{P}_\mathrm{fin}(\MC{L})$, $\mathcal{L}$, belief sets) \ \ \ \ \ \ }\end{sideways}} &
			faithful, \qquad\qquad min-retractive, min-complete, min-expressible &
			\texttt{\postulate{G1}-\postulate{G6}} & Def. \ref{def:relation_new} & 
			most general (syntax-in\-de\-pendent version); two-way representation theorem (\Cref{thm:rep1})\\
			\rule{0pt}{4ex}Tarskian logics & %
			& %
			quasi-faithful,\qquad\qquad min-retractive, min-complete,
			min-expressible & \texttt{\postulate{G1}-\postulate{G3},} \texttt{\postulate{G5},\postulate{G6}}   & Def. \ref{def:relation_new} &
			most general (syntax-de\-pen\-dent version); two-way representation theorem (\Cref{thm:rep2})\\
			\rule{0pt}{4ex}Tarskian logics & & faithful, \qquad\qquad preorder,\qquad\qquad min-complete,
			min-expressible & \texttt{\postulate{G1}-\postulate{G6}} \quad\quad (\texttt{Acyc}) & Def. \ref{def:relation_tpo_reduction} &
			preorder preference (syn\-tax-in\-de\-pen\-dent ver\-sion); two-way representation theorem (\Cref{thm:rep3acyc})\\
			\rule{0pt}{4ex}Tarskian logics &  & quasi-faithful, \qquad preorder, \qquad min-complete, 
			min-expressible & \texttt{\postulate{G1}-\postulate{G3},} \texttt{\postulate{G5},\postulate{G6}} \quad\quad (\texttt{Acyc})& Def. \ref{def:relation_tpo_reduction} &
			preorder preference (syn\-tax-de\-pen\-dent version); two-way representation theorem {(\Cref{thm:rep4acyc})}\\
			\rule{0pt}{4ex}\sauerwald{loop-free} Tarskian logics \ $\left.\right.$ (e.g., with disjunction) & & faithful, \qquad\qquad preorder,\qquad\qquad min-complete,
			min-expressible & \texttt{\postulate{G1}-\postulate{G6}}  & Def. \ref{def:relation_tpo_reduction} &
			preorder preference (syn\-tax-in\-de\-pen\-dent ver\-sion); two-way representation theorem (\Cref{thm:rep3})\\
			\rule{0pt}{4ex}\sauerwald{loop-free} Tarskian logics \ $\left.\right.$ (e.g., with disjunction) &  & quasi-faithful, \qquad preorder, \qquad min-complete, 
			min-expressible & \texttt{\postulate{G1}-\postulate{G3},} \texttt{\postulate{G5},\postulate{G6}} & Def. \ref{def:relation_tpo_reduction} &
			preorder preference (syn\-tax-de\-pen\-dent version); two-way representation theorem (\Cref{thm:rep4})\\
			\bottomrule
		\end{tabular}
	}
	\captionsetup{singlelinecheck=off}
	\caption{Overview of our characterization results and comparison with related work. 
	}
	\label{table:approaches}
\end{table}
\paragraph{{Grove} \cite{Grove1988}.} 
One semantic-based approach related to the one of K\&M was proposed by Grove \cite{Grove1988} for belief set revision in Boolean-closed compact logics. Strictly speaking, Grove's approach is not genuinely model-theoretic, as the space of mathematical objects on which he considers the preference relations consists of the inclusion-maximal consistent subsets of $\MC{L}$ rather than of proper worlds. However, under Grove's assumptions, one could define a ``alternative model theory'' where $\Omega$ is simply defined to be the inclusion-maximal consistent subsets of $\MC{L}$ and the satisfaction relation $\models$ is just set membership $\in$. Notably, this alternative model theory produces the same logical entailments as the original one, while it also exhibits the favorable property that any interpretation set $\Omega' \subseteq \Omega$ is expressible by some belief set $\K \subseteq \MC{L}$ (which is obtained by intersecting all elements of $\Omega'$).  

Instead of starting from a preorder relation $\releqK$, Grove characterizes AGM revision operators via \emph{systems of spheres}, collections \textbf{S} of subsets of $\Omega$ satisfying certain conditions. 
Yet, the notion of a system of spheres is closely related to that of a faithful preorder relation $\releqK$ as one can be generated from the other \cite{KS_GaerdenforsRott1995}. 
Given a faithful preorder $\releqK$, for each $\I\in\Omega$, one can define $S_\I$ as a set of interpretations $\I'$ such that $\I'\releqK\I$.
Notably, Grove's condition \emph{(S4)} for a system of spheres directly corresponds to the min-completeness requirement for the corresponding preference relation.

Grove's result constitutes a special case of our representation theorem: from the assumption of Boolean-closedness, it follows that any logic under consideration must be disjunctive and therefore free of critical loops (cf. \Cref{thm:when_tranisitive} and \Cref{cor:disjunctive_tpo}). As discussed above, the definition of $\Omega$ immediately implies min-expressibility for all relations. In the light of these observations, Grove's result turns out to be a special case of the fifth variant of our result in \Cref{table:approaches}. Also, note that the requirement of compactness in Grove's approach serves to ensure the synchronization between the original and alternative model theories. In contrast, our approach operates on genuine interpretations right away, whence compactness is not required.

\paragraph{{Delgrande et al.} \cite{KS_DelgrandePeppasWoltran2018}.}
The representation result of Delgrande et al. \cite{KS_DelgrandePeppasWoltran2018} confines the considered logics to those where the set $\Omega$ of interpretations (or possible worlds) is finite\footnote{Note that this precondition excludes more complex logics such as  first-order or modal logics and most of their fragments, but also propositional logic with infinite signature. On the positive side, this choice guarantees min-completeness of any preorder.} and where any two different interpretations $\omega,\omega' \in \Omega$ can be distinguished by some sentence $\varphi \in \mathcal{L}$, i.e., $\omega \in \Mod{\varphi}$ and $\omega' \not\in \Mod{\varphi}$. Moreover, following prior work by \citeauthor{KS_DelgrandePeppas2015} \cite{KS_DelgrandePeppas2015}, they extend the AGM postulates by the additional postulate \postulate{Acyc}, which reads as follows in their formulation:
\begin{quote}
	For any base $\K$ and all $\G_1,\ldots,\G_n \in \MC{P}(\MC{L})$ with $\Mod{\G_i \cup \K\circ \G_{i+1}}\neq\emptyset$ for each $1\leq i < n$ as well as $\Mod{\G_n \cup \K \circ \G_1}\neq\emptyset$  holds $\Mod{\G_1 \cup \K\circ \G_n}\neq\emptyset$.
\end{quote} 
With these ingredients in place, {Delgrande et al.}~\cite{KS_DelgrandePeppasWoltran2018}~establish that, for the logics they consider, there is a two-way correspondence between those AGM revision operators satisfying \postulate{Acyc} and min-expressible faithful preorder assignments.
Instead of the term ``min-expressible'', they use the term \emph{regular}. As laid out in \Cref{sec:acycresult} and \Cref{sec:extendedtheorems}, Delgrande et al.'s result generalizes way beyond the setting considered by them, if one adds the requirement of min-completeness and adjusts the formulation of \postulate{Acyc} to base logics in the straightforward way. 

In addition, we investigated a line of thought that is complementary of Delgrande et al. \cite{KS_DelgrandePeppasWoltran2018}. As we saw, in \sauerwald{(base) logics that are not loop-free}, one cannot hope for a characterization of \textbf{all} AGM revision operators by means of assignments producing preorders. Our alternative proposal was to relinquish the requirement of using preorders, giving up transitivity and merely retaining min-retractivity. 

\paragraph{{Aiguier et al.} \cite{aiguier_2018}.}
The approach of
Aiguier et al. \cite{aiguier_2018} considers AGM belief base revision in logics with a possibly infinite set $\Omega$ of interpretations. 
Notably, they propose to consider certain bases, that actually \textbf{do} have models, as inconsistent (and thus in need of revision). While, in our view, this is at odds with the foundational assumptions of belief revision (revision should be union/conjunction unless facing unsatisfiability), this appears to be a design choice immaterial to the established results. To avoid confusion, we will ignore it in our presentation.
As far as the postulates are concerned, Aiguier and colleagues decide to rule out \postulate{KM4}/\postulate{G4}, arguing in favor of syntax-dependence. Consequently, they re-define the notion of faithfulness of assignments, eliminating (F3), and arriving at the notion that what we call quasi-faithfulness.
Like us, Aiguier et al. propose to drop the requirement that assignments have to yield preorders. Also, their representation result (Theorem~1) features a condition that corresponds to min-completeness (second bullet point). In addition to the standard notion of compatibiliy, their result hinges on the following additional correspondence between the assignment and the preorder (third bullet point), for every $\K,\G_1,\G_2 \in \MC{P}(\MC{L})$:
\begin{quote}
If $(\K \circ \G_1) \cup \G_2$ is consistent, then $\min(\Mod{\G_1},\preceq_\K) \cap \Mod{\G_2}=\min(\Mod{\G_1 \cup \G_2},\preceq_\K)$.
\end{quote} 
A closer inspection of this extra condition shows that it is essentially a translation of a combination of the postulates \postulate{G5} and \postulate{G6} into the language of assignments and minima. It remains somewhat unclear to us what the intuitive justification of this (arguably rather technical and unwieldy) extra condition should be, beyond providing the missing ingredient to make the result work. Possibly, this is the reason why the presented result is just one-way: it does not provide a characterization of exactly those assignments for which a compatible AGM revision operator exists. Rather it pre-assumes existence of a revision operator under consideration.    

\bigskip

We think that our approach provides improvements regarding ways to construct an appropriate assignment from a given belief revision operator.
For comparison,
Delgrande et al. \cite{KS_DelgrandePeppasWoltran2018} solve this problem by simultaneously revising with all sentences satisfying $ \I_1 $ and $ \I_2 $, in order to ``simulate'' the revision by the desired formula $ form(\I_1,\I_2) $: for every base $\K$,
\begin{equation}\label{eq:dpw_encoding}
	\begin{aligned}
		\preceq_\K \text{ is the transitive closure of } \{ (\I_1,\I_2) \mid \I_1 \models \K\circ (t(\{\I_1\})\cap t(\{\I_2\})) \}
	\end{aligned}
\end{equation}
where $t(\{\omega\})$ is the set of all sentences satisfied by $\omega$.
Aiguier et al. \cite{aiguier_2018} use a similar approach by revising with all sentences at once: they let, for every base $\K$,
\begin{equation}\label{eq:aabh_encoding}
	\I_1 \preceq_\K \I_2 \text{ if } \I_1 \models \K \text{ or } \I_1 \models \K\circ \{\I_1,\I_2 \}^*
\end{equation}
where $\{\omega_1, \omega_2\}^*$ is the set of all sentences satisfied by both $\omega_1$ and $\omega_2$. 
In summary, both Aiguier et al. and Delgrande et al. use an encoding approach in the spirit of Katsuno and Mendelzon, attempting to establish a relation between two interpretations, whenever the revision operator provides evidence. 
As discussed in \Cref{sec:enc_operators}, we take a somewhat dual approach: we establish a relation between any two worlds, unless the considered revision operator provides evidence to the contrary.

\section{Summary and Conclusion}\label{sec:conclusion}
The central objective of our treatise was to provide an exact model-theoretic characterization of AGM belief revision in the most general reasonable sense, i.e., one that uniformly applies  

\begin{itemize}
\item to every logic with a classical model theory (i.e., every Tarskian logic),
\item to any notion of bases that allows for taking some kind of ``unions'' (including the cases of belief sets, sets of sentences, finite sets of sentences, and single sentences), and
\item to all base change operators adhering to \sauerwald{the AGM postulates} (without imposing further restrictions through additional postulates).
\end{itemize} 

To this end, we followed the well-established approach of using \emph{assignments}: functions that map every base $\K$ to a binary relation $\releqK$ over the interpretations $\Omega$ of the considered logic, where $\I_1 \releqK \I_2$ intuitively expresses a preference, i.e., that $\I_1$ is ``closer'' than (or at least as close as) $\I_2$ to being a model of $\K$ (even if it is not a proper model). With this notion in place, the \emph{compatibility} between a revision operator and an assignment then requires that the result of revising a base $\K$ by some base $\G$ yields a base whose models are the $\releqK$-minimal interpretations among the models of~$\Gamma$.

The original result of K\&M for signature-finite propositional logic, establishing a two-way-correspondence between AGM revision operators and faithful assignments that yield total preorders. We showed that in the general case considered by us, this original result fails in many ways and needs substantial adaptations. In particular, aside from delivering total relations and being faithful, the assignment now needs to satisfy

\begin{itemize}
\item \emph{min-expressibility}, guaranteeing existence of a describing base for any model set obtained by taking minimal interpretations among some base's models,
\item \emph{min-completeness}, ensuring that minimal interpretations exist in every base's model set, and
\item \emph{min-retractivity} instead of transitivity, making sure that minimality is inherited to more preferable elements. 
\end{itemize}

While the first two adjustments have been recognized and described in prior work, the notion of min-retractivity (and the decision to replace transitivity by this weaker notion and thus give up on the requirement that preferences be preorders) seems to be novel. Yet, it turns out to be the missing piece for establishing the desired two-way compatibility-correspondency between AGM revision operators and preference assignments of the described kind (cf. \Cref{thm:rep1}).

In view of the fact that the requirement of syntax-independence -- as expressed in Postulate \postulate{G4} -- may be (and has been) put into question, we also established a syntax-dependent version of our characterization (cf. \Cref{thm:rep2}). Crucial to this result is the observation that \postulate{G4} is exactly mirrored by the third faithfulness condition on the semantic side; thus removing it (going from faithfulness to \emph{quasi-faithfulness}) yields the class of assignments compatibility-corresponding to revision operators satisfying the postulates \postulate{G1}--\postulate{G3}, \postulate{G5}, and \postulate{G6}.   

Conceding that transitivity is a rather natural choice for preferences and preorder assignments might be held dear by members of the belief revision community, we went on to investigate for which operators, respectively assignments, respectively logics, our general result holds even if assignments are required to yield preorders. 
We showed that a property introduced by Delgrande et al. \cite{KS_DelgrandePeppasWoltran2018}, called \postulate{Acyc}, could be generalized to our general setting such that total-preorder-representable base change operators are characterized. 
We managed to pinpoint a specific logical phenomenon (called \emph{critical loop}), the absence of which in a logic is necessary and sufficient for \emph{total-preorder-representability}. 
While the criterion by itself maybe somewhat technical and unwieldy, it can be shown to subsume all logics featuring disjunction and therefore all classical logics. 
This justifies to formulate these findings in two theorems: a syntax independent version (\Cref{thm:rep3}) and a syntax-dependent one (\Cref{thm:rep4}).      

As one of the avenues for future work, we will consider the largely untreated problem of iterated base revision. To this end, our aim is to combine the general setting of base logics presented here with work in the line of research by Darwiche and Pearl \cite{KS_DarwichePearl1997}.
In particular, we will explore the specific role of a non-transitive, yet min-retractive, relation, in the process of iteration.

Finally, we will also be working on concrete realisations of the approach presented here in popular KR formalisms such as ontology languages, including Description Logics (DLs), both under standard and non-standard semantics. With our results in place, we will be able to scrutinize the underlying logics for the existence of critical loops and find uniform ways of describing AGM revision by means of appropriate assignments. We also plan to arrive at more fine-grained representation theorems that take computability and complexity aspects into account.

\begin{acks}
\sauerwald{We would like to thank Christoph Beierle, James Delgrande, Eduardo Fermé, Peter Gärdenfors, Gabriele Kern-Isberner, and the many other colleagues whose critical and constructive comments helped to shape and improve this article.

Faiq Miftakhul Falakh was supported by the Indonesia Endowment Fund for Education (LPDP) Scholarship No. 0000156/SC/D/2/lpdp2016 and by the Federal Ministry of Education and Research, Germany (BMBF) in the Center for Scalable Data Analytics and Artificial Intelligence (ScaDS. AI).

Sebastian Rudolph was supported by the European Research Council through his ERC Consolidator Grant ``A Grand Unified Theory of Decidability in Logic-Based Knowledge Representation'' (DeciGUT) under Grant  No. 771779.  
    
Kai Sauerwald worked on this article mainly when he was a member of
the Knowledge-Based System Group at the FernUniversität in Hagen.
He was supported by the Grants BE 1700/9-1 and BE 1700/10-1 of the
German Research Foundation (DFG) awarded to Christoph Beierle within
the DFG Priority Research Program “Intentional Forgetting in Organizations” (SPP 1921).}
\end{acks}

\bibliographystyle{ACM-Reference-Format}
\bibliography{jair,sauerwald}

\newpage
{\noindent\Large{\section*{APPENDICES}}}
\appendix

\section{Tarskian Logics and Model Theory}\label{sec:app_Tarskian}
In the following, we show that the class of logics defined model-theoretically (as laid out in \Cref{sec:modeltheory}) and the class of Tarskian logics coincide. We start by providing the definition of Tarskian logics.

\begin{definition}
	Let \( \MC{L} \) be a set.  A function \( Cn: \MC{P}(\MC{L}) \to \MC{P}(\MC{L}) \) is a called a \emph{Tarskian} consequence operator (on \( \MC{L} \)) if it is a closure operator, i.e., it satisfies the following properties for all subsets \( \MC{K},\MC{K}_1,\MC{K}_2\subseteq \MC{L} \):
	\begin{align*} 
		&	\MC{K}\subseteq Cn(\MC{K})
		\tag{extensive}\\
		&	\text{if } \MC{K}_1 \subseteq \MC{K}_2  \text{, then }  Cn(\MC{K}_1) \subseteq Cn(\MC{K}_2)
		\tag{monotone}\\
		&	 Cn(\MC{K}) = Cn(Cn(\MC{K}))
		\tag{idempotent}
	\end{align*}
Any Tarskian consequence operator  \( Cn: \MC{P}(\MC{L}) \to \MC{P}(\MC{L}) \) gives rise to a \emph{Tarskian consequence relation} \( {\VDash} \subseteq \MC{P}(\MC{L}) \times \MC{L} \) defined by \( \K \VDash \varphi \) if \( \varphi \in Cn(\MC{K}) \). 
Each \( (\MC{L},\VDash) \) obtained from a Tarskian consequence operator  \( Cn: \MC{P}(\MC{L}) \to \MC{P}(\MC{L}) \) will be called a \emph{Tarskian logic} here.
\end{definition}

We proceed to show that the existence of a model-theoretically defined semantics is sufficient and necessary for a logic being Tarskian. 

\begin{proposition}
	For every model theory \( (\MC{L},\Omega,\models) \) there exists a Tarskian logic \( (\MC{L},\VDash) \) with \( \MC{K} \VDash \varphi \) if and only if \( \MC{K} \models \varphi \) for all $\varphi \in \MC{L}$ and \( \MC{K}\in\MC{P}(\MC{L}) \).
\end{proposition}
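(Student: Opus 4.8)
The plan is to define the consequence operator directly from the models relation and then check the three closure axioms. Concretely, I would set $Cn(\K) = \{\varphi \in \MC{L} \mid \Mod{\K} \subseteq \Mod{\varphi}\}$, so that $\varphi \in Cn(\K)$ holds exactly when $\K \models \varphi$. With this definition the claimed equivalence $\K \VDash \varphi \iff \K \models \varphi$ is immediate once $Cn$ is known to be Tarskian, since the relation $\VDash$ is defined from $Cn$ by $\K \VDash \varphi \iff \varphi \in Cn(\K)$. Hence the entire content of the proposition reduces to verifying that $Cn$ is a closure operator.

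First I would dispatch the two easy properties. For \emph{extensivity}, if $\varphi \in \K$ then, because $\Mod{\K}$ is by definition the intersection $\bigcap_{\psi \in \K} \Mod{\psi}$, we have $\Mod{\K} \subseteq \Mod{\varphi}$, so $\varphi \in Cn(\K)$ and thus $\K \subseteq Cn(\K)$. For \emph{monotonicity}, assuming $\K_1 \subseteq \K_2$, adding constraints can only shrink the model set, giving $\Mod{\K_2} \subseteq \Mod{\K_1}$; then any $\varphi \in Cn(\K_1)$ satisfies $\Mod{\K_2} \subseteq \Mod{\K_1} \subseteq \Mod{\varphi}$, whence $\varphi \in Cn(\K_2)$.

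The one step requiring a little more care — the main obstacle, such as it is — is \emph{idempotence}. The inclusion $Cn(\K) \subseteq Cn(Cn(\K))$ comes for free from extensivity and monotonicity. For the converse I would first establish the auxiliary identity $\Mod{Cn(\K)} = \Mod{\K}$: the inclusion $\Mod{Cn(\K)} \subseteq \Mod{\K}$ follows from $\K \subseteq Cn(\K)$ together with monotonicity of $\Mod{\cdot}$ under supersets, while $\Mod{\K} \subseteq \Mod{Cn(\K)}$ holds because every $\varphi \in Cn(\K)$ satisfies $\Mod{\K} \subseteq \Mod{\varphi}$, so $\Mod{\K}$ lies in the intersection $\bigcap_{\varphi \in Cn(\K)} \Mod{\varphi} = \Mod{Cn(\K)}$. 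Armed with this identity, any $\psi \in Cn(Cn(\K))$ has $\Mod{Cn(\K)} \subseteq \Mod{\psi}$, hence $\Mod{\K} = \Mod{Cn(\K)} \subseteq \Mod{\psi}$, so $\psi \in Cn(\K)$. This yields $Cn(Cn(\K)) \subseteq Cn(\K)$, completing idempotence and thereby the proof.
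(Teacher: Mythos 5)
Your proof is correct and follows essentially the same route as the paper: defining $Cn(\K) = \{\varphi \in \MC{L} \mid \Mod{\K} \subseteq \Mod{\varphi}\}$, verifying extensivity and monotonicity directly, and handling idempotence via the key identity $\Mod{Cn(\K)} = \Mod{\K}$, which the paper establishes through the same chain of inclusions. No gaps; the argument is complete.
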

\begin{proof}    
    Given \( (\MC{L},\Omega,\models) \), let \( Cn: \MC{P}(\MC{L}) \to \MC{P}(\MC{L}) \) be defined by 
    $\MC{K} \mapsto \{\varphi \in \MC{L} \mid \Mod{\MC{K}} \subseteq \Mod{\varphi}\}$.
    We will show that $Cn$ is a Tarskian consequence operator. 
    
    For extensivity, consider some arbitrary $\psi \in \K$. Then we obtain $\Mod{\K} = \bigcap_{\varphi \in \K}\Mod{\varphi} \subseteq \Mod{\psi}$ and hence $\psi \in Cn(\K)$. Hence, since $\psi$ was chosen arbitrarily, we obtain $\K \subseteq Cn(\K)$.
    
    For monotonicity, suppose $\MC{K}_1 \subseteq \MC{K}_2$. Then $\Mod{K_2} =  \bigcap_{\varphi \in \K_2}\Mod{\varphi} \subseteq \bigcap_{\varphi \in \K_1}\Mod{\varphi} = \Mod{K_1}$.
    Therefore, we obtain $Cn(\K_1) = \{\varphi \in \MC{L} \mid \Mod{\K_1} \subseteq \Mod{\varphi}\} \subseteq \{\varphi \in \MC{L} \mid \Mod{\K_1} \subseteq \Mod{\varphi}\} = Cn(\K_2)$.
    
    For idempotency, we show bidirectional inclusion. $Cn(\MC{K}) \subseteq Cn(Cn(\MC{K}))$ is an immediate consequence of extensivity already shown. For the other direction, consider an arbitrary $\psi \in Cn(Cn(\K))$. Then, we obtain $\Mod{Cn(\K)} \subseteq \Mod{\psi}$. On the other hand, we have
    $$\Mod{Cn(\K)} = \bigcap_{{\varphi \in \MC{L}}\atop{\Mod{\MC{K}} \subseteq \Mod{\varphi}}}\Mod{\varphi} = \bigcap_{\varphi \in \K} \Mod{\varphi} = \Mod{\K},$$
    and therefore, we obtain $\Mod{\K} \subseteq \Mod{\psi}$ and finally $\psi \in Cn(\K)$. Hence, since $\psi$ was chosen arbitrarily, we obtain $Cn(Cn(\K)) \subseteq Cn(\K)$.
    
    Let now $\VDash$ denote the Tarskian consequence relation induced by $Cn$. Then we obtain for all $\K \subseteq \MC{L}$ and $\varphi \in \MC{L}$ the following:
\begin{equation*}
        \K \VDash \varphi \Longleftrightarrow \varphi \in Cn(\K) \Longleftrightarrow  \Mod{\MC{K}} \subseteq \Mod{\varphi} \Longleftrightarrow \K \models \varphi. \qedhere
\end{equation*}
\end{proof}

As last step, we show that for each Tarskian logic there is a canonical model-theoretic semantics for this Tarskian logic.

\newpage
\begin{proposition}
	For every Tarskian logic \( (\MC{L},\VDash) \) there exists a model theory \( (\MC{L},\Omega,\models) \) such that \( \MC{K} \VDash \varphi \) if and only if \( \MC{K} \models \varphi \) holds for all for all $\varphi \in \MC{L}$ and \( \MC{K}\in\MC{P}(\MC{L}) \).
\end{proposition}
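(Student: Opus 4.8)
The plan is to construct a \emph{canonical model theory} directly from the consequence operator, taking the deductively closed theories themselves as interpretations. Given the Tarskian logic $(\MC{L},\VDash)$ with underlying Tarskian consequence operator $Cn$, I would set $\Omega = \{\mathcal{T}\subseteq\MC{L} \mid Cn(\mathcal{T})=\mathcal{T}\}$, i.e. the fixed points of $Cn$ (the $Cn$-closed sets), and define the models relation by membership: $\mathcal{T}\models\varphi$ exactly when $\varphi\in\mathcal{T}$. The task then reduces to verifying that the model-theoretic consequence induced by this $(\MC{L},\Omega,\models)$ agrees with $\VDash$.

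First I would record the shape of model sets under this definition. For any $\MC{K}\in\MC{P}(\MC{L})$ we have $\Mod{\MC{K}}=\bigcap_{\psi\in\MC{K}}\Mod{\psi}=\{\mathcal{T}\in\Omega\mid\MC{K}\subseteq\mathcal{T}\}$, since a closed theory $\mathcal{T}$ satisfies every sentence of $\MC{K}$ precisely when $\MC{K}\subseteq\mathcal{T}$. Consequently, $\MC{K}\models\varphi$ (i.e. $\Mod{\MC{K}}\subseteq\Mod{\varphi}$) unfolds to the statement that $\varphi\in\mathcal{T}$ holds for every closed theory $\mathcal{T}$ with $\MC{K}\subseteq\mathcal{T}$. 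The goal is to show this coincides with $\varphi\in Cn(\MC{K})$, which is the definition of $\MC{K}\VDash\varphi$.

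For the direction $\MC{K}\VDash\varphi\Rightarrow\MC{K}\models\varphi$, I would take an arbitrary closed $\mathcal{T}\supseteq\MC{K}$ and invoke \emph{monotonicity} to obtain $Cn(\MC{K})\subseteq Cn(\mathcal{T})=\mathcal{T}$; since $\varphi\in Cn(\MC{K})$, this yields $\varphi\in\mathcal{T}$, as required. For the converse $\MC{K}\models\varphi\Rightarrow\MC{K}\VDash\varphi$, the key move is to use $Cn(\MC{K})$ itself as the witnessing interpretation: \emph{idempotency}, $Cn(Cn(\MC{K}))=Cn(\MC{K})$, guarantees $Cn(\MC{K})\in\Omega$, while \emph{extensivity} gives $\MC{K}\subseteq Cn(\MC{K})$, so $Cn(\MC{K})\in\Mod{\MC{K}}$; applying $\MC{K}\models\varphi$ to this particular model then forces $\varphi\in Cn(\MC{K})$, i.e. $\MC{K}\VDash\varphi$.

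The argument is essentially routine once the interpretation domain is fixed; there is no genuinely hard estimate involved, and each of the three closure-operator axioms maps cleanly onto exactly one step of the equivalence. The only real design decision — and the step I expect to demand the most care — is the choice of $\Omega$: one must recognise that the deductively closed theories form an adequate world space and, crucially, that $Cn(\MC{K})$ is \emph{always} closed (by idempotency), so that it qualifies as a legitimate interpretation and can serve as the ``tightest'' model of $\MC{K}$ in the converse direction. With that observation in place, the two inclusions follow immediately.
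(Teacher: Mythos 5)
Your proof is correct and follows essentially the same route as the paper: the paper takes $\Omega = \{Cn(T) \mid T \subseteq \MC{L}\}$, which by idempotency is exactly your set of fixed points of $Cn$, defines modelhood by membership, and likewise uses $Cn(\MC{K})$ (via instantiating $T=\MC{K}$ and extensivity) as the witness model for one direction and monotonicity plus idempotency for the other. The two arguments differ only in this cosmetic choice of how to present the interpretation space.
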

\begin{proof} 
	Let \( (\MC{L},\VDash) \) be a Tarskian logic and let \( Cn: \MC{P}(\MC{L}) \to \MC{P}(\MC{L}) \) be the corresponding Tarskian consequence operator. 
	We now define an appropriate \( (\MC{L},\Omega,\models) \) as follows: 
	Let \( \Omega = \{ Cn(T) \mid T \subseteq \MC{L}\}\).
	Define the models relation \( {\models} \subseteq \Omega  \times \MC{L}  \) such that some $Cn(T) \in \Omega$ is a model of some $\varphi \in \MC{L}$ whenever \( \varphi \in Cn(T) \). 
	
Then we obtain for all $\K \subseteq \MC{L}$ and $\varphi \in \MC{L}$ the following:
$$
\begin{array}{rll}
\K \models \varphi 
& \Longleftrightarrow & \Mod{\K} \subseteq \Mod{\varphi} \\ 
& \Longleftrightarrow & \bigcap_{\kappa \in \K}\Mod{\kappa} \subseteq \Mod{\varphi} \\
& \Longleftrightarrow & \{Cn(T) \mid T \subseteq \MC{L},\ \K\subseteq Cn(T)\} \subseteq \{Cn(T) \mid T \subseteq \MC{L},\ \varphi\in Cn(T)\} \\
& \Longleftrightarrow & \forall T \subseteq \MC{L}: \K \subseteq Cn(T) \Rightarrow \varphi \in Cn(T) \hspace{5.7cm} (*)\\ 
\end{array}
$$
Moreover, we obtain
$$
\begin{array}{rlll}
(*) & \Longrightarrow & \K \subseteq Cn(\K) \Rightarrow \varphi \in Cn(\K) \hspace{3cm}\ ~ & \text{instantiate } T = \K\ \ \ \ \\ 
& \Longrightarrow & \varphi \in Cn(\K) & \text{extensivity of } Cn\\ 
& \Longrightarrow & \K \VDash \varphi, \\ 
\end{array}
$$	
and on the other hand:
$$
\begin{array}{rlll}
\K \VDash \varphi & \Longrightarrow & \varphi \in Cn(\K) \hspace{1cm}~ & \\ 
& \Longrightarrow & \forall S\subseteq \MC{L}: Cn(\K) \subseteq S \Rightarrow \varphi \in S & \\ 
& \Longrightarrow & \forall T\subseteq \MC{L}: Cn(\K) \subseteq Cn(T) \Rightarrow \varphi \in Cn(T) & \text{restriction to closed sets}\\ 
& \Longrightarrow & \forall T\subseteq \MC{L}: Cn(\K) \subseteq Cn(Cn(T)) \Rightarrow \varphi \in Cn(T) & \text{idempotency of } Cn\\
& \Longrightarrow & (*) & \text{monotonicity of } Cn\\
\end{array}
$$
Concluding, we have established that for all $\K \subseteq \MC{L}$ and $\varphi \in \MC{L}$ the following holds:
\begin{equation*}
    \K \models \varphi \Longleftrightarrow (*) \Longleftrightarrow \K \VDash \varphi.   \qedhere
\end{equation*}
\end{proof}
 
\clearpage
\section{Proof for Example \ref{ex:infsigfail}}\label{sec:app2}
We take up again \Cref{ex:infsigfail} and show that \( \releqK^{\cupwedge} \) is a faithful preorder assignment that is compatible with \( \circ^{\cupwedge} \). This shows that \Cref{thm:km1991} by K\&M does not straightforwardly generalize to \( \mathbb{PL}_\infty  \), i.e., propositional logic with countably infinite many atoms.
\begin{proposition}\label{prop:appendix_plinfty}
    The relation \( \releqK^{\cupwedge} \) is a faithful preorder assignment and is compatible
    with the \basechange\ operator \( \circ^{\cupwedge} \) for \( \mathbb{PL}_\infty  \), yet \( \circ^{\cupwedge} \) does not satisfy \postulate{G3}.
\end{proposition}
	\begin{proof}
	
    We show that \( \releqK^{\cupwedge} \) is a preorder assignment.
    
    \medskip
		\noindent\emph{(Totality)} For totality, assume the contrary, i.e. there are two interpretations $\I_1, \I_2$ with $\I_1\not\releqK^{\cupwedge}\I_2$ and $\I_2\not\releqK^{\cupwedge}\I_1$.
		From the definition of $\releqK^{\cupwedge}$, we have $\I_1, \I_2 \not\models \K$ where both $\I_1$ and $\I_2$ are finite with $|\I_1^\mathbf{true}| \not\geq |\I_2^\mathbf{true}|$ and $|\I_2^\mathbf{true}| \not\geq |\I_1^\mathbf{true}|$. Since $\geq$ is total over integers, this is a contradiction. 
		Reflexivity follows from totality.

		\medskip
        \noindent\emph{(Transitivity)} For transitivity, suppose $\I_1\releqK^{\cupwedge}\I_2$ and $\I_2\releqK^{\cupwedge}\I_3$. We make a case distinction by  $\I_1\releqK^{\cupwedge}\I_2$ and the definition of \( \releqK^{\cupwedge} \):
		\begin{itemize}\setlength{\itemsep}{0pt}
            \item[]\hspace{-4.4ex}\emph{(1)} The case of $\I_1\models\K$. Then $\I_1\releqK^{\cupwedge}\I_3$ follows immediately.
	 \item[]\hspace{-4.4ex}\emph{(2)} The case of $\I_2\not\models\K$ and $\I_2^\mathbf{true}$ is infinite. 
			As $\I_2\releqK^{\cupwedge}\I_3$, we consider three subcases:
			\begin{itemize}
				\item[] (2.1) $\I_2\models\K$. This contradicts the prior assumption, and hence this case is not possible.
				\item[]  (2.2) $\I_3\not\models\K$ with infinite $\I_3^\mathbf{true}$. Then $\I_1\releqK^{\cupwedge}\I_3$ follows.
				\item[] (2.3) $\I_2^\mathbf{true}$ and $\I_3^\mathbf{true}$ are finite. This is also impossible due to immediate contradiction.
			\end{itemize}
			 \item[]\hspace{-4.4ex}\emph{(3)} The case of  $\I_1,\I_2\not\models\K$, both $\I_1^\mathbf{true}$ and $\I_2^\mathbf{true}$ are finite and $|\I_1^\mathbf{true}| \geq |\I_2^\mathbf{true}|$.
			From $\I_2\releqK^{\cupwedge}\I_3$ we consider three subcases:
			\begin{itemize}
				\item[] (3.1) $\I_2\models\K$. This is not possible, immediate contradiction.
				\item[] (3.2) $\I_3\not\models\K$ with infinite $\I_3^\mathbf{true}$. This implies $\I_1\releqK^{\cupwedge}\I_3$.
				\item[] (3.3) $\I_2, \I_3 \not\models\K$, both $\I_2^\mathbf{true}$ and $\I_3^\mathbf{true}$ are finite with $|\I_2^\mathbf{true}| \geq |\I_3^\mathbf{true}|$.
				Since $|\I_1^\mathbf{true}| \geq |\I_2^\mathbf{true}|$ and $|\I_2^\mathbf{true}| \geq |\I_3^\mathbf{true}|$, from transitivity of $\geq$ over integers, we have $|\I_1^\mathbf{true}| \geq |\I_3^\mathbf{true}|$ and finally $\I_1\releqK^{\cupwedge}\I_3$.
			\end{itemize}
		\end{itemize}
		
        We show that \( \releqK^{\cupwedge} \) is faithful and that \( \releq{\abst}^{\cupwedge} \) is compatible with \( \circ^{\cupwedge} \).
        
        \medskip
        \noindent\emph{(Faithfulness)}
		The first condition of faithfulness, the Condition \postulate{F1}, follows from the assumption $\I_1, \I_2 \models \K$ and case (1) of the definition of \( \releqK^{\cupwedge} \), given in \Cref{ex:infsigfail}.
		
		For \postulate{F2}, let $\I_1\models\K$ and $\I_2\not\models\K$.
		From the case (1) of the definition, $\I_1\releqK^{\cupwedge}\I_2$ holds. 
		Now for contradiction assume that $\I_2\releqK^{\cupwedge}\I_1$.
		Following the definition of \( \releqK^{\cupwedge} \), we consider three cases.
		(Case 1) $\I_2\models\K$ contradicts our assumption. The (Case 2) and (Case 3) are not applicable because they require $\I_1\not\models\K$. Hence, $\I_2\not\releqK^{\cupwedge}\I_1$ and therefore $\I_1\relK^{\cupwedge}\I_2$ holds.
		
		For \postulate{F3}, assume $\K\equiv\K'$ (i.e. $\Mod{\K} = \Mod{\K'}$) and let $\I_1\releqK^{\cupwedge}\I_2$.
		We consider three cases.
		(Case 1) $\I_1\models\K$. Then it also holds $\I_1\models\K'$, and hence $\I_1\releq{\K'}\I_2$. 
		(Case 2) $\I_2\not\models\K$ and $\I_2^\mathbf{true}$ is infinite. Then $\I_2\not\models\K'$ and hence $\I_1\releq{\K'}\I_2$.
		(Case 3) where $\I_1, \I_2\not\models\K$ we also have $\I_1,\I_2\not\models\K'$ and consequently $\I_1\releq{\K'}\I_2$. Therefore, we have $\releqK^{\cupwedge}=\releq{\K'}$ (i.e. $\I_1\releqK^{\cupwedge}\I_2$ if and only if $\I_1\releq{\K'}\I_2$).
		
        \medskip
        \noindent\emph{(Compatibility with \( \circ \))}
		For the compatibility with $\circ^{\cupwedge}$, we show that $\Mod{\K\circ^{\cupwedge}\Gamma} = \min(\Mod{\Gamma}, \releqK^{\cupwedge})$.
		For any inconsistent $\Gamma$, we have $\Mod{\K\circ^{\cupwedge}\Gamma} = \emptyset = \min(\Mod{\Gamma}, \releqK^{\cupwedge})$.
        If \( \K\cup\G \) is consistent, then we have \( \Mod{\K\circ\G}=\Mod{\K\cup \G} \).
            Because \( \releqK^{\cupwedge} \) is faithful, we directly obtain \( \Mod{\K\circ\G} = \min(\Mod{\Gamma}, \releqK^{\cupwedge})\). Thus, for the remaining steps of the proof, we assume that \( \K\cup\G \) is inconsistent and $\Gamma$ is consistent.
            
            We show in the following that $\min(\Mod{\Gamma}, \releqK^{\cupwedge}) = \emptyset$ holds by an indirect argument. Thus suppose %
            there existed some $\I_1\in\min(\Mod{\Gamma}, \releqK^{\cupwedge})$.
            This means, that
            $\I_1\in\Mod{\G}$ and there is no other $\I_2\in\Mod{\G}$ such that $\I_2\relK^{\cupwedge}\I_1$.
			Note that from the definition of $\circ^{\cupwedge}$ and our case assumption, we have 
			$ \Mod{\K\circ^{\cupwedge}\Gamma} = \Mod{\K\cup\G} = \emptyset $, and hence  $\I_1,\I_2\not\models\K$.
			Let $\Sigma_\G\subseteq\Sigma$ be the set of atomic symbols occurring in $\G$.
            Clearly, because \( \G \) contains just one (finite) sentence, we have that \( \Sigma_\G \) must be finite.
			We now consider two cases: $\I_1^\mathbf{true}$ can be finite or infinite.
            
             \begin{itemize}\setlength{\itemsep}{0pt}
                \item[]\hspace{-4.4ex}($\I_1^\mathbf{true}$ is finite)~~Then,  there exists an atomic symbol $q$ such that $q\in\Sigma\setminus(\I_1^\mathbf{true}\cup\Sigma_\G)$ (as both $\I_1^\mathbf{true}$ and $\Sigma_\G$ are finite and $\Sigma$ is infinite).
                Then we could define another interpretation $\I_2$ such that $\I_2(q) = \textbf{true}$ and $\I_2(p_i)=\I_1(p_i)$ for all $p_i\in\Sigma\setminus \{q\}$.
                Since $q$ does not occur in $\G$, we have $\I_2\in\Mod{\G}$ and $|\I_2^\mathbf{true}| = |\I_1^\mathbf{true}| + 1$. Hence, $\I_2\relK^{\cupwedge}\I_1$, a contradiction  to the minimality of \( \I_1 \).
                \item[]\hspace{-4.4ex}($\I_1^\mathbf{true}$ is infinite)~~We define another interpretation $\I_2$ such that for all $p_i\in\Sigma$ we set $\I_2(p_i) = \textbf{true}$ if $p_i \in (\Sigma_\G\cap\I_1^\mathbf{true})$ and $\I_2(p_i) = \textbf{false}$ otherwise.
                As $\I_1$ and $\I_2$ coincide on $\Sigma_\G$, we obtain $\I_2\in\Mod{\G}$.
                Since $\I_2^\mathbf{true}$ is finite while $\I_1^\mathbf{true}$ is infinite, we have $\I_2\relK^{\cupwedge}\I_1$, which again is a contradiction to the minimality of \( \I_1 \). \qedhere
            \end{itemize}

	\end{proof}

\clearpage
\section{Proof for Theorem \ref{thm:when_tranisitive}}\label{sec:app_loops_new}

In this section, we present the full proof for \Cref{thm:when_tranisitive} from \Cref{sec:logics_tpo_representable}. We start by repeating the theorem.

\medskip
\noindent \textsc{\Cref{thm:when_tranisitive}.}
	\textit{For all base logics $\mathbb{B}$, the following statements hold:
	\begin{itemize}
		\item[\rm{(I)}] If $\mathbb{B}$ loop-free, then every {\basechange} operator for $\mathbb{B}$ that satisfies \postulate{G1}--\postulate{G3}, \postulate{G5}, and \postulate{G6} is total-preorder-representable. 
		\item[\rm{(II)}] If $\mathbb{B}$ is not loop-free, then there exists a {\basechange} operator for $\mathbb{B}$ that satisfies \postulate{G1}--\postulate{G6} and is not total-preorder-representable.
	\end{itemize}}

\medskip
We dedicate the following Section \ref{sec:logic_capture_hidden_cycles:CL2TOP} to the statement (I) of \Cref{thm:when_tranisitive} while the statement (II) of \Cref{thm:when_tranisitive} is shown in Section \ref{sec:logic_capture_hidden_cycles:TPO2CL}.

\subsection{Total-Preorder-Representability Implies Absence of Critical Loops}\label{sec:logic_capture_hidden_cycles:CL2TOP}
\newcommand{\B}{\mathcal{C}}
\newcommand{\A}{\mathcal{A}}
\newcommand{\critloop}{\mathfrak{C}}

We show (by contraposition) that the absence of critical loops is necessary for total-preorder-representability of all \sauerwald{base change operators that satisfy \postulate{G1}--\postulate{G3}, \postulate{G5}, and \postulate{G6}}. To this end, we will provide a construction which, given a critical loop $\critloop$ in some base logic $\mathbb{B}$, yields a \sauerwald{base change operator $\circ_\critloop$ for $\mathbb{B}$ that is demonstrably not total-preorder-representable, yet satisfies \postulate{G1}--\postulate{G6}}. 

\begin{definition}\label{def:circ_critloop}
	Let $\mathbb{B}  = (\MC{L},{\Omega},\models, \Bases, \Cup)$ be a base logic with  a {\criticalloop} $\critloop = (\G_{0,1},\G_{1,2},\allowbreak\ldots,\G_{n,0})$ and let $ \G_0,\ldots,\G_{n}\in \Bases$ and $ \K $ 
	as in \Cref{def:critical_loop}.
	
	Let $ \B $ denote the set of all $ \coverbase' $ guaranteed by Condition~(3) from \Cref{def:critical_loop}, i.e. $ \coverbase'\in \B $ if there is some $ \coverbase $ with \( \emptyset \neq \Mod{\coverbase'} \subseteq \Mod{\coverbase} \setminus \left( \Mod{\G_{0,1}}\cup\ldots\cup\Mod{\G_{n,0}} \right) \) and $\coverbase$ is consistent with three (or more) bases from \( \{ \G_{0}, \ldots, \G_{n} \} \).
	Now let $ \B'=\{ \coverbase'\in \B \mid \Mod{\coverbase' \Cup \K}=\emptyset \} $, i.e., all belief bases from $ \B $ that are inconsistent with $ \K $.
	Let $ \leqslant_{\B'}$ be an arbitrary linear order on $ \B' $ with respect to which every non-empty subset of $ \B' $ has a minimum.\footnote{Such a $ \leqslant_{\B'}$ exists due to the well-ordering theorem, by courtesy of the \emph{axiom of choice} \cite{KS_Zermelo1904}.} 
	
	We now define $\circ_\critloop$ as follows: for every $\K' \not\equiv \K$ and any $\G$, let $ \K'\circ_\critloop\G=\K'\Cup\G $ if $ \K'\Cup\G $ is consistent, otherwise  $ \K'\circ_\critloop\G=\G $. 
	For $\K'\equiv\K$, we define:
	\begin{equation*}
		\K'\circ_\critloop\G = \begin{cases}
			\G  \Cup   \K'        & \text{if } \Mod{\K' \Cup \G} \neq \emptyset\text{, }                                                                                                  \\
			\G \Cup \G_{\min}^{\B'} & \text{if } \Mod{\K' \Cup \G} = \emptyset, \text{ and } \Mod{\G \Cup \coverbase'} \neq \emptyset \text{ for some } \coverbase'\in \B',                                           \\
			\G \Cup \G_i           & \text{if none of the above applies, } \Mod{\G_{i} \Cup \G} \neq \emptyset, \text{ and } \displaystyle \hspace{-10ex} \bigcup_{\qquad\qquad j\in\{0,\ldots,n\} \setminus \{i,i\oplus 1\}}\hspace{-10ex}\Mod{\G_{j} \Cup \G} = \emptyset, \\[-2.5ex]
			\G                      & \text{if none of the cases above apply,}
		\end{cases}
	\end{equation*}
	where $ \G_{\min}^{\B'}=\min(\{\coverbase' \in \B' \mid \Mod{\coverbase' \Cup \G} \neq \emptyset\}, \leqslant_{\B'}) $.
\end{definition}

In the following, we show that $\circ_\critloop$ from \Cref{def:circ_critloop} is indeed a \sauerwald{base change operators that satisfies \postulate{G1}--\postulate{G6}, but is} not total-preorder-representable.

\begin{proposition}\label{prop:when_tranisitiveIF}
	For a base logic $\mathbb{B}$ with a critical loop $\critloop$, the operator $\circ_\critloop$ for $\mathbb{B}$ satisfies \postulate{G1}--\postulate{G6} and is not total-preorder-representable.
\end{proposition}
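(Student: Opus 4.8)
The plan is to establish the two assertions of \Cref{prop:when_tranisitiveIF} independently. To show that $\circ_\critloop$ satisfies \postulate{G1}--\postulate{G6}, I would exhibit a single compatible min-expressible min-friendly faithful assignment and invoke the ``assignments-to-postulates'' direction (\Cref{lem:lfa2_new}, or equivalently the second bullet of \Cref{thm:rep1}). To show non-total-preorder-representability, I would compute the effect of $\circ_\critloop$ on the loop bases $\G_{i,i\oplus 1}$ and read off a strict cycle that no transitive total relation can realise. The second part is the clean one and I would do it first.

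For non-representability, the first step is to pin down $\K\circ_\critloop\G_{i,i\oplus 1}$. By Condition~(1) of \Cref{def:critical_loop}, $\Mod{\K\Cup\G_{i,i\oplus 1}}=\emptyset$, so the first branch of \Cref{def:circ_critloop} is excluded; and since each $\coverbase'\in\B'$ is disjoint from every $\Mod{\G_{k,k\oplus 1}}$, no $\B'$-base is consistent with $\G_{i,i\oplus 1}$, ruling out the second branch. The key observation is that $\G_{i,i\oplus 1}$ is consistent with exactly $\G_i$ and $\G_{i\oplus 1}$ among $\G_0,\ldots,\G_n$: by Condition~(2) it meets both, while consistency with a third would, via Condition~(3) applied to $\coverbase=\G_{i,i\oplus 1}$, yield a $\coverbase'$ with $\emptyset\neq\Mod{\coverbase'}\subseteq\Mod{\G_{i,i\oplus 1}}\setminus\bigcup_k\Mod{\G_{k,k\oplus 1}}$, hence simultaneously $\Mod{\coverbase'}\subseteq\Mod{\G_{i,i\oplus 1}}$ and $\Mod{\coverbase'}\cap\Mod{\G_{i,i\oplus 1}}=\emptyset$, a contradiction. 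Thus the third branch fires with index $i$, giving $\Mod{\K\circ_\critloop\G_{i,i\oplus 1}}=\Mod{\G_{i,i\oplus 1}}\cap\Mod{\G_i}=\Mod{\G_i}$. Now suppose a compatible min-complete quasi-faithful preorder $\releq{\K}$ existed. Compatibility gives $\min(\Mod{\G_{i,i\oplus 1}},\releq{\K})=\Mod{\G_i}$, while $\Mod{\G_{i\oplus 1}}\subseteq\Mod{\G_{i,i\oplus 1}}$ is disjoint from $\Mod{\G_i}$; by totality and transitivity this forces $\omega_i\rel{\K}\omega_{i\oplus 1}$ for all $\omega_i\in\Mod{\G_i}$, $\omega_{i\oplus 1}\in\Mod{\G_{i\oplus 1}}$. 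Picking one representative $\omega_i^\star\in\Mod{\G_i}$ per $i$ (each $\G_i$ is consistent) yields $\omega_0^\star\rel{\K}\omega_1^\star\rel{\K}\cdots\rel{\K}\omega_n^\star\rel{\K}\omega_0^\star$, which transitivity collapses to $\omega_0^\star\rel{\K}\omega_0^\star$, impossible.

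For the postulates I would construct the compatible assignment as follows. For $\K'\not\equiv\K$, where $\circ_\critloop$ coincides with trivial revision, I take the full-meet relation from \Cref{ex:full-meet}. For $\K'\equiv\K$ I build a single relation $\releq{\K}$ placing $\Mod{\K}$ strictly at the bottom, the $\B'$-bases layered immediately above according to the well-order $\leqslant_{\B'}$ (an interpretation ranked by the $\leqslant_{\B'}$-least $\B'$-base containing it), and the blocks $\Mod{\G_0},\ldots,\Mod{\G_n}$ arranged in a strict $\rel{\K}$-cycle. Faithfulness (\postulate{F1},\postulate{F2}) holds by putting $\Mod{\K}$ at the bottom, min-expressibility holds since every $\circ_\critloop$-output is by definition a base, and min-retractivity is plausible because the only $\rel{\K}$-cycles are confined to the circle region. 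With such a relation in hand, \postulate{G1}--\postulate{G4} follow easily: each branch outputs $\G\Cup Z$ or $\G$ (giving \postulate{G1}), the first branch is exactly \postulate{G2}, and \postulate{G4} reduces to noting that every branch condition and every chosen $\G_{\min}^{\B'}$ or index $i$ depends only on $\Mod{\G}$ (uniqueness of the index in the third branch, which I verified above in a special case, must be checked in general).

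The hard part will be establishing min-completeness together with compatibility of $\releq{\K}$ with $\circ_\critloop$ on \emph{every} base, which is exactly what underlies \postulate{G3}, \postulate{G5}, and \postulate{G6}. The delicate situations are bases $\G$ meeting the loop ``irregularly'' --- e.g.\ touching two non-adjacent blocks $\Mod{\G_i},\Mod{\G_j}$ and further interpretations outside the loop. For these I would have to prove, purely from Conditions~(1)--(3), that $\G$ is always either rerouted into the second branch (because it is consistent with some $\B'$-base, which is forced as soon as $\G$ meets three or more of the $\G_k$ by Condition~(3)) or else left entirely minimal, so that the strict cycle in the circle region never produces an empty minimum (rescuing min-completeness, and hence \postulate{G3}) and never contradicts a computed revision (rescuing compatibility, and hence \postulate{G5},\postulate{G6}). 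It is precisely to make this reconciliation go through uniformly in the loop length $n+1$ that the three conditions of \Cref{def:critical_loop} are imposed, and I expect carrying out this gluing argument --- showing the four branches cohere into one min-friendly relation --- to be the technical core of the proof; the running example (length $3$, where all blocks are pairwise adjacent) is the base case that suggests the general pattern.
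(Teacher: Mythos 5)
Your second claim (non-total-preorder-representability) is proven correctly and by essentially the paper's own argument: you compute $\K\circ_\critloop\G_{i,i\oplus 1}\equiv\G_i$, use compatibility plus transitivity to extract the strict preferences $\I_i\rel{\K}\I_{i\oplus 1}$ for representatives $\I_i\in\Mod{\G_i}$, and close the cycle against transitivity. Your justification that the third branch of \Cref{def:circ_critloop} is the one that fires --- applying Condition (3) to $\coverbase=\G_{i,i\oplus 1}$ itself to exclude consistency with a third block --- is a detail the paper leaves implicit, and it is sound.

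The genuine gap is the first claim: you do not prove that $\circ_\critloop$ satisfies \postulate{G1}--\postulate{G6}. You outline a route (construct a compatible min-friendly faithful assignment, then invoke \Cref{lem:lfa2_new}) but explicitly defer its core, and what is deferred is not routine. Concretely: (i) your relation is unspecified exactly where it is most constrained --- interpretations in $\Mod{\G_{i,i\oplus 1}}\setminus(\Mod{\G_i}\cup\Mod{\G_{i\oplus 1}})$, which by Condition (1) and the definition of $\B$ lie outside $\Mod{\K}$, outside every $\B'$-set, and outside every block, yet must be placed strictly above $\Mod{\G_i}$ for compatibility on $\G_{i,i\oplus 1}$ and for min-retractivity; (ii) every base landing in the fourth branch forces, via compatibility, all of its models to be pairwise $\releq{\K}$-related, a global constraint your layered picture does not address; (iii) min-completeness and min-retractivity in the presence of the strict cycle are only called ``plausible''. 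The paper avoids constructing any relation for this direction: it verifies the postulates directly on the case structure of \Cref{def:circ_critloop}, where \postulate{G1}--\postulate{G3} are immediate, \postulate{G4} holds because every branch condition mentions only model sets, and \postulate{G5}/\postulate{G6} follow from a branch-propagation argument --- whenever $(\K\circ_\critloop\G_1^*)\Cup\G_2^*$ is consistent, the branch (with the same witness $(\G_1^*)_{\min}^{\B'}$, respectively the same index $i$) that applied to $\G_1^*$ also applies to $\G_1^*\Cup\G_2^*$, yielding $\K\circ_\critloop(\G_1^*\Cup\G_2^*)\equiv(\K\circ_\critloop\G_1^*)\Cup\G_2^*$. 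That local check is substantially shorter than the global gluing your route requires; as it stands, half of the proposition is unestablished.
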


\begin{proof}%
	
	We will first show that $\circ_\critloop$ satisfies \postulate{G1}--\postulate{G6}.
	For $ \K'\not\equiv\K $ we obtain a trivial revision  which satisfies \postulate{G1}--\postulate{G6} (cf. \Cref{ex:full-meet}).
	Consider the remaining case of $ \K $ (and any equivalent base):
	
	\emph{Postulates \postulate{G1}--\postulate{G4}}. The satisfaction of \postulate{G1}--\postulate{G3} follows directly from the construction of $ \circ_\critloop $.
	For \postulate{G4} observe that, when computing $ \K\circ_\critloop\G $, the case distinction above only considers the  model sets of the participating bases rather than their syntax. Thus, for \( \K\equiv\K' \) and $ \G_1^*\equiv\G_2^* $ we always obtain $ \K\circ_\critloop\G_1^*\equiv\K'\circ_\critloop\G_2^* $.
	
	\emph{Postulate \postulate{G5} and \postulate{G6}}. 
	Consider two belief bases $ \G^*_1 $ and $ \G^*_2 $.
	If $ \G^*_2 $ is inconsistent  with $ \K\circ_\critloop\G^*_1 $, then we obtain satisfaction of \postulate{G5} immediately.
	For the remaining case of \postulate{G5} and for \postulate{G6} we assume  $ \K\circ_\critloop\G^*_1 $ to be consistent with $ \G^*_2 $, i.e., $\Mod{(\K\circ_\critloop\G^*_1 )\Cup\G^*_2}\neq\emptyset$.
	Consequently, there exists some interpretation $\I$ such that $\I\in\Mod{\K\circ_\critloop\G^*_1}$ and $\I\in\Mod{\G^*_2}$.
	The postulate \postulate{G1} implies that $\I\in\Mod{\G^*_1}$ and hence $ \G^*_1\Cup \G^*_2 $ is consistent.
	We now inspect all different cases from the definition of \( \circ_\critloop \) above that may apply when revising \( \K \) by \( \G^*_1 \):
	\begin{itemize}\setlength{\itemsep}{0pt}
		\item[]If $ \G^*_1 $ is consistent with $ \K $, then we obtain from $\Mod{(\K\circ_\critloop\G^*_1)\Cup\G^*_2}\neq\emptyset$ and \postulate{G2} that $\K$ is consistent with $ \G^*_1 \Cup \G^*_2$.
		This implies $(\K\circ_\critloop\G^*_1)\Cup \G^*_2\equiv (\K \Cup \G^*_1) \Cup \G^*_2 \equiv \K \Cup (\G^*_1 \Cup \G^*_2)  \equiv \K\circ_\critloop(\G^*_1\Cup\G^*_2) $; yielding satisfaction of \postulate{G5} and \postulate{G6}.
		\item[]Next, consider the second case of the definition, where $ \G^*_1 $ is inconsistent with $\K$, but consistent with some $ \coverbase'\in \B' $ and assume $\coverbase'$ is the $ \leqslant_{\B'}$-minimal such base, i.e., $\coverbase' = (\G^*_1)_{\min}^{\B'}$. Then,
		from the construction of $\circ_\critloop$ and the consistency of $(\K\circ_\critloop\G^*_1)\Cup\G^*_2$ we obtain $\Mod{(\K\circ_\critloop\G^*_1)\Cup\G^*_2} = \Mod{\G^*_1\Cup\coverbase'\Cup\G^*_2}\neq\emptyset$.
		Consequently,
		the set $ \G^*_1\Cup\G^*_2 $ is also consistent with $ \coverbase'$, which, together with $ \coverbase'= (\G^*_1)_{\min}^{\B'}$, implies $\coverbase'= (\G^*_1 \Cup \G^*_2)_{\min}^{\B'}$.
		For determining $\K \circ_\critloop (\G^*_1 \Cup \G^*_2)$, note that from $\K$ being inconsistent with $\G^*_1$, it follows that $\K$ must also be inconsistent with $\G^*_1 \Cup \G^*_2$, therefore, due to the existence of $\coverbase'$, the second line of the definition of $\circ_\critloop$ must apply.
		We obtain  
		$ (\K\circ_\critloop\G^*_1)\Cup \G^*_2 \equiv ((\G^*_1)_{\min}^{\B'} \Cup \G^*_1) \Cup \G^*_2 \equiv \coverbase' \Cup \G^*_1 \Cup \G^*_2 \equiv (\G^*_1 \Cup \G^*_2)_{\min}^{\B'} \Cup (\G^*_1\Cup\G^*_2)  \equiv \K\circ_\critloop(\G^*_1\Cup\G^*_2) $%
		; establishing \postulate{G5} and \postulate{G6} for this case.
		\item[]We now inspect the third case from the definition, i.e., we consider some $ \G^*_1 $ that is inconsistent with $\K$ and with all elements from $\B'$.
		If $ \G^*_1 $ is consistent with $ \G_i $ and inconsistent with all $ \G_{j} $, where $j\in\{0,\ldots,n\} \setminus \{ i, i\oplus 1 \}$,
		then by the construction of $\circ_\critloop$ and the consistency of $(\K\circ_\critloop\G^*_1)\Cup\G^*_2$ we have $\Mod{(\K\circ_\critloop\G^*_1)\Cup\G^*_2} = \Mod{\G^*_1\Cup\G_i\Cup\G^*_2}\neq\emptyset$.
		Then, likewise $ \G^*_1\Cup \G^*_2 $ is consistent with $ \G_i $ and inconsistent with all $ \G_{j}$ with $j\in\{0,\ldots,n\} \setminus \{ i, i\oplus 1 \}$. Moreover, if $ \G^*_1 $ is inconsistent with $\K$ and with all elements from $\B'$, then so is $ \G^*_1 \Cup\G^*_2$, i.e., when determining $ \K\circ_\critloop(\G^*_1\Cup\G^*_2)$, the third case of the definition applies.
		Hence, by the definition of \( \circ_\critloop \) we obtain $(\K\circ_\critloop\G^*_1)\Cup \G^*_2 \equiv \G^*_1\Cup\G^*_2\Cup \G_i  \equiv \K\circ_\critloop(\G^*_1\Cup\G^*_2) $. 
		\item[]
		If none of the conditions above applies to $ \G^*_1 $, then they also do not apply to $ \G^*_1\Cup  \G^*_2 $.
		From the construction of $ \circ_\critloop $ we obtain $ \K\circ_\critloop(\G^*_1\Cup\G^*_2)\equiv(\K\circ_\critloop\G^*_1)\Cup \G^*_2 $ $\equiv \G^*_1\Cup\G^*_2$.
	\end{itemize}
	In summary, we obtain that \( \circ_\critloop \) satisfies \postulate{G5} and \postulate{G6} in all cases.
	It remains to show that $\circ_\critloop$ is not total-preorder-representable. 
	Towards a contradiction suppose the contrary, i.e., there is a min-complete faithful preorder assignment $ \releq{\abst} $, such that $\circ_\critloop$ is compatible with~$\releq{\abst}$.
	Transitivity and min-completeness imply that \( \releq{\abst} \) is min-friendly.
	As all $\G_0,\ldots,\G_n$ are consistent,
	there are $ \I_i \in \Mod{\G_{i}} $  for all $i\in\{0,\ldots,n\}$.
	By construction of $\circ_\critloop$ and Condition (2) of \Cref{def:critical_loop},
	we have $\K\circ_\critloop\G_{i,i\oplus 1}=\G_{i,i\oplus 1}\Cup\G_i\equiv\G_i$, and consequently
	$ \I_i\models\K\circ_\critloop\G_{i,i\oplus 1} $ and $ \I_{i\oplus 1}\not\models\K\circ_\critloop\G_{i,i\oplus 1}  $ for each $ i\in\{0,\ldots,n\} $. %
	As $ \circ_\critloop $ is compatible with $ \releq{\abst} $, we obtain $ \Mod{\K\circ_\critloop\G_{i,i\oplus 1} }=\min(\Mod{\G_{i,i\oplus 1} },\releqK) $.
	In particular, the definition of $ \circ_\critloop $ yields $ \I_i\in {\min(\Mod{\G_{i,i \oplus 1}},\releqK)} $ %
	and $ \I_i,\I_{i \oplus 1}\models \G_{i,i\oplus 1} $ and $ \I_{i \oplus 1}\notin {\min(\Mod{\G_{i,{i \oplus 1}}},\releqK)} $.
	We obtain thereof the strict relationship $ \I_i \relK \I_{i \oplus 1} $. %
	In summary, we get $ \I_0 \relK \I_1 \relK  \ldots \relK \I_n \relK \I_0 $, which contradicts the presumed transitivity of \( \releqK \).
\end{proof}

This establishes that the absence of critical loops is a necessary condition for universal total-preorder-representability in any Tarskian logic, because \Cref{thm:when_tranisitive} (II) an immediate consequence of \Cref{prop:when_tranisitiveIF}.

\subsection{Absence of Critical Loops Implies Total-Preorder-Representability}\label{sec:logic_capture_hidden_cycles:TPO2CL}

We will now show that the identified criterion of {\criticalloop} (\Cref{def:critical_loop}) is also sufficient, even in the more general, syntax-dependent setting. That is, we will demonstrate in the following that \Cref{thm:when_tranisitive} (I) holds.
To this end, we need to argue that any base change operator $\circ$ that satisfies \postulate{G1}--\postulate{G3}, \postulate{G5}, and \postulate{G6} for any critical-loop-free $\mathbb{B}$ gives rise to a compatible min-complete quasi-faithful preorder assignment.
We will use the assignment $\rreleqc{\abst}$ introduced in \Cref{sec:transformation2tpo}, where each \( \rreleqcK \) is obtained by a stepwise transformation of $\releqcK $ from \Cref{def:relation_new}.
The general outline of the proof is similar to the proof of \Cref{prop:encodedown} but will require much more careful argumentation for some of the steps. Because of this, we present here the proof in much detail.

Recall that the transformation from $\releqc{(.)}$ to $\rreleqc{(.)}$ consists of three steps.  
For the start, recall that $\releqc{\abst}$ is a min-complete quasi-faithful assignment compatible with $\circ$ by \Cref{prop:agm_withoutg4}. 
This means that \( \releqcK \) is a total relation for each \( \K \), whence transitivity is the only condition that \( \releqcK \) fails to meet to qualify as a total preorder.

For the first step, we will remove  all detached pairs $\setAllDetached$ from \( \releqcK \), resulting in \( \relationstepdashK \). 
The relation \( \relationstepdashK \) will be a non-transitive and non-total relation, but minima of models of bases will be preserved. 
We will then extend \( \relationstepdashK \) to a transitive relation \( \relationstepdashdashK \) in the second step, by taking the transitive closure. 
We will show that only elements from $\setAllDetached$ can be added back by the transitive closure, which guarantees that, again, minima of models of bases are preserved.
In a last step, we obtain the final result $\rreleqc{(.)}$ by ``linearizing`` \( \relationstepdashK \) to a total preorder in a way that minima of models of bases are again preserved.

\paragraph{\textbf{Step I: Removing detached pairs.}}
Let \( \circ \)  be a {\basechange} operator that satisfies \postulate{G1}--\postulate{G3}, \postulate{G5}, and \postulate{G6}. Then, for any two bases \( \K,\G\in\Bases \), all quasi-faithful assignments \( \releq{\abst} \) compatible with \( \circ \) yield  the same set of minimal interpretations of \( \Mod{\G} \) with respect to \( \releq{\K} \).
This property already stipulates much of \( \releq{\K} \) for each \( \K \) (for some base logics \( \releq{\K} \) is even complete determined by that property).
Still, in the general case, when forming a compatible assignment, there is certain freedom on relating those interpretations for which the given \basechange\ operator gives no hint about how to order them.
Such pairs are describes by the notion of detached pairs (see \Cref{def:detached}).

\begin{figure}
\centering
\begin{tikzpicture}
\usetikzlibrary{arrows}
\pgfarrowsdeclarecombine{twotriang}{twotriang}{triangle 90}{triangle 90}{triangle 90}{triangle 90}
\usetikzlibrary{arrows}

\node [inner sep=0em,balls, scale=0.75] (u) at (0,0) {\Large$\I_{0}$};
\node [inner sep=0em,balls, scale=0.75] (l) at (0,2.5) {\Large$\I_{1}$};
\node [inner sep=0em,balls, scale=0.75] (l+1) at (1.75,1.25) {\Large$\I_{2}$};

\draw (u) edge [-Stealthnew] node[above,sloped, scale=1] {$\relcK $} (l);
\draw (l) edge [-Stealthnew] node[above,sloped, scale=1] {$\releqcK $} (l+1);

\draw (u) edge [,Stealthnew-] node[below,sloped, scale=1] {\rotatebox{180}{$\releqcK $}} (l+1) ;

\end{tikzpicture}
\caption{Illustration of a \criticalloop-situation of length 3 on the semantic side.
If \( \mathbb{B} \) does not exhibit a critical loop, then this situation is, due to \Cref{lem:loop_detach}, only possible when ($\I_1$,$\I_2$) or ($\I_2$,$\I_0$) is a detached pair.}\label{fig:circle_sem_3} 
\end{figure}

In the following, we show that every violation of transitivity in $ \releqcK $ involves a detached pair (as illustrated in \Cref{fig:circle_sem_3}).

\begin{lemma}\label{lem:loop_detach}
\sauerwald{Let $\mathbb{B}$ be a loop-free base logic and let  $ \circ $ be} a \basechange\ operator for \( \mathbb{B} \) which satisfies \postulate{G1}--\postulate{G3}, \postulate{G5}, and \postulate{G6}.
If $ \I_0 \releqcK \I_1 $ and $ \I_1 \releqcK \I_2 $ with $ \I_0 \not\releqcK \I_2 $, then $ (\I_0,\I_1) $ or $ (\I_1,\I_2) $ is detached from $ \circ $ in $ \K $.
\end{lemma}
\begin{proof}
	Let \( \I_0,\I_1,\I_2 \) such that a violation of transitivity is obtained as given above, i.e. $ \I_0 \releqcK \I_1 $ and $ \I_1 \releqcK \I_2 $ with $ \I_0 \not\releqcK \I_2 $.
	By \Cref{def:relation_new}, we have that $ \I_0 \not\releqcK \I_2 $ is only possible if \( \I_0\not\models\K \).
	From \Cref{def:relation_new} and $ \I_0 \releqcK \I_1 $, we obtain $\I_1\not\models\K$. By an analogue argument we obtain $\I_2\not\models\K$.
	Thus, for the rest of the proof we have \( \I_0,\I_1,\I_2 \not\models\K \).
	
	Towards a contradiction, assume that $ (\I_0,\I_1) $ and $ (\I_1,\I_2) $ are both not detached from $ \circ $ in $ \K $.	
	By \Cref{lem:totality} the relation $ \releqcK $ is total, and thus we have that $ \I_2 \relcK \I_0 $.	
	As $\I_2 \not\models\K$ and $ \I_0 \not\releqcK \I_2 $, due to \Cref{lem:help}(a), there is a base $ \G_{2,0} \in\Bases $ with $ \I_0,\I_2\models\G_{2,0}  $ such that $ \I_2 \models\K\circ\G_{2,0}  $ and $ \I_0\not\models\K\circ\G_{2,0}  $.
	By \( \I_0,\I_1,\I_2 \not\models\K \) and \Cref{def:relation_new} we obtain $ \I_0 \sqreleqcK \I_1 $ and $ \I_1 \sqreleqcK \I_2 $ (cf. \Cref{def:relation_new_first_relation}).
	Because $ (\I_0,\I_1) $ is not detached, there is some \( \G_{0,1} \in\Bases \) with \( \I_0,\I_1 \models \G_{0,1} \) such that \( \I_0 \models \K\circ\G_{0,1} \) or \( \I_1 \models \K\circ\G_{0,1} \).
	By \Cref{def:relation_new_first_relation} and $ \I_0 \sqreleqcK \I_1 $ we obtain that \( \I_0 \models \K\circ\G_{0,1} \).
	Using an analogue argumentation, there exist $ \G_{1,2}\in\Bases$ satisfying $ \I_1,\I_2\models\G_{1,2} $ and $ \I_1 \models\K\circ\G_{1,2}$.
	
	Recall that \( \releqc{\abst} \) is compatible, min-retractive and quasi-faithful by \Cref{lem:compatibility} and by the proof of \Cref{lem:quasi-faithfulness}.
	Let \( \G_{i} = (\K\circ \G_{i,i \oplus 1}) \Cup \G_{i\oplus 2,i}   \) for each \( i\in\{0,1,2\} \).
	Note that each \( \G_{i} \) is a consistent base, since we have \( \I_i \in \Mod{\G_{i}} \).
	We now show that Conditions (1) and Condition (2) 
	from \Cref{def:critical_loop} are satisfied:
	\begin{itemize}\setlength{\itemsep}{0pt}
		\item[]\hspace{-4.4ex}(1)~~Towards a contradiction, assume that $ \K $ is consistent with some $ \G_{i,i\oplus 1} $.
		From \postulate{G2} we obtain $\Mod{\K\circ\G_{i,i\oplus 1}} = \Mod{\K\Cup\G_{i,i\oplus 1}}$ for some \( i\in\{0,1,2\} \).
		Since \( \I_i \in \Mod{\G_{i}} \), by the definition of $\G_{i}$ we have $\I_i\in \Mod{(\K\circ \G_{i,i \oplus 1}) \Cup \G_{i\oplus 2,i} } = \Mod{(\K\Cup \G_{i,i \oplus 1}) \Cup \G_{i\oplus 2,i} }$ and obtain $\I_i\in\Mod{\K}$ for some \( i\in\{0,1,2\} \), which contradicts \( \I_0,\I_1,\I_2 \not\models\K \).
		
		\item[]\hspace{-4.4ex}(2)~~By the postulate  \postulate{G1} we have \( \Mod{\K\circ \G_{i,i \oplus 1}} \subseteq \Mod{\G_{i,i\oplus 1}} \) for each \( i\in\{0,1,2\} \).
		The definition of \( \G_i \) yields \( \Mod{\G_i} \subseteq \Mod{\G_{i,i\oplus 1}\Cup\G_{i\oplus 2,i}} \) for each \( i\in\{0,1,2\} \).
		Substituting \( i \) by \( i\oplus 1 \) yields \(  \Mod{\G_{i\oplus 1}} \subseteq \Mod{\G_{i\oplus 1,i\oplus 2}\Cup\G_{i,i\oplus 1}} \); showing that \( \Mod{\G_{i}} \cup \Mod{\G_{i \oplus 1}} \subseteq \Mod{\G_{i,i\oplus 1}} \) holds for each \( i\in\{0,1,2\} \).

		We show that each $ \G_i\Cup \G_j $ is inconsistent, by assuming the contrary, i.e., there are some \( i,j\in\{0,1,2\} \) such that \( i \neq j \) and  $ \G_i\Cup \G_j $ is consistent, 
		i.e. there exists some $\I^*\in \Mod{\G_i}\cap\Mod{\G_j}$.
		From the definition of $\G_i$ and the definition of $\G_j$, we obtain $\I^*\in \Mod{\K\circ\G_{i,i\oplus 1}} \cap \Mod{\G_{i\oplus 2,i}} \cap \Mod{\K\circ\G_{j,j\oplus 1}} \cap  \Mod{\G_{j\oplus 2,j}}$.
		Hence, we obtain \( \I^*\in \Mod{ \G_{i\oplus 2,i}\Cup  \G_{j\oplus 2,j}} \) and from compatibility of \( \releqc{\abst} \) with \( \circ \), we obtain \( \I^* \in \min(\Mod{\G_{i,i\oplus 1}},\releqcK) \) and \( \I^* \in \min(\Mod{\G_{j,j\oplus 1}},\releqcK) \).
		Now observe that \( \I_{0},\I_{1},\I_{2} \in \Mod{\G_{i,i\oplus 1}} \cup \Mod{\G_{j,j\oplus 1}} \) holds; this is because we have $\Mod{\G_k}\subseteq\Mod{\G_{k,k\oplus 1}}\cup\Mod{\G_{k\oplus n, k}}$ for each $k\in\{0,1,2\}$.
		Hence, independent of the specific \( i \) and \( j \), we obtain  \( \I^* \releqcK \I_{k} \) from $\I^*\in \Mod{\K\circ\G_{i,i\oplus 1}}$ and \Cref{lem:help}(b) for each \( k\in\{0,1,2\} \). 
		Together, $\I_i\in\Mod{\K\circ\G_{i,i\oplus 1}}$, $\I_j\in\Mod{\K\circ\G_{j,j\oplus 1}}$,  and  compatibility, imply \( \I_{i} \releqcK \I^* \) and \( \I_{j} \releqcK \I^* \).
		Because of  \( \Mod{\G_{i}} \cup \Mod{\G_{i \oplus 1}} \subseteq \Mod{\G_{i,i\oplus 1}} \), we have that $\I_i,\I_j,\I^*\in\Mod{\G_{{i},{i\oplus1}}}$ or $\I_i,\I_j,\I^*\in\Mod{\G_{{j},{j\oplus1}}}$ holds. 
		For the case $\I_i,\I_j,\I^*\in\Mod{\G_{{i},{i\oplus1}}}$, since $\I_j\releqcK\I^*$ and $\I^*\in\min(\Mod{\G_{i,i\oplus 1}},\releqcK)$, from min-retractivity we obtain $\I_j\in\min(\Mod{\G_{i,i\oplus 1}},\releqcK)$. As $\I_i\in\min(\Mod{\G_{i,i\oplus 1}},\releqcK)$, we obtain \( \I_i\releqcK \I_j \) and \( \I_j\releqcK \I_i \). 
		By an analogue argumentation, we obtain for the case of $\I_i,\I_j,\I^*\in\Mod{\G_{{j},{j\oplus1}}}$ the same conclusion, i.e., \( \I_i\releqcK \I_j \) and \( \I_j\releqcK \I_i \).
		This shows that \( \I_i\releqcK \I_j \) and \( \I_j\releqcK \I_i \) must hold in general.
		
		We consider in the following all possible choices for \( i \) and \( j \).
		For the case of \( i=0 \) and \( j=2 \), we obtain a contradiction to     \( \I_2 \relcK \I_0 \).
		We next consider the case of \( i=1 \) and \( j=2 \).
		Because of \( \Mod{\G_0} = \Mod{\K\circ\G_{0,1}} \cap \Mod{\G_{2,0}} = {\min(\Mod{\G_{0,1}},\releqcK)} \cap \Mod{\G_{2,0}} \), we have that \( \I_{0},\I_{2},\I^* \in \Mod{\G_{2,0}} \) holds.
		As $\I_0\releqcK\I^*$ and $\I^*\in\min(\Mod{\G_{2,0}},\releqcK)$ holds, min-retractivity of \( \releqcK \) yields $\I_0\in\min(\Mod{\G_{2,0}},\releqcK)$. 
		Consequently, we obtain that $\I_0\releqcK\I_2$ holds, which is a contradiction to \( \I_2 \relcK \I_0 \).
		The proof for the case of \( i=2 \) and \( j=1 \) is analogous to the case of \( i=1 \) and \( j=2 \).
		We obtain that Condition (2) from \Cref{def:critical_loop} is satisfied.
	\end{itemize}

	Recall that by assumption, the base logic \( \mathbb{B} \) does not exhibit a {\criticalloop}.
	Yet $ \G_{0,1},\allowbreak\G_{1,2},\G_{2,0}  $ satisfy Con\-ditions (1) and Condition (2) of a {\criticalloop}, hence Condition (3) of \Cref{def:critical_loop} must be violated.
	This means that there exists some $ \coverbase\in\Bases $ such that 
	$ \Mod{\G_i\Cup\coverbase}\neq \emptyset $ 
	for every $ i\in\{0,1,2\} $, but no required base \( \coverbase'\in\Bases \) such that Condition (3) is satisfied.
	Consequently, for all $ \G\in\Bases $ holds
	\begin{equation}\tag{\( \star 1\)}\label{eq:loop_detach:e0}
		\Mod{\G}\neq\emptyset \text{ implies }  \Mod{\G} \not\subseteq \Mod{\coverbase}\setminus (\Mod{\G_{0,1}}\cup\Mod{\G_{1,2}}\cup\Mod{\G_{2,0} }).
	\end{equation}
	For the remaining parts of the proof,  let \(  \omegafromCoverbase{i} \in \Omega  \) be an interpretation with $ \omegafromCoverbase{i} \in \Mod{\G_i}\cap\Mod{\coverbase} $ for each \( i\in \{ 0,1,2\} \).
	Because \( \circ \) satisfies \postulate{G1} and \postulate{G3}, we obtain $ \Mod{\K\circ\coverbase} \subseteq \Mod{\coverbase} $ and consistency of $ \K\circ\coverbase $.
	Together with \eqref{eq:loop_detach:e0} we obtain that there exists $ k\in\{0,1,2\} $ with $ \Mod{\K\circ\coverbase} \cap \Mod{\G_{k,k\oplus 1}} \neq \emptyset $.
	We consider each of the two cases \( \Mod{\K\circ\coverbase} \cap \Mod{\K\circ\G_{k,k\oplus 1}} \neq \emptyset \) and \( \Mod{\K\circ\coverbase} \cap \Mod{\K\circ\G_{k,k\oplus 1}} = \emptyset \) independently.
	
	\medskip
	\noindent
	\emph{The case of \( \Mod{\K\circ\coverbase} \cap \Mod{\K\circ\G_{k,k\oplus 1}} \neq \emptyset \).} 
	As first step, we show that %
	\begin{align*}
		\omegafromCoverbase{0} \releqcK \omegafromCoverbase{2}   \text{ and }   \omegafromCoverbase{2} \releqcK \omegafromCoverbase{1}   \text{ and }  \omegafromCoverbase{1} \releqcK \omegafromCoverbase{0}  \tag{\( \star2 \)}\label{eq:loop_detach:e1new}
	\end{align*}
	holds  for this case.
	Clearly,  \( \Mod{\K\circ\coverbase} \cap \Mod{\K\circ\G_{k,k\oplus 1}} \neq \emptyset \) implies that there exists some \( \omegaMinFromCoverbase\in\Omega \) such that $\omegaMinFromCoverbase\in\Mod{\K\circ\coverbase}$ and $\omegaMinFromCoverbase\in\Mod{\K\circ\G_{k,k\oplus 1}}$.
	From the compatibility of \( \circ \) with \( \releqc{\abst} \), we obtain $\omegafromCoverbase{k}\in\min(\Mod{\G_{{k},{k\oplus1}}},\releqcK)$, implying that $\omegafromCoverbase{k}\releqcK\omegaMinFromCoverbase$ holds.
	Remember that $\omegafromCoverbase{k},\omegaMinFromCoverbase \in \Mod{\coverbase}$ and $\omegaMinFromCoverbase\in\min(\Mod{\coverbase},\releqcK)$, by min-retractivity we obtain $\omegafromCoverbase{k}\in {\min(\Mod{\coverbase},\releqcK)} $.
	From this  last observation and from \( \omegafromCoverbase{k\oplus 1}, \omegafromCoverbase{k\oplus 2} \in \Mod{\coverbase} \) we obtain that  \( \omegafromCoverbase{k} \releqcK \omegafromCoverbase{k\oplus 1} \)   and   \( \omegafromCoverbase{k} \releqcK \omegafromCoverbase{k\oplus 2} \)
	holds.
	Remember that by Condition (2) we have \( \omegafromCoverbase{k},\omegafromCoverbase{k\oplus 2} \in \Mod{\G_{k\oplus 2,k}} \) and by compatibility we obtain  \( \omegafromCoverbase{k\oplus 2} \in {\min(\Mod{\G_{k\oplus 2,k}},\releqcK)} \).
	This last observation, together with \( \omegafromCoverbase{k} \releqcK \omegafromCoverbase{k\oplus 2} \), \( \I_{k} \models \K\circ\G_{} \) and min-retractivity, yields \( \omegafromCoverbase{k} \in {\min(\Mod{\G_{k\oplus 2,k}},\releqcK)} \).
	Thus, we have \( \omegafromCoverbase{k\oplus 2}  \releqcK  \omegafromCoverbase{k} \).
	By a symmetric argument, we have \( \omegafromCoverbase{k\oplus 1},\omegafromCoverbase{k\oplus 2} \in \Mod{\G_{{k\oplus 1},{k\oplus 2}}} \) and \( \omegafromCoverbase{k\oplus 1} \in \Mod{\K \circ \G_{{k\oplus 1},{k\oplus 2}}} \). Thus, we obtain \( \omegafromCoverbase{k\oplus 1} \releqcK \omegafromCoverbase{k\oplus 2} \) from \Cref{lem:help}(b). 
	By combination of these observations with \( \omegafromCoverbase{k\oplus 1}, \omegafromCoverbase{k\oplus 2} \in \Mod{\coverbase} \) and $\omegafromCoverbase{k}\in\min(\Mod{\coverbase},\releqcK)$, we obtain \( \omegafromCoverbase{k}, \omegafromCoverbase{k\oplus 1}, \omegafromCoverbase{k\oplus 2} \in\min(\Mod{\coverbase},\releqcK) \) from min-retractivity.
	As direct consequence, we obtain that \eqref{eq:loop_detach:e1new} holds.
	
	We will now show that a contradiction with \( \I_2 \relcK \I_0 \) is unavoidable.
	Recall that \( \I_{0},\I_{2} \in \Mod{\G_{2,0}}  \) and \( \I_{2} \models \K\circ\G_{2,0} \), but \( \I_0 \not \models \K\circ\G_{2,0} \).
	The last observation together with the compatibility of \( \releqc{\abst} \) with \( \circ \) implies that \( \omegafromCoverbase{2} \in \min(\Mod{\G_{2,0}},\releqcK) \) holds.
	Because \eqref{eq:loop_detach:e1new} holds, we obtain \( \omegafromCoverbase{0} \in \min(\Mod{\G_{2,0}},\releqcK) \) from min-retractivity of \( \releqc{\abst} \).
	Similarly, we obtain \( \I_{0},\omegafromCoverbase{0} \in {\min(\Mod{\G_{0,1}},\releqcK)} \) from compatibility and \( \I_{0},\omegafromCoverbase{0} \in \Mod{\K\circ\G_{0,1}}  \); showing that \( \I_{0} \releqcK \omegafromCoverbase{0} \) holds.
	Because of \( \omegafromCoverbase{0},\I_{0},\I_{2} \in \Mod{\G_{2,0}} \), we obtain  \( \I_{0} \in \min(\Mod{\G_{2,0}},\releqcK) \) from \( \omegafromCoverbase{0} \in \min(\Mod{\G_{2,0}},\releqcK) \) and min-retractivity,  and consequently, we obtain the contradiction \( \I_0 \releqcK \I_2 \).
	
	\medskip
	\noindent
	\emph{The case of \( \Mod{\K\circ\coverbase} \cap \Mod{\K\circ\G_{k,k\oplus 1}} = \emptyset \).}
	Using $ \Mod{\K\circ\coverbase} \cap \Mod{\G_{k,k\oplus 1}} \neq \emptyset $ yields that there exist some \( \I^* \in \Mod{\K\circ\coverbase} \cap \Mod{\G_{k,k\oplus 1}}\).
	From \Cref{lem:help}(c) and \(  \I^*,\omegafromCoverbase{k} \in \Mod{\G_{k,k\oplus 1}} \) and \( \omegafromCoverbase{k}\in \Mod{\K\circ\G_{k,k\oplus 1}} \) and \( \I^* \notin \Mod{\K\circ\G_{k,k\oplus 1}} \) we obtain \( \omegafromCoverbase{k} \relcK \I^* \).
	Because \postulate{G1} is satisfied by \( \circ \), we have that \( \I^* \in \Mod{\K\circ\coverbase} \) implies \( \I^* \in \Mod{\coverbase} \).
	We obtain the contradiction \( \I^* \releqcK \omegafromCoverbase{k}   \) from \(  \I^*,\omegafromCoverbase{k} \in \Mod{\coverbase} \) and \( \I^*\in \Mod{\K\circ\coverbase} \) by using \Cref{lem:help}(b).

	\medskip
	In summary, this shows that Conditions (1)--(3) from \Cref{def:critical_loop} are satisfied, i.e., \( \G_{0,1},\G_{1,2},\G_{2,0}  \) form a {\criticalloop}. 
	This contradiction the assumption that \( \mathbb{B} \) does not exhibit a {\criticalloop} and consequently, $ (\I_0,\I_1) $ or $ (\I_1,\I_2) $ is detached from $ \circ $ in $ \K $.
\end{proof}

\Cref{lem:loop_detach} provides the rationale for the first transformation step:
For every $\K \in \Bases$, we obtain ${\relationstepdashK}$ by removing all non-reflexive detached pairs from $\releqcK$, that is, 
$ {\relationstepdashK} = {\releqcK} \setminus \setAllDetached $.    
The resulting ${\relationstepdashK}$ is not guaranteed to be total anymore, and it is not necessarily transitive. But we will show that ${\relationstepdashK}$ inherits other important properties from  $\releqcK$.

\begin{lemma}\label{lem:gettransitive_step1}
Let $\mathbb{B}=(\MC{L},{\Omega},\models,\Bases,\Cup)$ \sauerwald{be a loop-free base logic}, let \( \circ \) be a {\basechange} operator satisfying \postulate{G1}--\postulate{G3}, \postulate{G5}, and \postulate{G6} and let \( \releqcK \) be a quasi-faithful min-friendly assignment compatible with \( \circ \).
For each \( \K,\G\in\Bases \) holds ${\min(\Mod{\G},\relationstepdashK)}={\min(\Mod{\G},\releqcK)}$ and $\relationstepdashK$ is min-complete and reflexive.
\end{lemma}	    
\begin{proof}
By definition of \( {\relationstepdashK} \) we have \( \I \relationstepdashK \I' \) if and only if \( \I \releqcK  \I' \) for all \( (\I,\I') \in \Omega\times\Omega \) which are not detached pairs.
Because for every \( \I,\I' \in \Mod{\G} \) with \( \I \in \min(\Mod{\G},\releqcK) \) we have \( \I \models \K\circ\G \) by compatibility of \( \releqc{\abst} \) with \( \circ \). Consequently, the pair \( (\I,\I') \) is not detached and thus $\min(\Mod{\G},\relationstepdashK)=\min(\Mod{\G},\releqcK)$.
The latter implies that min-completeness of \( \releqcK \) carries over to \( \relationstepdashK\).
Reflexivity of \( \relationstepdashK\) is obtained by construction, the reflexivity of \( \releqcK \) and by the definition of \( \setAllDetached \). 
\end{proof}

\paragraph{\textbf{Step II: Taking the transitive closure.}}    
In this step, for every $\K \in \Bases$, we obtain $\relationstepdashdashK$ by taking the transitive closure of $\relationstepdashK$, i.e., we have \( {\relationstepdashdashK}=TC({\relationstepdashK})=TC({{\releqcK}\setminus\setAllDetached}) \).
The resulting $\relationstepdashdashK$ is still not guaranteed to be total, but it is reflexive and transitive by construction, and it inherits further important properties from  $\relationstepdashK$.
It will turn out that the transitive closure will only add pairs to \( {\relationstepdashK} \) that are detached pairs. This means that \( {\relationstepdashdashK} \) contains only elements from \( {\relationstepdashK} \) and from \( \setAllDetached \). Because adding detached pairs does not influence minimal sets of models of a base \( \G \) with respect to \( \releqK \), we will obtain that these sets are preserved when taking the transitive closure.

\begin{figure}
	\centering
	\begin{tikzpicture}
		
		\node [inner sep=0.1em,balls, scale=0.75] (u) at (0,0) {\Large$\I_{0}$};
		\node [inner sep=0.1em,balls, scale=0.75] (l) at (0,1.5) {\Large$\I_{1}$};
		\node [inner sep=0.1em,balls, scale=0.75] (l+1) at (1,2.5) {\Large$\I_{2}$};
		\node [inner sep=0.1em,balls, scale=0.75] (l+2) at (2.5,2.5) {\Large$\I_{3}$};
		
		\node [inner sep=0.1em,balls, scale=0.75] (w) at (3.75,0.75) {\Large$\I_{i}$};
		\node [inner sep=0.1em,balls, scale=0.75] (w') at (3.25,-0.5) {\Large$\!\I_{\!i^{\!{\ \atop+}\!\!}1\,}\!\!$};
		
		\node [inner sep=0.1em,balls, scale=0.75] (u-1) at (1,-1) {\Large$\I_{n}$};
		
		\draw (u) edge [-Stealthnew] node[above,sloped, scale=1] {$\relcK $} (l);
		\draw (l) edge [-Stealthnew] node[above,sloped, scale=1] {$\releqcK $} (l+1);
		\draw (l+1) edge [-Stealthnew] node[above,sloped, scale=1] {$\releqcK $} (l+2);
		
		\draw (u) edge [double,dotted] (l+1) ;
		\draw (u) edge [double,dotted] (l+2) ;
		\draw (u) edge [double,dotted] (w) ;
		\draw (u) edge [double,dotted] (w') ;
		
		\draw (l) edge [double,dotted] (u-1) ;
		\draw (l) edge [double,dotted] (l+2) ;
		\draw (l) edge [double,dotted] (w) ;
		\draw (l) edge [double,dotted] (w') ;

		\draw (l+1) edge [double,dotted] (u-1) ;
		\draw (l+1) edge [double,dotted] (w) ;
		\draw (l+1) edge [double,dotted] (w') ;
		
		\draw (l+2) edge [double,dotted] (u-1) ;
		\draw (l+2) edge [double,dotted] (w') ;
		
		\draw (w) edge [double,dotted] (u-1) ;
		
		\draw (l+2) edge [dashed,-Stealthnew,bend left,in=155,out=30] (w);
		\draw (w) edge [-Stealthnew]  node[below,sloped, scale=1] {\rotatebox{180}{$\releqcK $}} (w');
		\draw (w') edge [dashed,-Stealthnew,bend left,in=160,out=20] (u-1);
		
		\draw (u-1) edge [-Stealthnew] node[below,sloped, scale=1] {\rotatebox{180}{$\releqcK $}} (u);

		\begin{scope}[xshift=5.5cm, yshift=-1.25cm]
			
			\draw [->] (0.25,0.75) to    (0.75,0.75) node [right] {$\releqcK$ (not detached)};
			\draw [double,dotted, thick] (0.25,0.25) -- (0.75,0.25) node[right] {detached};
		\end{scope}    
	\end{tikzpicture}
	\caption{Illustration of a \criticalloop-situation of length \( n \) on the semantic side. This situation is due to \Cref{lem:loop_type_two} impossible for \( \releqcK \) if \( \mathbb{B} \) does not exhibits a \criticalloop. 
		If \( \mathbb{B} \) does not exhibit a critical loop, then this situation is due to \Cref{lem:loop_type_two} only possible when there is some \( i\in\{1,\ldots,n\} \) such that \( (\I_i,\I_{i\oplus1}) \) is a detached pair.}
	\label{fig:circle_sem_n}
\end{figure}
If the transitive closure would (hypothetically) add non-detached pairs to \( {\relationstepdashK} \), then the relation \( {\releqcK} \) would contain a cycle of interpretations consisting only of non-detached pairs (such as the cycle illustrated in \Cref{fig:circle_sem_n}).
	To make such situations more easy to handle, we introduce the following notion which make implicit use of \( \releqc{\abst} \), defined in \Cref{def:relation_new}.
	\begin{definition}\label{def:strictcircle}
		Let $\mathbb{B}=(\MC{L},{\Omega},\models,\Bases,\Cup)$ a base logic, let \( \K \in \Bases  \) be a base, and let \( \circ \) be a \basechange\ operator for \( \mathbb{B} \) that satisfies \postulate{G1}--\postulate{G3}, \postulate{G5}, and \postulate{G6}. 
		A sequence of interpretations \( {\circlearrowright} = \I_0, \ldots ,\I_n,\I_0 \) from  \( \Omega \) is said to form a \emph{strict cycle of length \( n+1 \) (with respect to \( \circ \) and \( \K \))} if 
		\begin{itemize}
			\item \( \I_0, \ldots ,\I_n \) are satisfying the conditions
\begin{itemize}\setlength{\itemsep}{0pt}
    \item[]\hspace{-4.4ex}(a)~~\( \I_0 \relcK \I_1 \),
    \item[]\hspace{-4.4ex}(b)~~\(  \I_i \releqcK \I_{i\oplus 1} \) for all \( i\in \{ 1,\ldots,n \} \), where \( \oplus \) is addition \( {\mathrm{mod} (n+1)} \), and
\end{itemize}
			\item \( (\I_{i},\I_{i+ 1}) \) is not a detached pair for each \( i\in \{0,\ldots,n\} \), where \( \oplus \) is addition \( {\mathrm{mod} (n+1)} \).
		\end{itemize}
	\end{definition}
	We will also substitute elements in a strict cycle \( \circlearrowright \) and use therefore the following notion.
	For a substitution \( \sigma=\{ \I_{i_1} \mapsto x_1,\ \I_{i_2} \mapsto x_2,\ \ldots \} \), we denote by \( {\circlearrowright}[\sigma] \) the simultaneously replacement of \( \I_{i_j} \) by \( x_j \) in \( {\circlearrowright} \) for all \(  {\I_{i_j} \mapsto  x_j} \in \sigma  \).
	
	\begin{figure}
		\begin{center}
			\begin{tikzpicture}
				\usetikzlibrary{arrows.meta,decorations.markings}
				\draw[ dashed,
				decoration={markings, mark=at position 0.105 with {\arrow{latex[reversed]}}},
				decoration={markings, mark=at position 0.355 with {\arrow{latex[reversed]}}},
				decoration={markings, mark=at position 0.575 with {\arrow{latex[reversed]}}},
				decoration={markings, mark=at position 0.77 with {\arrow{latex[reversed]}}},
				postaction={decorate}
				] (0,0) circle [ radius=1.75] ;
				\node [balls, scale=0.75] (w*)  {\Large$\omega^*$};

				\node [balls, scale=0.75] (wL)  at ++(270:1.75cm) {\Large$\omega_{\lambda}$};
				\node [balls, scale=0.75] (wT)  at ++(90:1.75cm) {\Large$\omega_{b}$};
				\node [balls, scale=0.75] (wP)  at ++(0:1.75cm) {\Large$\omega_{c}$};
				\node [balls, scale=0.75] (wQ)  at ++(180:1.75cm) {\Large$\omega_{a}$};
				
				\draw (w*) edge [-Stealthnew] (wT) ; 
				\draw (w*) edge [Stealthnew-] (wL) ; 
				\draw (w*) edge [-Stealthnew] (wP) ; 
				\draw (w*) edge [-Stealthnew] (wQ) ; 
			\end{tikzpicture}
		\end{center}
		\caption{Exemplary situation of \Cref{lem:loop_comb}. Four interpretations lying on a strict cycle, connected by another interpretation \( \I^* \).}\label{fig:loopcomb}
	\end{figure}
	
	The following lemma will be useful, and describes situations like in \Cref{fig:loopcomb}.
	\begin{lemma}[cross lemma]\label{lem:loop_comb}
		Let $\mathbb{B}=(\MC{L},{\Omega},\models,\Bases,\Cup)$ \sauerwald{be a loop-free base logic}, let \( \K \in \Bases  \) be a base, and let \( \circ \) be a \basechange\ operator for \( \mathbb{B} \) that satisfies \postulate{G1}--\postulate{G3}, \postulate{G5}, and \postulate{G6}.
		If there are \( \I_0, \ldots ,\I_n \in \Omega  \), with \( n> 3 \), and pairwise distinct \( \lambda,a,b,c \in \{ 0,\ldots,n \} \), such that
		\begin{itemize}\setlength{\itemsep}{0pt}
			\item[]\hspace{-4.4ex}(a)~~\( \I_{0},\I_{1},\ldots,\I_{n},\I_{0} \) is a strict cycle of length \( n+1 \),
			\item[]\hspace{-4.4ex}(b)~~there exists an interpretation \( \I^* \) such that
			\begin{align*}
				\I^* & \releqcK  \I_{a} &  \I^*      & \releqcK  \I_{b} &
				\I^* & \releqcK  \I_{c} & \I_{\lambda} & \releqcK   \I^*    \text{ , and}   %
			\end{align*}
			\item[]\hspace{-4.4ex}(c)~~every pair of \( \releqcK \) considered in (b) is not detached from \( \circ \) in \( \K \),
		\end{itemize}
		then there is a strict cycle of length \( m \) with \( 3\leq m \leq n \).
	\end{lemma}
	\begin{proof}
		We assume \( a<b<c \), and we assume that the path \( \I_c,\ldots,\I_\lambda \) does not contain \( \I_a \) and \( \I_b \) (when seeing \( \releqcK \) as graph).
		All other cases will follow by symmetry. We continue by consider several cases:

		\smallskip
		\noindent
		\emph{The case of \(   \I_{\lambda} \relcK \I^* \).} We obtain \( \I_{\lambda} \relcK \I^*  \releqcK  \I_{c} \releqcK \ldots \releqcK \I_{\lambda}  \), which yields that \( {\circlearrowright_{\lambda c}} =  \I_{\lambda},\I^*,\I_{c},\ldots,\I_\lambda \) is a  strict cycle.  Note that because \( {\circlearrowright_{\lambda c}} \) contains \( \I^* \)  and in addition only elements of \( \{\I_0, \ldots ,\I_n\} \setminus \{\I_{a} , \I_{b}\} \), we have that \( {\circlearrowright_{\lambda c}} \) has a length of at most  \( n\). 
		
		\smallskip
		\noindent
		\emph{The case of \(   \I^* \relcK \I_{c}  \) and no prior case applies.}
		If \(   \I^* \relcK \I_{c}  \), then we obtain \( \I^* \relcK \I_{c}   \releqcK \ldots \releqcK \I_{\lambda}   \releqcK  \I^* \), yielding that \( {\circlearrowright_{c\lambda}} =  \I^*,\I_{c},\ldots,\I_\lambda, \I^* \) is a  strict cycle. 
		Note that because \( {\circlearrowright_{c\lambda}} \) contains \( \I^* \)  and in addition only elements of \( \{\I_0, \ldots ,\I_n\} \setminus \{\I_{a} , \I_{b}\} \), we have that \( {\circlearrowright_{c\lambda}} \) has a length of at most  \( n\). 
		
		\smallskip
		\noindent
		\emph{The case of \(   \I^* \relcK \I_{b}  \) and no prior case applies.}
		In this case we have \( \I_{c} \releqcK \I^* \). We obtain \( \I^* \relcK \I_{b}   \releqcK \ldots \releqcK \I_{c}   \releqcK  \I^* \), which yields that \( {\circlearrowright_{bc}} =  \I^*,\I_{b},\ldots,\I_{c}, \I^* \) is a  strict cycle. 
		Note that because \( {\circlearrowright_{bc}} \) contains, beside of \( \I^* \), only elements of \( \{\I_0, \ldots ,\I_n\} \setminus \{\I_{a} , \I_\lambda\} \), we have that \( {\circlearrowright_{bc}} \) has a length of at most  \( n\). 
		
		\smallskip
		\noindent
		\emph{The case of \(   \I^* \relcK \I_{a}  \) and no prior case applies.}
		In this case we have \( \I_{b} \releqcK \I^* \). We obtain \( \I^* \relcK \I_{a}   \releqcK \ldots \releqcK \I_{b}   \releqcK  \I^* \), which yields that \( {\circlearrowright_{ab}} =  \I^*,\I_{a},\ldots,\I_{b}, \I^* \) is a  strict cycle. 
		Note that because \( {\circlearrowright_{ab}} \) contains, beside of \( \I^* \), only elements of \( \{\I_0, \ldots ,\I_n\} \setminus \{\I_{c} , \I_\lambda\} \), we have that \( {\circlearrowright_{ab}} \) has a length of at most  \( n\). 
		
		\smallskip
		If none of the cases above applies, then we have that \(   \I^* \releqcK \I_{\lambda} \) and \(  \I_{a} \releqcK  \I^* \) and \( \I_{b} \releqcK \I^* \) and \( \I_{c} \releqcK \I^* \) holds. 
		For the following line of arguments, recall that \( a<b<c \) holds.
		We consider the case of \( 0<\lambda<a \); for all other cases (where \( 0<\lambda<a \) does not hold), the line of arguments is symmetric to the proof we present here  in the following for the case of \( 0<\lambda<a \).
		Because \( \I_{0}\allowbreak,\I_{1},\allowbreak\ldots,\allowbreak\I_{n},\I_{0} \) is a strict cycle of length \( n+1 \), we obtain that \( \I_{0} \relcK \I_{1} \releqcK \ldots \releqcK  \I_{\lambda} \releqcK \I^* \releqcK \I_{c}\releqcK \ldots  \releqcK \I_{0} \).
		This show that \( {\circlearrowright_{0\lambda c}} = \I_{0},\I_{1} , \ldots ,  \I_{\lambda},\I^*, \I_{c},\ldots,\I_{0} \) is a strict cycle.
		Because \( {\circlearrowright_{0\lambda c}} \) contains \( \I^* \) and additionally only elements from \( \I_0,\ldots,\I_n \), but not \( \I_{a} \) and \( \I_{b} \), we obtain that \( {\circlearrowright_{0\lambda c}} \) has a length of at most \( n \).
		
		In summary, we obtain a strict cycle of length \( m \) with \( 3\leq m \leq n \) for each case.
	\end{proof}	
	Note that it is not necessary to assume that \( \I^* \) is distinct from \( \I_{0},\ldots,\I_n \) in \Cref{lem:loop_comb}.

The following \Cref{lem:loop_type_two} is a central insight of this step, which shows that for \sauerwald{loop-free base logics} no strict cycles exist in \( {\releqcK} \).
\begin{lemma}\label{lem:loop_type_two}
Let $\mathbb{B}=(\MC{L},{\Omega},\models,\Bases,\Cup)$ \sauerwald{be a loop-free base logic}, let \( \K \in \Bases  \) be a base, and let \( \circ \) be a \basechange\ operator for \( \mathbb{B} \) that satisfies \postulate{G1}--\postulate{G3}, \postulate{G5}, and \postulate{G6}.
If there are three or more interpretations \( \I_0, \ldots ,\I_n \in \Omega  \), i.e. \( n \geq 2 \), such that
\begin{itemize}\setlength{\itemsep}{0pt}
\item[]\hspace{-4.4ex}(a)~~\( \I_0 \relcK \I_1 \),
\item[]\hspace{-4.4ex}(b)~~\(  \I_i \releqcK \I_{i\oplus 1} \) for all \( i\in \{ 1,\ldots,n \} \), where \( \oplus \) is addition \( {\mathrm{mod} (n+1)} \), 
\end{itemize}
then there is some \( i\in\{1,\ldots,n\} \) such that \( (\I_i,\I_{i\oplus1}) \) is a detached pair.
\end{lemma}
\begin{proof}
	Let \( \I_0, \ldots ,\I_n \in \Omega  \) such that Condition (a) and Condition (b) of \Cref{lem:loop_type_two} are satisfied.
	With \( \oplus \) we  denote addition \( {\mathrm{mod} (n+1)} \).
	The proof will be by induction. Note that for \( n=2 \) we obtain the result by \Cref{lem:loop_detach}.
	We proceed the proof for the case of \( n>2 \) and assume that \Cref{lem:loop_type_two} already holds for all \( m \) with \( 2 \leq m <n \).
	A consequence of the induction hypothesis is that there is no strict cycle of length \( c \) for \( 3\leq c\leq n \).
	
	We are striving for a contradiction. 
	Therefore, we assume \( {\circlearrowright_{0n}} = \I_0, \ldots ,\I_n ,\I_0 \) is a strict cycle of length \( n+1 \), which is, due to Condition (a) and Condition (b) from \Cref{lem:loop_type_two}, equivalent to assume that \( (\I_i,\I_{i\oplus1}) \) is not a detached pair for each \( i\in\{1,\ldots,n\} \). 
	The remaining parts of the proof show that the existence of the strict cycle \( {\circlearrowright_{0n}} \) implies existence of a critical loop.
	
	As first step, we show that \( \I_0, \ldots ,\I_n \notin \Mod{\K} \) holds.
	If \( \I_{1} \in \Mod{\K} \), then due \Cref{def:relation_new}, we obtain \( \I_{1} \releqcK \I_{0} \), which contradicts Condition (a). 
	If \( \I_{i} \in \Mod{\K} \) for some \( i\in\{0,2,3,\ldots,n \} \), then, because of Condition (b), there is some \( j \) with \( \I_{j} \releqcK \I_{j\oplus 1} \) and \( \I_{j}\notin\Mod{\K}  \) and \( \I_{j\oplus 1}\in\Mod{\K}  \); which is again impossible due to \Cref{def:relation_new}.
	Thus, we have \( \I_{0},\ldots,\I_{n} \notin \Mod{\K} \) for the remaining parts of the proof.
	
	We continue by showing the existence of several bases, which will form a critical loop.
	\Cref{def:relation_new}  and \Cref{def:detached} together implies that for each \( i\in\{1,\ldots,n\} \) exists a base \( \G_{{i},{i\oplus1}} \in \Bases \) such that
	\begin{equation}\label{eq:loop_type_two_e2}
		\I_{i},\I_{i\oplus1} \models \G_{{i},{i\oplus1}} \text{ and } \I_{i} \models \K \circ \G_{{i},{i\oplus1}} \tag{\#1}
	\end{equation}
	holds.
	Moreover, by \( \I_0 \relcK \I_1 \) from Condition (a) and \( \I_1 \not\models\K \) and \Cref{lem:help}(a), there exists a base \( \G_{0,1} \in \Bases \) such that the following holds:
	\begin{equation}\label{eq:loop_type_two_e1}
		\I_{0},\I_{1}  \models \G_{{0},{1}} \text{ and } \I_0  \models \K \circ \G_{{0},{1}} \text{ and } \I_1  \not\models \K \circ \G_{{0},{1}} . \tag{\#2}
	\end{equation}    
	
	We show that \( \G_{0,1},\G_{1,2},\ldots,\G_{n,0} \) is forming a \criticalloop. 
	To this end we are setting \( \G_{i}=(\K\circ\G_{i,i\oplus 1}) \Cup \G_{i\oplus n,i} \) for each \( i\in\{0,\ldots,n\} \). 
	By \eqref{eq:loop_type_two_e2} and \eqref{eq:loop_type_two_e1} each \( \G_{i} \) is a consistent base with \( \I_i \in \Mod{\G_{i}} \).
	We continue by verifying that Conditions (1)--(3) from \Cref{def:critical_loop} are satisfied.
	\begin{itemize}\setlength{\itemsep}{0pt}
		\item[]\hspace{-4.4ex}(1)~~If \( \K \) is inconsistent, then Condition (1) is immediately satisfied. 
		We consider the case where \( \K \) is consistent and \( \Mod{\K\Cup\G_{i,i\oplus1}}\neq\emptyset \) for some \( i\in \{ 0,\ldots,n \}\). 
		From \postulate{G2} we obtain \( \Mod{\K\circ\G_{i,i\oplus1}} = \Mod{\K\Cup\G_{i,i\oplus1}} \). 
		From \( \I_i \in \Mod{\G_{i}} \) and the definition of $\G_{i}$, we obtain $\I_i\in\Mod{\K\circ\G_{i,i\oplus1}}\cap\Mod{\G_{i\oplus n,i}}$. 
		As \( \Mod{\K\circ\G_{i,i\oplus1}} = \Mod{\K\Cup\G_{i,i\oplus1}} \), we obtain $\I_i\in\Mod{\K\Cup\G_{i,i\oplus1}}\cap\Mod{\G_{i\oplus n,i}}$.
		Consequently, we there exists some \( i\in \{ 0,\ldots,n \}\) such that $\I_i\in\Mod{\K}$, yielding a contradiction to \( \I_{0},\ldots,\I_{n} \notin \Mod{\K} \).
		
		\item[]\hspace{-4.4ex}(2)~~By the postulate  \postulate{G1} we have \( \Mod{\K\circ \G_{i,i \oplus 1}} \subseteq \Mod{\G_{i,i\oplus 1}} \) for each \( i\in\{0,\ldots,n\} \).
		The definition of \( \G_i \) yields \( \Mod{\G_i} \subseteq \Mod{\G_{i,i\oplus 1}\Cup\G_{i\oplus n,i}} \subseteq \Mod{\G_{i,i\oplus 1}} \) for each \( i\in\{0,1,2\} \).
		Substitution of \( i \) by \( i\oplus 1 \) yields \(  \Mod{\G_{i\oplus 1}} \subseteq \Mod{\G_{i\oplus 1,i\oplus 2}\Cup\G_{i,i\oplus 1}} \subseteq \Mod{\G_{i,i\oplus 1}} \); showing that \( \Mod{\G_{i}} \cup \Mod{\G_{i \oplus 1}} \subseteq \Mod{\G_{i,i\oplus 1}} \) holds for each \( i\in\{0,\ldots,n\} \).

		We show that each $ \G_i\Cup \G_j $ is inconsistent, by assuming the contrary, i.e. there are some \( i,j\in\{0,\ldots,n\} \) such that \( i \neq j \) and  $ \G_i\Cup \G_j $ is consistent.
		Because of the commutativity of \( \Cup \), we assume \( i < j \) without loss of generality.    
		By compatibility and definition of \( \G_{i} \) and by definition of \( \G_{j} \), there exists some \( \I^*\in \Mod{ \G_{i\oplus n,i}\Cup  \G_{j\oplus n,j}} \) with \( \I^* \in \min(\Mod{\G_{i,i\oplus 1}},\releqcK) \) and \( \I^* \in \min(\Mod{\G_{j,j\oplus 1}},\releqcK) \).
		Recall that \( \I_{i},\I_{i \oplus 1} \in \Mod{\G_{i,i\oplus 1}} \) and  \( \I_{j},\I_{j \oplus 1}\in \Mod{\G_{j,j\oplus 1}} \).
		Consequently, for all \( k\in\{i,i \oplus 1,j,j\oplus 1\} \) holds  \( \I^* \releqcK \I_{k} \).
		Moreover, because of \eqref{eq:loop_type_two_e2} and \eqref{eq:loop_type_two_e1} we obtain \( \I_{i} \releqcK \I^* \) 
		and \( \I_{j} \releqcK \I^* \)  from \Cref{lem:help}(b).
		From \( \I^*\in \Mod{ \G_{i\oplus n,i}\Cup  \G_{j\oplus n,j}} \) we obtain, by an analogue argumentation, that \( \I_{i\oplus n} \releqcK \I^* \) and \( \I_{j\oplus n} \releqcK \I^* \) holds.
		In summary, we have:
		\begin{equation*}
			\begin{aligned}
				\I^*   & \releqcK \I_{i} & \I^*   & \releqcK \I_{j} & \I^* & \releqcK \I_{i \oplus 1} & \I_{i\oplus n} & \releqcK \I^* \\
				\I_{i} & \releqcK \I^*   & \I_{j} & \releqcK \I^*   & \I^* & \releqcK \I_{j \oplus 1} & \I_{j\oplus n} & \releqcK \I^*
			\end{aligned}
			\tag{\( \boxast 1 \)}\label{eq:loop_type_two_boxast_one}
		\end{equation*}    
		Note that all pairs \( (\I^* , \I_{\xi})\),  \( (\I^*, \I_{\xi\oplus 1} ) \) and \( (\I_{\xi\oplus n} , \I^*) \) with \( \xi \in \{ i,j \} \) are not detached.
		
		We are now striving for a contradiction by showing the existence of a strict cycle with a length  of at most \( n \). Recall that \( {\circlearrowright_{0n}} = \I_0, \ldots ,\I_n ,\I_0 \) is a strict cycle of length \( n+1 \).
		At first, we consider two particular cases:
		\begin{itemize}			
			\item[]\hspace{-4.4ex}(\( \I_i=\I_{j\oplus 1} \)) We obtain a strict cycle of length  of at most \( n \) from \Cref{lem:loop_comb}  by using \( {\circlearrowright_{0n}} \) and setting \( \lambda=i \), \( a=j \), \( b=i\oplus 1 \), and \( c= j\oplus n \).
			Note that \( \lambda,a,b,c \) are pairwise distinct indices.
			
			\item[]\hspace{-4.4ex}(\( \I_j=\I_{i\oplus 1} \)) We obtain a strict cycle of length  of at most \( n \) from \Cref{lem:loop_comb}  by using \( {\circlearrowright_{0n}} \) and setting \( \lambda=i \), \( a=j \), \( b=j\oplus 1 \), and \( c = i\oplus n \).
			Note that \( \lambda,a,b,c \) are pairwise distinct indices.
		\end{itemize}
		For all situations not covered by the cases above, we obtain that \( i,i\oplus 1,j,j\oplus 1 \) are pairwise distinct. Because of \eqref{eq:loop_type_two_boxast_one}, we can apply \Cref{lem:loop_comb}  by using \( {\circlearrowright_{0n}} \) and setting \( \lambda=i \), \( a=i \oplus 1 \), \( b=j \), and \(c =  j\oplus n \). This yields a strict cycle with a length  of at most \( n \).
		
		In summary, for every possible case we obtain a contradiction, which shows that Condition (2) of {\criticalloop}s (cf. \Cref{def:critical_loop}) is satisfied.
		
		\item[]\hspace{-4.4ex}(3)~~We show Condition (3) from \Cref{def:critical_loop} by contradiction.
		Therefore, assume there is a base \( \coverbase \in \Bases \) 
		such that for
		\begin{equation}\label{eq:loop_type_two_f1} \tag{\( \star 1 \)}
			B=\{ \G_{i} \mid \Mod{\coverbase\Cup \G} \neq \emptyset \}  \subseteq \{ \G_{0},\ldots,\G_{n} \} 
		\end{equation}  
		holds \( |B|\geq 3 \) and there exists no base \( \coverbase' \in \Bases \) as required by Condition (3).
		Consequently, for each base \(  \G\in \Bases  \) we have
		\begin{equation*}\label{eq:loop_type_two_f2}
			\Mod{\G} \neq \emptyset \text{ implies }  \Mod{\G} \not \subseteq \Mod{\coverbase} \setminus \left( \Mod{\G_{0,1}}\cup\ldots\cup\Mod{\G_{n,0}} \right) . \tag{\( \star2 \)}
		\end{equation*} 
		
		From \eqref{eq:loop_type_two_f1} we obtain that \( \coverbase \) is consistent, and thus, by \postulate{G3}, that \( \K\circ\coverbase \) is consistent and from satisfaction of \postulate{G1} we obtain \( \Mod{\K\circ\coverbase}\subseteq \Mod{\coverbase}  \). 
		Consequently, because of \eqref{eq:loop_type_two_f2}, we have \( \Mod{\K\circ\coverbase } \not\subseteq \Mod{\coverbase} \setminus \left( \Mod{\G_{0,1}}\cup\ldots\cup\Mod{\G_{n,0}} \right) \).
		This implies that \( \Mod{\K\circ\coverbase} \cap \Mod{\G_{k,k\oplus 1}} \neq \emptyset \) for some \( k\in \{ 0,\ldots, n \} \).  Let \( \omegaMinFromCoverbase \)  be an interpretation with \( \omegaMinFromCoverbase \in \Mod{\K\circ\coverbase} \cap \Mod{\G_{k,k\oplus 1}} \).
		From \postulate{G5} and \postulate{G6},   \( \omegaMinFromCoverbase \in \Mod{\K\circ\coverbase} \cap \Mod{\G_{k,k\oplus 1}} \) and commutativity of \( \Cup \) we obtain
		\begin{equation*}\label{eq:loop_type_two_f4}\tag{\( \star3 \)}
			\omegaMinFromCoverbase \in \min(\Mod{\coverbase},\releqcK)  \cap \Mod{\G_{k,k\oplus 1}} = \min(\Mod{\G_{k,k\oplus 1}\Cup \coverbase},\releqcK) \text{ and }  \I_{k} \releqcK  \omegaMinFromCoverbase, 
		\end{equation*}
		whereby the latter is a direct consequence of \( \omegaMinFromCoverbase\in\Mod{\G_{k,k\oplus 1}} \) and \( \I_{k} \in {\min(\Mod{\G_{k,k\oplus 1}},\releqcK)} \).
		Furthermore, let \( \omegafromCoverbase{i} \) be some interpretation with \( \omegafromCoverbase{i} \in \Mod{\coverbase\Cup \G_{i}} \) for each \( \G_{i}\in B \).
		We show as next for each \( \G_\xi \in B \) and for each \( \I_{\xi\oplus n}^* \in \Mod{\G_{\xi\oplus n}} \) and for each \( \I_{\xi\oplus 1}^* \in \Mod{\G_{\xi\oplus 1}} \) that we have
		\begin{equation*}
			\label{eq:loop_type_two_zeta1} \tag{\( \star 4 \)}
			\begin{aligned}
				\omegaMinFromCoverbase & \releqcK \omegafromCoverbase{\xi} & \I_{\xi\oplus n}^* & \releqcK \omegafromCoverbase{\xi}            & \I_\xi  & \releqcK \omegafromCoverbase{\xi}     \\
				&  & \omegafromCoverbase{\xi}            & \releqcK \I_{\xi\oplus 1}^* & \omegafromCoverbase{\xi} & \releqcK \I_\xi  \ .
			\end{aligned}
		\end{equation*}
		We obtain \( \omegaMinFromCoverbase \releqcK \omegafromCoverbase{\xi} \) from \Cref{lem:help}(b), because \( \omegaMinFromCoverbase,\omegafromCoverbase{\xi} \in \Mod{\coverbase} \) and   \(\omegaMinFromCoverbase\in \Mod{\K\circ\coverbase} \) holds.
		From \( \I_\xi , \omegafromCoverbase{\xi} \in {\min(\Mod{\G_{{\xi},{\xi\oplus 1}}},\releqcK)} \) we directly obtain \( \I_\xi  \releqcK \omegafromCoverbase{\xi} \) and \( \omegafromCoverbase{\xi}  \releqcK \I_\xi \).
		Compatibility of \( \releqc{\abst} \) with \( \circ \), together with the definitions of \( \G_\xi \) and Condition (2), yields the remaining statements of \eqref{eq:loop_type_two_zeta1}.
		
		Moreover, as next step, we show for each \( \G_\xi \in B \) and for each \( \I_{\xi\oplus n}^* \in \Mod{\G_{\xi\oplus n}} \) and for each \( \I_{\xi\oplus 1}^* \in \Mod{\G_{\xi\oplus 1}} \) the following holds:
		\begin{equation*}
			\label{eq:loop_type_two_zeta2} \tag{\( \star 5 \)}
			\begin{aligned}
				\I_{\xi\oplus n}^* & \relcK \I_{\xi}  & & \text{ if and only if } & \I_{\xi\oplus n}^* & \relcK \omegafromCoverbase{\xi} \\
				\I_{\xi} & \relcK \I_{\xi\oplus 1}^*  & & \text{ if and only if } & \omegafromCoverbase{\xi} & \relcK \I_{\xi\oplus 1}^* 
			\end{aligned}
		\end{equation*}
		Observe that \eqref{eq:loop_type_two_zeta2} holds,  otherwise, we would obtain a strict cycle of length 3.
		These strict cycles are directly obtainable from  \eqref{eq:loop_type_two_zeta1}:
		if \( \I_{\xi\oplus n}^* \relcK \I_{\xi} \) and  \(   \omegafromCoverbase{\xi} \releqcK \I_{\xi\oplus n}^*\), obtain the strict cycle \( \I_{\xi\oplus n}^* \relcK \I_{\xi} \releqcK  \omegafromCoverbase{\xi} \releqcK  \I_{\xi\oplus n}^* \) with a length of \( 3 \).
		For all other cases, we obtain analogously a strict cycle of length 3.
		
		Now let \( \ell_{\min}, \ell_{\mathrm{med}}, \ell_{\max} \) be integers with \( 0 \leq \ell_{\min} < \ell_{\mathrm{med}} < \ell_{\max} \leq n\) such that \( \G_{k\oplus \ell_{\min}},\allowbreak \G_{k\oplus \ell_{\mathrm{med}}} , \G_{k\oplus\max}  \in B \) and \( \ell_{\min} \) is the smallest number from \( \{0,\ldots,n\} \)  with \( \G_{k\oplus \ell_{\min}} \in B \), and \( \ell_{\max} \) is the greatest number from \( \{0,\ldots,n\} \)  with \( \G_{k\oplus\max} \in B \).
		For convenience, we will sometimes write \( \ell_{x} \) and \( \I_{x} \),  instead of \( \ell_{k\oplus \ell_x} \) and \( \I_{k\oplus \ell_x} \), respectively, for any \( x\in\{{\min},\mathrm{med},{\max}\} \).
		
		We now establish that replacing \( \omega_i \) in \( {\circlearrowright_{0n}} \) by \( \omegafromCoverbase{i} \) for some \( \G_i\in B \) yields again a strict cycle.
		Remember that each pair given in \eqref{eq:loop_type_two_zeta1} and \eqref{eq:loop_type_two_zeta2} is a non-detached pair.
		Because of this and because \( {\circlearrowright_{0n}} \) is a strict cycle of length \( n+1 \), we obtain from \eqref{eq:loop_type_two_zeta1} and \eqref{eq:loop_type_two_zeta2} that
		\( {\circlearrowright_{0n}}[\sigma] \) is also a strict cycle of length \( n+1 \) for each substitution \( \sigma \) with 
		\begin{equation*}
			\sigma \subseteq \{ \I_{\min} \mapsto \omegafromCoverbase{\min},\   \I_{\mathrm{med}} \mapsto \omegafromCoverbase{\mathrm{med}} ,\  \I_{\max} \mapsto \omegafromCoverbase{\max} \}  \ ,
		\end{equation*} i.e.,
		substituting each \( \I_{x} \) by \( \omegafromCoverbase{x} \) in \( \I_0,\I_1,\ldots,\I_n \), for some of \( x\in\{{\min}, \mathrm{med}, {\max}\} \), yields a strict cycle of length \( n+1 \).  
		
		We consider two cases, the case where \( \G_{k}\Cup\coverbase \) is inconsistent and the case where \( \G_{k}\Cup\coverbase \) is consistent.
		
		\medskip
		\noindent\emph{The case of \( \G_{k}\Cup\coverbase \) is inconsistent.} For this case we have that \( \G_k\notin B \) holds.
		Remember that by \eqref{eq:loop_type_two_f4} and \eqref{eq:loop_type_two_zeta1} the following holds:
		\begin{align*}
			\I_{k} & \releqcK  \omegaMinFromCoverbase               & \omegaMinFromCoverbase & \releqcK \omegafromCoverbase{\mathrm{med}} \\
			\omegaMinFromCoverbase & \releqcK \omegafromCoverbase{\min} & \omegaMinFromCoverbase & \releqcK \omegafromCoverbase{\max}
		\end{align*}
		We obtain that there exists a strict cycle with a length of at most \( n \) by using \Cref{lem:loop_comb} when setting \( \lambda=k \), \( a=k\oplus\ell_{\min} \), \( b=k\oplus\ell_{\mathrm{med}} \) and \( c=k\oplus\ell_{\max} \), using the strict cycle \( {\circlearrowright_{0n}}[\I_a \mapsto \omegafromCoverbase{p},\I_b \mapsto \omegafromCoverbase{\tau},\I_c \mapsto \omegafromCoverbase{q}] \).

		\medskip
		\noindent\emph{The case of \( \G_{k}\Cup\coverbase \) is consistent.} This case is equivalent to having \( \ell_{\min}=0 \), i.e., \( \G_{\min} =\G_k \in B \).
		Consequently, we have that \( \omegafromCoverbase{k} \in \Mod{\G_{k}\Cup\coverbase}\).
		From the definition of \( \G_k \), and from $\omegafromCoverbase{k},\omegaMinFromCoverbase\in\Mod{\G_{{k},{k\oplus1}}}$ with $\omegafromCoverbase{k}\in\Mod{\K\circ\G_{{k},{k\oplus1}}}$, and from compatibility and min-retractivity we also obtain $\omegaMinFromCoverbase,\omegafromCoverbase{k}\in\Mod{\K\circ\G_{{k},{k\oplus1}}}\cap\Mod{\coverbase} = {\min(\Mod{\G_{{k},{k\oplus 1}}},\releqcK) \cap\Mod{\coverbase}}$. 
		Consequently, all observations for \( \omegaMinFromCoverbase \) do also hold for \( \omegafromCoverbase{k} \); in particular, this applies to \eqref{eq:loop_type_two_f4}--\eqref{eq:loop_type_two_zeta2}.
		Thus, we assume \( \omegaMinFromCoverbase=\omegafromCoverbase{k} \) in the following.
		
		Together with \eqref{eq:loop_type_two_zeta1} and \eqref{eq:loop_type_two_zeta2} we can summarize as follows:
		\begin{equation*}\label{eq:loop_type_two_ellmin0} \tag{\( \star 6 \)}
			\begin{aligned}
				{\omegaMinFromCoverbase} & \releqcK  \omegafromCoverbase{\max}          & {\omegaMinFromCoverbase}           & \releqcK  \I_{k\oplus 1} & {\omegaMinFromCoverbase}   & \releqcK  \I_{k} \\
				{\omegaMinFromCoverbase} & \releqcK  \omegafromCoverbase{\mathrm{med}} & \I_{k\oplus n} & \releqcK   {\omegaMinFromCoverbase}          & \I_{k} & \releqcK   {\omegaMinFromCoverbase}
			\end{aligned}
		\end{equation*}
		
		We are striving for a contradiction by showing the existence of a strict cycle of length that is strictly smaller than \( n+1 \). 
		Therefore, we will make use of \Cref{lem:loop_comb}, whenever that is possible.
		We consider three cases in the following, depending on the values of \( \ell_{\mathrm{med}} \) and \( \ell_{\max} \). Recall that \( 1\leq \ell_{\mathrm{med}} < \ell_{\max} \leq n \) holds.        
		\begin{itemize}
			\item[]\hspace{-4.4ex}(\(  \ell_{\mathrm{med}} \neq 1 \)) Because of \eqref{eq:loop_type_two_ellmin0}, we can directly apply \Cref{lem:loop_comb} by setting \( \lambda=k \), \( a=k\oplus 1 \), \( b=k\oplus\ell_{\mathrm{med}} \) and \( c=k\oplus\ell_{\max} \), and by using the strict cycle \( {\circlearrowright_{0n}}[\I_\tau \mapsto \omegafromCoverbase{\tau},\allowbreak \I_q \mapsto \omegafromCoverbase{q}] \) for \Cref{lem:loop_comb}, which yields a strict cycle with a length of at most \( n \).
			
			\item[]\hspace{-4.4ex}(\(  \ell_{\max}\neq n \)) 
			We apply \Cref{lem:loop_comb} by setting \( \lambda=k \), \( a=k\oplus\ell_{\mathrm{med}} \), \( b=k\oplus\ell_{\max} \) and \( c=k\oplus n \), and by using the strict cycle \( {\circlearrowright_{0n}}[\I_p \mapsto \omegafromCoverbase{p},\I_\tau \mapsto \omegafromCoverbase{\tau}] \), which yields again  a strict cycle with a length of at most \( n \).
			
			\item[]\hspace{-4.4ex}(\(  \ell_{\mathrm{med}} = 1 \) and \(  \ell_{\max} = n \)) Because of \(  \ell_{\max} = n \), we have that \( \omegafromCoverbase{\max} \in \Mod{\G_{k\oplus n}} \).
			Together with \( {\omegaMinFromCoverbase} \in \Mod{\G_k} \), we obtain from \eqref{eq:loop_type_two_zeta1} that \( \omegafromCoverbase{\max} \releqcK  {\omegaMinFromCoverbase} \).
			From the min-retractivity of \( \releqcK \) and \( \G_{k\oplus n}\in B \), we obtain \( \omegafromCoverbase{\max} \in \min(\Mod{\coverbase},\releqcK) \), which implies \( \omegafromCoverbase{\max} \releqcK \omegafromCoverbase{\mathrm{med}} \).
			
			If \( \omegafromCoverbase{\max} \relcK \omegafromCoverbase{\mathrm{med}} \), then we obtain the strict cycle \( \omegafromCoverbase{\max} ,\allowbreak\omegafromCoverbase{\mathrm{med}},\allowbreak \I_{k \oplus (\ell_{\mathrm{med}}+1)},\allowbreak\ldots,\allowbreak\I_{k \oplus(\ell_{\max}+n)}, \omegafromCoverbase{\max}   \) which has a length of at most \( n \). 
			Due to the induction hypothesis there is no such strict cycle, and thus, \( {\omegafromCoverbase{\max} \relcK \omegafromCoverbase{\mathrm{med}}} \) is impossible.
			From totality of \( \releqcK \) we obtain that \(  \omegafromCoverbase{\mathrm{med}} \releqcK \omegafromCoverbase{\max} \) holds.                 
			If \( k\oplus \ell_{\max} =0 \), then we obtain the strict cycle  \( \omegafromCoverbase{\max} ,\allowbreak \I_{k}, \omegafromCoverbase{\mathrm{med}},\allowbreak \omegafromCoverbase{\max}\) of length 3.         
			If \( k =0 \), then we obtain the strict cycle  \( \I_{k}, \omegafromCoverbase{\mathrm{med}},\allowbreak \omegafromCoverbase{\max},\I_{k}\) of length 3.         
			If none of the prior cases  applies, then \( {\circlearrowright} =  \I_0,\I_1,\allowbreak\ldots,\omegafromCoverbase{\max},\allowbreak\omegafromCoverbase{\mathrm{med}},\allowbreak\ldots,\I_0 \) is a strict cycle.
			Note that \( \I_k \) is not part of the strict cycle \( {\circlearrowright} \), and consequently the length of \( {\circlearrowright} \) is bounded by \( n \).
		\end{itemize}
		We obtain a contradiction in every case, which shows that Condition (3) of \Cref{def:critical_loop} is satisfied.
	\end{itemize}
	In summary, assuming that each \( (\I_i,\I_{i\oplus1}) \) is not a detached pair leads to formation of a \criticalloop\ by \( \G_{0,1},\G_{1,2},\ldots,\G_{n,0} \);
	contradicting the \criticalloop-freeness of \( \mathbb{B} \). Consequently, at least one \( (\I_i,\I_{i\oplus1}) \) has to be a detached pair. This completes the proof of \Cref{lem:loop_type_two}.
\end{proof}

By employing \Cref{lem:loop_type_two} we show now that transformation of \( \relationstepdashK\) to \( \relationstepdashdashK \) by taking the transitive closure only adds detached pairs.

\begin{lemma}\label{lem:extendsnonondetached}
Let $\mathbb{B}=(\MC{L},{\Omega},\models,\Bases,\Cup)$ \sauerwald{be a loop-free base logic}, let \( \K \in \Bases  \) be a base, and let \( \circ \) be a {\basechange} operator for \( \mathbb{B} \) that satisfies \postulate{G1}--\postulate{G3}, \postulate{G5}, and \postulate{G6}. The following holds:
\begin{equation*}
{\relationstepdashK}\subseteq  {\relationstepdashdashK} \subseteq {\releqcK} 
\end{equation*}
\end{lemma}
\begin{proof}
By construction of \( {\relationstepdashdashK} \), we have \( {\relationstepdashK} \subseteq  {\relationstepdashdashK} \), and by construction of \( {\relationstepdashK} \) we have \( {\relationstepdashK} \subseteq {\releqcK} \). 
To show that \( {\relationstepdashdashK} \subseteq {\releqcK}  \) holds, we assume the contrary, i.e., there exists a pair \( (\I_{1},\I_{0}) \in  {\relationstepdashdashK} \) such that \( \I_1 \not\releqcK \I_0  \).
From \( {\relationstepdashK} \subseteq {\releqcK} \) we obtain \( \I_1 \not\relationstepdashK \I_0  \) and
because \( {\releqcK} \) is a total relation, we have that \( \I_0 \relcK \I_1  \) holds.
By the definition of transitive closure (cf. \Cref{sec:relations}), 
there exists \( \I_{2},\ldots,\I_{n} \in \Omega \), for some \( n\in\mathbb{N} \), such that \( \I_1 \relationstepdashK\I_{2} \) and \( \I_{n} \relationstepdashK\I_{0} \) and \( \I_{i} \relationstepdashK\I_{i+1} \) for each \( i\in\{2,\ldots,n-1\} \). From \( {\relationstepdashK} \subseteq {\releqcK} \), we obtain \( \I_0 \relcK \I_1 \releqcK \I_2 \ldots\releqcK \I_n \releqcK \I_0 \).
We obtain a contradiction, because \( {\relationstepdashK} \) does not contain any detached pairs, but due to \Cref{lem:loop_type_two} there is some \( i\in\{2,\ldots,n-1\} \) such that \( (\I_{i},\I_{i+1}) \) is a detached pair. Consequently, we obtain \( {\relationstepdashdashK} \subseteq {\releqcK}  \).  \qedhere

\end{proof}

Combining \Cref{lem:gettransitive_step1} and \Cref{lem:extendsnonondetached} yields that \( \relationstepdashdashK \) is a (possibly non-total) preorder with useful properties. 
In particular, the sets of minimal models for every base \( \G\in\Bases \) coincide for \( \relationstepdashK \) and \( \relationstepdashdashK \).
\begin{lemma}\label{lem:gettransitive_step2}
Let $\mathbb{B}=(\MC{L},{\Omega},\models,\Bases,\Cup)$ be a base logic which does not admit a {\criticalloop},  let \( \K\in\Bases \) and let \( \circ \) be a base change operator for \( \mathbb{B} \) which satisfies \postulate{G1}--\postulate{G3}, \postulate{G5}, and \postulate{G6}.
Then \( {\relationstepdashdashK} \) is a min-complete preorder and for any $\G \in \Bases$ holds $\min(\Mod{\G},\relationstepdashdashK)=\min(\Mod{\G},\relationstepdashK)$.
\end{lemma}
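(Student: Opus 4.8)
The plan is to read this lemma off almost directly from the two preceding results, \Cref{lem:gettransitive_step1} and \Cref{lem:extendsnonondetached}, so that no further critical-loop argument is needed at this point. There are three things to establish: that $\relationstepdashdashK$ is a preorder, that it is min-complete, and that its minimal sets coincide with those of $\relationstepdashK$. The guiding observation is the chain of inclusions ${\relationstepdashK}\subseteq{\relationstepdashdashK}\subseteq{\releqcK}$ supplied by \Cref{lem:extendsnonondetached}, combined with the fact that the minimal-set operator is monotone with respect to relation inclusion under the notion of $\min$ used in this paper.

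First I would settle the equality of minimal sets. For relations $R\subseteq R'$ on $\Omega$ and any $\Omega'\subseteq\Omega$, the definition of $\preceq$-minimality as ``dominating every element of $\Omega'$'' immediately gives $\min(\Omega',R)\subseteq\min(\Omega',R')$: if $\I$ satisfies $\I\mathrel{R}\I'$ for all $\I'\in\Omega'$, then a fortiori $\I\mathrel{R'}\I'$ for all such $\I'$, so $\I$ is $R'$-minimal. (It is essential to use this ``dominates all'' formulation rather than the ``no strictly smaller neighbour'' characterization, since neither $\relationstepdashK$ nor $\relationstepdashdashK$ is total.) Applying this to the inclusions from \Cref{lem:extendsnonondetached} yields, for every $\G\in\Bases$, the chain $\min(\Mod{\G},\relationstepdashK)\subseteq\min(\Mod{\G},\relationstepdashdashK)\subseteq\min(\Mod{\G},\releqcK)$. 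By \Cref{lem:gettransitive_step1} the two outer sets coincide, namely $\min(\Mod{\G},\relationstepdashK)=\min(\Mod{\G},\releqcK)$; hence the chain collapses and the middle term equals both, establishing $\min(\Mod{\G},\relationstepdashdashK)=\min(\Mod{\G},\relationstepdashK)$ as required.

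Next I would verify that $\relationstepdashdashK$ is a preorder. Transitivity holds by construction, since $\relationstepdashdashK$ is the transitive closure $TC(\relationstepdashK)$. Reflexivity follows because $\relationstepdashK$ is reflexive by \Cref{lem:gettransitive_step1} and $\relationstepdashK\subseteq\relationstepdashdashK$. Finally, min-completeness is a consequence of the equality just proved: for any consistent $\G\in\Bases$ we have $\min(\Mod{\G},\relationstepdashK)\neq\emptyset$ by the min-completeness of $\relationstepdashK$ (again \Cref{lem:gettransitive_step1}), and by the established equality this set equals $\min(\Mod{\G},\relationstepdashdashK)$, which is therefore also non-empty.

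The only point that requires care is the direction of the monotonicity of $\min$ under relation inclusion and the attendant choice of the ``dominates all'' definition of minimality; the genuinely difficult content---that the transitive closure adds only detached pairs and therefore stays inside $\releqcK$---has already been discharged in \Cref{lem:extendsnonondetached} via the critical-loop analysis of \Cref{lem:loop_type_two}. Consequently I do not expect any substantial obstacle remaining at this step, and the proof should amount to the short sandwiching argument above.
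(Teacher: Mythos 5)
Your proof is correct and takes essentially the same route as the paper's: both rest on the sandwich ${\relationstepdashK}\subseteq{\relationstepdashdashK}\subseteq{\releqcK}$ from \Cref{lem:extendsnonondetached} together with the equality $\min(\Mod{\G},\relationstepdashK)=\min(\Mod{\G},\releqcK)$ from \Cref{lem:gettransitive_step1}, and then obtain reflexivity, transitivity, and min-completeness exactly as you do. Your explicit monotonicity-of-$\min$ observation (valid because the paper defines minimality as dominating every element) is just a cleaner spelling-out of the paper's terser justification that the transitive closure adds only detached pairs.
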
	
\begin{proof}
Because of \Cref{lem:extendsnonondetached} we have $\min(\Mod{\G},\relationstepdashdashK)=\min(\Mod{\G},\relationstepdashK)$  for any $\G \in \Bases$, since \( {\relationstepdashdashK} \setminus {\setAllDetached} = {\releqcK} \setminus {\setAllDetached} \).
Recall that by \Cref{lem:gettransitive_step1} we have that \( \relationstepdashK\) is min-complete and reflexive.
Consequently, the transitive closure \( \relationstepdashdashK \) of \( \relationstepdashK\)  is a preorder.
Moreover, as in the proof of \Cref{lem:gettransitive_step1}, from $\min(\Mod{\G},\relationstepdashdashK)=\min(\Mod{\G},\relationstepdashK)$ we obtain that min-completeness carries over from \( \relationstepdashK\) to \( \relationstepdashdashK \).
\end{proof}

\paragraph{\textbf{Step III: Linearizing.}}
As last step, we extend \( \relationstepdashdashK \) to a total relation without losing transitivity.
In order to obtain totality, we make use of the following result from Hansson (see also \Cref{sec:enc_operators}). Note that this theorem requires the \emph{axiom of choice}.

\medskip
\noindent \textsc{\Cref{thm:tpo_extension} [{{Hansson} \cite[Lemma 3]{KS_Hansson1968}}].}
\textit{
    Assume the \emph{axiom of choice}.
    For every preorder \( {\leq} \) on a set \( X \) there exists a total preorder \( {\leq^\mathrm{lin}} \) on \( X \) such that
\begin{itemize}
\item if \( x \leq y  \), then \( x \leq^\mathrm{lin} y \), and
\item if \( x \leq y  \) and \( y \not\leq x  \), then \( x \leq^\mathrm{lin} y  \) and \( y \not\leq^\mathrm{lin} x  \).
\end{itemize}}      
\pagebreak[3]

As stated in \Cref{lem:gettransitive_step2}, the relation $\relationstepdashdashK$ is a preorder.
Thus, we can safely apply \Cref{thm:tpo_extension} to obtain $\rreleqcK$ from $\relationstepdashdashK$ through extension, i.e., \( {\rreleqcK}=(TC({\releqcK}\setminus\setAllDetached))^\mathrm{lin} \). The resulting relation $\rreleqcK$ is then a total preorder, while it still coincides with $\relationstepdashdashK$ regarding the relevant properties. 
Combining \Cref{thm:tpo_extension}, \Cref{lem:gettransitive_step1} and \Cref{lem:gettransitive_step2} we obtain the desired result.

\begin{proposition}\label{prop:gettransitive}
If	$\mathbb{B}$ \sauerwald{is a loop-free base logic}, then, for any given \basechange\ operator $ \circ $ for $\mathbb{B}$ satisfying \postulate{G1}--\postulate{G3}, \postulate{G5}, and \postulate{G6},
the mapping $\rreleqc{\abst}: \K \mapsto {\rreleqcK}$ is a min-friendly quasi-faithful preorder assignment compatible with $\circ$.
\end{proposition}
\begin{proof}
From \Cref{lem:gettransitive_step2} we obtain that $\relationstepdashdash{\abst}$ is a min-complete preorder assignment.
Application of \Cref{thm:tpo_extension} yields a total preorder $\rreleqcK$.
Observe that linearization by \Cref{thm:tpo_extension} retains strict relations, i.e., if \( \I_1  \relationstepdashdashK \I_2 \) and \( \I_2  \not\relationstepdashdashK \I_1 \), then \( \I_1 \rreleqcK \I_2  \) and \( \I_2 \not\rreleqcK \I_1 \). 
Thus, we have \( \I_1 \in \min(\Mod{\G},\relationstepdashdashK) \) and only if \( \I_1 \in \min(\Mod{\G},\rreleqcK) \), which yields \( \min(\Mod{\G},\relationstepdashdashK)=\min(\Mod{\G},\rreleqcK) \) for each base \( \G \in \mathbb{B} \).
Consequently, min-completeness carries over from \( \relationstepdashdashK \) to \( \rreleqcK \).
Moreover, by \Cref{lem:gettransitive_step1} and \Cref{lem:gettransitive_step2} we obtain \( \min(\Mod{\G},\rreleqcK)=\min(\Mod{\G},\releqcK) \) for each base \( \G \) of \( \mathbb{B} \).
As every \( \rreleqcK \) is transitive and total, we obtain that \( \rreleqcK \) is min-retractive and thus,  $\rreleqc{\abst}$ is a min-friendly assignment.
Because \( \releqc{\abst} \) is a quasi-faithful assignment which is compatible with \( \circ \) and we have \( \min(\Mod{\G},\rreleqcK)=\min(\Mod{\G},\releqcK) \) for each \( \K\in\Bases \), we also obtain that \( \rreleqc{\abst} \) is a quasi-faithful assignment which is compatible with \( \circ \).
\end{proof}

This completes the argument regarding the correspondence between the absence of \criticalloop s and total-preorder-representability, by establishing that the former is also sufficient for the latter. Obviously, \Cref{thm:when_tranisitive} (I) is a direct consequence of \Cref{prop:gettransitive}.

\clearpage
\section{Proof for Theorem \ref{thm:acyctpo}}\label{sec:app_acyc}

\sauerwald{In this section, we present a  proof for \Cref{thm:acyctpo} from \Cref{sec:acycresult}.
We start with repeating the theorem.

\medskip
\noindent \textsc{\Cref{thm:acyctpo}.}
    \textit{Let $\mathbb{B} = (\MC{L},\Omega,\models,\Bases,\Cup)$ be a base logic and let $ \circ $ be a change operator for $ \mathbb{B} $ that satisfies \postulate{G1}--\postulate{G3}, \postulate{G5}, and \postulate{G6}.
    Then $ \circ $ is acyclic if and only if $ \circ $ is total-preorder-representable.}

\medskip
We will show the ``if''-part of \Cref{thm:acyctpo} in \Cref{sec:tpo2acyc}, that is acyclicity of {\basechange} operators, i.e., {\basechange} operators that satisfy \postulate{Acyc}, implies total-preorder-representability. 
In \Cref{sec:acyc2tpo} we show the ``only if''-part of \Cref{thm:acyctpo}, that is  total-preorder-representability implies  acyclicity of {\basechange} operators.}

\subsection{From Total-Preorder-Representability to Acyclicity}\label{sec:tpo2acyc}

\sauerwald{We show that total-preorder-representability implies satisfaction of \postulate{Acyc}.}

\begin{proposition}\label{prop:acyc_tpo}
	Let $\mathbb{B} = (\MC{L},\Omega,\models,\Bases,\Cup)$ be a base logic and let $ \circ $ be a change operator for $ \mathbb{B} $ that satisfies \postulate{G1}--\postulate{G3}, \postulate{G5}, and \postulate{G6}.
	If $ \circ $ is total-preorder-representable, then $ \circ $ is \emph{acyclic}.
\end{proposition}
\begin{proof} 
		Let $\K\in \Bases$ and let $\G_0,\ldots,\G_n \in \Bases$ with $\Mod{\G_i \Cup (\K\circ \G_{i\oplus 1})}\neq\emptyset$ for each $0\leq i < n$. %
        We will show that $\Mod{\G_0 \Cup (\K\circ \G_n)}\neq\emptyset$ holds.
		\Cref{thm:rep2} yields that there exists a min-complete quasi-faithful preorder assignment $\releq{\abst}$ compatible with~$ \circ $.
        Recall that this implies that \( \releqK \) is transitive  and that we have \( \Mod{\K \circ \G} = \min(\Mod{\G},\releqK) \) for all \( \G\in\Bases \) .
		From the latter and from $\Mod{\G_n \Cup (\K \circ \G_{0})}\neq\emptyset$, we obtain that $ \min(\Mod{\G_{0}},\releqK) \cap \Mod{\G_n}\neq \emptyset $ holds. %
		Analogously, for each $ 0 \leq i < n $, we obtain that $ \min(\Mod{\G_i},\releqK) \cap \Mod{\G_{i\oplus 1}} \neq \emptyset $ holds.
        In the following, we denote with $ \omega_{i\oplus 1,i} $ an element of $ \min(\Mod{\G_i},\releqK)\cap \Mod{\G_{i\oplus 1}} $, for each $ 0 \leq i \leq n $. 
		Note that $ \omega_{i\oplus 1,i} $ and $ \omega_{i\oplus 2,i\oplus 1} $ are elements of $ \Mod{\G_{i\oplus 1}} $. 
		Since $ \omega_{i\oplus 2,i\oplus 1} \in \min(\Mod{\G_{i\oplus 1}},\releqK) $ holds, we obtain $ \omega_{i\oplus 2,i\oplus 1} \releqK \omega_{i\oplus 1,i} $.
		By subsequent application of transitivity of $ \releqK $, we obtain that $ \omega_{i\oplus 1,i} \releqK \omega_{j\oplus 1,j} $ holds for every $ 0\leq i,j \leq n $.
		Consequently, we obtain $ \omega_{0,n} \releqK \omega_{n,n-1}  $. 
		From this,  $ \omega_{0,n} , \omega_{n,n-1} \in \Mod{\G_n} $ and $ \omega_{n,n-1} \in \min(\Mod{\G_n},\releqcK) $, we obtain that $  \omega_{0,n} \in \min(\Mod{\G_n},\releqcK) $ holds.
		Because we have $ \omega_{0,n}\in\Mod{\G_{0}} $, we obtain $ \omega_{0,n} \in \min(\Mod{\G_n},\releqcK) \cap \Mod{\G_{0}}  $.
		From the compatibility of $ \releq{\abst} $ with $ \circ $, we finally obtain $\Mod{\G_{0} \Cup (\K\circ \G_n)}\neq\emptyset$.\qedhere
\end{proof}

\subsection{From Acyclicity to Total-Preorder-Representability}\label{sec:acyc2tpo}

We will now show that acyclicity is also sufficient. To this end, we need to argue that any {\basechange} operator $\circ$ that satisfies \postulate{Acyc} and  \postulate{G1}--\postulate{G3}, \postulate{G5}, and \postulate{G6} gives rise to a compatible min-complete quasi-faithful preorder assignment $\rreleqc{(.)}$.
We will obtain $\rreleqc{(.)}$ via a step-wise transformation of the assignment $\releqc{(.)}$ from \Cref{def:relation_new} (defined in \Cref{sec:transformation2tpo}).
The transformation steps are the same three steps as in the proof\footnote{As the overall proof of \Cref{thm:when_tranisitive} is more involved, we decided to have a complete and coherent presentation proof of \Cref{thm:when_tranisitive} in \Cref{sec:app_loops_new}. Consequently, to avoid duplication, some concepts and notions used here are defined and motivated in \Cref{sec:app_loops_new}.} of \Cref{thm:when_tranisitive}, respectively \Cref{prop:encodedown}, fully given in \Cref{sec:app_loops_new}, but the proof of correctness is different. 
We will perform the following transformation steps:
\begin{align*}
     {\relationstepdashK} & = {\releqcK\setminus\setAllDetached} \tag{Step I: remove detached pairs}\\
    {\relationstepdashdashK} & = TC({\relationstepdashK}) =TC({{\releqcK}\setminus\setAllDetached}) \tag{Step II: transitive closure} \\
   {\rreleqcK} & =   (\relationstepdashdashK)^\mathrm{lin}   =   (TC({\relationstepdashK}))^\mathrm{lin} =  (TC({\releqcK}\setminus\setAllDetached))^\mathrm{lin} \tag{Step III: linearizing}
\end{align*}
The transformation starts with a min-complete quasi-faithful assignment compatible with $\circ$ by \Cref{prop:agm_withoutg4}, whereby \( \circ \)  satisfies \postulate{Acyc} and  \postulate{G1}--\postulate{G3}, \postulate{G5}, and \postulate{G6}.
In the first step all detached pairs from \( \releqcK \) are removed (cf. \Cref{def:detached}), resulting in \( {\relationstepdashK} = {\releqcK\setminus\setAllDetached} \).
For the second step, we take the transitive closure of \( \relationstepdashK \), yielding \( {\relationstepdashdashK}=TC({\relationstepdashK})=TC({{\releqcK}\setminus\setAllDetached}) \).
Finally,  we are obtaining $\rreleqc{(.)}$ by ``linearizing`` \( \relationstepdashK \) to a total preorder, i.e., for each \( \K \) we have \( {\rreleqcK}=(TC({\releqcK}\setminus\setAllDetached))^\mathrm{lin} \).

In the following, we show that if \( \circ \) is an acyclic {\basechange} operator, then the transformation above yields an min-complete quasi-faithful preorder assignment compatible with \( \circ \).
We start with the central observation that for acyclic {\basechange} operators, all situations in which \( \releqcK \) is not transitive or form a cycle involve detached pairs.
\pagebreak[3]
\begin{lemma}\label{lem:loop_detach_acyc_n}
    Let $\mathbb{B}=(\MC{L},{\Omega},\models,\Bases,\Cup)$ a base logic, let \( \K \in \Bases  \) be a base, and let \( \circ \) be an acyclic \basechange\ operator for \( \mathbb{B} \) that satisfies \postulate{G1}--\postulate{G3}, \postulate{G5}, and \postulate{G6}.
    If there are three or more interpretations \( \I_0, \ldots ,\I_n \in \Omega  \), i.e., \( n \geq 2 \), such that
    \begin{itemize}\setlength{\itemsep}{0pt}
        \item[]\hspace{-4.4ex}(a)~~\( \I_0 \relcK \I_1 \),
        \item[]\hspace{-4.4ex}(b)~~\(  \I_i \releqcK \I_{i\oplus 1} \) for all \( i\in \{ 1,\ldots,n \} \), where \( \oplus \) is addition \( {\mathrm{mod} (n+1)} \), 
    \end{itemize}
    then there is some \( i\in\{1,\ldots,n\} \) such that \( (\I_i,\I_{i\oplus1}) \) is a detached pair.
\end{lemma}
\begin{proof}
    The proof will be by contradiction.      With \( \oplus \) we  denote addition \( {\mathrm{mod} (n+1)} \).
    Let \( \I_0, \ldots ,\I_n \in \Omega  \) such that Condition (a) and Condition (b) of \Cref{lem:loop_detach_acyc_n} are satisfied.
    Towards a contradiction, we assume that for each \( i\in\{1,\ldots,n\} \) the pair \( (\I_i,\I_{i\oplus1}) \) is not detached from $ \circ $ in $ \K $. 
    By employing these assumptions, Condition (a) and \Cref{def:relation_new} together imply that for each \( i\in\{1,\ldots,n\} \) exists a base \( \G_{{i},{i\oplus1}} \in \Bases \) such that
    \begin{equation}\label{eq:loop_type_two_e2acyc}
        \I_{i},\I_{i\oplus1} \models \G_{{i},{i\oplus1}} \text{ and } \I_{i} \models \K \circ \G_{{i},{i\oplus1}} \tag{\#1}
    \end{equation}
    holds.
    Moreover, by \( \I_0 \relcK \I_1 \) from Condition (a) and \( \I_1 \not\models\K \) and \Cref{lem:help}(a), there exists a base \( \G_{0,1} \in \Bases \) such that the following holds:
    \begin{equation}\label{eq:loop_type_two_e1acyc}
        \I_{0},\I_{1}  \models \G_{{0},{1}} \text{ and } \I_0  \models \K \circ \G_{{0},{1}} \text{ and } \I_1  \not\models \K \circ \G_{{0},{1}} . \tag{\#2}
    \end{equation}    
    In summary, there are bases \( \G_{0}=\G_{{1},{2}} \), ..., \( \G_{n}=\G_{{0},{1}} \) such that \( \Mod{\G_{i}\Cup(\K\circ\G_{i \oplus 1})} \neq \emptyset \) holds for \( 0 \leq i \leq n \). %
    From this last observation and because \( \circ \) is an acyclic \basechange\ operator for \( \mathbb{B} \) we obtain \( \Mod{\G_{0}\Cup(\K\circ\G_{n})}  = \Mod{\G_{{1},{2}}\Cup(\K\circ\G_{{0},{1}})} \neq \emptyset \).
        In the following let \( \I_x  \) denote an interpretation from \( \Mod{\G_{{1},{2}} \Cup (\K\circ\G_{{0},{1}})} \). 
        By using \Cref{thm:rep1}  (and the proof thereof), we obtain \( \I_x \in \min(\Mod{\G_{{0},{1}}},\releqcK) = \Mod{\K\circ \G_{{0},{1}}} \) from \( \I_x \in \Mod{\K\circ\G_{{0},{1}}} \).
        Moreover, by employing \Cref{thm:rep1} again, we obtain \( \I_{1} \in {\min(\Mod{\G_{{1},{2}}},\releqcK)} = \Mod{\K\circ \G_{{1},{2}}} \) from \eqref{eq:loop_type_two_e2acyc}. 
        This together with \( \I_{x},\I_{1} \models \G_{{1},{2}} \) yields \( \I_{1} \releqcK \I_{x} \).
        Recall that \( \releqcK \) is a min-retractive relation and consequently, we obtain \( \I_{1} \in \min(\Mod{\G_{{0},{1}}},\releqcK) = \Mod{\K\circ \G_{{0},{1}}} \) from \( \I_x \in \min(\Mod{\G_{{0},{1}}},\releqcK) \) and \( \I_{1} \releqcK \I_{x} \) and \( \I_{x},\I_{1} \models \G_{{0},{1}} \).
        This last observation yields a contraction to \( \I_1 \not\models \K\circ\G_{{0},{1}}  \) from \eqref{eq:loop_type_two_e1acyc}.
\end{proof}

We show that acyclicity implies total-preorder-representability.

\begin{proposition}\label{prop:tpo_acyc}
    Let $\mathbb{B} = (\MC{L},\Omega,\models,\Bases,\Cup)$ be a base logic and let $ \circ $ be a change operator for $ \mathbb{B} $ that satisfies \postulate{G1}--\postulate{G3}, \postulate{G5}, and \postulate{G6}.
    If $ \circ $ is acyclic, then $ \circ $ is total-preorder-representable.
\end{proposition}
\begin{proof}
    As shown (cf. \Cref{prop:agm_withoutg4}) we have that \( \releqc{\abst} \) is a min-expressible min-friendly quasi-faithful assignment that is compatible with \( \circ \).
    In the following will show that \( \rreleqc{\abst} \) is a min-expressible min-friendly quasi-faithful preorder assignment that is compatible with \( \circ \).
    
\paragraph{\textbf{Step I: Removing detached pairs.}} As shown in \Cref{lem:gettransitive_step1}, we have that $\relationstepdashK$ is min-complete and reflexive. Moreover, \Cref{lem:gettransitive_step1} shows that the transformation of \( \releqcK \) to \( \relationstepdashK \) preserves minima with respect to bases, i.e., we have ${\min(\Mod{\G},\relationstepdashK)}={\min(\Mod{\G},\releqcK)}$ for each \( \G\in\Bases \).

\paragraph{\textbf{Step II: Taking the transitive closure.}}
We show that  \( {\relationstepdashdashK} = TC({\relationstepdashK}) \) is a min-complete preorder and for any $\G \in \Bases$ holds $\min(\Mod{\G},\relationstepdashdashK)=\min(\Mod{\G},\relationstepdashK)$.
 As intermediate result, we show that the following holds:
\begin{equation}
    {\relationstepdashK}\subseteq  {\relationstepdashdashK} \subseteq {\releqcK} \label{tag:acycstep2include} \tag{\#1}
\end{equation}
By construction of \( {\relationstepdashdashK} \), we have \( {\relationstepdashK} \subseteq  {\relationstepdashdashK} \), and by construction of \( {\relationstepdashK} \) we have \( {\relationstepdashK} \subseteq {\releqcK} \). 
To show that \( {\relationstepdashdashK} \subseteq {\releqcK}  \) holds we use nearly same proof as in \Cref{lem:extendsnonondetached}, but for the sake of completeness we give the full proof. Towards a contradiction assume the contrary, i.e., there exists a pair \( (\I_{1},\I_{0}) \in  {\relationstepdashdashK} \) such that \( \I_1 \not\releqcK \I_0  \).
From \( {\relationstepdashK} \subseteq {\releqcK} \) we obtain \( \I_1 \not\relationstepdashK \I_0  \) and
because \( {\releqcK} \) is a total relation, we have that \( \I_0 \relcK \I_1  \) holds.
By the definition of transitive closure (cf. \Cref{sec:relations}), 
there exists \( \I_{2},\ldots,\I_{n} \in \Omega \), for some \( n\in\mathbb{N} \), such that \( \I_1 \relationstepdashK\I_{2} \) and \( \I_{n} \relationstepdashK\I_{0} \) and \( \I_{i} \relationstepdashK\I_{i+1} \) for each \( i\in\{2,\ldots,n-1\} \). From \( {\relationstepdashK} \subseteq {\releqcK} \), we obtain \( \I_0 \relcK \I_1 \releqcK \I_2 \ldots\releqcK \I_n \releqcK \I_0 \).
We obtain a contradiction, because \( {\relationstepdashK} \) does not contain any detached pairs, but due to \Cref{lem:loop_detach_acyc_n} there is some \( i\in\{2,\ldots,n-1\} \) such that \( (\I_{i},\I_{i+1}) \) is a detached pair. Consequently, we obtain \( {\relationstepdashdashK} \subseteq {\releqcK}  \).  This completes the proof of claim \eqref{tag:acycstep2include} .

Now observe that \eqref{tag:acycstep2include} implies $\min(\Mod{\G},\relationstepdashdashK)=\min(\Mod{\G},\relationstepdashK)$ for any $\G \in \Bases$ holds, since \( {\relationstepdashdashK} \setminus {\setAllDetached} = {\releqcK} \setminus {\setAllDetached} \).
Consequently, as minima are preserved, we obtain that min-completeness carries over from \( \relationstepdashK\) to \( \relationstepdashdashK \).
Moreover, recall that we have shown in \textbf{Step I} that \( \relationstepdashK\) is a  min-complete and reflexive relation. 
Thus, the transitive closure \( \relationstepdashdashK \) of \( \relationstepdashK\)  is a preorder.

\paragraph{\textbf{Step III: Linearizing.}} 
Given \textbf{Step I} and \textbf{Step II}, proving that $\rreleqc{\abst} $ is a min-friendly quasi-faithful preorder assignment compatible with $\circ$ is analogue to proving \Cref{prop:gettransitive}.
We provide the proof for the sake of completeness.

As shown in \textbf{Step II}, $\relationstepdashdash{\abst}$ is a min-complete preorder assignment and thus, application of \Cref{thm:tpo_extension} yields a total preorder $\rreleqcK$.
Observe that linearization by \Cref{thm:tpo_extension} retains strict relations, i.e., if \( \I_1  \relationstepdashdashK \I_2 \) and \( \I_2  \not\relationstepdashdashK \I_1 \), then \( \I_1 \rreleqcK \I_2  \) and \( \I_2 \not\rreleqcK \I_1 \). 
Thus, we have \( \I_1 \in \min(\Mod{\G},\relationstepdashdashK) \) and only if \( \I_1 \in \min(\Mod{\G},\rreleqcK) \), which yields \( \min(\Mod{\G},\relationstepdashdashK)=\min(\Mod{\G},\rreleqcK) \) for each base \( \G \in \mathbb{B} \).
Consequently, min-completeness carries over from \( \relationstepdashdashK \) to \( \rreleqcK \).
Moreover, from \textbf{Step I} and \textbf{Step II} we obtain \( \min(\Mod{\G},\rreleqcK)=\min(\Mod{\G},\releqcK) \) for each base \( \G \in \Bases \).
As every \( \rreleqcK \) is transitive and total, we obtain that \( \rreleqcK \) is min-retractive and thus,  $\rreleqc{\abst}$ is a min-friendly assignment.
Because \( \releqc{\abst} \) is a quasi-faithful assignment which is compatible with \( \circ \) and we have \( \min(\Mod{\G},\rreleqcK)=\min(\Mod{\G},\releqcK) \) for each \( \K\in\Bases \), we also obtain that \( \rreleqc{\abst} \) is a quasi-faithful assignment which is compatible with \( \circ \).
\end{proof}
This completes the argument regarding the correspondence between acyclicity and total-preorder-representability, by establishing that the former is also sufficient for the latter. 
\Cref{thm:acyctpo} is a direct consequence of \Cref{prop:acyc_tpo} and \Cref{prop:tpo_acyc}.

\end{document}